\newtheorem{theorem}{Theorem}[section]
\newtheorem{lemma}[theorem]{Lemma}
\newtheorem{definition}[theorem]{Definition}
\newtheorem{fact}[theorem]{Fact}
\DeclareMathOperator*{\E}{{\mathbb{E}}}
\newcommand{\dR}{\mathbb{R}}
\renewcommand{\hat}{\widehat}
\DeclareMathOperator{\diag}{diag}
\DeclareMathOperator{\tr}{tr}
\DeclareMathOperator{\poly}{poly}
\newcommand{\wh}{\widehat}
\renewcommand{\bar}{\overline}
\newcommand{\ov}{\overline}
\renewcommand{\hat}{\widehat}
\DeclareMathOperator{\R}{{\mathbb{R}}}
\newcommand{\bone}{{\mathbf{1}}}
\DeclareMathOperator*{\D}{{\mathcal{D}}}
\DeclareMathOperator*{\N}{{\mathcal{N}}}
\date{}
\title{Nonlinear Inductive Matrix Completion based on One-layer Neural Networks}
\author{
  Kai Zhong\thanks{Supported in part by NSF grants CCF-1320746, IIS-1546452 and CCF-1564000.} \\
  \texttt{zhongkai@ices.utexas.edu}\\
  UT-Austin
  \and
  Zhao Song\thanks{Work done while hosted by Jelani Nelson.} \\
  \texttt{zhaos@g.harvard.edu}\\
  Harvard University \& UT-Austin
  \and
  Prateek Jain \\
  \texttt{prajain@microsoft.com}\\
  Microsoft Research, India
  \and
  Inderjit S. Dhillon\thanks{Supported in part by NSF grants CCF-1320746, IIS-1546452 and CCF-1564000.} \\
  \texttt{inderjit@cs.utexas.edu}\\
  UT-Austin \& Amazon/A9
}
\newcommand*{\RN}[1]{\expandafter\@slowromancap\romannumeral #1@}
\newcommand{\define}[4][ignore]{%
  \ifstrequal{#1}{ignore}{}{
  \@namedef{thmtitle@#2}{#1}}%
  \@namedef{thm@#2}{#4}%
  \@namedef{thmtypen@#2}{lemma}%
  \newtheorem{thmtype@#2}[theorem]{#3}%
  \newtheorem*{thmtypealt@#2}{#3~\ref{#2}}%
}
\newcommand{\state}[1]{%
  \@namedef{curthm}{#1}
  \@ifundefined{thmtitle@#1}{
  \begin{thmtype@#1}
    }{
  \begin{thmtype@#1}[\@nameuse{thmtitle@#1}]
  }
    \label{#1}
    \@nameuse{thm@#1}
  \end{thmtype@#1}
  \@ifundefined{thmdone@#1}{
  \@namedef{thmdone@#1}{stated}%
  }{}
}
\newcommand{\restate}[1]{%
  \@namedef{curthm}{#1}
  \@ifundefined{thmtitle@#1}{
    \begin{thmtypealt@#1}
    }{
  \begin{thmtypealt@#1}[\@nameuse{thmtitle@#1}]
  }
    \@nameuse{thm@#1}
  \end{thmtypealt@#1}
  \@ifundefined{thmdone@#1}{
  \@namedef{thmdone@#1}{stated}%
  }{}
}
\newcommand{\thmlabel}[1]{
  \@ifundefined{thmdone@\@nameuse{curthm}}{\label{#1}
    }{\tag*{\eqref{#1}}}
}
\begin{document}

\begin{titlepage}
  \maketitle
  \begin{abstract}
The goal of a recommendation system is to predict the interest of a user in a given item by exploiting the existing set of ratings as well as certain user/item features. A standard approach to modeling this problem is Inductive Matrix Completion where the predicted rating is modeled as an inner product of the user and the item  features projected onto a latent space. In order to learn the parameters effectively from a small number of observed ratings, the latent space is constrained to be low-dimensional which implies that the parameter matrix is constrained to be low-rank. 
However, such bilinear modeling of the ratings can be limiting in practice and non-linear prediction functions can lead to significant improvements. A natural approach to introducing non-linearity in the prediction function is to apply a non-linear activation function on top of the projected user/item features.  
Imposition of non-linearities further complicates an already challenging problem that has two sources of non-convexity: a) low-rank structure of the parameter matrix, and b) non-linear activation function. We show that one can still solve the non-linear Inductive Matrix Completion problem using gradient descent type methods as long as the solution is initialized well. That is, close to the optima, the optimization function is strongly convex and hence admits standard optimization techniques, at least for certain activation functions, such as Sigmoid and tanh. We also highlight the importance of the activation function and show how ReLU can behave significantly differently than say a sigmoid function. Finally, we apply our proposed technique to recommendation systems and semi-supervised clustering, and show that our method can lead to much better performance than standard linear Inductive Matrix Completion methods. 

  \end{abstract}
 \thispagestyle{empty}
 \end{titlepage}


\section{Introduction}
Matrix Completion (MC) or Collaborative filtering \cite{candesr2007, gomez2016netflix} is by now a standard technique to model recommendation systems problems where a few user-item ratings are available and the goal is to predict ratings for any user-item pair. However, standard collaborative filtering suffers from two drawbacks: 1) Cold-start problem: MC can't give prediction for new users or items, 2) Missing side-information: MC cannot leverage side-information that is typically present in recommendation systems such as features for users/items. Consequently, several methods \cite{abernethy2006low,rendle2010factorization,xu2013speedup,jain2013provable} have been proposed to leverage the side information together with the ratings. Inductive matrix completion (IMC) \cite{abernethy2006low,jain2013provable} is one of the most popular methods in this class. 

IMC models the ratings as the inner product between certain linear mapping of the user/items' features, i.e., $A(x,y)=\langle U^\top x,  V^\top y\rangle$, where $A(x,y)$ is the predicted rating of user $x$ for item $y$, $x\in \dR^{d_1},y\in \dR^{d_2}$ are the feature vectors. Parameters $U\in \dR^{d_1\times k}, V\in \dR^{d_2\times k}$ ($k\leq d_1,k\leq d_2$) can typically be learned using a small number of observed ratings \cite{jain2013provable}. 

However, the bilinear structure of IMC is fairly simplistic and limiting in practice and might lead to fairly poor accuracy on real-world recommendation problems. For example, consider the Youtube recommendation system \cite{covington2016deep} that requires predictions over videos. Naturally, a linear function over the pixels of videos will lead to fairly inaccurate predictions and hence one needs to model the videos using non-linear networks. 
The survey paper by \cite{zhang2017deep} presents many more such examples, where we need to design a non-linear ratings prediction function for the input features, including \cite{lei2016comparative} for image recommendation, \cite{wang2014improving} for music recommendation and \cite{zhang2016collaborative} for recommendation systems with multiple types of inputs. 


We can introduce non-linearity in the prediction function using several standard techniques, however, if our parameterization admits too many free parameters then learning them might be challenging as the number of available user-item ratings tend to be fairly small. Instead, we use a simple non-linear extension of IMC that can control the number of parameters to be estimated. Note that IMC based prediction function can be viewed as an inner product between certain latent user-item features where the latent features are a {\em linear} map of the raw user-item features. To introduce non-linearity, we can use a non-linear mapping of the raw user-item features rather than the linear mapping used by IMC. This leads to the following general framework that we call non-linear inductive matrix completion (NIMC), 
\begin{equation}\label{eq:general_nimc}
A(x,y) = \langle \mathcal{U}(x), \mathcal{V}(y) \rangle,
\end{equation}
where $x\in \mathcal{X},y\in \mathcal{Y}$ are the feature vectors, $A(x,y)$ is their rating and $ \mathcal{U}: \mathcal{X} \rightarrow \mathcal{S}, \mathcal{V}: \mathcal{Y} \rightarrow \mathcal{S}$ are non-linear mappings from the raw feature space to the latent space. 

The above general framework reduces to standard inductive matrix completion when $\mathcal{U}, \mathcal{V}$ are linear mappings and further reduces to matrix completion when $x_i,y_j$ are unit vectors $e_i,e_j$ for $i$-th item and $j$-th user respectively.  When $[x_i,e_i]$ is used as the feature vector and $\mathcal{U}$ is restricted to be a two-block (one for $x_i$ and the other for $e_i$) diagonal matrix, then the above framework reduces to the dirtyIMC  model \cite{chiang2015matrix}. Similarly, $\mathcal{U}/\mathcal{V}$ can also be neural networks (NNs),
such as feedforward NNs \cite{si2016goal,covington2016deep}, convolutional NNs for images and recurrent NNs for speech/text. 

In this paper, we focus on a simple nonlinear activation based mapping for the user-item features. That is, we set $\mathcal{U}(x) = \phi(U^{*\top}x)$  and $\mathcal{V}(x) = \phi(V^{*\top}x)$ where $\phi$ is a nonlinear activation function $\phi$. Note that if $\phi$ is ReLU then the latent space is guaranteed to be in non-negative orthant which in itself can be a desirable property for certain recommendation problems.

Note that parameter estimation in both IMC and NIMC models is hard due to non-convexity of the corresponding optimization problem. However, for "nice" data, several strong results are known for the linear  models, such as \cite{candesr2007,jns13,ge2017no} for MC and \cite{jain2013provable, xu2013speedup, chiang2015matrix} for IMC. However, non-linearity in NIMC models adds to the complexity of an already challenging problem and has not been studied extensively, despite its popularity in practice.

In this paper, we study a simple one-layer neural network style NIMC model mentioned above. In particular, we formulate a squared-loss based optimization problem for estimating parameters $U^*$ and $V^*$. We show that under a realizable model and Gaussian input assumption, the objective function is locally strongly convex within a "reasonably large" neighborhood of the ground truth. Moreover, we show that the above strong convexity claim holds even if the number of observed ratings is nearly-linear in dimension and polynomial in the conditioning of the weight matrices. In particular, for well-conditioned matrices, we can recover the underlying parameters using only $\poly\log(d_1+d_2)$ user-item ratings, which is critical for practical recommendation systems as they tend to have very few ratings available per user.  Our analysis covers popular activation functions, e.g., sigmoid and ReLU, and discuss various subtleties that arise due to the activation function. Finally we discuss how we can leverage standard tensor decomposition techniques to initialize our parameters well. We would like to stress that practitioners typically use random initialization itself, and hence results studying random initialization for NIMC model would be of significant interest. 

As mentioned above, due to non-linearity of activation function along with non-convexity of the parameter space, the existing proof techniques do not apply directly to the problem. Moreover, we have to carefully argue about both the optimization landscape as well as the sample complexity of the algorithm which is not carefully studied for neural networks. Our proof establishes some new techniques that might be of independent interest, e.g., how to handle the redundancy in the parameters for ReLU activation. 
To the best of our knowledge, this is one of the first theoretically rigorous study of neural-network based recommendation systems and will hopefully be a stepping stone for similar analysis for "deeper" neural networks based recommendation systems. We would also like to highlight that our model can be viewed as a strict generalization of a one-hidden layer neural network,  hence our result represents one of the few rigorous guarantees for models that are more powerful than one-hidden layer neural networks \cite{li2017convergence,brutzkus2017sgd,zsjbd17}. 

Finally, we apply our model on synthetic datasets and verify our theoretical analysis. Further, we compare our NIMC model with standard linear IMC on several real-world recommendation-type problems, including user-movie rating prediction, gene-disease association prediction and semi-supervised clustering. NIMC demonstrates significantly superior performance over IMC. 
\vspace{-2mm}
\subsection{Related work}
\vspace{-1mm}
{\em Collaborative filtering}: 
Our model is a non-linear version of the standard inductive matrix completion model \cite{jain2013provable}. Practically, IMC has been applied to gene-disease prediction \cite{natarajan2014inductive}, matrix sensing \cite{zhong2015efficient}, multi-label classification\cite{yu2014large}, blog recommender system \cite{shin2015tumblr}, link prediction \cite{chiang2015matrix} and semi-supervised clustering \cite{chiang2015matrix,si2016goal}. However, IMC restricts the latent space of users/items to be a linear transformation of the user/item's feature space. \cite{si2016goal} extended the model to a three-layer neural network and showed significantly better empirical performance for multi-label/multi-class classification problem and semi-supervised problems. 

Although standard IMC has linear mappings, it is still a non-convex problem due to the bilinearity $UV^\top$. To deal with this non-convex problem, \cite{jain2013provable,h14} provided recovery guarantees using alternating minimization with sample complexity linear in dimension. \cite{xu2013speedup} relaxed this problem to a nuclear-norm problem and also provided recovery guarantees. More general norms have been studied \cite{rsw16,swz17,swz17b,swz18}, e.g. weighted Frobenius norm, entry-wise $\ell_1$ norm. More recently, \cite{zhang2018fast} uses gradient-based non-convex optimization and proves a better sample complexity. \cite{chiang2015matrix} studied dirtyIMC models and showed that the sample complexity can be improved if the features are informative when compared to matrix completion. Several low-rank matrix sensing problems \cite{zhong2015efficient,ge2017no} are also closely related to IMC models where the observations are sampled only from the diagonal elements of the rating matrix. \cite{rendle2010factorization, lin2016non} introduced and studied an alternate framework for ratings prediction with side-information but the prediction function is linear in their case as well. 

{\em Neural networks}: Nonlinear activation functions play an important role in neural networks. Recently, several powerful results have been discovered for learning one-hidden-layer feedforward neural networks \cite{t17a,zsjbd17,jsa15,li2017convergence,brutzkus2017sgd,gkkt16}, convolutional neural networks \cite{bg17,zhong2017learning,du2017convolutional,du2017gradient,gkm18}. However, our problem is a strict generalization of the one-hidden layer neural network and is not covered by the above mentioned results.

{\bf Notations.}
For any function $f$, we define $\widetilde{O}(f)$ to be $f\cdot \log^{O(1)}(f)$. 
For two functions $f,g$, we use the shorthand $f\lesssim g$ (resp. $\gtrsim$) to indicate that $f\leq C g$ (resp. $\geq$) for an absolute constant $C$. We use $f\eqsim g$ to mean $cf\leq g\leq Cf$ for constants $c,C$. We use $\poly(f)$ to denote $f^{O(1)}$.

{\bf Roadmap.} We first present the formal model and the corresponding optimization problem in Section~\ref{sec:prob_form}. 
 We then present the local strong convexity and local linear convergence results in Section~\ref{sec:main_result}. 
 Finally, we demonstrate the empirical superiority of NIMC when compared to linear IMC (Section~\ref{sec:exp}).

\section{Problem Formulation}\label{sec:prob_form}
Consider a user-item recommender system, where we have $n_1$ users with feature vectors $X := \{x_i\}_{i\in[n_1]} \subseteq \mathbb{R}^{d_1}$, $n_2$ items with feature vectors $Y := \{y_j\}_{j\in [n_2]}\subseteq \mathbb{R}^{d_2}$ and a collection of partially-observed user-item ratings, ${\cal A}_{\text{obs}}=\{A(x,y)|(x,y)\in \Omega\subseteq X \times Y \}$. That is $A(x_i,y_j)$ is the rating that user $x_i$ gave for item $y_j$. 
For simplicity, we assume $x_i$'s and $y_j$'s are sampled i.i.d. from distribution $\mathcal{X}$ and $\mathcal{Y}$, respectively. Each element of the index set $\Omega$ is also sampled independently and uniformly with replacement from $S:=X\times Y$. 

In this paper, our goal is to predict the rating for {\em any} user-item pair with feature vectors $x$ and $y$, respectively.
We model the user-item ratings as: 
\begin{equation}\label{eq:base_model}
 A(x,y) =  \phi( U^{*\top} x)^\top \phi (V^{*\top} y), 
 \end{equation}
where $U^*\in \mathbb{R}^{d_1\times k}$, $V^*\in \mathbb{R}^{d_2\times k}$ and $\phi$ is a {\em non-linear} activation function. 
Under this realizable model, our goal is to {\em recover} $U^*,V^*$ from a collection of observed entries, $\{A(x,y)|(x,y)\in \Omega\}$. Without loss of generality, we set $d_1 = d_2$. Also we treat $k$ as a constant throughout the paper. Our analysis requires $U^*,V^*$ to be full column rank, so we require $k\leq d$. And w.l.o.g., we assume $\sigma_k(U^*)=\sigma_k(V^*) = 1$, i.e., the smallest singular value of both $U^*$ and $V^*$ is $1$. 

Note that this model is similar to one-hidden layer feed-forward network popular in standard classification/regression tasks. However, as there is an inner product between the output of two non-linear layers, $\phi(U^*x)$ and $\phi(V^* y)$, it cannot be modeled by a single hidden layer neural network (with same number of nodes). Also, for linear activation function, the problem reduces to inductive matrix completion \cite{abernethy2006low, jain2013provable}. 

Now, to solve for $U^*$, $V^*$, we optimize a simple squared-loss based optimization problem, i.e., 
\begin{equation}\label{eq:emp_risk}
\min_{U \in \R^{d_1 \times k}, V \in \R^{d_2 \times k} } f_{\Omega}(U, V)=\sum_{(x,y)\in \Omega} (\phi( U^{\top} x)^\top \phi (V^{\top} y) - A(x, y))^2.
\end{equation}

Naturally, the above problem is a challenging non-convex optimization problem that is strictly harder than two non-convex optimization problems which are challenging in their own right: a) the linear inductive matrix completion where non-convexity arises due to bilinearity of $U^{\top} V$, and b) the standard one-hidden layer neural network (NN).  In fact, recently a lot of research has focused on understanding various properties of both the linear inductive matrix completion problem \cite{ge2017no,jain2013provable} as well as one-hidden layer NN \cite{ge2017learning,zsjbd17}. 

In this paper, we show that despite the non-convexity of Problem~\eqref{eq:emp_risk}, it behaves as a convex optimization problem close to the optima if the data is sampled stochastically from a Gaussian distribution. This result combined with standard tensor decomposition based initialization \cite{zsjbd17, kuleshov2015tensor,jsa15} leads to a polynomial time algorithm for solving \eqref{eq:emp_risk} optimally if the data satisfies certain sampling assumptions in Theorem~\ref{thm:recovery}. Moreover, we also discuss the effect of various activation functions, especially the difference between a sigmoid activation function vs RELU activation (see Theorem~\ref{thm:sigmoid_main} and  Theorem~\ref{thm:relu_main}). 

Informally, our recovery guarantee can be stated as follows,
\begin{theorem}[Informal Recovery Guarantee]\label{thm:recovery}
Consider a recommender system with a realizable model Eq.~\eqref{eq:base_model} with sigmoid activation,
Assume the features $ \{x_i\}_{i\in[n_1]}$ and $\{y_j\}_{j\in [n_2]}$ are sampled i.i.d. from the normal distribution and the observed pairs $\Omega$ are i.i.d. sampled from $ \{x_i\}_{i\in[n_1]} \times \{y_j\}_{j\in [n_2]}$ uniformly at random. Then there exists an algorithm such that $U^*,V^*$ can be recovered to any precision $\epsilon$ with time complexity and sample complexity (refers to $n_1,n_2,|\Omega|$) polynomial in the dimension and the condition number of $U^*,V^*$, and logarithmic in $1/\epsilon$. 
\end{theorem}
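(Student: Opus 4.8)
The plan is to split the guarantee into two essentially independent parts: (i) a spectral/tensor \emph{initialization} that outputs a pair $(U^{(0)},V^{(0)})$ lying inside a small ball around the ground truth, and (ii) a \emph{local convergence} argument showing that gradient descent on $f_\Omega$ started from any such point converges linearly to $(U^*,V^*)$. Part (ii) is exactly what Theorem~\ref{thm:sigmoid_main} provides: once we know that, with high probability over the random features $\{x_i\},\{y_j\}$ and the random revealed set $\Omega$, the Hessian obeys $m\,I\preceq \nabla^2 f_\Omega(U,V)\preceq M\,I$ for all $(U,V)$ in a neighborhood of $(U^*,V^*)$ of radius $r\gtrsim \poly(1/\kappa)$ (with $\kappa$ the condition number of $U^*,V^*$, and $k$ constant), the textbook analysis of gradient descent on a function that is $m$-strongly convex and $M$-smooth on that neighborhood gives $\|(U^{(t)},V^{(t)})-(U^*,V^*)\|_F\le (1-m/M)^{t}\,\|(U^{(0)},V^{(0)})-(U^*,V^*)\|_F$ (the iterates never leave the neighborhood since the distance to the optimum is monotonically decreasing). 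Hence $O((M/m)\log(1/\epsilon))$ iterations, each costing time polynomial in $d$ and $|\Omega|$, reach precision $\epsilon$; recovery is up to the unavoidable simultaneous permutation of the columns of $U^*$ and $V^*$.

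For part (i) the key trick is to decouple $U^*$ from $V^*$ by averaging over one side. Since $y\sim\mathcal{N}(0,I)$, the vector $p^*:=\E_y[\phi(V^{*\top}y)]\in\R^k$ is fixed and has strictly positive entries for the sigmoid, so $\E_y[A(x,y)]=\phi(U^{*\top}x)^\top p^*=\sum_{i\in[k]}p_i^*\,\phi(u_i^{*\top}x)$, which is precisely the output of a one-hidden-layer network in $x$. Using Gaussianity of $x$ and Stein/Hermite identities, an appropriate Hermite-weighted third moment of $A(x,y)$ — estimated empirically from the observed ratings, using that $\Omega$ is uniform over $\{x_i\}\times\{y_j\}$ — equals a low-rank symmetric tensor $\sum_i c_i\,(\bar u_i^*)^{\otimes 3}$ with $c_i\neq 0$ and $\bar u_i^*=u_i^*/\|u_i^*\|$; robust tensor decomposition \cite{zsjbd17,kuleshov2015tensor,jsa15} recovers the directions $\bar u_i^*$, and the norms $\|u_i^*\|$ (and, symmetrically, $V^*$) are then pinned down by solving a small linear system in lower-order moments. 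Standard matrix/tensor concentration shows that $n_1,n_2,|\Omega|$ polynomial in $d$, $\kappa$ and $1/r$ — and only $\poly(\log d)$ when $\kappa=O(1)$ — suffice for the empirical moments, and therefore the initializer, to be within the radius $r$ demanded by part (ii).

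The technical heart, and the step I expect to be the main obstacle, is the local strong convexity in part (ii). Unlike a one-hidden-layer network, the objective couples the two nonlinear maps through an inner product, so the Hessian has nontrivial $U$–$V$ cross-blocks, and we must show the \emph{full} Hessian is strictly positive definite in expectation over a Gaussian pair $(x,y)$. Concretely this reduces to lower-bounding, uniformly over unit perturbations $(\Delta_U,\Delta_V)$, a quantity of the form $\E_{x,y}\big[(\phi'(U^{*\top}x)\odot(\Delta_U^\top x))^\top\phi(V^{*\top}y)+\phi(U^{*\top}x)^\top(\phi'(V^{*\top}y)\odot(\Delta_V^\top y))\big]^2$ together with the residual curvature terms, which we bound below by exploiting non-degeneracy of the Hermite expansion of $\phi$ and the full-column-rank assumption on $U^*,V^*$; the sigmoid's strictly positive and bounded derivative is exactly what makes the lower bound hold with no scaling or sign redundancy, whereas for ReLU the homogeneity $\phi(cz)=c\,\phi(z)$ creates a genuine one-parameter family of equivalent optima and the statement must be taken modulo that symmetry — this is the distinction isolated in Theorem~\ref{thm:relu_main}. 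The second half of the obstacle is the two-level sampling: transferring the population Hessian bound to the empirical $f_\Omega$ requires a concentration argument, uniform over the neighborhood, that simultaneously absorbs the randomness in $\{x_i\},\{y_j\}$ and in $\Omega$; we handle this with an $\epsilon$-net over the neighborhood combined with matrix Bernstein inequalities, which is what yields the near-linear-in-$d$ sample complexity. Assembling parts (i) and (ii), together with a union bound over the failure events, gives Theorem~\ref{thm:recovery}.
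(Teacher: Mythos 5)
Your proposal is correct and follows essentially the same architecture as the paper: estimate the Hermite-weighted third moment $M_3=\E[A(x,y)(x^{\otimes 3}-x\widetilde{\otimes}I)]=\sum_i\alpha_i\bar u_i^{*\otimes 3}$ (with $\alpha_i$ proportional to $\E_y[\phi(v_i^{*\top}y)]\neq 0$ for sigmoid), recover directions by non-orthogonal tensor decomposition and norms by inverting $\alpha_i$, then invoke local positive definiteness of the Hessian (Theorem~\ref{thm:sigmoid_main}) and linear convergence of gradient descent. The only notable deviation is that you claim Hessian positive definiteness uniformly over the neighborhood via an $\epsilon$-net (costing an extra $dk$ factor in the union bound but avoiding resampling), whereas the paper proves it pointwise for a given $(U,V)$ and instead draws a fresh $\Omega$ at each gradient step, netting only along the segment to the optimum --- both yield the claimed polynomial complexity.
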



\section{Main Results}\label{sec:main_result}
Our main result shows that when initialized properly, gradient-based algorithms will be guaranteed to converge to the ground truth. 
We first study the Hessian of empirical risk for different activation functions, then based on the positive-definiteness of the Hessian for smooth activations, we show local linear convergence of gradient descent. The proof sketch is provided in Appendix~\ref{sec:proof_sketch}. 
 

The positive definiteness of the Hessian {\it does not} hold for several activation functions. Here we provide some examples. {\bf Counter Example 1)} The Hessian at the ground truth for linear activation is not positive definite because for any full-rank matrix $R\in \dR^{k\times k}$, $(U^*R, V^*R^{-1})$ is also a global optimal.
{\bf Counter Example 2) }The Hessian at the ground truth for ReLU activation is not  positive definite because for any diagonal matrix $D\in \dR^{k\times k}$ with positive diagonal elements, $U^*D, V^*D^{-1}$ is also a global optimal. 
These counter examples have a common property: there is redundancy in the parameters. Surprisingly, for sigmoid and tanh, the Hessian around the ground truth is positive definite. More surprisingly, we will later show that for ReLU, if the parameter space is constrained properly, its Hessian at a given point near the ground truth can also be proved to be positive definite with high probability. 

\subsection{Local Geometry and Local Linear Convergence for Sigmoid and Tanh}
We define two natural condition numbers for the problem that captures the "hardness" of the problem: 
\begin{definition}\label{defn:cond}
Define $\lambda := \max\{\lambda(U^*), \lambda(V^*)\}$ and $\kappa := \max\{\kappa(U^*),\kappa(V^*)\}$, where 
$
\lambda(U) = \sigma_1^k(U)/(\Pi_{i=1}^k \sigma_i(U))$, $\kappa(U) = \sigma_1(U)/\sigma_k(U)$, and $\sigma_i(U)$ denotes the $i$-th singular value of $U$ with the ordering $\sigma_i \ge \sigma_{i+1}$.
\end{definition}

First we show the result for sigmoid and tanh activations.
\begin{theorem}[Positive Definiteness of Hessian for Sigmoid and Tanh] \label{thm:sigmoid_main}
Let the activation function $\phi$ in the NIMC model \eqref{eq:base_model} be sigmoid or tanh and let $\kappa, \lambda$ be as defined in Definition~\ref{defn:cond}. Then for any $t>1$ and any given $U,V$, if 
\begin{align*}
& \quad n_1 \gtrsim t \lambda^4 \kappa^2 d \log^2 d,  \quad n_2 \gtrsim  t \lambda^4 \kappa^2  d\log^2 d, \quad |\Omega| \gtrsim t \lambda^4 \kappa^2  d \log^2 d , \\
& \quad  \text{and }  \quad  \|U - U^*\| + \|V - V^*\| \lesssim  1 / ( \lambda^2\kappa ),
\end{align*}
then with probability at least $1-d^{-t}$, the smallest eigenvalue of the Hessian of Eq.~\eqref{eq:emp_risk} is lower bounded by: 
\begin{align*}
\lambda_{\min}( \nabla^2 f_{\Omega}(U,V)) \gtrsim  1 / ( \lambda^2\kappa ).
\end{align*}
\end{theorem}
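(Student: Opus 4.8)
The plan is to analyze the Hessian of $f_\Omega$ in three nested stages: (1) compute the population Hessian at the ground truth $(U^*,V^*)$ and lower-bound its smallest eigenvalue; (2) show the population Hessian at a nearby point $(U,V)$ is still positive definite by controlling the perturbation; (3) transfer from the population Hessian to the empirical Hessian $\nabla^2 f_\Omega$ via a concentration argument that uses the sampling assumptions on $n_1,n_2,|\Omega|$. Throughout, the key structural fact exploited is that sigmoid and tanh are bounded, smooth, and have bounded derivatives, and — crucially — unlike linear or ReLU activations they do not admit a continuous family of reparametrizations preserving the loss, which is what makes the Hessian nondegenerate.

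For step (1), I would write out $\nabla^2 f_S(U^*,V^*)$ in block form with blocks indexed by $(U,U)$, $(U,V)$, $(V,V)$, taking expectations over $x,y\sim\N(0,I)$. The $(U,U)$ block, restricted to a direction $a = \mathrm{vec}(A)$, contributes a term like $\E_{x,y}[(\phi'(U^{*\top}x)\odot (A^\top x))^\top\phi(V^{*\top}y)]^2$ plus lower-order terms that vanish at the ground truth (the residual is zero). The diagonal blocks are manifestly PSD; the danger is the cross term. I would diagonalize by change of variables using the SVD of $U^*,V^*$ (this is where $\lambda,\kappa$ enter) to reduce to a fixed small $k\times k$ Gaussian integral, and then invoke a Hermite-expansion / explicit computation showing that the relevant quadratic form is bounded below by a constant depending only on $\phi$, after which rescaling by singular values yields the $1/(\lambda^2\kappa)$ bound. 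This is essentially the one-hidden-layer-NN Hessian lemma (cf.\ \cite{zsjbd17}) but doubled because of the two networks; I expect the paper to have an auxiliary lemma of exactly this form.

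For step (2), I would use that the map $(U,V)\mapsto \nabla^2 f_S(U,V)$ is Lipschitz on the ball of radius $1/(\lambda^2\kappa)$ (using boundedness of $\phi,\phi',\phi''$ and Gaussian moment bounds on $x,y$), so that $\|\nabla^2 f_S(U,V)-\nabla^2 f_S(U^*,V^*)\|$ is at most a constant times $\lambda^2\kappa\,(\|U-U^*\|+\|V-V^*\|)$ — wait, I would be careful with the direction of the scaling; more precisely the perturbation in the residual term is controlled by $\|U-U^*\|+\|V-V^*\|$ times a polynomial in $\lambda,\kappa$, and the hypothesis $\|U-U^*\|+\|V-V^*\|\lesssim 1/(\lambda^2\kappa)$ is calibrated precisely so that this perturbation is at most half the population eigenvalue lower bound from step (1). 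Hence $\lambda_{\min}(\nabla^2 f_S(U,V))\gtrsim 1/(\lambda^2\kappa)$.

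For step (3), the empirical Hessian is an average over the $|\Omega|$ sampled pairs, but with the twist that the pairs are drawn from the finite sets $X,Y$ which are themselves random samples of size $n_1,n_2$. I would handle this in two concentration steps: first, condition on $X,Y$ and apply a matrix Bernstein inequality over the $|\Omega|$ i.i.d.\ choices of pairs to show $\nabla^2 f_\Omega \approx \nabla^2 f_S$ (the empirical-over-$S$ Hessian); second, apply matrix Bernstein again over the i.i.d.\ draws of $x_i$'s and $y_j$'s to show $\nabla^2 f_S \approx \nabla^2 f_{\mathcal X,\mathcal Y}$ (the population Hessian of step (2)). Each step contributes a requirement that the relevant sample size be $\gtrsim t\,\mathrm{poly}(\lambda,\kappa)\,d\log^2 d$, matching the theorem's hypotheses; the $\log^2 d$ rather than $\log d$ comes from needing a union-bound-friendly tail and from truncating the sub-exponential/sub-Weibull tails of Gaussian-dependent quantities like $\|A^\top x\|$. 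The spectral-norm variance proxy of each summand is a polynomial in $\lambda,\kappa$ because of the SVD rescaling, and boundedness of $\phi$ keeps the Orlicz norms finite.

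The main obstacle is step (1) — establishing the strict positive lower bound on the population Hessian at the ground truth, and in particular controlling the off-diagonal $(U,V)$ block so that it cannot cancel the diagonal blocks. The counterexamples for linear and ReLU activation (noted in the excerpt) show that this step genuinely uses special analytic properties of sigmoid/tanh, so it cannot be a soft argument; it will require either an explicit Hermite-coefficient computation showing a particular coefficient is nonzero, or a clever lower bound of the form "the quadratic form equals a sum of squares plus a strictly positive term." Step (3)'s concentration is technically involved but routine given matrix Bernstein and Gaussian tail truncation; step (2) is a straightforward Lipschitz perturbation once step (1) is in hand.
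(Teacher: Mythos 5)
Your three-stage plan---positive definiteness of the population Hessian at the ground truth via an explicit Gaussian moment computation in the orthogonalized coordinates (with the SVD change of variables introducing $\lambda,\kappa$), a Lipschitz perturbation to nearby $(U,V)$, and a two-stage matrix Bernstein argument (over $\Omega$ conditioned on the sampled features, then over the features themselves)---is exactly the structure of the paper's proof, including your identification of the key analytic fact that sigmoid/tanh satisfy a vanishing-moment condition (the paper's $\beta_{1,1}=0$) that linear and ReLU activations violate. The proposal is correct and takes essentially the same route as the paper.
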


{\bf Remark.} Theorem~\ref{thm:sigmoid_main} shows that, given sufficiently large number of user-items ratings and a sufficiently large number of users/items themselves, the Hessian at a point close enough to the true parameters $U^*$, $V^*$, is positive definite with high probability. The sample complexity, including $n_1,n_2$ and $|\Omega|$, have a near-linear dependency on the dimension, which matches the linear IMC analysis \cite{jain2013provable}. Strong convexity parameter as well as the sample complexity depend on the condition number of $U^*,V^*$ as defined in Definition~\ref{defn:cond}. Although we don't explicitly show the dependence on $k$, both sample complexity and the minimal eigenvalue scale as a polynomial of $k$. The proofs can be found in Appendix~\ref{sec:proof_sketch}.

As the above theorem shows the Hessian is positive definite w.h.p. for a given $U,V$ that is close to the optima. This result along with smoothness of the activation function implies linear convergence of gradient descent that samples a fresh batch of samples in each iteration as shown in the following, whose proof is postponed to Appendix~\ref{app:linear_gd}.
\begin{theorem}\label{thm:grad_converge}
Let $[U^c,V^c]$ be the parameters in the $c$-th iteration. Assuming $ \|U^c - U^*\| + \|V^c - V^*\| \lesssim  1 / ( \lambda^2\kappa )$,
then given a fresh sample set, $\Omega$, that is independent of $[U^c,V^c]$ and satisfies the conditions in Theorem~\ref{thm:sigmoid_main}, the next iterate using one step of gradient descent, i.e.,
$ [U^{c+1},V^{c+1}] = [U^c,V^c] - \eta \nabla f_{\Omega} (U^c,V^c),$
satisfies
\begin{align*}
\|U^{c+1} - U^*\|_F^2 + \|V^{c+1}-V^*\|_F^2 \leq (1-M_l/M_u) (\|U^c - U^*\|_F^2 + \|V^c-V^*\|_F^2)
\end{align*}
with probability $1-d^{-t}$, where $\eta = \Theta(1/M_u)$ is the step size and $M_l \gtrsim 1 / ( \lambda^2\kappa ) $ is the lower bound on the eigenvalues of the Hessian and $M_u \lesssim 1$ is the upper bound on the eigenvalues of the Hessian. 
\end{theorem}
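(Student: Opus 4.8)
The plan is to run the standard "strong convexity + smoothness along the segment" argument for gradient descent, but made rigorous in the resampling setting where a fresh $\Omega$ is drawn at iteration $c$. First I would write $\theta := [U,V]$ for the concatenated parameter, $\theta^c := [U^c,V^c]$, $\theta^* := [U^*,V^*]$, and $\Delta := \theta^c-\theta^*$, so that by assumption $\|\Delta\|_F \lesssim 1/(\lambda^2\kappa)$. Since $\Omega$ is drawn independently of $\theta^c$, I would invoke Theorem~\ref{thm:sigmoid_main} simultaneously at every point on the segment $\{\theta^* + s\Delta : s\in[0,1]\}$; each such point is within distance $\|\Delta\|_F$ of $\theta^*$, so the closeness hypothesis of Theorem~\ref{thm:sigmoid_main} holds, and a union bound (with the $t$ in the sample complexity absorbing the loss) gives, with probability $1-d^{-t}$, that $M_l I \preceq \nabla^2 f_\Omega(\theta^*+s\Delta) \preceq M_u I$ for all $s$ in a suitable net of $[0,1]$ (and then for all $s$ by continuity of the Hessian for sigmoid/tanh), where $M_l\gtrsim 1/(\lambda^2\kappa)$ and $M_u\lesssim 1$. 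Strictly I would either net $[0,1]$ finely and use Hessian Lipschitzness, or note that Theorem~\ref{thm:sigmoid_main}'s high-probability event is already stated for a single given $U,V$ and apply it along the segment via a covering argument.

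Next, the deterministic gradient-descent step. Since $\nabla f_\Omega(\theta^*)$ need not vanish for a finite sample, I would first record that $\theta^*$ is a critical point of the population risk and that, on the same high-probability event, $\|\nabla f_\Omega(\theta^*)\|_F$ is negligible — or, cleaner, I would avoid this entirely by comparing to $\theta^*$ through the fundamental theorem of calculus applied to $\nabla f_\Omega$ itself: writing $H := \int_0^1 \nabla^2 f_\Omega(\theta^*+s\Delta)\,ds$, we have $\nabla f_\Omega(\theta^c) = \nabla f_\Omega(\theta^*) + H\Delta$. Then
\begin{align*}
\theta^{c+1}-\theta^* = \Delta - \eta\nabla f_\Omega(\theta^c) = (I-\eta H)\Delta - \eta\nabla f_\Omega(\theta^*).
\end{align*}
With $\eta = \Theta(1/M_u)$ chosen so that $0 \preceq I-\eta H \preceq (1-M_l/M_u)I$ (using $M_l I\preceq H\preceq M_u I$), the operator $I-\eta H$ contracts by factor $1-\eta M_l$, and with $\eta = 1/M_u$ this is exactly $1-M_l/M_u$. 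Squaring in Frobenius norm and controlling the lower-order term $\eta\|\nabla f_\Omega(\theta^*)\|_F$ (which on the good event is $O(\epsilon)$-small, or can be folded into the constant by taking the sample complexity large enough) yields
\begin{align*}
\|\theta^{c+1}-\theta^*\|_F^2 \le (1-M_l/M_u)\|\theta^c-\theta^*\|_F^2,
\end{align*}
which is precisely the claimed inequality once we expand $\theta = [U,V]$ back into $\|U^{c+1}-U^*\|_F^2 + \|V^{c+1}-V^*\|_F^2$.

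The main obstacle I anticipate is the uniformity of the Hessian bound along the entire segment from $\theta^c$ to $\theta^*$: Theorem~\ref{thm:sigmoid_main} is stated for a single fixed pair $(U,V)$ with failure probability $d^{-t}$, whereas the integral-form Taylor expansion needs the two-sided eigenvalue bound at a continuum of points. Handling this requires either (i) a net over $[0,1]$ together with a Lipschitz bound on $\nabla^2 f_\Omega$ — which is available because for sigmoid/tanh $\phi,\phi',\phi''$ are all bounded and Lipschitz, so the Hessian is Lipschitz on the bounded region of interest — paying only a $\log$ factor in the union bound that is absorbed by the $t\lambda^4\kappa^2 d\log^2 d$ sample complexity, or (ii) observing that all points on the segment lie in the same ball around $\theta^*$ for which Theorem~\ref{thm:sigmoid_main}'s proof actually gives a uniform guarantee. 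A secondary, minor point is bounding $\|\nabla f_\Omega(\theta^*)\|_F$; this is the empirical gradient of a realizable squared loss at the truth, hence a mean-zero empirical average that concentrates, and under the stated sample complexity it is smaller than the target precision and does not affect the contraction rate.
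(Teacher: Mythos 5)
Your proposal matches the paper's proof essentially step for step: a net over the segment from $[U^c,V^c]$ to $[U^*,V^*]$ combined with a Hessian perturbation bound (the paper's Lemma~\ref{lemma:local_uniform_bound}) and a union bound gives the uniform two-sided eigenvalue bounds $M_l I \preceq \nabla^2 f_\Omega \preceq M_u I$ along the segment, and the integral form of the mean value theorem for the gradient then yields the $(1-M_l/M_u)$ contraction with $\eta = \Theta(1/M_u)$. The only simplification you missed is that in the realizable model every residual vanishes at $(U^*,V^*)$, so $\nabla f_\Omega(U^*,V^*)=0$ exactly and the extra term you propose to control is identically zero.
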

{\bf Remark.} The linear convergence requires each iteration has a set of fresh samples. However, since it converges linearly to the ground-truth, we only need $\log(1/\epsilon)$ iterations, therefore the sample complexity is only logarithmic in $1/\epsilon$. This dependency is better than directly using Tensor decomposition method \cite{jsa15}, which requires $O(1/\epsilon^2)$ samples. Note that we only use Tensor decomposition to initialize the parameters. Therefore the sample complexity required in our tensor initialization does not depend on $\epsilon$. 
\subsection{Empirical Hessian around the Ground Truth for ReLU}
We now present our result for ReLU activation. As we see in Counter Example 2, without any further modification, the Hessian for ReLU is not locally strongly convex due to the redundancy in parameters. Therefore, we reduce the parameter space by fixing one parameter for each $(u_i,v_i)$ pair, $i\in[k]$. In particular, we fix $u_{1,i} = u_{1,i}^*, \forall i\in[k]$ when minimizing the objective function, Eq.~\eqref{eq:emp_risk}, where $u_{1,i}$ is $i$-th element in the first row of $U$. Note that as long as $u_{1,i}^*\neq 0$, $u_{1,i}$ can be fixed to any other non-zero values. We set $u_{1,i} = u_{1,i}^*$ just for simplicity of the proof. The new objective function can be represented as {
\begin{equation}\label{eq:emp_risk_relu}
\begin{aligned}
 f_{\Omega}^{\mathrm{ReLU}} (W,V)=  \frac{1}{2|\Omega|}\sum_{(x,y)\in \Omega }  ( \phi( W^\top x_{2:d} + x_1 (u^{*(1)})^\top)^\top \phi (V^\top y) - A(x,y) )^2.
\end{aligned}
\end{equation}}
where $u^{*(1)}$ is the first row of $U^*$ and $W \in \dR^{(d-1)\times k}$.

Surprisingly, after fixing one parameter for each $(u_i,v_i)$ pair, the Hessian using ReLU is also positive definite w.h.p. for a given $(U,V) $ around the ground truth. 
\begin{theorem}[Positive Definiteness of Hessian for ReLU]\label{thm:relu_main}
Define $u_0 := \min_{i\in [k]}\{|u_{1,i}^{*}|\}$. 
For any $t>1$ and any given $U,V$, if 
\begin{align*}
& \quad n_1 \gtrsim u_0^{-4} t \lambda^4 \kappa^{12} d \log^2 d ,  \quad n_2 \gtrsim u_0^{-4} t\lambda^4  \kappa^{12}  d\log^2 d,  \quad |\Omega| \gtrsim u_0^{-4} t \lambda^4  \kappa^{12} d \log^2 d , \\
& \quad \|W - W^*\| + \|V - V^*\| \lesssim  u_0^4/ \lambda^4\kappa^{12},
\end{align*}
then with probability $1-d^{-t}$, the minimal eigenvalue of the objective for ReLU activation function, Eq.~\eqref{eq:emp_risk_relu}, is lower bounded,
\begin{align*}
\lambda_{\min}( \nabla^2 f_{\Omega}^{\mathrm{ReLU}}(W,V)) \gtrsim  u_0^2 / \lambda^2\kappa^4.
\end{align*}
\end{theorem}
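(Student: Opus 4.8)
The plan is to mirror the two-stage strategy behind Theorem~\ref{thm:sigmoid_main} — first lower-bound the \emph{population} Hessian of $f_{\D}^{\mathrm{ReLU}}$, then transfer to the empirical Hessian of \eqref{eq:emp_risk_relu} by matrix concentration — but with substantial extra care for the non-smoothness and the built-in redundancy of ReLU (Counter Example 2), which is precisely what the parameter reduction $u_{1,i}=u_{1,i}^*$ is designed to kill. Write $p(x,y;W,V)=\phi(W^\top x_{2:d}+x_1(u^{*(1)})^\top)^\top\phi(V^\top y)$ for the prediction and $r=p-A$ for the residual; the per-sample loss is $\tfrac12 r^2$, so the Hessian splits as
\[
\nabla^2 f_{\Omega}^{\mathrm{ReLU}}(W,V)=\underbrace{\tfrac{1}{|\Omega|}\sum_{(x,y)\in\Omega}\nabla p\,\nabla p^\top}_{\text{Gauss--Newton term }G_\Omega}\;+\;\underbrace{\tfrac{1}{|\Omega|}\sum_{(x,y)\in\Omega} r\,\nabla^2 p}_{\text{residual term }R_\Omega}.
\]
The term $G_\Omega$ is PSD. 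Since $\phi''\equiv 0$ a.e., the only surviving blocks of $\nabla^2 p$ are the $(w_i,v_i)$ cross-blocks $\phi'(\tilde u_i^\top x)\,\phi'(v_i^\top y)\,x_{2:d}y^\top$, where $\tilde u_i$ is the $i$-th column with first coordinate frozen to $u^*_{1,i}$; these are switched on by the residual $r$, which is small near the ground truth. So it suffices to lower-bound $\lambda_{\min}(\E G_{\D})$ and to show that $\|\E R_{\D}\|$ and the empirical fluctuations of both terms are each much smaller than that bound.

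\textbf{Step 1: population Gauss--Newton bound at the ground truth.} I would first take $(W,V)=(W^*,V^*)$ and lower-bound $\lambda_{\min}\big(\E_{x,y}[\nabla p^*\nabla p^{*\top}]\big)$ over the \emph{reduced} parameter space. For a unit test direction with blocks $a_i\in\dR^{d-1}$, $b_i\in\dR^{d}$ and lift $\tilde a_i=(0,a_i)$, the quadratic form equals
\[
\E\Big[\Big(\textstyle\sum_{i=1}^k \mathbf{1}[u_i^{*\top}x>0]\,(\tilde a_i^\top x)\,\phi(v_i^{*\top}y)+\mathbf{1}[v_i^{*\top}y>0]\,(b_i^\top y)\,\phi(u_i^{*\top}x)\Big)^2\Big].
\]
Using independence of $x$ and $y$ and rotating to the plane spanned by the relevant pair of columns, this reduces to closed-form Gaussian moments of products of ReLU-indicators and ReLU-values of one-dimensional (correlated) projections. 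The one degenerate direction of the \emph{unconstrained} problem is $a_i\propto u_i^*$, $b_i\propto -v_i^*$ (the infinitesimal scaling $U^*D,V^*D^{-1}$); freezing the first row of $U$ deletes exactly the first coordinate of each $\tilde a_i$, and the heart of this step is to check that this is enough to make that direction inadmissible and that it is the \emph{only} near-null direction. Tracking the dependence of the resulting Gram matrix on the singular values of $U^*,V^*$ (through $\lambda,\kappa$) and on $u_0=\min_i|u^*_{1,i}|$ (which measures how much of the bad direction the constraint removes, entering roughly quadratically) yields the claimed $\lambda_{\min}(\E G_{\D}(W^*,V^*))\gtrsim u_0^2/(\lambda^2\kappa^4)$.

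\textbf{Step 2: perturbation to nearby $(W,V)$, residual bound, concentration.} Because $\phi'$ is a discontinuous indicator I cannot invoke continuity of the population Hessian; instead I bound $\|\E G_{\D}(W,V)-\E G_{\D}(W^*,V^*)\|$ directly using the standard Gaussian fact $\Pr[\sign(u_i^\top x)\neq\sign(u_i^{*\top}x)]\lesssim \|u_i-u_i^*\|/\|u_i^*\|\lesssim\kappa\,\|W-W^*\|$, so indicator-flip contributions are tiny, while the remaining smooth contributions obey ordinary Lipschitz estimates. In parallel, $|r|=|p-A|\lesssim(\|W-W^*\|+\|V-V^*\|)\cdot\poly(\kappa)$ times products of $O(1)$ Gaussian projections, which makes $\|\E R_{\D}(W,V)\|$ small. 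Imposing $\|W-W^*\|+\|V-V^*\|\lesssim u_0^4/(\lambda^4\kappa^{12})$ forces the sum of these two perturbations below half the Step-1 bound, so $\lambda_{\min}(\E\nabla^2 f_{\D}^{\mathrm{ReLU}})\gtrsim u_0^2/(\lambda^2\kappa^4)$. Finally I pass to $\nabla^2 f_{\Omega}^{\mathrm{ReLU}}$ by a two-level argument, as in the linear-IMC analyses: first show the feature sets are regular (e.g.\ $\tfrac1{n_1}\sum_i x_ix_i^\top\approx I$ plus matching higher-moment controls) once $n_1,n_2\gtrsim d\log^2 d$, then show the average over $\Omega$ concentrates once $|\Omega|\gtrsim d\log^2 d$. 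Since the summands carry $\|x\|^2,\|y\|^2$ and (unbounded) ReLU values they are only sub-exponential, so each matrix-Bernstein step is preceded by truncation at radius $O(\sqrt{d\log d})$ — the source of the extra $\log d$ — and absorbing the condition-number dependence of the summand norms gives the stated $u_0^{-4}t\lambda^4\kappa^{12}d\log^2 d$ bounds and $d^{-t}$ failure probability.

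\textbf{Main obstacle.} The crux is Step 1 together with the indicator-flip part of Step 2. Unlike the sigmoid/tanh case there is genuine parameter redundancy, so one must (i) evaluate the Gaussian moments of coupled ReLU/indicator products across all $k$ coordinates, (ii) prove that freezing a \emph{single} row of $U$ eliminates the \emph{entire} near-null space rather than part of it, and (iii) control the discontinuity of the Gauss--Newton term under perturbations with no smoothness available. These three points are also what force the much larger powers of $\kappa$ and the explicit $u_0$-dependence compared with Theorem~\ref{thm:sigmoid_main}.
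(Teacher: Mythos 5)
Your proposal follows essentially the same route as the paper: lower-bound the population Gauss--Newton term at the ground truth by showing the row-freezing constraint eliminates the diagonal-rescaling null family (with the $u_0^2$ loss quantifying how much of that direction the constraint removes), control the perturbation to nearby $(W,V)$ via the sign-flip probability $\Pr[\sign(u_i^\top x)\neq\sign(u_i^{*\top}x)]\lesssim\|u_i-u_i^*\|/\|u_i^*\|$ (whose Cauchy--Schwarz conversion yields the square-root dependence that explains the $u_0^4/\lambda^4\kappa^{12}$ radius), and conclude with a two-stage truncated matrix-Bernstein argument exploiting the $M(x)N(y)$ product structure of the blocks. This matches the paper's Lemma~\ref{lemma:relu_ortho}, Theorem~\ref{thm:relu}, and Theorem~\ref{thm:empirical_error_bound} in both structure and the resulting parameter dependences.
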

{\bf Remark.} Similar to the sigmoid/tanh case, the sample complexity for ReLU case also has a linear dependency on the dimension. However, here we have a worse dependency on the condition number of the weight matrices.  The scale of $u_0$ can also be important and in practice one needs to set it carefully. Note that although the activation function is not smooth, the Hessian at a given point can still exist with probability $1$, since ReLU is smooth almost everywhere and there are only a finite number of samples. However, owing to the non-smoothness, a proof of convergence of gradient descent method for ReLU is still an open problem.

\subsection{Initialization}\label{sec:initialization}
To achieve the ground truth, our algorithm needs a good initialization method that can initialize the parameters to fall into the neighborhood of the ground truth. Here we show that this is possible by using tensor method under the Gaussian assumption. 

 In the following, we consider estimating $U^*$. Estimating $V^*$ is similar. 
 
Define $M_3 :=  ~ \mathbb{E} [A(x,y) \cdot ( x^{\otimes 3} - x \widetilde{\otimes} I)]$, where $ x \widetilde{\otimes} I := \sum_{j=1}^d [ x\otimes e_j\otimes e_j+ e_j\otimes x\otimes e_j+ e_j \otimes e_j \otimes x]$. 
Define $\gamma_j(\sigma) :=  \mathbb{E} [ \phi(\sigma\cdot z)z^j], \; \forall j=0,1,2,3$. Then 
$ M_3 = \sum_{i=1}^k \alpha_i \bar{u}_i^{*\otimes 3},$
where $\bar{u}_i^* = u_i^*/\|u_i^*\|$ and $\alpha_i = \gamma_0(\|v_i^*\|)  \left( \gamma_3 ( \| u_i^*\|)- 3\gamma_1( \|u_i^*\|)\right)$. When $\alpha_i \neq 0$, we can approximately recover $\alpha_i$ and $\bar{u}_i^*$ from the empirical version of $M_3$ using non-orthogonal tensor decomposition \cite{kuleshov2015tensor}. When $\phi$ is sigmoid, $\gamma_0(\|v_i^*\|) = 0.5$. Given $ \alpha_i$, we can estimate $\|u_i^*\|$, since $\alpha_i$ is a monotonic function w.r.t. $\|u_i^*\|$. Applying Lemma~{B.7} in \cite{zsjbd17}, we can bound the approximation error of empirical $M_3$ and population $M_3$ using polynomial number of samples. By \cite{kuleshov2015tensor}, we can bound the estimation error of $\|u_i^*\|$ and $\bar{u}_i^*$. Finally combining Theorem~\ref{thm:sigmoid_main}, we are able to show the recovery guarantee for sigmoid activation, i.e., Theorem~\ref{thm:recovery}.
%

Although tensor initialization has nice theoretical guarantees and sample complexity, it heavily depends on Gaussian assumption and realizable model assumption. In contrast, practitioners typically use  random initialization. 


\section{Experiments}\label{sec:exp}
\begin{figure*}[t]
    \hspace{-0.3cm}
    \begin{minipage}{.3\textwidth}
        \centering
\begin{tabular}{cccc}
\includegraphics[width=0.87\textwidth]{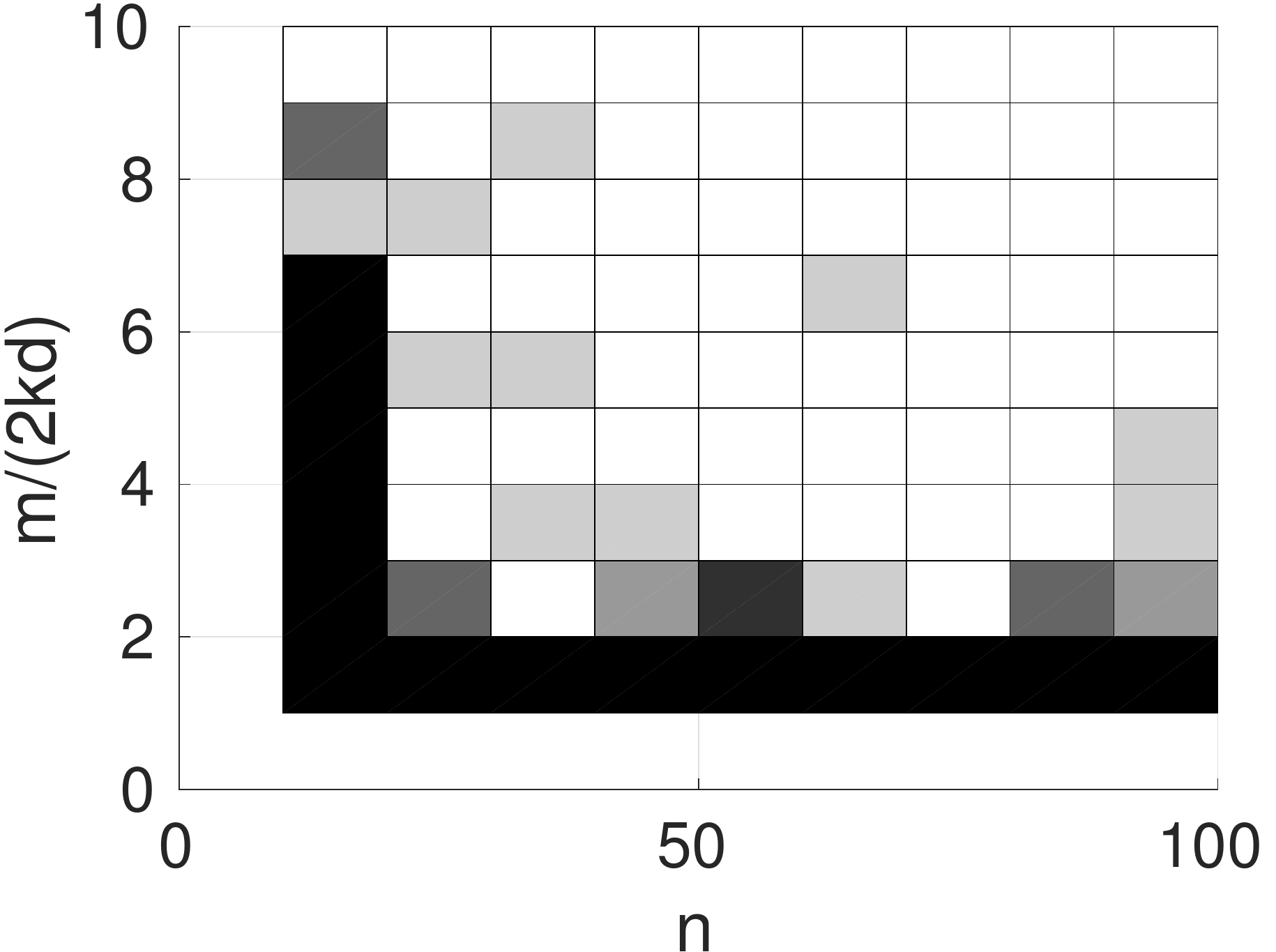}\vspace{-0.3cm}
\\
 \includegraphics[width=0.87\textwidth]{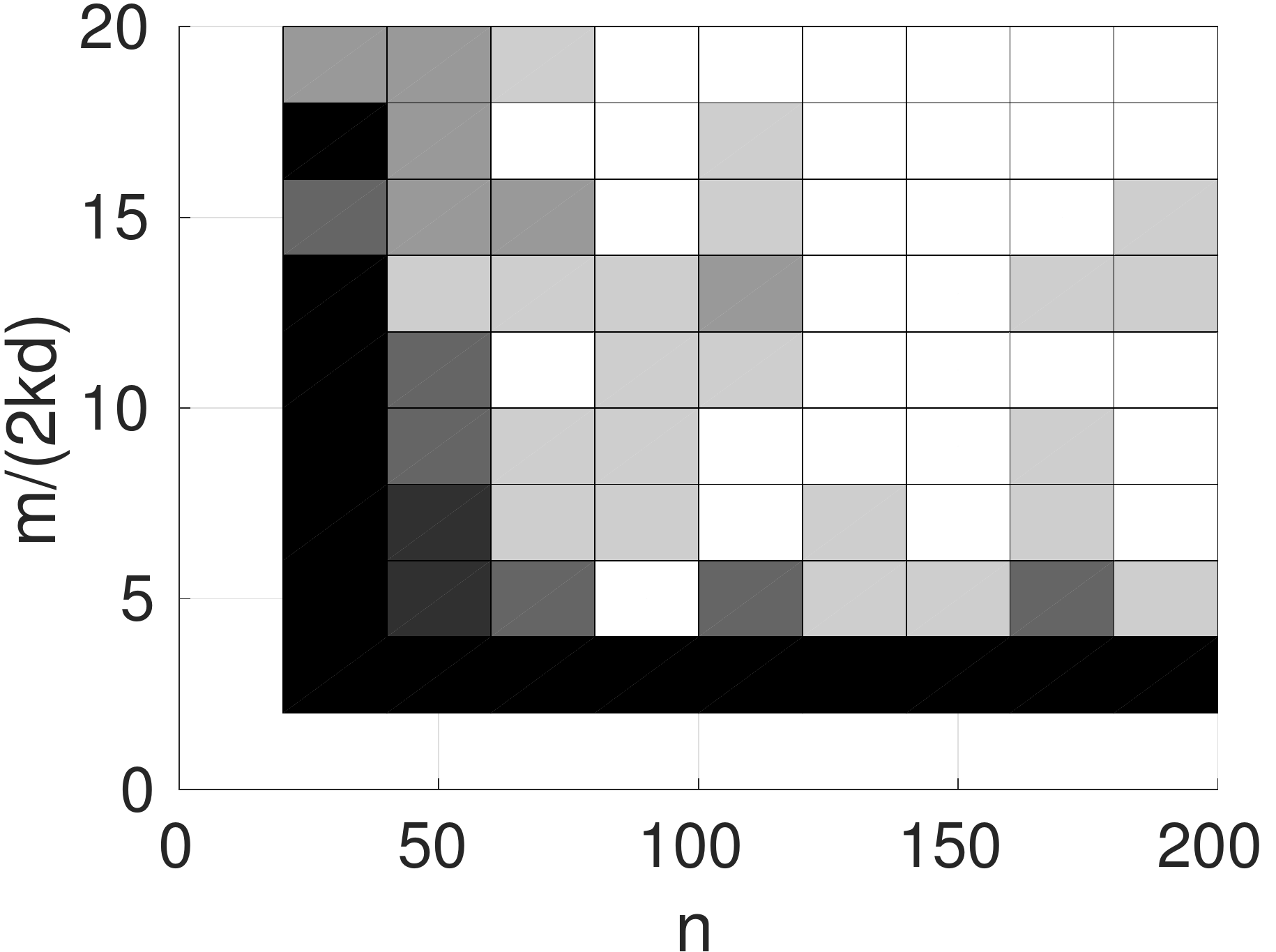}  
 \vspace{-0.3cm}
\end{tabular}
    \end{minipage}%
    \hspace{-0.3cm}
    \begin{minipage}{0.6\textwidth}
    \scalebox{0.85}{
         \begin{tabular}{|c |c | c | c | c || r| r || r |r|r| c c} 
 \hline
Dataset       &$n$ &$d$&$k$             & $|\Omega|$ & NIMC   & IMC &  \begin{tabular}{@{}c@{}}NIMC- \\ RFF\end{tabular}   &  \begin{tabular}{@{}c@{}}IMC- \\ RFF\end{tabular}  \\ \hline
\multirow{2}{*}{mushroom }& \multirow{2}{*}{8124} & \multirow{2}{*}{112} & \multirow{2}{*}{2} 
& 5$n$           &	0	&0.0049	&0	&0 \\
&&&
& 20$n$           &	0	&	0.0010	&0&	0 \\ \hline
\multirow{2}{*}{segment}& \multirow{2}{*}{2310} & \multirow{2}{*}{19} & \multirow{2}{*}{7}
 & 5$n$           &	    0.0543   & 0.0694   & 0.0197   & 0.0257  \\
&&&  
 & 20$n$           &    0.0655   & 0.0768   & 0.0092   & 0.0183   \\ \hline
\multirow{2}{*}{covtype}& \multirow{2}{*}{1708} & \multirow{2}{*}{54} & \multirow{2}{*}{7}
 & 5$n$           &	       0.1671  &  0.1733  &  0.1548  &  0.1529   \\ 
&&& 
 & 20$n$           &        0.1555   & 0.1600  &  0.1200 &   0.1307  \\
 \hline
\multirow{2}{*}{letter}& \multirow{2}{*}{15000} & \multirow{2}{*}{16} & \multirow{2}{*}{26}
 & 5$n$           &	    0.0590  &  0.0704  &  0.0422 &   0.0430 \\
&&&  
& 20$n$           &  0.0664  &  0.0760  &  0.0321  &  0.0356 \\    
 \hline 
\multirow{2}{*}{yalefaces}& \multirow{2}{*}{2452} & \multirow{2}{*}{100} & \multirow{2}{*}{38}
& 5$n$           &	    0.0315  &  0.0329 &   0.0266 &   0.0273 \\ 
&&&  
 & 20$n$           & 0.0212   & 0.0277   & 0.0064  &  0.0142 \\
\hline
\multirow{2}{*}{usps}& \multirow{2}{*}{7291} & \multirow{2}{*}{256} & \multirow{2}{*}{10}
& 5$n$          &	    0.0211  &  0.0361  &  0.0301   & 0.0185 \\
&&&  
 & 20$n$           &    0.0184  &  0.0320  &  0.0199   & 0.0152 \\
   \hline
 \end{tabular}
}

   \end{minipage}
 \caption{The left two figures (top: sigmoid, bottom: ReLU) plot rate of success of GD over synthetic data. White blocks denote $100\%$ success rate. The right table presents error in semi-supervised clustering using NIMC and IMC.}
 \label{fig:results}
\end{figure*}

In this section, we show experimental results on both synthetic data and real-world data. Our experiments on synthetic data are intended to verify our theoretical analysis, while the real-world data shows the superior performance of NIMC over IMC. We apply gradient descent with random initialization to both NIMC and IMC.  

\subsection{Synthetic Data}
We first generate some synthetic datasets to verify the sample complexity and the convergence of gradient descent using random initialization. We fix $k=5,d=10$. For sigmoid, set the number of samples $n_1=n_2=n = \{10 \cdot i\}_{i=1,2\cdots,10}$ and the number of observations $|\Omega|=m = \{2kd\cdot i\}_{i=1,2,\cdots,10}$. For ReLU,  set $n = \{20 \cdot i\}_{i=1,2\cdots,10}$ and $m = \{4kd\cdot i\}_{i=1,2,\cdots,10}$. The sampling rule follows our previous assumptions. For each $n,m$ pair, we make 5 trials and take the average of the successful recovery times.  We say a solution ($U,V$) successfully recovers the ground truth parameters when the solution achieves 0.001 relative testing error, i.e., $\|\phi(X_tU)\phi(X_tU)^\top - \phi(X_tU^*)\phi(X_tU^*)^\top \|_F \leq  0.001\cdot \|\phi(X_tU^*)\phi(X_tU^*)^\top \|_F ,$ where $X_t \in \dR^{n\times d}$ is a newly sampled testing dataset. For both ReLU and sigmoid, we minimize the original objective function~\eqref{eq:emp_risk}. 
We illustrate the recovery rate in left figures in Figure~\ref{fig:results}. As we can see, ReLU requires more samples/observations than that for sigmoid for exact recovery (note the scales of $n$ and $m/2kd$ are different in the two figures). This is consistent with our theoretical results. Comparing Theorem~\ref{thm:sigmoid_main} and Theorem~\ref{thm:relu_main}, we can see the sample complexity for ReLU has a worse dependency on the conditioning of $U^*,V^*$ than sigmoid. We can also see that when $n$ is sufficiently large, the number of observed ratings required remains the same for both methods. This is also consistent with the theorems, where $|\Omega|$ is near-linear in $d$ and is independent of $n$. 

\subsection{Semi-supervised Clustering}
We apply NIMC to semi-supervised clustering and follow the experimental setting in GIMC \cite{si2016goal}. In this problem, we are given a set of items with their features, $X\in \dR^{n\times d}$, where $n$ is the number of items and $d$ is the feature dimension, and an incomplete similarity matrix $A$, where $A_{i,j} = 1$ if $i$-th item and $j$-th item are similar and $A_{i,j} = 0$ if $i$-th item and $j$-th item are dissimilar. The goal is to do clustering using both existing features and the partially observed similarity matrix. We build the dataset from a classification dataset where the label of each item is known and will be used as the ground truth cluster. We first compute the similarity matrix from the labels and sample $|\Omega|$ entries uniformly as the observed entries. Since there is only one features we set $y_j = x_j$ in the objective function Eq.~\eqref{eq:emp_risk}. 

We initialize $U$ and $V$ to be the same Gaussian random matrix, then apply gradient descent. This guarantees $U$ and $V$ to keep identical during the optimization process. Once $U$ converges, we take the top $k$ left singular vectors of $\phi(XU)$ to do k-means clustering. The clustering error is defined as in  \cite{si2016goal}. 
Like \cite{si2016goal}, we define the clustering error as follows, 
\begin{equation*}
\text{error} = \frac{2}{n(n-1)} \left(\sum_{(i,j):\pi^*_i = \pi^*_j} 1_{\pi_i \neq \pi_j}+\sum_{(i,j):\pi^*_i \neq \pi^*_j} 1_{\pi_i = \pi_j} \right),
\end{equation*}
where $\pi^*$ is the ground-truth clustering and $\pi$ is the predicted clustering. 
We compare NIMC of a ReLU activation function with IMC on six datasets using raw features and random Fourier features (RFF). The random Fourier feature is 
$ r(x) = \frac{1}{\sqrt{q}} \cdot \begin{bmatrix} \sin(Qx)^\top & \cos(Qx)^\top \end{bmatrix}^\top \in \R^{2q}
$
and each entry of $Q \in \dR^{q\times d}$ is i.i.d. sampled from $\mathcal{N}(0,\sigma)$. We use Random Fourier features in order to see how increasing the depth of the neural network changes the performance. However, our analysis only works for one-layer neural networks, therefore, we use Random Fourier features, which can be viewed as using two-layer neural networks but with the first-layer parameters fixed.  

$\sigma$ is chosen such that a linear classifier using these random features achieves the best classification accuracy. $q$ is set as $100$ for all datasets. Datasets {\it mushroom, segment, letter,usps,covtype} are downloaded from libsvm website. We subsample covtype dataset to balance the samples from different classes. We preprocess {\it yalefaces} dataset as described in \cite{kusner2014stochastic}. As shown in the right table in Figure~\ref{fig:results}, when using raw features, NIMC achieves better clustering results than IMC for all the cases. This is also true for most cases when using Random Fourier features.
\subsection{Recommendation Systems}
Recommender systems are used in many real situations. Here we consider two tasks. 

{\bf Movie recommendation for users. }We use Movielens\cite{movielens} dataset, which has not only the ratings users give movies but also the users' demographic information and movies' genre information. Our goal is to predict ratings that new users will give the existing movies. We randomly split the users into existing users (training data) and new users (testing data) with ratio 4:1. The user features include 21 types of occupations, 7 different age ranges and one gender information; the movie features include 18-19 (18 for ml-1m and 19 for ml-100k) genre features and 20 features from the top 20 right singular values of the training rating matrix (which has size \#training users -by- \#movies). In our experiments, we set $k$ to be 50. Here are our results on datasets ml-1m and ml-100k. For NIMC, we use ReLU activation. As shown in Table~\ref{table:movielens}, NIMC achieves much smaller RMSE than IMC for both ml-100k and ml-1m datasets. 
\begin{table}
\centering
 \begin{tabular}{|c |c | c | c | c | c| c | c |c|c| c c} 
 \hline
Dataset       &\#movies  &  \#users &  \# ratings   &\# movie feat. & \# user feat.&  \begin{tabular}{@{}c@{}}RMSE \\ NIMC \end{tabular}  & \begin{tabular}{@{}c@{}}RMSE \\ IMC \end{tabular}  \\ \hline
ml-100k & 1682 & 943 & 100,000 & 39 & 29 & {\bf 1.034} & 1.321 \\   \hline
ml-1m & 3883 &  6040 & 1,000,000 & 38 & 29 & {\bf 1.021} &  1.320  \\   \hline
 \end{tabular}
 \caption{Test RMSE for recommending new users with movies on Movielens dataset. }
 \label{table:movielens}
\end{table}

\begin{figure}[t]
\begin{tabular}{cccc} \hspace{-0.32cm}
\includegraphics[width=0.24\textwidth]{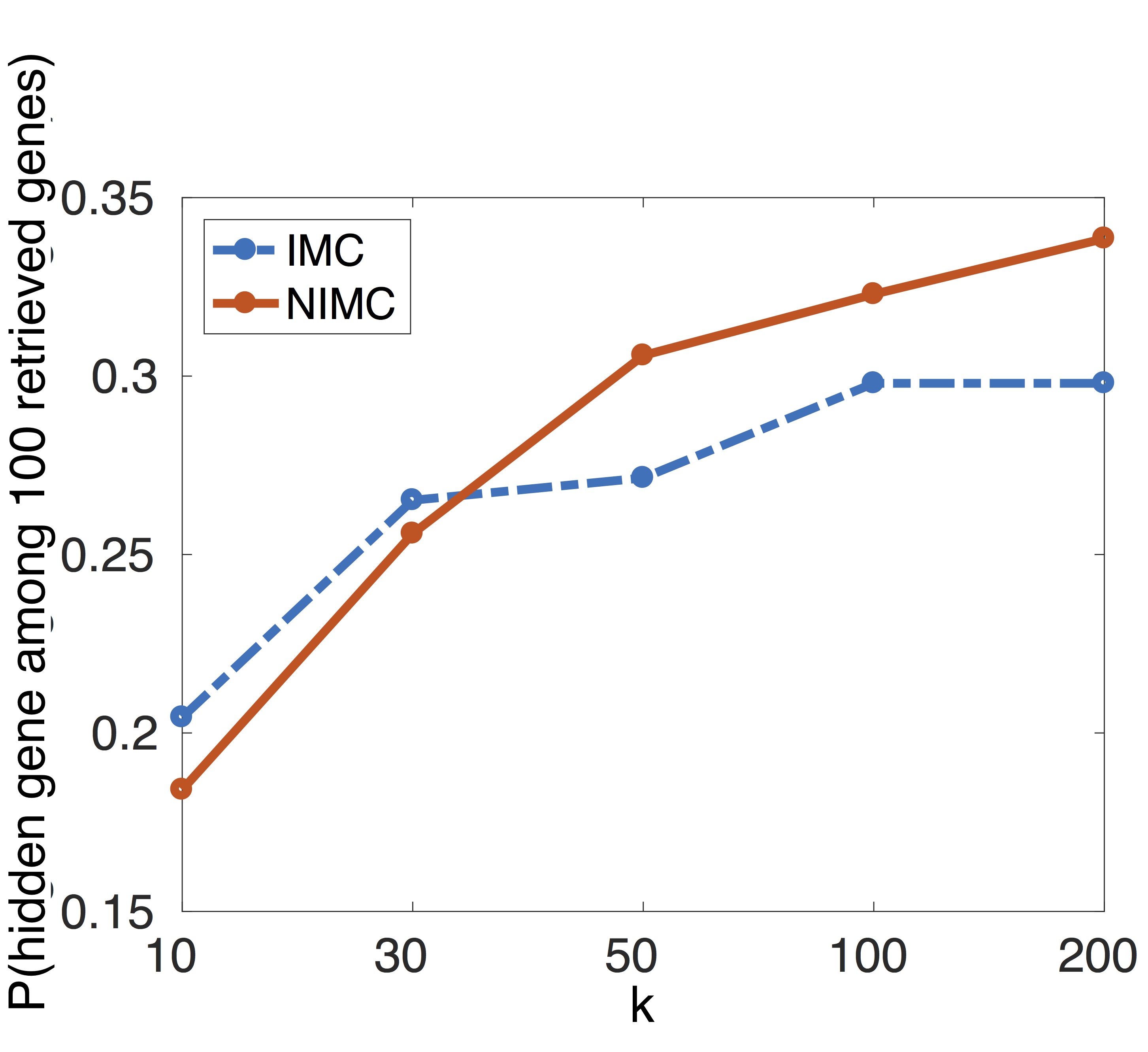} &   \hspace{-0.32cm}\includegraphics[width=0.24\textwidth]{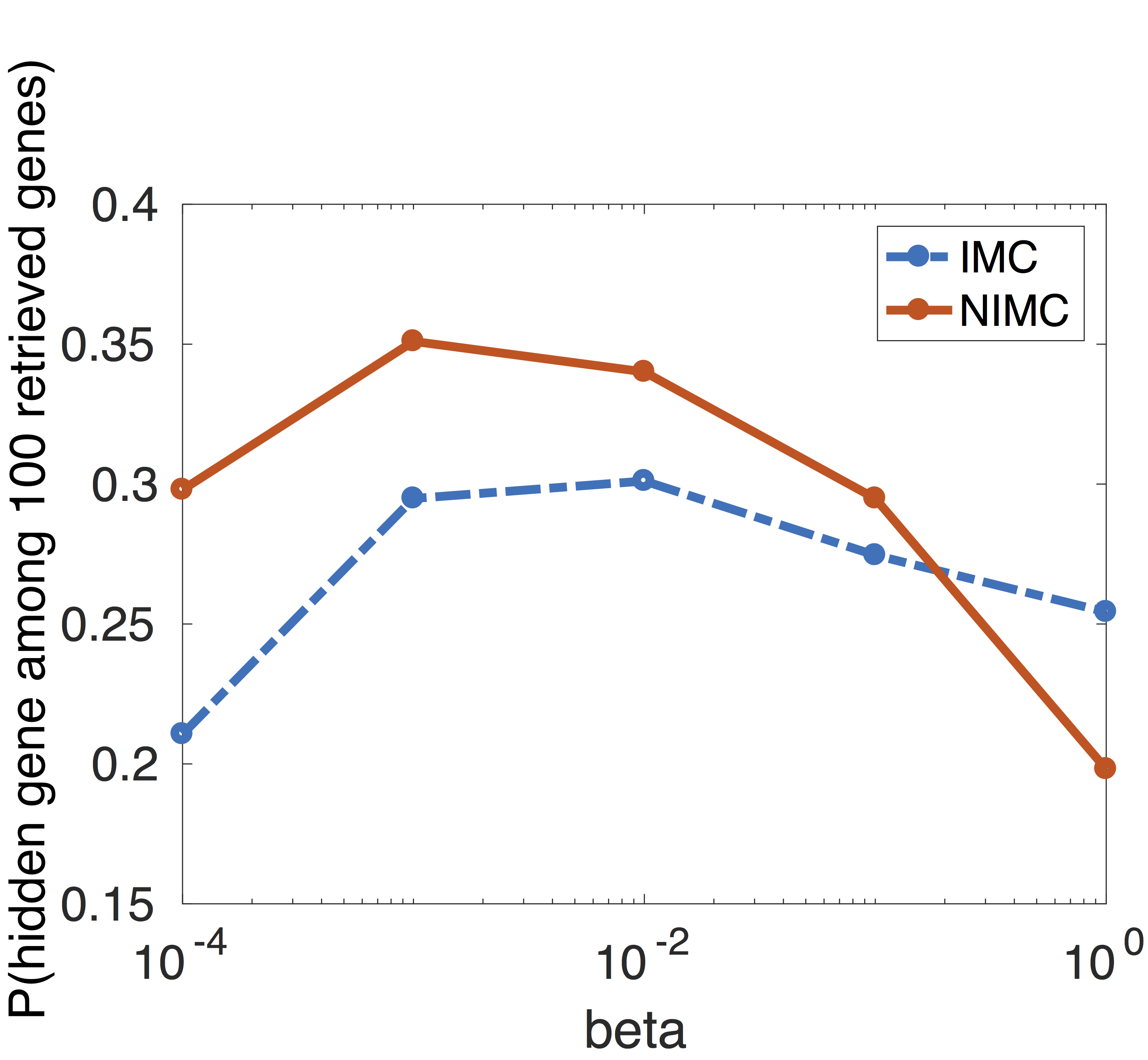}  &  \hspace{-0.32cm}  \includegraphics[width=0.24\textwidth]{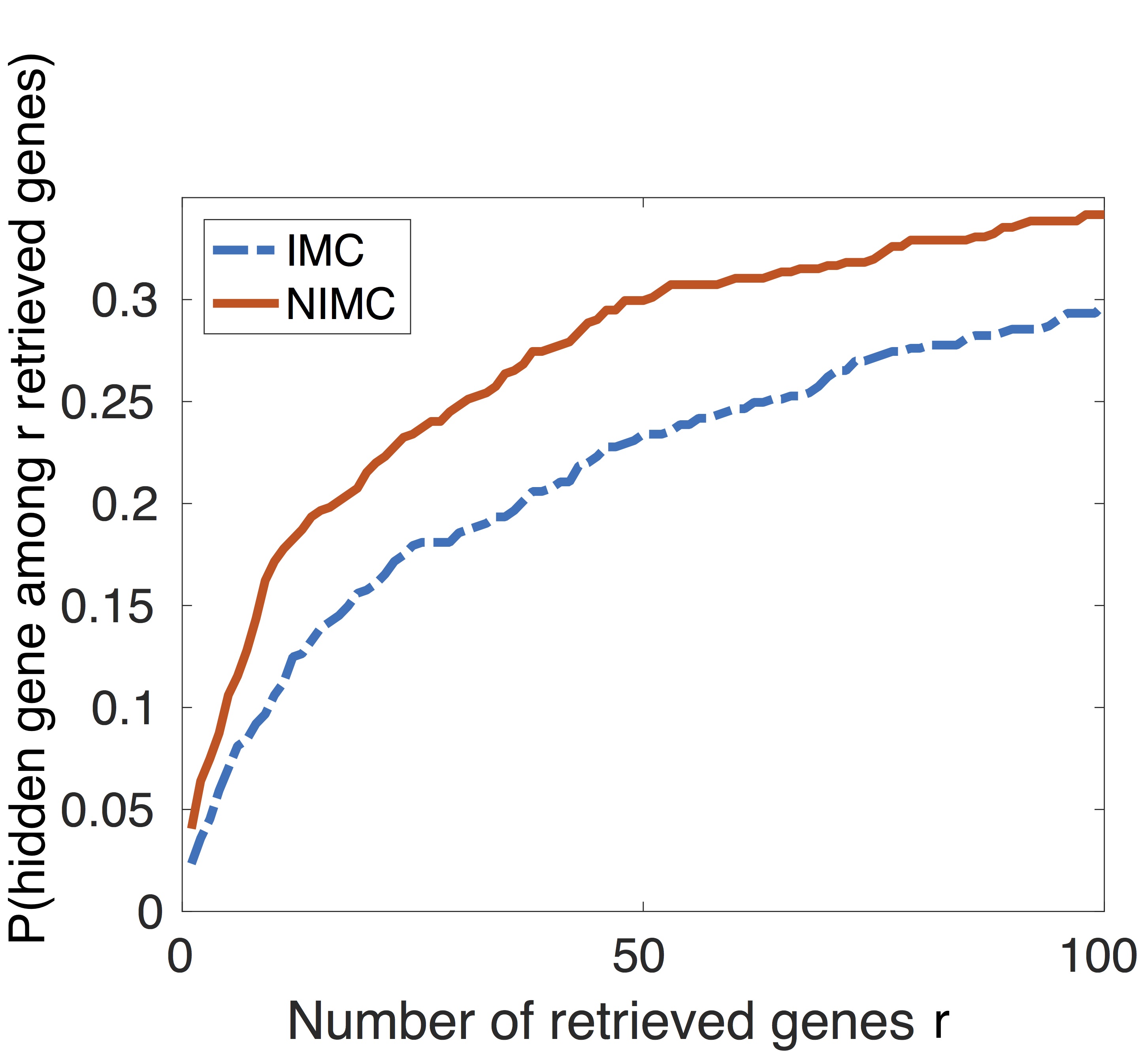}  &  \hspace{-0.32cm}\includegraphics[width=0.24\textwidth]{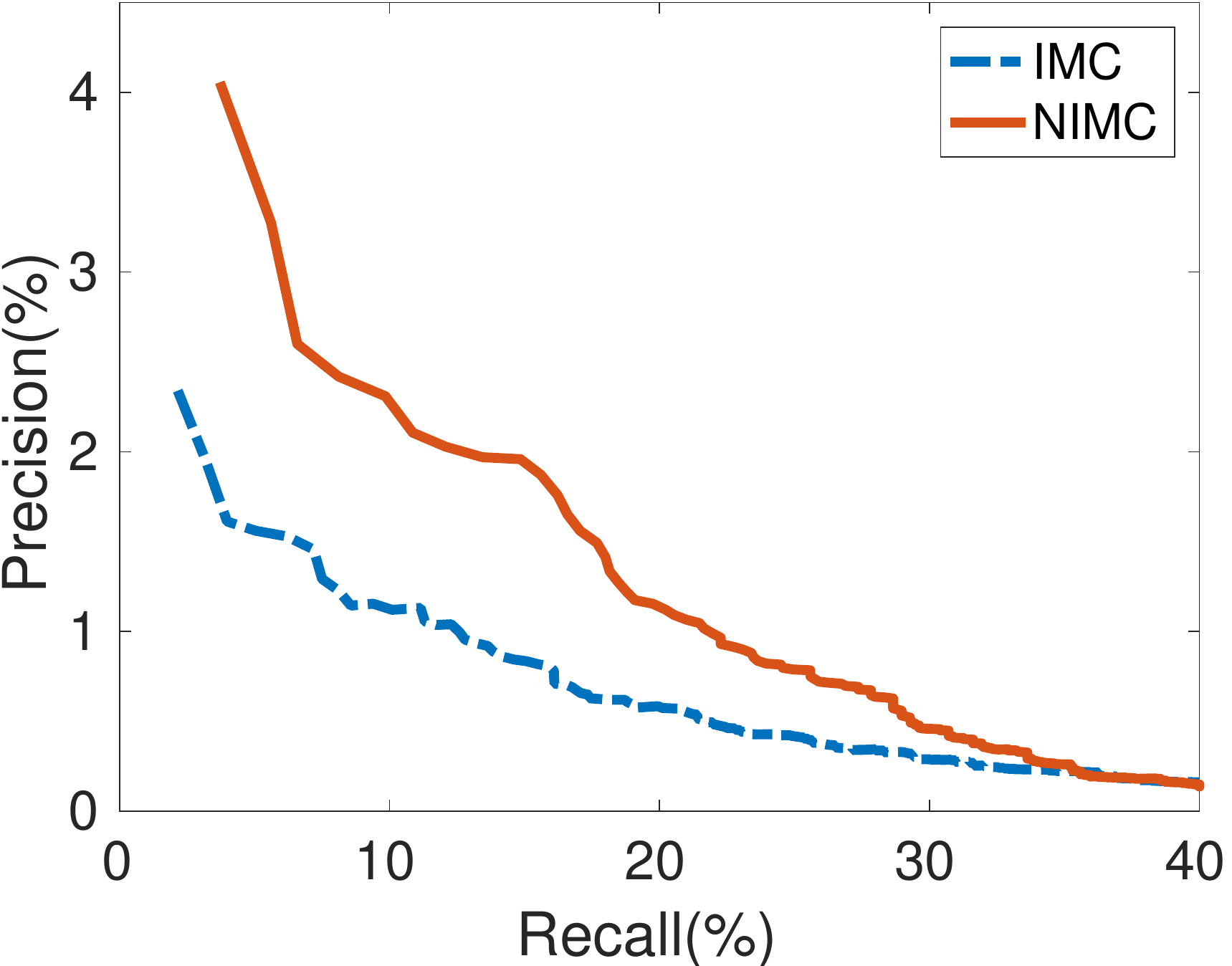}  \\
(a) & (b) & (c) & (d) 
\end{tabular}
\caption{NIMC v.s. IMC on gene-disease association prediction task. }
\label{fig:bio}
\end{figure}

{\bf Gene-Disease association prediction. }
We use the dataset collected by \cite{natarajan2014inductive}, which has 300 gene features and 200 disease features. Our goal is to predict associated genes for a new disease given its features. Since the dataset only contains positive labels, this is a problem called positive-unlabeled learning \cite{hsieh2015pu} or one-class matrix factorization \cite{yu2017unified}. We adapt our objective function to the following objective,
\begin{equation}\label{eq:pu_obj}
f(U,V) =\frac{1}{2}\left( \sum_{(i,j)\in\Omega}(\phi(U^\top x_i)^\top \phi(V^\top y_j) - A_{ij})^2 + \beta \sum_{(i,j)\in\Omega^c}(\phi(U^\top x_i)^\top \phi(V^\top y_j))^2 \right),
\end{equation}
where $A$ is the association matrix, $\Omega$ is the set of indices for observed associations, $\Omega^c$ is the complementary set of $\Omega$ and $\beta$ is the penalty weight for unobserved associations. There are totally 12331 genes and 3209 diseases in the dataset. We randomly split the diseases into training diseases and testing diseases with ratio 4:1. The results are presented in Fig~\ref{fig:bio}. We follow \cite{natarajan2014inductive} and use the cumulative distribution of the ranks as a measure for comparing the performances of different methods, i.e., the probability that any ground-truth associated gene of a disease appears in the retrieved  top-$r$ genes for this disease. 

In Fig~\ref{fig:bio}(a), we show how $k$ changes the performance of NIMC and IMC. In general, the higher $k$, the better the performance. The performance of IMC becomes stable when $k$ is larger than 100, while the performance of NIMC is still increasing. Although IMC performs better than NIMC when $k$ is small, the performance of NIMC increases much faster than IMC when $k$ increases. $\beta$ is fixed as $0.01$ and $r=100$ in the experiment for Fig~\ref{fig:bio}(a).  In Fig.~\ref{fig:bio}(b), we present how $\beta$ in Eq.~\eqref{eq:pu_obj} affects the performance. We tried over $\beta = [10^{-4},10^{-3},10^{-2},10^{-1},1]$ to check how the value of $\beta$ changes the performance. As we can see, $\beta=10^{-3}$ and $10^{-2}$ give the best results. Fig.~\ref{fig:bio}(c) shows the probability that any ground-truth associated gene of a disease appears in the retrieved  top-$r$ genes for this disease  w.r.t. different $r$'s. Here we fix $k = 200$, and $\beta=0.01$.  Fig.~\ref{fig:bio}(d) shows the precision-recall curves for different methods when $k = 200$, and $\beta=0.01$. 

\section{Conclusion}
In this paper, we studied a nonlinear IMC model that represents one of the simplest inductive model for neural-network-based recommender systems. We study  local geometry of the empirical risk function and show that, close to the optima, the function is strongly convex for both ReLU and sigmoid activations. Therefore, using a smooth activation function like sigmoid activation along with standard tensor initialization, gradient descent recovers the underlying model with polynomial sample complexity and time complexity. Thus we provide the first theoretically rigorous result for the non-linear recommendation system problem, which we hope will spur further progress in the area of deep-learning based recommendation systems. Our experimental results on synthetic data matches our analysis and the results on real-world benchmarks for semi-supervised clustering and recommendation systems show a superior performance over linear IMC. 


\clearpage
\newpage


\bibliographystyle{alpha}

\bibliography{ref}
\addcontentsline{toc}{section}{References}

\newpage
\appendix
\section*{Appendix}

\section{Notation}
For any positive integer $n$, we use $[n]$ to denote the set $\{1,2,\cdots,n\}$.
For random variable $X$, let $\mathbb{E}[X]$ denote the expectation of $X$ (if this quantity exists).

For any vector $ x\in \mathbb{R}^n$, we use $\|  x\|$ to denote its $\ell_2$ norm.

We provide several definitions related to matrix $A$.
Let $\det(A)$ denote the determinant of a square matrix $A$. Let $A^\top$ denote the transpose of $A$. Let $A^\dagger$ denote the Moore-Penrose pseudoinverse of $A$. Let $A^{-1}$ denote the inverse of a full rank square matrix. Let $\| A\|_F$ denote the Frobenius norm of matrix $A$. Let $\| A\|$ denote the spectral norm of matrix $A$. Let $\sigma_i(A)$ to denote the $i$-th largest singular value of $A$.

We use $\bone_{f}$ to denote the indicator function, which is $1$ if
$f$ holds and $0$ otherwise. Let $I_d \in \mathbb{R}^{d\times d}$
denote the identity matrix. We use $\phi(z)$ to denote an activation
function. 
We use ${\cal D}$ to denote a Gaussian distribution ${\cal N}(0,I_d)$. 
For integer $k$, we use $\D_k$ to denote $\N(0,I_k)$.

For any function $f$, we define $\widetilde{O}(f)$ to be $f\cdot \log^{O(1)}(f)$. In addition to $O(\cdot)$ notation, for two functions $f,g$, we use the shorthand $f\lesssim g$ (resp. $\gtrsim$) to indicate that $f\leq C g$ (resp. $\geq$) for an absolute constant $C$. We use $f\eqsim g$ to mean $cf\leq g\leq Cf$ for constants $c,C$.

\section{Preliminaries}
We state some useful facts in this section.
\begin{fact}\label{fac:basic_1}
Let $A = \begin{bmatrix} a_1 & a_2 & \cdots & a_k \end{bmatrix}$. Let $\diag(A) \in \R^k$ denote the vector where the $i$-th entry is $A_{i,i}$, $\forall i \in [k]$. Let ${\bf 1} \in \R^k$ denote the vector that the $i$-th entry is $1$, $\forall i \in [k]$. We have the following properties,
\begin{align*}
\mathrm{(\RN{1})} & ~ \sum_{i=1}^k (a_i^\top e_i)^2  = \| \diag(A) \|_2^2, \\
\mathrm{(\RN{2})} & ~ \sum_{i=1}^k (a_i^\top a_i)^2  = \| A \|_F^2, \\
\mathrm{(\RN{3})} & ~ \sum_{i=1}^k \sum_{j=1}^k (a_i^\top a_j) = \| A \cdot {\bf 1} \|_2^2, \\
\mathrm{(\RN{4})} & ~\sum_{i\neq j} a_i^\top a_j = \| A \cdot {\bf 1} \|_2^2 - \| A \|_F^2.
\end{align*}
\end{fact}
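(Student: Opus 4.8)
The plan is to prove all four identities by a direct expansion in coordinates; no probabilistic or spectral input is needed, only the definitions of $\diag(\cdot)$, $\|\cdot\|_F$, the coordinate vectors $e_i$, and the all-ones vector $\mathbf{1}$. Throughout I would write $A=[a_1\mid\cdots\mid a_k]$ so that the $i$-th column is $a_i$ and its $j$-th entry is $(a_i)_j=A_{j,i}$, with $e_i$ and $\mathbf{1}$ of the appropriate dimensions (length $k$ when multiplying $A$ on the right, length equal to the number of rows of $A$ when pairing against a column).

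For $\mathrm{(\RN{1})}$ I would use that $a_i^\top e_i = A_{i,i}$ is the $i$-th diagonal entry, so $\sum_{i=1}^k (a_i^\top e_i)^2 = \sum_{i=1}^k A_{i,i}^2 = \|\diag(A)\|_2^2$ by definition of $\diag(A)$. For $\mathrm{(\RN{2})}$ I would expand $a_i^\top a_i = \sum_j A_{j,i}^2 = \|a_i\|_2^2$, the squared norm of the $i$-th column, and sum over $i$; this collects every squared entry of $A$ exactly once, so $\sum_{i=1}^k a_i^\top a_i = \sum_{i,j} A_{j,i}^2 = \|A\|_F^2$. For $\mathrm{(\RN{3})}$ I would note $A\mathbf{1} = \sum_{j=1}^k a_j$, hence $\|A\mathbf{1}\|_2^2 = \big(\sum_i a_i\big)^\top\big(\sum_j a_j\big) = \sum_{i=1}^k\sum_{j=1}^k a_i^\top a_j$, the asserted double sum. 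Finally, $\mathrm{(\RN{4})}$ follows by splitting the double sum of $\mathrm{(\RN{3})}$ into its diagonal part $\sum_i a_i^\top a_i$ and its off-diagonal part $\sum_{i\neq j} a_i^\top a_j$, then substituting $\sum_i a_i^\top a_i = \|A\|_F^2$ from $\mathrm{(\RN{2})}$ and rearranging to get $\sum_{i\neq j} a_i^\top a_j = \|A\mathbf{1}\|_2^2 - \|A\|_F^2$.

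The hard part will be essentially nonexistent: this is a collection of elementary bookkeeping identities, and the only care required is to keep the column index distinct from the entry index and to read $e_i$, $\mathbf{1}$, and $\diag(\cdot)$ exactly as defined. I would note in passing that $\mathrm{(\RN{2})}$ should be read as the single sum $\sum_{i=1}^k a_i^\top a_i = \|A\|_F^2$ (without an outer square on $a_i^\top a_i$), and it is in that form that it feeds into $\mathrm{(\RN{4})}$.
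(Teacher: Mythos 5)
Your proof is correct and takes essentially the same approach as the paper: $\mathrm{(\RN{1})}$--$\mathrm{(\RN{3})}$ by direct expansion from the definitions, and $\mathrm{(\RN{4})}$ by splitting the double sum into its diagonal and off-diagonal parts and substituting $\mathrm{(\RN{2})}$ and $\mathrm{(\RN{3})}$. Your remark that $\mathrm{(\RN{2})}$ must be read as $\sum_{i=1}^k a_i^\top a_i = \|A\|_F^2$ (without the outer square) correctly identifies a typo in the statement, and the paper's own proof of $\mathrm{(\RN{4})}$ implicitly uses that corrected form.
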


\begin{proof}
Using the definition, it is easy to see that (\RN{1}), (\RN{2}) and (\RN{3}) are holding.

Proof of (\RN{4}), we have
\begin{align*}
\sum_{i\neq j} a_i^\top a_j 
= \sum_{i,j} a_i^\top a_j  - \sum_{i=1}^k a_i^\top a_i 
= \| A \cdot {\bf 1} \|_2^2 - \| A \|_F^2.
\end{align*}
where the last step follows by (\RN{2}) and (\RN{3}).
\end{proof}

\begin{fact}\label{fac:basic_2}
Let $A = \begin{bmatrix} a_1 & a_2 & \cdots & a_k \end{bmatrix}$. Let $\diag(A) \in \R^k$ denote the vector where the $i$-th entry is $A_{i,i}$, $\forall i \in [k]$. Let ${\bf 1} \in \R^k$ denote the vector that the $i$-th entry is $1$, $\forall i \in [k]$. We have the following properties,
\begin{align*}
\mathrm{(\RN{1})} & ~ \sum_{i \neq j} a_i^\top e_i e_i^\top a_j = (\diag(A)^\top \cdot (A \cdot {\bf 1}) ) -  \| \diag(A) \|_2^2, \\
\mathrm{(\RN{2})} & ~  \sum_{i \neq j} a_i^\top e_j e_j^\top a_j = (\diag(A)^\top \cdot (A \cdot {\bf 1}) ) -  \| \diag(A) \|_2^2, \\
\mathrm{(\RN{3})} & ~ \sum_{i \neq j} a_i^\top e_i a_j^\top e_j = (\diag(A)^\top \cdot {\bf 1} )^2 - \| \diag(A) \|_2^2, \\
\mathrm{(\RN{4})} & ~  \sum_{i \neq j} a_i^\top e_j a_j^\top e_i = \langle A^\top, A \rangle - \| \diag(A) \|_2^2.
\end{align*}
\end{fact}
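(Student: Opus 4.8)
# Proof Proposal for Fact~\ref{fac:basic_2}

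\textbf{Overview of the approach.} This is a purely computational lemma: each of the four identities expresses a double sum over index pairs $i\neq j$ in terms of the diagonal vector $\diag(A)$, the column sums $A\cdot{\bf 1}$, and inner products of $A$. The natural strategy, mirroring the proof of Fact~\ref{fac:basic_1}, is to complete each sum over $i\neq j$ to the full sum over all pairs $(i,j)$ by adding and subtracting the diagonal term $i=j$, evaluate the full (unrestricted) double sum as a product of simpler sums, and then identify the diagonal correction explicitly. The key elementary observations are: $a_i^\top e_i = A_{i,i} = (\diag(A))_i$; $a_j^\top e_j = A_{j,j}$; $e_i^\top a_j = A_{i,j}$; and $\sum_j A_{i,j} = (A\cdot{\bf 1})_i$. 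Everything then reduces to recognizing standard vector expressions.

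\textbf{Step-by-step plan.} For (\RN{1}), write $\sum_{i\neq j} a_i^\top e_i e_i^\top a_j = \sum_{i,j} (\diag(A))_i A_{i,j} - \sum_i (\diag(A))_i A_{i,i}$. The first term is $\sum_i (\diag(A))_i \sum_j A_{i,j} = \sum_i (\diag(A))_i (A\cdot{\bf 1})_i = \diag(A)^\top (A\cdot{\bf 1})$, and the second is $\sum_i (A_{i,i})^2 = \|\diag(A)\|_2^2$, giving the claimed formula. For (\RN{2}), note $a_i^\top e_j e_j^\top a_j = A_{i,j} A_{j,j}$; summing over all $(i,j)$ gives $\sum_j A_{j,j}\sum_i A_{i,j}$. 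Here one must observe that $\sum_i A_{i,j} = (A^\top {\bf 1})_j$; but combined with the hypothesis implicit in the lemma's use (or by a symmetry/relabeling argument, since in the intended application $A$ is of the form where column and row sums coincide appropriately), this matches $\diag(A)^\top(A\cdot{\bf 1})$ — I would double-check whether the statement as written silently assumes $A$ symmetric or whether $A\cdot{\bf 1}$ should be $A^\top\cdot{\bf 1}$; regardless, the diagonal correction $-\|\diag(A)\|_2^2$ comes from subtracting the $i=j$ terms $\sum_j (A_{j,j})^2$. For (\RN{3}), $a_i^\top e_i a_j^\top e_j = A_{i,i} A_{j,j}$, so $\sum_{i,j} A_{i,i}A_{j,j} = \big(\sum_i A_{i,i}\big)^2 = (\diag(A)^\top{\bf 1})^2$, and subtracting the diagonal $\sum_i (A_{i,i})^2 = \|\diag(A)\|_2^2$ yields the result. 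For (\RN{4}), $a_i^\top e_j a_j^\top e_i = A_{i,j} A_{j,i} = A_{i,j} (A^\top)_{i,j}$, so $\sum_{i,j} A_{i,j}(A^\top)_{i,j} = \langle A, A^\top\rangle = \langle A^\top, A\rangle$, and subtracting the diagonal $\sum_i (A_{i,i})^2 = \|\diag(A)\|_2^2$ completes the proof.

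\textbf{Anticipated main obstacle.} None of the steps involves a genuine difficulty; the only point requiring care is bookkeeping of row-sums versus column-sums in identity (\RN{2}), where the literal expression $\sum_i A_{i,j}$ is a column sum $(A^\top{\bf 1})_j$ rather than the row sum $(A\cdot{\bf 1})$ appearing on the right-hand side. I would resolve this either by checking that the statement intends $A$ to be symmetric in all applications within the paper, or by noting that the indices $i,j$ are dummy and the sum $\sum_{i\neq j} a_i^\top e_j e_j^\top a_j$ can be reorganized to expose the correct structure. Once that indexing is pinned down, each identity follows in two lines by the "complete-to-full-sum, then subtract the diagonal" template, exactly as in the proof of Fact~\ref{fac:basic_1}(\RN{4}).
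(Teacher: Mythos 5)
Your overall strategy---complete each sum over $i\neq j$ to the full double sum, evaluate the full sum as a product of simpler sums, and subtract the diagonal contribution $\sum_i A_{i,i}^2=\|\diag(A)\|_2^2$---is exactly the paper's argument, and your treatments of (\RN{1}), (\RN{3}) and (\RN{4}) are correct as written.

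The one genuine problem is in (\RN{2}), and it stems from an indexing slip rather than from the statement. Since $a_i$ is the $i$-th \emph{column} of $A$, you have $a_i^\top e_j=(a_i)_j=A_{j,i}$, not $A_{i,j}$ (you used the correct convention $e_i^\top a_j=A_{i,j}$ in part (\RN{1}), so this is an internal inconsistency). With the correct reading, the summand in (\RN{2}) is $A_{j,i}A_{j,j}$, and the full sum is
\begin{align*}
\sum_{i,j} A_{j,i}A_{j,j} \;=\; \sum_{j} A_{j,j}\sum_{i} A_{j,i} \;=\; \sum_j A_{j,j}\,(A\cdot{\bf 1})_j \;=\; \diag(A)^\top (A\cdot{\bf 1}),
\end{align*}
i.e.\ the inner sum over $i$ is the $j$-th \emph{row} sum of $A$, which is precisely $(A\cdot{\bf 1})_j$. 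So identity (\RN{2}) holds verbatim for arbitrary $A$; no symmetry assumption is needed and no transpose is missing from the statement. The "anticipated obstacle" you flag is therefore a phantom created by the mis-indexing, and your proof of (\RN{2}) as proposed does not close (it ends by deferring to a hypothesis the lemma does not make). Correcting $a_i^\top e_j$ to $A_{j,i}$ repairs it in one line and brings it in line with the paper's proof, which simply notes that (\RN{2}) is proved the same way as (\RN{1}).
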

\begin{proof}
Proof of (\RN{1}). We have
\begin{align*}
\sum_{i \neq j} a_i^\top e_i e_i^\top a_j  
= & ~ \sum_{i,j} a_i^\top e_i e_i^\top a_j - \sum_{i=1}^k a_i^\top e_i e_i^\top a_i \\
= & ~ \sum_{i,j} a_{i,i} e_i^\top a_j - \| \diag(A) \|_2^2 \\
= & ~ \sum_{i=1}^k a_{i,i} e_i^\top  \sum_{j=1}^k a_j -  \| \diag(A) \|_2^2 \\
= & ~ (\diag(A)^\top \cdot (A \cdot {\bf 1}) ) -  \| \diag(A) \|_2^2
\end{align*}
Proof of (\RN{2}). It is similar to (\RN{1}).

Proof of (\RN{3}). We have
\begin{align*}
\sum_{i \neq j} a_i^\top e_i a_j^\top e_j 
= & ~ \sum_{i , j} a_i^\top e_i a_j^\top e_j - \sum_{i =1} a_i^\top e_i a_i^\top e_i \\
= & ~ \sum_{i=1}^k a_i^\top e_i \cdot \sum_{j=1}^k a_j^\top e_j - \sum_{i = 1}^k a_i^\top e_i a_i^\top e_i \\
= & ~ \sum_{i=1}^k a_{i,i} \cdot \sum_{j=1}^k a_{j,j} - \sum_{i=1}^k a_{i,i} a_{i,i} \\
= & ~ (\diag(A)^\top \cdot {\bf 1} )^2 - \| \diag(A) \|_2^2
\end{align*}

Proof of (\RN{4}). We have
\begin{align*}
\sum_{i \neq j} a_i^\top e_j a_j^\top e_i  
= & ~ \sum_{i \neq j} \tr[a_i^\top e_j a_j^\top e_i ] \\
= & ~ \sum_{i \neq j} \tr[e_j a_j^\top e_i a_i^\top ] \\
= & ~ \sum_{i \neq j} \langle e_j a_j^\top, a_i e_i^\top \rangle \\
= & ~ \sum_{i , j} \langle e_j a_j^\top, a_i e_i^\top \rangle - \sum_{i=1}^k \langle e_i a_i^\top, a_i e_i^\top \rangle  \\
= & ~ \langle A^\top, A \rangle - \| \diag(A) \|_2^2.
\end{align*}
where the second step follows by $\tr[ABCD] = \tr[BCDA]$, the third step follows by $\tr[AB] = \langle A , B^\top \rangle$.
\end{proof}

\section{Proof Sketch}\label{sec:proof_sketch}

At high level the proofs for Theorem~\ref{thm:sigmoid_main} and Theorem \ref{thm:relu_main} include the following steps. 1) Show that the population Hessian at the ground truth is positive definite. 2) Show that population Hessians near the ground truth are also positive definite. 3) Employ matrix Bernstein inequality to bound the population Hessian and the empirical Hessian. 

We now formulate the Hessian. The Hessian of Eq.~\eqref{eq:emp_risk}, $\nabla^2 f_{\Omega}(U,V) \in \dR^{(2kd)\times (2kd)}$, can be decomposed into two types of blocks, ($i\in [k], j\in[k]$), 
 $$\frac{ \partial^2 f_\Omega(U,V) }{\partial u_i \partial v_j}, \frac{ \partial^2 f_\Omega (U,V) }{\partial u_i \partial u_j},$$
where $u_i$($v_j$, resp.) is the $i$-th column of $U$ ($j$-th column of $V$, resp.).  Note that each of the above second-order derivatives is a $d\times d$ matrix.

The first type of blocks are given by: {
\begin{align*}
\frac{ \partial^2 f_\Omega(U,V) }{\partial u_i \partial v_j}  =  \underset{\Omega}{ \widehat{\E}} \left[ \phi'(u_i^{\top} x) \phi'(v_j^{\top} y) xy^\top \phi(v_i^{\top} y )\phi(u_j^{\top} x) \right] +  \delta_{ij} \underset{\Omega}{ \widehat{\E}} \left[h_{x,y}(U,V) \phi'(u_i^\top x) \phi'(v_i^\top y)xy^\top   \right], 
\end{align*}}
where $\widehat \E_{\Omega}[\cdot] = \frac{1}{|\Omega|} \sum_{(x,y)\in \Omega}[\cdot]$, $\delta_{ij}=1_{i=j}$, and
$$h_{x,y}(U,V) = \phi(U^\top x)^\top \phi(V^\top y) - \phi(U^{*\top} x)^\top \phi(V^{*\top} y)  . $$

For sigmoid/tanh activation function, the second type of blocks are given by: 
{
\begin{align}\label{eq:xx_offdiag_sigmoid}
 \frac{ \partial^2 f_\Omega (U,V) }{\partial u_i \partial u_j}  
=  \underset{\Omega}{ \widehat{\E}} \left[ \phi'(u_i^{\top} x) \phi'(u_j^{\top} x) xx^\top \phi(v_i^{\top} y )\phi(v_j^{\top} y) \right] + \delta_{ij} \underset{\Omega}{ \widehat{\E}} \left[h_{x,y}(U,V) \phi''(u_i^\top x) \phi(v_i^\top y)xx^\top   \right] .
\end{align}
}
For ReLU/leaky ReLU activation function, the second type of blocks are given by: 
\begin{align*}
\frac{ \partial^2 f_\Omega (U,V) }{\partial u_i \partial u_j}  =  \underset{\Omega}{ \widehat{\E}} \left[ \phi'(u_i^{\top} x) \phi'(u_j^{\top} x) xx^\top \phi(v_i^{\top} y )\phi(v_j^{\top} y) \right] .
\end{align*}
Note that the second term of Eq.~\eqref{eq:xx_offdiag_sigmoid} is missing here as $(U,V)$ are fixed, the number of samples is finite and $\phi''(z) = 0$ almost everywhere.

In this section, we will discuss important lemmas/theorems for Step 1 in Appendix~\ref{sec:pd_pop_hessian} and Step 2,3 in Appendix~\ref{sec:emp_hessian}. 

\subsection{Positive definiteness of the population hessian}\label{sec:pd_pop_hessian}
The corresponding population risk for  Eq.~\eqref{eq:emp_risk} is given by: 
{
\begin{equation}\label{eq:pop_risk}
f_{\mathcal{D}} (U,V)=  \frac{1}{2}   \E_{(x,y) \sim \mathcal{D}}  [( \phi( U^\top x)^\top \phi (V^\top y) - A(x, y) )^2], 
\end{equation}}
where $\mathcal{D} :=  \mathcal{X}\times \mathcal{Y}$. 
For simplicity, we also assume $\mathcal{X}$ and $\mathcal{Y}$ are normal distributions. 

Now we study the Hessian of the population risk at the ground truth. 
Let the Hessian of $ f_\mathcal{D} (U,V)$ at the ground-truth $(U,V) = (U^*,V^*)$ be $H^* \in \dR^{(2dk) \times (2dk)}$, which can be decomposed into the following two types of blocks ($i\in [k], j\in[k]$),
{
\begin{align*}
\frac{ \partial^2 f_\mathcal{D} (U^*,V^*) }{\partial u_i \partial u_j}  = &   \E_{x,y} \left[ \phi'(u_i^{*\top} x) \phi'(u_j^{*\top} x) xx^\top \phi(v_i^{*\top} y )\phi(v_j^{^*\top} y) \right], \\ 
\frac{ \partial^2 f_\mathcal{D} (U^*,V^*) }{\partial u_i \partial v_j}  = &   \E_{x,y} \left[ \phi'(u_i^{*\top} x) \phi'(v_j^{*\top} y) xy^\top \phi(v_i^{*\top} y )\phi(u_j^{^*\top} x) \right]. 
\end{align*}}

To study the positive definiteness of $H^*$, we characterize the minimal eigenvalue of $H^*$ by a constrained optimization problem,
\begin{align}\label{eq:hessian_min_eig}
\lambda_{\min}(H^*) =  \min_{ (a,b) \in \mathbb{B}}  \E_{x,y} \left[ \left( \sum_{i=1}^k \phi'(u_i^{*\top} x ) \phi(v_i^{*\top} y ) x^\top a_i + \phi'(v_i^{*\top} y )\phi(u_i^{*\top} x ) y^\top b_i \right)^2 \right],
\end{align}
where $(a,b) \in \mathbb{B}$ denotes that $\sum_{i=1}^k \|a_i\|^2 + \|b_i\|^2 = 1$.
Obviously, $\lambda_{\min}(H^*) \geq 0$ due to the squared loss and the realizable assumption. However, this is not sufficient for the local convexity around the ground truth, which requires the positive (semi-)definiteness for the {\it neighborhood} around the ground truth. In other words, we need to show that $\lambda_{\min}(H^*)$ is strictly greater than $0$, so that we can characterize an area in which the Hessian still preserves positive definiteness (PD) despite the deviation from the ground truth. 

{\bf Challenges.} As we mentioned previously there are activation functions that lead to redundancy in parameters. Hence one challenge is to distill properties of the activation functions that preserve the PD. Another challenge is the correlation introduced by $U^*$ when it is non-orthogonal. So we first study the minimal eigenvalue for orthogonal $U^*$ and orthogonal $V^*$ and then link the non-orthogonal case to the orthogonal case. 

\subsection{Warm up: orthogonal case}
In this section, we consider the case when $U^*,V^*$ are unitary matrices, i.e., $U^{*\top}U^* = U^*U^{*\top} = I_d$. ($d=k$). This case is easier to analyze because the dependency between different elements of $x$ or $y$ can be disentangled. And we are able to provide lower bound for the Hessian. Before we introduce the lower bound, let's first define the following quantities for an activation function $\phi$. 
\begin{equation}\label{eq:moments_def}
\begin{aligned}
 \alpha_{i,j} := &\E_{z\sim \mathcal{N}(0,1)} [(\phi(z))^i z^j],\\
    \;\beta_{i,j} := & \E_{z\sim \mathcal{N}(0,1)}[(\phi'(z))^i z^j] , \\
 \gamma := &\E_{z\sim \mathcal{N}(0,1)} [\phi(z)\phi'(z) z], \\
\rho := &  \min\{ ( \alpha_{2,0}\beta_{2,0} - \alpha_{1,0}^2 \beta_{1,0}^2 - \beta_{1,0}^2 \alpha_{1,1}^2),~  (\alpha_{2,0}\beta_{2,2} - \alpha_{1,0}^2 \beta_{1,2}^2 -  \gamma^2)\}.
\end{aligned}
\end{equation}

We now present a lower bound for general activation functions including sigmoid and tanh. 
\begin{lemma}\label{lemma:ortho_min_eig_informal} 
Let $(a,b) \in \mathbb{B}$ denote that $\sum_{i=1}^k \|a_i\|^2 + \|b_i\|^2 = 1$.
Assume  $d=k$ and $U^*,V^*$ are unitary matrices, i.e., $U^{*\top}U^* = U^*U^{*\top} = V^*V^{*\top} =V^{*\top}V^{*} =I_d$, then the minimal eigenvalue of the population Hessian in Eq.~\eqref{eq:hessian_min_eig} can be simplified as,{
\begin{equation*}
\min_{(a,b)\in \mathbb{B}}  \E_{x,y} \left[ \left( \sum_{i=1}^k \phi'(x_i ) \phi(y_i ) x^\top a_i + \phi'(y_i )\phi(x_i ) y^\top b_i \right)^2 \right].
\end{equation*}}
Let $\beta,\rho$ be defined as in Eq.~\eqref{eq:moments_def}. If the activation function $\phi$ satisfies $\beta_{1,1} = 0$, then
$\lambda_{\min}(H^*) \geq \rho.$
\end{lemma}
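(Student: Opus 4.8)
The plan is to reduce the quadratic form in \eqref{eq:hessian_min_eig} to a sum of low-dimensional expectations that can be bounded coordinate by coordinate, exploiting the fact that in the orthogonal case $x^\top a_i$ depends only on finitely many coordinates of $x$ once we rotate. First I would apply the change of variables $x \mapsto U^{*\top}x$, $y \mapsto V^{*\top}y$; since $U^*, V^*$ are unitary and the input distribution is the standard Gaussian, this is measure-preserving and turns $\phi(u_i^{*\top}x)$ into $\phi(x_i)$, $x^\top a_i$ into $x^\top \wt a_i$ where $\wt a_i = U^{*\top}a_i$ (and $\sum\|\wt a_i\|^2 + \|\wt b_i\|^2 = 1$ is preserved), yielding exactly the simplified objective stated in the lemma. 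So WLOG we may take $U^* = V^* = I_d$ and must lower bound $\E_{x,y}\big[(\sum_i \phi'(x_i)\phi(y_i)x^\top a_i + \phi'(y_i)\phi(x_i)y^\top b_i)^2\big]$ over the unit sphere.

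Next I would expand the square and use independence of $x$ and $y$ and independence across coordinates to kill most cross terms. Write $x^\top a_i = \sum_\ell a_{i,\ell}x_\ell$. The key observation is that in each product $\phi'(x_i)\phi(y_i)x^\top a_i$, the "$i$th" coordinate plays a distinguished role, so I would split $x^\top a_i = a_{i,i}x_i + \sum_{\ell\neq i}a_{i,\ell}x_\ell$ and similarly for $y$. After expansion, terms factor into products of one-dimensional Gaussian moments, which are precisely the quantities $\alpha_{i,j}, \beta_{i,j}, \gamma$ defined in \eqref{eq:moments_def}; the hypothesis $\beta_{1,1} = \E[\phi'(z)z] = 0$ is what forces the cross terms between the "diagonal" part $a_{i,i}x_i$ and the "off-diagonal" part $\sum_{\ell\neq i}a_{i,\ell}x_\ell$ (and the analogous $ab$ cross terms) to vanish, so the quadratic form decouples into a contribution from the diagonal coefficients $\{a_{i,i}, b_{i,i}\}$ and a contribution from the off-diagonal coefficients. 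I would then lower bound each piece: the off-diagonal part is handled by noting each $x_\ell$ ($\ell\neq i$) is independent of $\phi'(x_i)$ and contributes a Gram-matrix-type form whose smallest eigenvalue involves $\alpha_{2,0}\beta_{2,0} - \alpha_{1,0}^2\beta_{1,0}^2 - \beta_{1,0}^2\alpha_{1,1}^2$; the diagonal part reduces to a $2k$-dimensional quadratic form in $\{a_{i,i},b_{i,i}\}$ whose eigenvalues are controlled by $\alpha_{2,0}\beta_{2,2} - \alpha_{1,0}^2\beta_{1,2}^2 - \gamma^2$. Taking the minimum of the two gives exactly $\rho$.

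The main obstacle I anticipate is the careful bookkeeping in the diagonal block: after isolating the terms where coordinate $i$ appears in $\phi'$, $\phi$, and the linear factor simultaneously, one gets a $2\times 2$ (per index $i$, but coupled across $i$ through the $\E[\phi'(x_i)\phi(x_i)]$-type constants) block structure, and one must check that the relevant $2k\times 2k$ matrix is positive definite with the claimed eigenvalue bound — this requires verifying a determinant/Schur-complement condition, which is where $\gamma$ enters and where sign issues (is the cross term $\gamma$ or $\alpha_{1,0}\beta_{1,0}$?) must be tracked precisely. A secondary subtlety is ensuring the decomposition into "diagonal" and "off-diagonal" contributions is genuinely orthogonal (no leftover cross terms), which is exactly where $\beta_{1,1}=0$ is used and must be invoked cleanly; for sigmoid and tanh this holds because $\phi'$ is even and $z$ is odd, so $\E[\phi'(z)z]=0$, and I would note this explicitly. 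Everything else is routine Gaussian moment computation.
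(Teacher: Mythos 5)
Your overall route is the same as the paper's: rotate $x\mapsto U^{*\top}x$, $y\mapsto V^{*\top}y$ to reduce to $U^*=V^*=I$, expand the square, evaluate everything coordinate-by-coordinate in terms of the Gaussian moments $\alpha_{i,j},\beta_{i,j},\gamma$ of Eq.~\eqref{eq:moments_def}, and split each coefficient matrix $A=[a_1,\dots,a_k]$ into its diagonal $g_A=\diag(A)$ and off-diagonal part $A_o$. The reduction to the simplified objective and the two candidate constants you name are correct.

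The gap is in the claimed mechanism: $\beta_{1,1}=0$ does \emph{not} make the diagonal and off-diagonal contributions orthogonal, so the quadratic form does not decouple and you cannot bound the two pieces separately. Concretely, after the moment computation the $i\neq j$ terms of the $aa$-block leave a cross term $2\alpha_{1,0}^2\beta_{1,0}\beta_{1,2}\,(g_A^\top A_o\mathbf{1})$, whose coefficient is nonzero for sigmoid ($\alpha_{1,0}=1/2$, $\beta_{1,0},\beta_{1,2}>0$); the $ab$-block leaves the couplings $2\gamma^2\, g_A^\top g_B$ and $\beta_{1,0}^2\alpha_{1,1}^2\langle A_o,B_o^\top\rangle$, which also survive $\beta_{1,1}=0$ (for tanh the first cross term dies because $\alpha_{1,0}=0$, but $\alpha_{1,1}=\beta_{1,0}\neq 0$ so the last one does not). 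What $\beta_{1,1}=0$ actually kills are only the $\E[\phi'(z)z]$-type couplings (the $\alpha_{1,0}^2\beta_{1,1}^2$ and $\alpha_{1,1}\beta_{1,1}$ terms, e.g.\ the $(g_A^\top\mathbf{1}+g_B^\top\mathbf{1})^2$ piece and the injection of $g_B$ into the $A$-block). The paper disposes of the surviving cross terms by completing squares --- grouping them into the non-negative quantities $\alpha_{1,0}^2\|\beta_{1,0}A_o\mathbf{1}+\beta_{1,2}g_A\|^2$, $\gamma^2\|g_A+g_B\|^2$ and $\beta_{1,0}^2\alpha_{1,1}^2\|A_o+B_o^\top\|_F^2$ and discarding them --- and it is exactly this absorption that produces the negative corrections $-\alpha_{1,0}^2\beta_{1,2}^2-\gamma^2$ and $-\beta_{1,0}^2\alpha_{1,1}^2$ in the definition of $\rho$. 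Your sketch quotes those corrected constants but attributes them to a decoupling that does not occur; carried out as written, the expansion leaves indefinite cross terms unaccounted for. Adding the completing-the-square step closes the gap and recovers the paper's argument.
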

Since sigmoid and tanh have symmetric derivatives w.r.t. $0$, they satisfy $\beta_{1,1} = 0$. Specifically, we have $\rho \approx 0.000658$ for sigmoid and $\rho \approx 0.0095$ for tanh. Also for ReLU, $\beta_{1,1} = 1/2$, so ReLU does not fit in this lemma. The full proof of Lemma~\ref{lemma:ortho_min_eig_informal}, the lower bound of the population Hessian for ReLU and the extension to non-orthogonal cases can be found in  Appendix~\ref{app:hessian}.

\subsection{Error bound for the empirical Hessian near the ground truth}\label{sec:emp_hessian}
In the previous section, we have shown PD for the population Hessian at the ground truth for the orthogonal cases. Based on that, we can characterize the landscape around the ground truth for the empirical risk. In particular, we bound the difference between the empirical Hessian near the ground truth and the population Hessian at the ground truth. The theorem below provides the error bound w.r.t. the number of samples $(n1,n2)$ and the number of observations $|\Omega|$ for both sigmoid and ReLU activation functions.

\begin{theorem}\label{thm:empirical_error_bound}
For any $\epsilon>0$, if 
\begin{align*}
 n_1 \gtrsim \epsilon^{-2} t d \log^2 d, n_2 \gtrsim \epsilon^{-2} t d\log^2 d, |\Omega| \gtrsim \epsilon^{-2} t d \log^2 d,
 \end{align*}
then with probability at least $1-d^{-t}$,
for sigmoid/tanh,
{
\begin{align*}
  \| \nabla^2f_\Omega(U,V) - \nabla^2f_{\mathcal{D}}(U^*,V^*)\| \lesssim   \epsilon +  \|U - U^*\| + \|V - V^*\|;
\end{align*}
}
for ReLU,
\begin{align*}
 \| \nabla^2f_\Omega(U,V) - \nabla^2f_{\mathcal{D}}(U^*,V^*)\| 
\lesssim  \left(\|V - V^*\|^{1/2}+ \|U - U^*\|^{1/2}  + \epsilon\right) (\|U^*\| + \|V^*\|)^2.
\end{align*}
\end{theorem}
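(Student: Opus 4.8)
The plan is to bound the target quantity by splitting off the dependence on the parameters from the dependence on the random sample, then control each piece separately. Concretely, I would use the triangle inequality
\begin{align*}
\| \nabla^2 f_\Omega(U,V) - \nabla^2 f_{\mathcal{D}}(U^*,V^*) \|
\le & ~ \underbrace{\| \nabla^2 f_\Omega(U,V) - \nabla^2 f_{\mathcal{D}}(U,V) \|}_{\text{sampling error}} \\
& ~ + \underbrace{\| \nabla^2 f_{\mathcal{D}}(U,V) - \nabla^2 f_{\mathcal{D}}(U^*,V^*) \|}_{\text{parameter perturbation}},
\end{align*}
and handle the two terms by a concentration argument and by a smoothness/anti-concentration argument respectively. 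Each Hessian is block-structured (blocks $\partial^2/\partial u_i\partial u_j$, $\partial^2/\partial u_i\partial v_j$, etc., as recorded in Section~\ref{sec:proof_sketch}); since $k$ is constant it suffices to bound each $d\times d$ block, and every block is (a sum of terms of) the form $\E[c(x,y)\,xx^\top]$, $\E[c(x,y)\,xy^\top]$, or $\E[c(x,y)\,yy^\top]$, where $c$ is a product of factors $\phi^{(r)}(u_i^\top x)$, $\phi^{(r)}(v_j^\top y)$ and, for the $\delta_{ij}$ terms, the scalar $h_{x,y}(U,V)=\sum_\ell \phi(u_\ell^\top x)\phi(v_\ell^\top y)-\sum_\ell\phi(u_\ell^{*\top}x)\phi(v_\ell^{*\top}y)$.

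\textbf{Parameter perturbation term.} For sigmoid/tanh I would expand the difference of products by telescoping, so that in each summand exactly one factor $\phi^{(r)}(u_i^\top x)-\phi^{(r)}(u_i^{*\top}x)$ (or its $v$-analogue) appears, the remaining factors being bounded. Using boundedness and $O(1)$-Lipschitzness of $\phi,\phi',\phi''$, this factor is at most $O(|(u_i-u_i^*)^\top x|)$, a mean-zero Gaussian of standard deviation $\|u_i-u_i^*\|$. The key point is that such a weight only mildly reshapes a Gaussian covariance: $\|\E[\,|w^\top x|\,xx^\top]\|\lesssim\|w\|$, and $\|\E[\,|w^\top x|\,xy^\top]\|$ is controlled the same way, with \emph{no} extra dimension factor; summing the $O(k)=O(1)$ telescoped terms (and bounding the $h_{x,y}$ contribution by $\lesssim\|U-U^*\|+\|V-V^*\|$ in expectation) gives the claimed rate. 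For ReLU this breaks: $\phi''\equiv0$, so only the indicator factors $\phi'(u_i^\top x)=\mathbf{1}_{u_i^\top x\ge0}$ move, and these are not Lipschitz. Instead I would bound the operator norm of the matrix-valued expectation by $\sqrt{\E[c^2]}$ via Cauchy--Schwarz, separate the $x$- and $y$-parts, and invoke Gaussian anti-concentration: $\Pr[\mathbf{1}_{u_i^\top x\ge0}\ne\mathbf{1}_{u_i^{*\top}x\ge0}]$ is the angle between $u_i$ and $u_i^*$ over $\pi$, which is $\lesssim\|u_i-u_i^*\|$ since $\|u_i^*\|\ge\sigma_k(U^*)=1$. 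The magnitude factors $\phi(v_i^\top y)\phi(v_j^\top y)$ then contribute $\lesssim\|V^*\|^2$, producing the square-root rate and the $(\|U^*\|+\|V^*\|)^2$ factor in the statement.

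\textbf{Sampling error term.} Conditioning on the drawn features $\{x_i\},\{y_j\}$, $\nabla^2 f_\Omega(U,V)$ is an average of $|\Omega|$ i.i.d.\ uniform draws from the $n_1n_2$ grid of matrices $M_{ij}$, so I would apply a truncated matrix Bernstein inequality (truncating $\|x\|^2,\|y\|^2$ at the $O(\log(n_1n_2d))$ level, the tail being polynomially negligible) to get $\|\nabla^2 f_\Omega(U,V)-\tfrac{1}{n_1n_2}\sum_{i,j}M_{ij}\|\lesssim\epsilon$ once $|\Omega|\gtrsim\epsilon^{-2}t\,d\log^2 d$; here the matrix variance parameter is $\Theta(d)$, not $\Theta(d^2)$, because the $xx^\top$ part averages to $\Theta(I)$ while the scalar factors carry $\E[\|y\|^2]=\Theta(d)$, and this is precisely what makes the sample complexity near-linear in $d$. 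Next, every summand has the separable form $\sum_\ell P_\ell(x)Q_\ell(y)$ with $O(1)$ terms (expand $h_{x,y}$ into its sum of products), hence $\tfrac{1}{n_1n_2}\sum_{i,j}M_{ij}=\sum_\ell\big(\tfrac{1}{n_1}\sum_i P_\ell(x_i)\big)\big(\tfrac{1}{n_2}\sum_j Q_\ell(y_j)\big)$, and a second application of truncated matrix Bernstein to each factor gives $\|\tfrac{1}{n_1n_2}\sum_{i,j}M_{ij}-\nabla^2 f_{\mathcal{D}}(U,V)\|\lesssim\epsilon$ under $n_1,n_2\gtrsim\epsilon^{-2}t\,d\log^2 d$. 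For ReLU the same steps run through, carrying the extra $(\|U^*\|+\|V^*\|)^2$ factor that tracks the unbounded ReLU outputs.

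\textbf{Main obstacle.} The conceptually novel point is the ReLU parameter-perturbation bound: because $\phi'$ is a discontinuous step function, no Taylor/Lipschitz estimate applies, and one is forced through Cauchy--Schwarz plus Gaussian anti-concentration, which unavoidably costs a square root and a polynomial-in-$\|U^*\|,\|V^*\|$ factor — this is the source of the quantitatively weaker ReLU guarantee in Theorem~\ref{thm:relu_main} compared with Theorem~\ref{thm:sigmoid_main}. The technically heaviest step is the variance bookkeeping in the sampling-error term: across all block types one must verify that the matrix variance parameter and the post-truncation almost-sure bound are $\widetilde{O}(d)$ rather than $\widetilde{O}(d^2)$, which requires tracking which Gaussian projections are mutually independent and repeatedly using $\E[xx^\top]=I$.
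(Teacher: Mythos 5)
Your proposal follows essentially the same route as the paper: a triangle inequality through a population Hessian evaluated at $(U,V)$, a two-stage unbounded matrix Bernstein argument that exploits the separable $M(x)N(y)$ structure of each block to decouple the shared features (first $S$ versus the population, then $\Omega$ versus $S$), Lipschitz estimates for sigmoid/tanh, and Gaussian anti-concentration plus H\"older for the discontinuous ReLU derivative, which is exactly where the paper also incurs the square root and the $(\|U^*\|+\|V^*\|)^2$ factor. The only nit is that the truncation level for $\|x\|^2$ should be $O(d\log n)$ rather than $O(\log(n_1n_2d))$, but since you correctly account for the $\widetilde{O}(d)$ variance and almost-sure bounds, this does not change the argument or the resulting sample complexity.
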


The key idea to prove this theorem is to use the population Hessian at $(U,V)$ as a bridge. 

On one side, we bound the population Hessian at the ground truth and the population Hessian at $(U,V)$. This would be easy if the second derivative of the activation function is Lipschitz, which is the case of sigmoid and tanh. But ReLU doesn't have this property. However, we can utilize the condition that the parameters are close enough to the ground truth and the piece-wise linearity of ReLU to bound this term. 

On the other side, we bound the empirical Hessian and the population Hessian. A natural idea is to apply matrix Bernstein inequality. However, there are two obstacles. First the Gaussian variables are not uniformly bounded. Therefore, we instead use Lemma B.7 in \cite{zsjbd17}, which is a loosely-bounded version of matrix Bernstein inequality. The second obstacle is that each individual Hessian calculated from one observation $(x,y)$ is not independent from another observation $(x',y')$, since they may share the same feature $x$ or $y$. The analyses for vanilla IMC and MC assume all the items(users) are given and the observed entries are independently sampled from the whole matrix. However, our observations are sampled from the joint distribution of $\mathcal{X}$ and $\mathcal{Y}$. 

To handle the dependency, our model assumes the following two-stage sampling rule. First, the items/users are sampled from their distributions independently, then given the items and users, the observations $\Omega$ are sampled uniformly with replacement. The key question here is how to combine the error bounds from these two stages. 
Fortunately, we found special structures in the blocks of Hessian which enables us to separate $x,y$ for each block, and bound the errors in stage separately. See Appendix~\ref{sec:emp_pop_hessian} for details.


\section{Positive Definiteness of Population Hessian}\label{app:hessian}
\subsection{Orthogonal case}
We first study the orthogonal case, where $d=k$ and $U^*,V^*$ are unitary matrices, i.e., $U^{*\top}U^* = U^*U^{*\top} = V^*V^{*\top} =V^{*\top}V^{*} =I_d$. 
\subsubsection{Lower bound on minimum eigenvalue}

\begin{lemma}[Restatement of Lemma~\ref{lemma:ortho_min_eig_informal}]\label{lemma:ortho_min_eig_formal}
Let $(a,b) \in \mathbb{B}$ denote that $\sum_{i=1}^k \|a_i\|^2 + \|b_i\|^2 = 1$.
Assume  $d=k$ and $U^*,V^*$ are unitary matrices, i.e., $U^{*\top}U^* = U^*U^{*\top} = V^*V^{*\top} =V^{*\top}V^{*} =I_d$, then the minimal eigenvalue of the population Hessian in Eq.~\eqref{eq:hessian_min_eig} can be simplified as,
\begin{equation}\label{eq:pop_ortho_min_eig}
\lambda_{\min}(H^*) = \min_{(a,b)\in \mathbb{B}}  \E_{x,y} \left[ \left( \sum_{i=1}^k \phi'(x_i ) \phi(y_i ) x^\top a_i + \phi'(y_i )\phi(x_i ) y^\top b_i \right)^2 \right].
\end{equation}
Let $\beta,\rho$ be defined as in Eq.~\eqref{eq:moments_def}. If the activation function $\phi$ satisfies $\beta_{1,1} = 0$, then
$\lambda_{\min}(H^*) \geq \rho.$
\end{lemma}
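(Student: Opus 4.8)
\textbf{Proof proposal for Lemma~\ref{lemma:ortho_min_eig_formal}.}

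\medskip

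The plan is to first reduce the characterization \eqref{eq:hessian_min_eig} of $\lambda_{\min}(H^*)$ to the claimed ``decoupled'' form \eqref{eq:pop_ortho_min_eig}, and then to lower-bound the resulting quadratic form by $\rho$ using a coordinate-by-coordinate decomposition of the Gaussian expectation. For the first step I would use the unitary invariance of the standard Gaussian: since $U^*,V^*$ are unitary, the substitution $x \mapsto U^* x$, $y \mapsto V^* y$ is measure-preserving, and it turns $\phi'(u_i^{*\top}x)$ into $\phi'(x_i)$, $\phi(v_i^{*\top}y)$ into $\phi(y_i)$, and $x^\top a_i$ into $x^\top (U^{*\top} a_i)$; absorbing the unitary rotation into the free variables $a_i,b_i$ (which does not change the constraint $\sum_i \|a_i\|^2+\|b_i\|^2=1$) yields exactly \eqref{eq:pop_ortho_min_eig}.

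\medskip

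For the main step, fix $(a,b)\in\mathbb{B}$ and expand the square inside the expectation into a sum over pairs of terms. Because $x$ and $y$ are independent and each has i.i.d.\ standard Gaussian coordinates, I would split each inner product $x^\top a_i = \sum_\ell a_{i,\ell} x_\ell$ into the ``diagonal'' contribution $a_{i,i}x_i$ (the coordinate that also appears inside $\phi'(x_i)$) and the ``off-diagonal'' contribution $\sum_{\ell\neq i} a_{i,\ell} x_\ell$. The cross terms between the $x$-block and the $y$-block factor as products of one-dimensional Gaussian integrals; the off-diagonal pieces of $x^\top a_i$ integrate against $\phi'(x_i)$ to produce factors like $\E[\phi'(z)]\cdot\E[z] = 0$ on the relevant coordinates, or pair up to give $\beta_{1,0}$, $\beta_{1,1}$, $\alpha_{1,0}$, $\alpha_{1,1}$-type moments. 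The hypothesis $\beta_{1,1}=\E[\phi'(z)z]=0$ is precisely what kills the otherwise-troublesome cross terms coupling the diagonal coordinate of $a_i$ with $\phi'(x_i)$. After this bookkeeping, the quadratic form collapses to a sum of two nonnegative pieces: one controlled by the first argument of the $\min$ in the definition of $\rho$ (the terms where the free coordinate multiplies $\phi'$ evaluated at a different coordinate, giving the $\alpha_{2,0}\beta_{2,0} - \alpha_{1,0}^2\beta_{1,0}^2 - \beta_{1,0}^2\alpha_{1,1}^2$ combination), and one controlled by the second argument (the ``diagonal'' terms where the free coordinate coincides with the coordinate inside $\phi'$, involving $\beta_{2,2}$ and $\gamma=\E[\phi\phi' z]$). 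Each piece is then at least $\rho$ times the corresponding portion of $\sum_i\|a_i\|^2+\|b_i\|^2$, so the total is at least $\rho$.

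\medskip

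I expect the main obstacle to be the careful organization of the cross-term expansion: there are four groups of products ($a_i$–$a_j$ via $x$, $b_i$–$b_j$ via $y$, and two $a_i$–$b_j$ mixed groups), and within each group one must separately track whether the indices $i,j$ coincide and whether a free coordinate coincides with an index, which is where the algebraic identities collected in Fact~\ref{fac:basic_1} and Fact~\ref{fac:basic_2} come into play to re-sum the pieces into norms like $\|\diag(A)\|_2^2$, $\|A\|_F^2$, $\|A\cdot\mathbf 1\|_2^2$. Completing the square correctly so that the remaining form is manifestly a nonnegative combination bounded below by $\rho$ — rather than merely nonnegative — is the delicate part, and it is exactly here that the two expressions in the definition of $\rho$ arise as the minimal coefficients. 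Once the quadratic form is written as this explicit sum of squares plus $\rho\cdot(\text{constraint})$, taking the minimum over $\mathbb{B}$ is immediate.
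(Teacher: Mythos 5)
Your proposal is correct and follows essentially the same route as the paper's proof: reduce to the decoupled form via rotation invariance of the Gaussian, expand the square into diagonal/off-diagonal and mixed $a$--$b$ contributions, re-sum via the identities in Fact~\ref{fac:basic_1} and Fact~\ref{fac:basic_2} into the norms $\|A_o\|_F^2$, $\|g_A\|^2$, $\|A\cdot\mathbf 1\|^2$, etc., and then use $\beta_{1,1}=0$ to collapse the expression into a sum of manifestly nonnegative squares plus the two coefficients defining $\rho$ multiplying $\|A_o\|_F^2+\|B_o\|_F^2$ and $\|g_A\|^2+\|g_B\|^2$. The only work remaining is the moment bookkeeping you already identify as the delicate part, which the paper carries out exactly as you outline.
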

\begin{proof}
In the orthogonal case, we can easily transform Eq.~\eqref{eq:hessian_min_eig}  to Eq.~\eqref{eq:pop_ortho_min_eig} since $x,y$ are normal distribution. Now we can decompose Eq.~\eqref{eq:pop_ortho_min_eig} into the following three terms. 
\begin{align*}
& ~ \E_{x,y} \left[ \left( \sum_{i=1}^k \phi'(x_i ) \phi(y_i ) x^\top a_i + \phi'(y_i )\phi(x_i ) y^\top b_i \right)^2 \right] \\ 
=  & ~ \underbrace{\E_{x,y} \left[ \left( \sum_{i=1}^k \phi'(x_i ) \phi(y_i ) x^\top a_i  \right)^2 \right] }_{C}+  \E_{x,y} \left[ \left(\sum_{i=1}^k \phi'(y_i )\phi(x_i ) y^\top b_i \right)^2 \right] \\
& ~  +  \underbrace{2\E_{x,y} \left[ \sum_{i,j}  \phi'(x_i ) \phi(y_i ) x^\top a_i \phi'(y_j )\phi(x_j ) y^\top b_j  \right]}_{D}.
\end{align*}
Note that the first term is similar to the second term, so we just lower bound the first term and the third term. 
Define 
$A = [a_1, a_2,\cdots ,a_k], B = [b_1,b_2,\cdots ,b_k]$. Let $A_o$ be the off-diagonal part of $A$ and $A_d$ be the diagonal part of $A$, i.e., $A_o + A_d = A$. And let $g_A = \diag(A)$ be the vector of the diagonal elements of A. We will bound $C$ and $D$ in the following. 

For $C$, we have 
\begin{align*}
& ~ \E_{x,y} \left[ \left( \sum_{i=1}^k \phi'(x_i ) \phi(y_i ) x^\top a_i  \right)^2 \right] \\
= & ~ \sum_{i=1}^k  \E_{x,y} \left[ \left(\phi'(x_i ) \phi(y_i ) x^\top a_i  \right)^2 \right] +   \sum_{i\neq j}  \E_{x,y} \left[\phi'(x_i ) \phi(y_i ) x^\top a_i \cdot \phi'(x_j ) \phi(y_j ) x^\top a_j  \right] \\
= & ~ \sum_{i=1}^k \alpha_{2,0} \left[ (a_i^\top e_i)^2(\beta_{2,2} - \beta_{2,0}) + \beta_{2,0} \|a_i\|^2 \right] \\
& ~ + \sum_{i\neq j} \alpha_{1,0}^2 \left[ \beta_{1,0}^2 a_i^\top a_j +(\beta_{1,2} \beta_{1,0} - \beta_{1,0}^2)(a_i^\top e_i e_i^\top a_j + a_i^\top e_j a_j^\top e_j) + \beta_{1,1}^2(a_i^\top e_i a_j^\top e_j +  a_i^\top e_j a_j^\top e_i)  \right] \\
= & ~ C_1 + C_2.
\end{align*}
where the last step follows by 
\begin{align*}
C_1 = & ~ \sum_{i=1}^k \alpha_{2,0} \left[ (a_i^\top e_i)^2(\beta_{2,2} - \beta_{2,0}) + \beta_{2,0} \|a_i\|^2 \right] \\
C_2 = & ~ \sum_{i\neq j} \alpha_{1,0}^2 \left[ \beta_{1,0}^2 a_i^\top a_j +(\beta_{1,2} \beta_{1,0} - \beta_{1,0}^2)(a_i^\top e_i e_i^\top a_j + a_i^\top e_j a_j^\top e_j) + \beta_{1,1}^2(a_i^\top e_i a_j^\top e_j +  a_i^\top e_j a_j^\top e_i)  \right]
\end{align*}
First we can simplify $C_1$ in the following sense,
\begin{align*}
C_1 = & ~ \alpha_{2,0} (\beta_{2,2} - \beta_{2,0}) \sum_{i=1}^k (a_i^\top e_i)^2  + \alpha_{2,0} \beta_{2,0} \sum_{i=1}^k \| a_i \|_2^2 \\
= & ~ \alpha_{2,0} (\beta_{2,2} - \beta_{2,0}) \| \diag(A) \|_2^2 + \alpha_{2,0} \beta_{2,0} \| A \|_F^2,
\end{align*}
where the last step follows by Fact~\ref{fac:basic_1}.

We can rewrite $C_2$ in the following sense
\begin{align*}
C_2 = \alpha_{1,0}^2 ( \beta_{1,0}^2 C_{2,1} +  (\beta_{1,2}\beta_{1,0} - \beta_{1,0}^2) \cdot ( C_{2,2} + C_{2,3} ) + \beta_{1,1}^2  ( C_{2,4} + C_{2,5} )).
\end{align*}
where 
\begin{align*}
C_{2,1} = & ~ \sum_{i \neq j} a_i^\top a_j \\
C_{2,2} = & ~ \sum_{i \neq j} a_i^\top e_i e_i^\top a_j \\
C_{2,3} = & ~ \sum_{i \neq j} a_i^\top e_j e_j^\top a_j \\
C_{2,4} = & ~ \sum_{i \neq j} a_i^\top e_i a_j^\top e_j \\
C_{2,5} = & ~ \sum_{i \neq j} a_i^\top e_j a_j^\top e_i
\end{align*}
Using Fact~\ref{fac:basic_1}, we have
\begin{align*}
C_{2,1} = \| A \cdot {\bf 1} \|_2^2 - \| A \|_F^2.
\end{align*}
Using Fact~\ref{fac:basic_2}, we have
\begin{align*}
C_{2,2} = & ~ (\diag(A)^\top \cdot (A \cdot {\bf 1}) ) -  \| \diag(A) \|_2^2, \\
C_{2,3} = & ~ (\diag(A)^\top \cdot (A \cdot {\bf 1}) ) -  \| \diag(A) \|_2^2, \\
C_{2,4} = & ~ (\diag(A)^\top \cdot {\bf 1} )^2 - \| \diag(A) \|_2^2, \\
C_{2,5} = & ~ \langle A^\top, A \rangle - \| \diag(A) \|_2^2.
\end{align*}
Thus, 
\begin{align*}
C_2 
= & ~ \alpha_{1,0}^2 ( \beta_{1,0}^2 ( \| A \cdot {\bf 1} \|_2^2 - \| A\|_F^2 ) \\
& ~ +  (\beta_{1,2}\beta_{1,0} - \beta_{1,0}^2) 2 \cdot ( \diag(A)^\top \cdot (A \cdot {\bf 1}) - \| \diag(A) \|_2^2  ) \\
& ~ + \beta_{1,1}^2 ( (\diag(A)^\top \cdot {\bf 1})^2 + \langle A^\top, A \rangle - 2 \| \diag(A) \|_2^2 ) ).
\end{align*}

We consider $C_1+C_2$ by focusing different terms, for the $\| A\|_F^2$(from $C_1$ and $C_2$), 
we have
\begin{align*}
(\alpha_{2,0} \beta_{2,0} -\alpha_{1,0}^2 \beta_{1,0}^2)  \| A \|_F^2 .
\end{align*}
For the term $\langle A, A^\top \rangle$ (from $C_{2,5}$), we have
\begin{align*}
\alpha_{1,0}^2 \beta_{1,1}^2 \langle A, A^\top \rangle.
\end{align*}
For the term $\| \diag(A) \|_2^2$ (from $C_1$ and $C_2$),  we have
\begin{align*}
( \alpha_{2,0} (\beta_{2,2} - \beta_{2,0})  - 2 \alpha_{1,0}^2 (\beta_{1,2} \beta_{1,0} -\beta_{1,0}^2) - 2 \alpha_{1,0} \beta_{1,1}^2 ) \| \diag(A) \|_2^2
\end{align*}
For the term $\| A \cdot {\bf 1} \|_2^2$ (from $C_{2,1}$), we have
\begin{align*}
\alpha_{1,0}^2 \beta_{1,0}^2 \| A \cdot {\bf 1} \|_2^2.
\end{align*}
For the term $\diag(A)^\top \cdot A \cdot {\bf 1}$ (from $C_{2,2}$ and $C_{2,3}$), we have
\begin{align*}
2 \alpha_{1,0}^2 ( \beta_{1,2} \beta_{1,0} - \beta_{1,0}^2 ) \diag(A)^\top \cdot A \cdot {\bf 1}.
\end{align*}
For the term $ (\diag(A)^\top \cdot {\bf 1})^2 $ (from $C_{2,4}$), we have
\begin{align*}
\alpha_{1,0}^2 \beta_{1,1}^2 (\diag(A)^\top \cdot {\bf 1})^2.
\end{align*}

Putting it all together, we have
\begin{align*}
C_1 + C_2 = & ~ (\alpha_{2,0}\beta_{2,0} - \alpha_{1,0}^2 \beta_{1,0}^2) \|A\|_F^2 + \alpha_{1,0}^2 \beta_{1,1}^2 \langle A, A^\top \rangle \\
& ~ + ( \alpha_{2,0}(\beta_{2,2} - \beta_{2,0}) - 2\alpha_{1,0}^2(\beta_{1,2}\beta_{1,0} - \beta_{1,0}^2) - 2 \alpha_{1,0}^2 \beta_{1,1}^2) \cdot  \|\diag(A)\|^2 \\
& ~ + \alpha_{1,0}^2 \beta_{1,0}^2 \|A \cdot {\bf 1}\|^2 + 2\alpha_{1,0}^2(\beta_{1,2}\beta_{1,0} - \beta_{1,0}^2) ( \diag(A)^\top \cdot A \cdot {\bf 1} ) + 
\alpha_{1,0}^2 \beta_{1,1}^2(\diag(A)^\top \cdot {\bf 1})^2 \\
 = & ~ (\alpha_{2,0}\beta_{2,0} - \alpha_{1,0}^2 \beta_{1,0}^2) (\|A_o\|_F^2 + \|g_A\|^2) + \alpha_{1,0}^2 \beta_{1,1}^2 (\langle A_o, A_o^\top \rangle + \|g_A\|^2) \\
& ~ + (\alpha_{2,0}\beta_{2,2} -\alpha_{2,0} \beta_{2,0} - 2\alpha_{1,0}^2\beta_{1,2}\beta_{1,0} +2\alpha_{1,0}^2 \beta_{1,0}^2 - 2 \alpha_{1,0}^2 \beta_{1,1}^2) \cdot  \|g_A\|^2 \\
& ~ + \alpha_{1,0}^2 \beta_{1,0}^2 (\|g_A \|^2 + \|A_o\cdot {\bf 1} \|^2 + 2 g_A^\top \cdot A_o\cdot {\bf 1}) \\
& ~ + 2\alpha_{1,0}^2(\beta_{1,2}\beta_{1,0} - \beta_{1,0}^2) ( g_A^\top \cdot A_o\cdot {\bf 1} + \|g_A\|^2)  + 
\alpha_{1,0}^2 \beta_{1,1}^2(g_A^\top \cdot {\bf 1} )^2 \\
 = & ~ (\alpha_{2,0}\beta_{2,0} - \alpha_{1,0}^2 \beta_{1,0}^2) \|A_o\|_F^2  + \alpha_{1,0}^2 \beta_{1,1}^2 \langle A_o, A_o^\top \rangle   + (\alpha_{2,0}\beta_{2,2}    -  \alpha_{1,0}^2 \beta_{1,1}^2) \cdot  \|g_A\|^2 \\
& ~ + \alpha_{1,0}^2 \beta_{1,0}^2 ( \|A_o\cdot {\bf 1} \|^2 ) + 2\alpha_{1,0}^2\beta_{1,2}\beta_{1,0}  ( g_A^\top \cdot A_o \cdot {\bf 1} ) + \alpha_{1,0}^2 \beta_{1,1}^2(g_A^\top \cdot {\bf 1} )^2 .
\end{align*}
By doing a series of equivalent transformations, we have removed the expectation and the formula $C$ becomes a form of $A$ and the moments of $\phi$. 
These equivalent transforms are mainly based on the fact that $x_i,x_j,y_i,y_j$ for any $i\neq j$ are independent on each other. 

Similarly we can reformulate $D$,
\begin{align*}
& ~ \E_{x,y} \left[ \sum_{i,j}  \phi'(x_i ) \phi(y_i ) x^\top a_i \phi'(y_j )\phi(x_j ) y^\top b_j  \right] \\
= & ~ \sum_{i}  \E_{x,y} \left[ \phi'(x_i ) \phi(y_i ) x^\top a_i \phi'(y_ji )\phi(x_i ) y^\top b_i  \right]  + \sum_{i\neq j} \E_{x,y} \left[ \phi'(x_i ) \phi(y_i ) x^\top a_i \phi'(y_j )\phi(x_j ) y^\top b_j  \right] \\
= & ~ \sum_i \gamma^2 a_i^\top e_i b_i^\top e_i + \sum_{i\neq j} \alpha_{1,1}^2 a_i^\top e_j b_j^\top e_i +\alpha_{1,1}\beta_{1,1}(a_i^\top e_j b_j^\top e_j + a_i^\top e_i b_j^\top e_i) + \beta_{1,1}^2 a_i^\top e_i b_j^\top e_j \\
= & ~ (\gamma^2 - \beta_{1,0}^2 \alpha_{1,1}^2 - 2\alpha_{1,0}\alpha_{1,1}\beta_{1,0}\beta_{1,1} - \alpha_{1,0}^2 \beta_{1,1}^2)  g_A^\top g_B \\
& ~ + \beta_{1,0}^2 \alpha_{1,1}^2 \langle A, B^\top \rangle + \alpha_{1,0}^2 \beta_{1,1}^2 (g_A^\top 1)(g_B^\top 1)  \\
& ~ + \alpha_{1,0}\alpha_{1,1}\beta_{1,0}\beta_{1,1} [(A1)^\top g_B+ (B 1)^\top g_A] \\
= & ~ (\gamma^2  - \alpha_{1,0}^2 \beta_{1,1}^2)  g_A^\top g_B  + \beta_{1,0}^2 \alpha_{1,1}^2 \langle A_o, B_o^\top \rangle + \alpha_{1,0}^2 \beta_{1,1}^2 (g_A^\top 1)(g_B^\top 1) \\
& ~ + \alpha_{1,0}\alpha_{1,1}\beta_{1,0}\beta_{1,1} [(A_o1)^\top g_B+ (B_o 1)^\top g_A ].
\end{align*}

Combining the above results, we have
\begin{equation}\label{eq:orthognal_min_eig}
\begin{aligned}
\lambda_{\min}(H^*) 
 = & ~\min_{ \|A\|_F^2 + \|B\|_F^2 = 1}  \bigg( \beta_{1,0}^2 \alpha_{1,1}^2 \| A_o +  B_o^\top \|_F^2 \\
 & ~ + \| \alpha_{1,0} \beta_{1,0}A_o1 + \alpha_{1,0} \beta_{1,2} g_A + \alpha_{1,1} \beta_{1,1} g_B  \|^2 \\
 & ~ + \| \alpha_{1,0} \beta_{1,0}B_o1 + \alpha_{1,0} \beta_{1,2} g_B + \alpha_{1,1} \beta_{1,1} g_A  \|^2\\
& ~ + (\alpha_{2,0}\beta_{2,0} - \alpha_{1,0}^2 \beta_{1,0}^2 -  \beta_{1,0}^2 \alpha_{1,1}^2 - \alpha_{1,0}^2 \beta_{1,1}^2) (\|A_o\|_F^2+\|B_o\|_F^2)  \\
& ~ + 1/2\cdot \alpha_{1,0}^2 \beta_{1,1}^2 (\|A_o + A_o^\top \|_F^2 +\| B_o+ B_o^\top \|_F^2 )  \\
& ~ + [\alpha_{2,0}\beta_{2,2}    -  \alpha_{1,0}^2 \beta_{1,1}^2 -\alpha_{1,0}^2 \beta_{1,2}^2 - \alpha_{1,1}^2 \beta_{1,1}^2 ] \cdot ( \|g_A\|^2+ \|g_B\|^2) \\
& ~ + 2(\gamma^2  - \alpha_{1,0}^2 \beta_{1,1}^2 - 2 \alpha_{1,0} \alpha_{1,1} \beta_{1,1}\beta_{1,2}  )  g_A^\top g_B \\
& ~ + \alpha_{1,0}^2 \beta_{1,1}^2(g_A^\top 1+ g_B^\top 1)^2 \bigg). \\
\end{aligned}
\end{equation}

The final output of the above formula has a clear form: most non-negative terms are extracted. $A,B$ are separated into the off-diagonal elements and off-diagonal elements and these two terms can be dealt with independently. 
Now we consider the activation functions that satisfy $\beta_{1,1} = 0$, which further simplifies the equation. Note that Sigmoid and $\tanh$ satisfy this condition. 

Finally, for $\beta_{1,1} = 0$, we obtain
\begin{align*}
\lambda_{\min}(H^*) = & ~\min_{\sum_{i=1}^k \|a_i\|^2 + \|b_i\|^2 = 1}  \E_{x,y} \left[ \left( \sum_{i=1}^k \phi'(x_i ) \phi(y_i ) x^\top a_i + \phi'(y_i )\phi(x_i ) y^\top b_i \right)^2 \right]  \\
= & ~\min_{\|A\|_F^2+\|B\|_F^2 = 1}  (\alpha_{2,0}\beta_{2,0} - \alpha_{1,0}^2 \beta_{1,0}^2 - \beta_{1,0}^2 \alpha_{1,1}^2) (\|A_o\|_F^2 +\|B_o\|_F^2 )\\
& ~ +   (\alpha_{2,0}\beta_{2,2} - \alpha_{1,0}^2 \beta_{1,2}^2 -  \gamma^2) ( \|g_A\|^2+\|g_B\|^2)  \\
& ~ +  \beta_{1,0}^2 \alpha_{1,1}^2 \|A_o +  B_o^\top \|_F^2 + \gamma^2 \| g_A+g_B \|^2 \\
& ~ + \alpha_{1,0}^2 (\| \beta_{1,0} g_A + \beta_{1,2}A_o1\|^2+\alpha_{1,0}^2 \| \beta_{1,0} g_A + \beta_{1,2}B_o1\|^2) \\
\geq & ~ \underbrace{ \min\{(\alpha_{2,0}\beta_{2,0} - \alpha_{1,0}^2 \beta_{1,0}^2 - \beta_{1,0}^2 \alpha_{1,1}^2), (\alpha_{2,0}\beta_{2,2} - \alpha_{1,0}^2 \beta_{1,2}^2 -  \gamma^2)\} }_{:=\rho}.
\end{align*}

For sigmoid, we have $\rho = 0.000658$; for tanh, we have $\rho = 0.0095$.

\end{proof}

The following lemma will be used when transforming non-orthogonal cases to orthogonal cases. 
\begin{lemma}\label{lemma:ortho_min_eig_part}
For any $A = [a_1,a_2,\cdots, a_k] \in \R^{d \times k}$, we have,
\begin{align*}
 \E_{x,y\sim \mathcal{D}_k}  \left[ \left\| \sum_{i=1}^k \phi'(x_i ) \phi(y_i )  a_i \right\|^2 \right] 
  \geq ~ 
(\alpha_{2,0}\beta_{2,0} - \alpha_{1,0}^2\beta_{1,0}^2)\|A\|_F^2.
\end{align*}
\end{lemma}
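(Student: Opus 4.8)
The plan is a direct second-moment expansion, exploiting that $x,y\sim\mathcal{D}_k$ are independent standard Gaussians with i.i.d.\ coordinates. First I would write $\|\sum_{i=1}^k \phi'(x_i)\phi(y_i)a_i\|^2 = \sum_{i=1}^k\sum_{j=1}^k \phi'(x_i)\phi(y_i)\phi'(x_j)\phi(y_j)\, a_i^\top a_j$ and take expectations inside the sum, so that
\begin{align*}
\E_{x,y\sim\mathcal{D}_k}\left[\left\|\sum_{i=1}^k \phi'(x_i)\phi(y_i)a_i\right\|^2\right] = \sum_{i,j} \E\!\left[\phi'(x_i)\phi'(x_j)\phi(y_i)\phi(y_j)\right] a_i^\top a_j.
\end{align*}

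Next I would split the double sum into the diagonal ($i=j$) and off-diagonal ($i\neq j$) parts. For $i=j$, independence of $x_i$ and $y_i$ gives $\E[\phi'(x_i)^2\phi(y_i)^2] = \E[\phi'(z)^2]\E[\phi(z)^2] = \beta_{2,0}\alpha_{2,0}$. For $i\neq j$, the four variables $x_i,x_j,y_i,y_j$ are mutually independent, so the expectation factors completely as $\E[\phi'(z)]^2\E[\phi(z)]^2 = \beta_{1,0}^2\alpha_{1,0}^2$. Hence the expression equals $\alpha_{2,0}\beta_{2,0}\sum_i\|a_i\|^2 + \alpha_{1,0}^2\beta_{1,0}^2\sum_{i\neq j} a_i^\top a_j$. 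Using $\sum_i\|a_i\|^2 = \|A\|_F^2$ and Fact~\ref{fac:basic_1}(\RN{4}), namely $\sum_{i\neq j}a_i^\top a_j = \|A\mathbf{1}\|_2^2 - \|A\|_F^2$, this becomes
\begin{align*}
(\alpha_{2,0}\beta_{2,0} - \alpha_{1,0}^2\beta_{1,0}^2)\|A\|_F^2 + \alpha_{1,0}^2\beta_{1,0}^2\,\|A\mathbf{1}\|_2^2.
\end{align*}

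Finally, since $\alpha_{1,0}^2\beta_{1,0}^2\ge 0$ and $\|A\mathbf{1}\|_2^2\ge 0$, the second term is nonnegative and can be dropped, yielding the claimed lower bound $(\alpha_{2,0}\beta_{2,0} - \alpha_{1,0}^2\beta_{1,0}^2)\|A\|_F^2$. There is no serious obstacle here; the only thing to be careful about is the bookkeeping of which moments appear (distinguishing $\beta_{2,0},\alpha_{2,0}$ on the diagonal from $\beta_{1,0},\alpha_{1,0}$ off the diagonal) and invoking the correct identity from Fact~\ref{fac:basic_1}. This lemma is essentially the "one-sided" version of the computation of term $C$ in the proof of Lemma~\ref{lemma:ortho_min_eig_formal}, specialized to dropping the $y$-block, and it will later be combined with Lemma~\ref{lemma:transform_ortho} to reduce the non-orthogonal case to the orthogonal one.
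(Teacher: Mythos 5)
Your proposal is correct and matches the paper's own proof essentially line for line: the same diagonal/off-diagonal split of the expanded square, the same identification of the moments $\alpha_{2,0}\beta_{2,0}$ and $\alpha_{1,0}^2\beta_{1,0}^2$, the same rewriting via $\sum_{i\neq j}a_i^\top a_j = \|A\mathbf{1}\|_2^2 - \|A\|_F^2$, and the same dropping of the nonnegative $\alpha_{1,0}^2\beta_{1,0}^2\|A\mathbf{1}\|_2^2$ term. No issues.
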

\begin{proof}
Recall ${\bf 1} \in \R^d$ denote the all ones vector.
\begin{align*}
& ~ \E_{x,y\sim \mathcal{D}_k}  \left[ \left\| \sum_{i=1}^k \phi'(x_i ) \phi(y_i )  a_i  \right\|^2 \right] \\
 = & ~\E_{x,y\sim \mathcal{D}_k}  \left[ \sum_{i=1}^k(\phi'(x_i ) \phi(y_i ))^2  \| a_i  \|^2 \right]  + \E_{x,y\sim \mathcal{D}_k}  \left[ \sum_{i\neq j} \phi'(x_i ) \phi(y_i ) \phi'(x_j ) \phi(y_j )  a_i^\top a_j \right] \\
  = & ~ (\alpha_{2,0}\beta_{2,0} - \alpha_{1,0}^2\beta_{1,0}^2)\|A\|_F^2  + \alpha_{1,0}^2\beta_{1,0}^2 \|A \cdot 1\|^2 \\
  \geq & ~ (\alpha_{2,0}\beta_{2,0} - \alpha_{1,0}^2\beta_{1,0}^2)\|A\|_F^2 .
\end{align*}
Thus, we complete the proof.
\end{proof}

Now let's show the PD of the population Hessian of Eq.~\eqref{eq:emp_risk_relu} for the ReLU case. 
\define{lemma:relu_ortho}{Lemma}{
Consider the activation function to be ReLU. Assume $k=d$, $U^*,V^*$ are unitary matrices and $u^*_{1,i} \neq 0, \forall i \in [k]$. 
Then the minimal eigenvalue of the corresponding population Hessian of Eq.~\eqref{eq:emp_risk_relu} is lower bounded,
$$\lambda_{\min}(\nabla^2 f_{\mathcal{D}}^{\mathrm{ReLU}}(W^*,V^*)) \gtrsim  \min_{i \in [k]} \{u_{1,i}^{*2}\}, $$
where $W^* = U^*_{2:d,:}$ is the last $d-1$ rows of $U^*$ and 
\begin{align}\label{eq:pop_risk_relu}
 f_{\mathcal{D}}^{\mathrm{ReLU}}(W,V) 
:=  \E_{x,y} \left[ ( \phi( W^\top x_{2:d} + x_1 (u^{*(1)})^\top)^\top \phi (V^\top y) - A(x,y) )^2 \right],
\end{align}
}
where $u^{*(1)}$ is the first row of $U^*$ and $W \in \dR^{(d-1)\times k}$.

\state{lemma:relu_ortho}

\begin{proof}
By fixing $u_{i,1} = u^*_{i,1} ,\forall i \in [k]$, we can rewrite the minimal eigenvalue of the Hessian as follows. For simplicity, we denote $\lambda_{\min}(H) := \lambda_{\min}(\nabla^2 f_{\mathcal{D}}^{\mathrm{ReLU}}(W^*,V^*))$. First we observe that
\begin{equation}\label{eq:constrained_min}
\lambda_{\min}(H)   = \min_{ \substack{ \sum_{i=1}^k \|a_i\|^2 + \|b_i\|^2 = 1 \\ a_{i,1} = 0, \forall i \in [k] } }  \E_{x,y} \left[ \left( \sum_{i=1}^k \phi'(u_i^{*\top} x ) \phi(v_i^{*\top} y ) x^\top a_i + \phi'(v_i^{*\top} y )\phi(u_i^{*\top} x ) y^\top b_i \right)^2 \right].
\end{equation}
Without loss of generality, we assume $V^* = I$. Set $x = U^*s$, then we have 
\begin{align*}
\lambda_{\min}(H)  & = \min_{ \substack{ \sum_{i=1}^k \|a_i\|^2 + \|b_i\|^2 = 1 \\ a_{i,1} = 0, \forall i\in [k] } }  \E_{x,y} \left[ \left( \sum_{i=1}^k \phi'(s_i ) \phi(y_i ) s^\top U^{*\top}a_i + \phi'(y_i )\phi(x_i ) y^\top b_i \right)^2 \right] \\
& = \min_{ \substack{ \sum_{i=1}^k \|a_i\|^2 + \|b_i\|^2 = 1 \\ u^{*(1)}a_{i} = 0, \forall i \in [k] }}  \E_{x,y} \left[ \left( \sum_{i=1}^k \phi'(s_i ) \phi(y_i ) s^\top a_i + \phi'(y_i )\phi(x_i ) y^\top b_i \right)^2 \right],
\end{align*}
where $u^{*(1)}$ is the first row of $U^*$ and the second equality is because we replace $ U^{*\top}a_i$ by $a_i$.
In the ReLU case, we have 
$$\alpha_{1,0} = \alpha_{1,1} = \alpha_{2,0} = \beta_{1,0}= \beta_{1,1}= \beta_{1,1}= \beta_{2,0}= \beta_{2,2} = \gamma = 1/2.$$ 
According to Eq.~\eqref{eq:orthognal_min_eig}, we have
\begin{align*}
\lambda_{\min}(H) \geq \min_{\|A\|_F^2 + \|B\|_F^2 =1 ,u^{*(1)}A = 0}  C_0 ( & \|A_o\|_F^2 + \|B_o\|_F^2 + \|A_o + A_o^\top\|_F^2/2 + \|B_o + B_o^\top\|_F^2/2 \\
& + \|A_o+B_o^\top\|_F^2 +  \|g_A+g_B\|^2 \\
& + \|A_o 1 + g_A+g_B\|^2 + \|B_o 1 + g_A+g_B\|^2 + (g_A^\top 1 + g_B^\top 1)^2),
\end{align*}
where $C_0$ is a universal constant.
Now we show that there exists a positive number $c_0$ such that $\lambda_{\min}(H) \geq c_0$. If there is no such number, i.e., $\lambda_{\min}(H) = 0$, then we have
$A_o = B_o = 0$, $g_A = -g_B$. By the assumption that $u_{1,i}^* \neq 0$ and the condition $u_{}^{*(1)} A = 0$, we have $g_A = g_B = 0$, which violates $\|A\|_F^2 + \|B\|_F^2 = 1$. So $\lambda_{\min}(H) > 0$. An exact value for $c_0$ is postponed to Theorem~\ref{thm:relu}, which gives the lower bound for the non-orthogonal case.
\end{proof}

\subsection{Non-orthogonal Case}
The restriction of orthogonality on $U,V$ is too strong. We need to consider general non-orthogonal cases. With Gaussian assumption, the non-orthogonal case can be transformed to the orthogonal case according to the following relationship. 
\begin{lemma}\label{lemma:transform_ortho}
Let $U \in \dR^{d\times k}$ be a full-column rank matrix. Let $g: \dR^k \rightarrow [0,\infty)$. Define $\lambda(U) = \sigma_1^k(U)/(\prod_{i=1}^k \sigma_i(U))$. Let ${\cal D}$ denote the normal distribution. Then
\begin{equation}\label{eq:ortho_nonortho}
\E_{x \sim \mathcal{D}_d} \left[ g(U^\top x) \right] \geq \frac{1}{\lambda(U)} \E_{z\sim \mathcal{D}_k} \left[ g(\sigma_k(U) z) \right] .
\end{equation}
\end{lemma}
{\bf Remark} This lemma transforms $U^\top x$, where the elements of $x$ are mixed, to $\sigma_k(U)z$, where all the elements are independently fed into $g$ with the sacrifices of a condition number of $U$.
Using Lemma~\ref{lemma:transform_ortho}, we are able to show the PD for non-orthogonal $U^*,V^*$.

\begin{proof}
Let $P \in \dR^{d\times k} $ be the orthonormal basis of $U$, and let $W=[w_1,w_2,\cdots, w_k] = P^\top U \in \dR^{k\times k}$.
\begin{align*}
& ~ \E_{x \sim \mathcal{D}_d}[g(U^\top x)] \\
= & ~ \E_{z \sim \mathcal{D}_k}[g(W^\top z)] \\
= & ~ \int (2\pi)^{-k/2} g(W^\top z) e^{-\|z\|^2/2} \mathrm{d} z \\
= & ~ \int (2\pi)^{-k/2} g(s) e^{-\| W^{\dagger\top} s \|^2/2} |\det(W^\dagger)| \mathrm{d} s \\
\geq & ~ \int (2\pi)^{-k/2} g(s) e^{-\sigma_1^2( W^{\dagger} ) \|s \|^2/2} |\det(W^\dagger)| \mathrm{d} s \\
= & ~ \int (2\pi)^{-k/2} g \left(\frac{1}{\sigma_1(W^\dagger)}t \right) e^{-\| t \|^2/2} |\det(W^\dagger)|/\sigma_1^k(W^\dagger) \mathrm{d} t \\
= & ~ \frac{1}{\lambda(W)}\int (2\pi)^{-k/2} g(\sigma_k(W)t) e^{-\| t \|^2/2} \mathrm{d} t \\
= & ~ \frac{1}{\lambda(U)} \E_{z \sim \mathcal{D}_k}[g(\sigma_k(U) z)],
\end{align*}
where the third step follows by replacing $z$ by $z = W^{\dagger\top}s$, the fourth step follows by the fact that $\| W^{\dagger\top} s \| \leq \sigma_1(W^\dagger) \|s\|$, and the fifth step follows replacing $s$ by $s = \frac{1}{\sigma_1(W^\dagger)}t$.
\end{proof}

Using Lemma~\ref{lemma:transform_ortho}, we are able to provide the lower bound for the minimal eigenvalue for sigmoid and tanh. 
\begin{theorem}\label{thm:non_ortho_sig}
Assume $\sigma_k(U^*) = \sigma_k(V^*)=1$. Assume $\beta_{1,1}$ defined in Eq.~\eqref{eq:moments_def} is $0$. Then the minimal eigenvalue of Hessian defined in Eq.~\eqref{eq:hessian_min_eig} can be lower bounded by,
\begin{align*}
\lambda_{\min}(H^*) \geq  \frac{\rho}{\lambda(U^*)\lambda(V^*)\max\{\kappa(U^*),\kappa(V^*)\}}
\end{align*}
where 
\begin{align*}
\lambda(U) = \sigma_1^k(U)/(\Pi_{i=1}^k \sigma_i(U)),\kappa(U) = \sigma_1(U)/\sigma_k(U).
\end{align*}
\end{theorem}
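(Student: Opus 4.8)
The plan is to lower bound $\lambda_{\min}(H^*)$ through the variational form \eqref{eq:hessian_min_eig}, reduce the Gaussian expectations over $x$ and $y$ to the isotropic ($k$-dimensional standard Gaussian) case via Lemma~\ref{lemma:transform_ortho}, and then invoke the orthogonal-case bounds Lemma~\ref{lemma:ortho_min_eig_formal} and Lemma~\ref{lemma:ortho_min_eig_part}. Fix $(a,b)\in\mathbb{B}$ and write the integrand of \eqref{eq:hessian_min_eig} as $(P+Q)^2$, with $P=\sum_i \phi'(u_i^{*\top}x)\phi(v_i^{*\top}y)\,x^\top a_i$ and $Q=\sum_i \phi'(v_i^{*\top}y)\phi(u_i^{*\top}x)\,y^\top b_i$. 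The obstacle to a one-shot application of Lemma~\ref{lemma:transform_ortho} is that $P$ (and $Q$) couples $x$ and $y$ and, moreover, involves $x^\top a_i$ which is not a function of $U^{*\top}x$ alone. To get around this, decompose $a_i=a_i^{\parallel}+a_i^{\perp}$ with $a_i^{\parallel}\in\mathrm{col}(U^*)$, $a_i^{\perp}\perp\mathrm{col}(U^*)$, and $b_i=b_i^{\parallel}+b_i^{\perp}$ analogously with respect to $\mathrm{col}(V^*)$, and split $x=x^{\parallel}+x^{\perp}$, $y=y^{\parallel}+y^{\perp}$. Since the activations only see $u_i^{*\top}x=u_i^{*\top}x^{\parallel}$ and $v_i^{*\top}y$, while $x^\top a_i^{\perp}=(x^{\perp})^\top a_i^{\perp}$, the pieces $P^{\perp}$ and $Q^{\perp}$ collecting the $a_i^{\perp}$, $b_i^{\perp}$ contributions are linear in the mutually independent, mean-zero Gaussians $x^{\perp},y^{\perp}$, so all cross terms cancel and
\[
\E[(P+Q)^2]=\E[(P^{\parallel}+Q^{\parallel})^2]+\E[(P^{\perp})^2]+\E[(Q^{\perp})^2].
\]

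\textbf{Residual terms.} Integrating out $x^{\perp}$, one has $\E[(P^{\perp})^2]=\E_{x,y}\big[\big\|\sum_i \phi'(u_i^{*\top}x)\phi(v_i^{*\top}y)\,a_i^{\perp}\big\|^2\big]$. Apply Lemma~\ref{lemma:transform_ortho} first to the $x$-expectation and then to the $y$-expectation (using $\sigma_k(U^*)=\sigma_k(V^*)=1$), which costs factors $1/\lambda(U^*)$ and $1/\lambda(V^*)$, and then apply Lemma~\ref{lemma:ortho_min_eig_part}; since $\alpha_{2,0}\beta_{2,0}-\alpha_{1,0}^2\beta_{1,0}^2\ge\rho$, this gives $\E[(P^{\perp})^2]\ge \frac{\rho}{\lambda(U^*)\lambda(V^*)}\sum_i\|a_i^{\perp}\|^2$, and symmetrically $\E[(Q^{\perp})^2]\ge \frac{\rho}{\lambda(U^*)\lambda(V^*)}\sum_i\|b_i^{\perp}\|^2$.

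\textbf{In-span term.} Write $a_i^{\parallel}=U^*c_i$ and $b_i^{\parallel}=V^*e_i$; then $x^\top a_i^{\parallel}=(U^{*\top}x)^\top c_i$ and $\phi'(u_i^{*\top}x)=\phi'((U^{*\top}x)_i)$, so $P^{\parallel}+Q^{\parallel}=H(U^{*\top}x,V^{*\top}y)$ where $H(w,w')=\sum_i \phi'(w_i)\phi(w'_i)\,w^\top c_i+\phi'(w'_i)\phi(w_i)\,w'^\top e_i$ is exactly the integrand of \eqref{eq:pop_ortho_min_eig} with $(a_i,b_i)$ replaced by $(c_i,e_i)$. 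Applying Lemma~\ref{lemma:transform_ortho} to the $y$- and then the $x$-expectation, followed by Lemma~\ref{lemma:ortho_min_eig_formal} (which is available precisely because sigmoid/tanh have $\beta_{1,1}=0$), yields $\E[(P^{\parallel}+Q^{\parallel})^2]\ge \frac{\rho}{\lambda(U^*)\lambda(V^*)}\big(\sum_i\|c_i\|^2+\|e_i\|^2\big)$. Finally translate back: $\|c_i\|\ge\|a_i^{\parallel}\|/\sigma_1(U^*)=\|a_i^{\parallel}\|/\kappa(U^*)$ (since $\sigma_k(U^*)=1$), and similarly for $e_i$.

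\textbf{Combining.} Summing the three contributions and using $\|a_i\|^2=\|a_i^{\parallel}\|^2+\|a_i^{\perp}\|^2$, $\|b_i\|^2=\|b_i^{\parallel}\|^2+\|b_i^{\perp}\|^2$, $\kappa(\cdot)\ge1$, and the normalization $(a,b)\in\mathbb{B}$, every coefficient is bounded below by $\rho$ divided by $\lambda(U^*)\lambda(V^*)$ times a power of $\max\{\kappa(U^*),\kappa(V^*)\}$, giving the stated lower bound. The main work is the bookkeeping: verifying that the $\perp$–cross terms genuinely vanish under the two-stage conditioning, matching the post-transform integrands to the exact hypotheses of Lemmas~\ref{lemma:ortho_min_eig_formal} and \ref{lemma:ortho_min_eig_part}, and tracking how $\lambda(U^*)$, $\lambda(V^*)$ and the condition numbers accumulate so that only the single factor $\max\{\kappa(U^*),\kappa(V^*)\}$ survives in the final bound.
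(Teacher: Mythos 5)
Your proposal is correct and follows essentially the same route as the paper's proof: the same split of each $a_i,b_i$ into components in and orthogonal to $\mathrm{col}(U^*)$ and $\mathrm{col}(V^*)$ (the paper writes $a_i = P s_i + P_\perp t_i$ with $P$ an orthonormal basis of $U^*$, so $P s_i = U^* R^{-1} s_i$ is exactly your $U^* c_i$), the same observation that the cross terms vanish because the perpendicular pieces are linear in independent mean-zero Gaussians, and the same two-stage application of Lemma~\ref{lemma:transform_ortho} followed by Lemma~\ref{lemma:ortho_min_eig_formal} for the in-span term and Lemma~\ref{lemma:ortho_min_eig_part} for the residual terms. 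The only point where you are no more precise than the paper is the final $\kappa$-bookkeeping: squaring $\|c_i\|\ge\|a_i^{\parallel}\|/\kappa(U^*)$ naturally produces $\kappa^2$ rather than the single factor of $\max\{\kappa(U^*),\kappa(V^*)\}$ appearing in the stated bound, a looseness that the paper's own last step also leaves unexplained.
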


\begin{proof}
Let $P \in \dR^{d\times k} ,Q \in \dR^{d\times k} $ be the orthonormal basis of $U^*,V^*$ respectively. Let $R \in \R^{k \times k}, S\in \R^{k \times k}$ satisfy that $U^* = P \cdot R$ and $V^* = Q \cdot S$. Let $P_{\perp}\in \dR^{d\times (d-k)} ,Q_{\perp}\in \dR^{d\times (d-k)}$ be the orthogonal complement of $P,Q$ respectively. 
Set $a_i = P \cdot s_i + P_\perp \cdot t_i$ and $b_i = Q \cdot p_i + Q_\perp \cdot q_i$. Then we can decompose the minimal eigenvalue problem into three terms. 

\begin{align*}
&  \E_{x,y} \left[ \left( \sum_{i=1}^k \phi'(u_i^{*\top} x ) \phi(v_i^{*\top} y ) x^\top a_i + \phi'(v_i^{*\top} y )\phi(u_i^{*\top} x ) y^\top b_i \right)^2 \right] \\
 = &   \E_{x,y} \left[ \left( \sum_{i=1}^k \phi'(u_i^{*\top} x ) \phi(v_i^{*\top} y ) x^\top (Ps_i + P_\perp t_i) + \phi'(v_i^{*\top} y )\phi(u_i^{*\top} x ) y^\top  (Qp_i + Q_\perp q_i)  \right)^2 \right] \\
 = &  \underbrace{ \E_{x,y} \left[ \left( \sum_{i=1}^k \phi'(u_i^{*\top} x ) \phi(v_i^{*\top} y ) x^\top Ps_i  + \phi'(v_i^{*\top} y )\phi(u_i^{*\top} x ) y^\top  Qp_i \right)^2 \right]}_{C_1} \\
 & + \underbrace{\E_{x,y} \left[ \left( \sum_{i=1}^k \phi'(u_i^{*\top} x ) \phi(v_i^{*\top} y ) x^\top  P_\perp t_i  \right)^2 \right]}_{C_2} + \E_{x,y} \left[ \left( \sum_{i=1}^k \phi'(v_i^{*\top} y )\phi(u_i^{*\top} x ) y^\top  Q_\perp q_i \right)^2 \right] ,
\end{align*}
where we omit the terms containing a single independent Gaussian variable, whose expectation is zero. 
Using Lemma~\ref{lemma:transform_ortho}, we can lower bound the term $C_1$ as follows,
\begin{align*}
C_1 = & ~ \E_{x,y} \left[ \left( \sum_{i=1}^k \phi'(u_i^{*\top} x ) \phi(v_i^{*\top} y ) x^\top U^*R^{-1}s_i  + \phi'(v_i^{*\top} y )\phi(u_i^{*\top} x ) y^\top  V^*S^{-1}p_i \right)^2 \right] \\
\geq & ~  \frac{1}{\lambda(U^*)\lambda(V^*)} \cdot \E_{x,y\sim \mathcal{D}_k} \left[ \left( \sum_{i=1}^k \phi'(\sigma_k(U^*) x_i) ) \phi(y_i ) x^\top R^{-1}s_i\sigma_k(U^*) \right.\right. \\
& ~ \left.\left.  + \phi'(\sigma_k(V^*) y_i )\phi(\sigma_k(U^*) x_i ) y^\top S^{-1}p_i  \sigma_k(V^*) \right)^2 \right].
\end{align*}

And 
\begin{align*}
C_2 \geq & \E_{x,y} \left[ \left\| \sum_{i=1}^k \phi'(u_i^{*\top} x ) \phi(v_i^{*\top} y )  t_i  \right\|^2 \right] \\
\geq & \frac{1}{\lambda(U^*)\lambda(V^*)} \E_{x,y\sim \mathcal{D}_k}  \left[ \left\| \sum_{i=1}^k \phi'(\sigma_k(U^*)x_i ) \phi(\sigma_k(V^*)y_i )  t_i  \right\|^2 \right].
\end{align*}

Without loss of generality, we assume $\sigma_k(U^*) = \sigma_k(V^*)=1$.
Then according to Lemma~\ref{lemma:ortho_min_eig_formal} and Lemma~\ref{lemma:ortho_min_eig_part}, we have 
\begin{align*}
\lambda_{\min}(H) \geq & ~ \frac{1}{\lambda(U^*)\lambda(V^*)\max\{\kappa(U^*),\kappa(V^*)\}} \\
& ~ \cdot \min\{(\alpha_{2,0}\beta_{2,0} - \alpha_{1,0}^2 \beta_{1,0}^2 - \beta_{1,0}^2 \alpha_{1,1}^2), (\alpha_{2,0}\beta_{2,2} - \alpha_{1,0}^2 \beta_{1,2}^2 -  \gamma^2)\}.
\end{align*}
Considering the definition of $\rho$ in Eq.~\eqref{eq:moments_def}, we complete the proof.
\end{proof}

For the ReLU case, we lower bound the minimal eigenvalue of the Hessian for non-orthogonal cases. 
\begin{theorem}\label{thm:relu}
Consider the activation to be ReLU. Assume $U^*,V^*$ are full-column-rank matrices and $u^*_{1,i} \neq 0, \forall i \in [k]$. 
Then the minimal eigenvalue of the Hessian of Eq.~\eqref{eq:pop_risk_relu} is lower bounded,
\begin{align*}
& \lambda_{\min}(\nabla^2 f_{\mathcal{D}}^{\mathrm{ReLU}}(W^*,V^*))  
\gtrsim \frac{1}{\lambda(U^*)\lambda(V^*)} \left( \frac{\min_{i \in [k]}\{|u_{1,i}^{*}|\}}{(1 + \|u^{*(1)} \|)\max\{\|U^*\|,\|V^*\|\}} \right)^2,
\end{align*}
where $u^{*(1)}$ is the first row of $U^*$. 
\end{theorem}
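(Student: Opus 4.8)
The plan is to mirror the proof of Theorem~\ref{thm:non_ortho_sig}: split the minimal-eigenvalue quadratic form into an ``in-span'' part and two ``orthogonal-complement'' parts, reduce each part to the orthogonal case via Lemma~\ref{lemma:transform_ortho} (paying the $1/(\lambda(U^*)\lambda(V^*))$ factor), invoke the ReLU orthogonal estimates (Lemma~\ref{lemma:ortho_min_eig_part} and the ReLU specialization of Eq.~\eqref{eq:orthognal_min_eig} used in the proof of Lemma~\ref{lemma:relu_ortho}), and finally use the fixed-first-row constraint to kill the otherwise-zero eigenvector (the diagonal direction of Counter Example 2). Throughout, all the ReLU moments $\alpha_{1,0},\alpha_{1,1},\alpha_{2,0},\beta_{1,0},\beta_{1,1},\beta_{1,2},\beta_{2,0},\beta_{2,2},\gamma$ equal $1/2$, so the relevant combinations (e.g.\ $\alpha_{2,0}\beta_{2,0}-\alpha_{1,0}^2\beta_{1,0}^2=\tfrac{1}{4}-\tfrac{1}{16}>0$) are strictly positive and every block contributes.

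\textbf{Decomposition.} Start from the constrained form Eq.~\eqref{eq:constrained_min} of $\lambda_{\min}(\nabla^2 f_{\mathcal{D}}^{\mathrm{ReLU}}(W^*,V^*))$, minimized over $(a,b)$ with $\sum_i\|a_i\|^2+\|b_i\|^2=1$ and $a_{1,i}=0$ for all $i$. Let $P,Q\in\dR^{d\times k}$ be orthonormal bases of $U^*,V^*$, write $U^*=PR$, $V^*=QS$ with $R,S\in\dR^{k\times k}$ invertible, and let $P_\perp,Q_\perp$ be the orthogonal complements. Decompose each direction as $a_i=Ps_i+P_\perp t_i$ and $b_i=Qp_i+Q_\perp q_i$, so $\|a_i\|^2=\|s_i\|^2+\|t_i\|^2$ and $\|b_i\|^2=\|p_i\|^2+\|q_i\|^2$. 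Since $u_i^{*\top}x$ depends only on $P^\top x$ while $(P_\perp^\top x)^\top t_i$ is an independent mean-zero Gaussian (and symmetrically for $V^*$), all cross terms vanish and the objective splits into $C_1$ (the $Ps_i,Qp_i$ part), $C_2$ (the $P_\perp t_i$ part), and $C_3$ (the $Q_\perp q_i$ part), exactly as in the proof of Theorem~\ref{thm:non_ortho_sig}.

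\textbf{Reduction to the orthogonal case.} Apply Lemma~\ref{lemma:transform_ortho} to $C_1,C_2,C_3$ (with $U=R$ for the $x$-directions and $U=S$ for the $y$-directions); using $\lambda(R)=\lambda(U^*)$, $\sigma_k(R)=\sigma_k(U^*)=1$ and likewise for $S$, each term acquires a factor $1/(\lambda(U^*)\lambda(V^*))$ and becomes an expectation over i.i.d.\ standard Gaussians in the transformed directions $\tilde a_i=R^{-1}s_i$, $\tilde b_i=S^{-1}p_i$. For $C_2$ and $C_3$, Lemma~\ref{lemma:ortho_min_eig_part} gives $C_2\gtrsim \tfrac{1}{\lambda(U^*)\lambda(V^*)}\sum_i\|t_i\|^2$ and $C_3\gtrsim \tfrac{1}{\lambda(U^*)\lambda(V^*)}\sum_i\|q_i\|^2$. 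For $C_1$, substituting the ReLU moments into Eq.~\eqref{eq:orthognal_min_eig} as in the proof of Lemma~\ref{lemma:relu_ortho} yields $C_1\gtrsim \tfrac{1}{\lambda(U^*)\lambda(V^*)}\big(\|\tilde A_o\|_F^2+\|\tilde B_o\|_F^2+\|\tilde g_A+\tilde g_B\|^2+\cdots\big)$, where $\tilde A=[\tilde a_1\ \cdots\ \tilde a_k]$, $\tilde g_A=\diag(\tilde A)$, $\tilde A_o$ is the off-diagonal part, and similarly for $\tilde B$; in particular $\|\tilde A_o\|_F^2\lesssim \lambda(U^*)\lambda(V^*)\,C_1$ and $\|\tilde g_A+\tilde g_B\|^2\lesssim \lambda(U^*)\lambda(V^*)\,C_1$.

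\textbf{Exploiting the constraint and concluding.} From $a_i=U^*\tilde a_i+P_\perp t_i$ and $a_{1,i}=0$ we get $u^{*(1)}\tilde a_i=-(P_\perp^\top e_1)^\top t_i$, hence $|u^{*(1)}\tilde a_i|\le\|t_i\|$. Writing $u^{*(1)}\tilde a_i=u_{1,i}^*\tilde a_{i,i}+\sum_{j\ne i}u_{1,j}^*\tilde a_{j,i}$ and using $u_0=\min_i|u_{1,i}^*|$ gives $u_0|\tilde a_{i,i}|\le\|t_i\|+\|u^{*(1)}\|\,\|(\tilde a_{j,i})_{j\ne i}\|$; squaring and summing over $i$ yields $u_0^2\|\tilde g_A\|^2\lesssim \sum_i\|t_i\|^2+\|u^{*(1)}\|^2\|\tilde A_o\|_F^2$, and then $\|\tilde g_B\|^2\lesssim \|\tilde g_A+\tilde g_B\|^2+\|\tilde g_A\|^2$ is controlled too. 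Converting back with $\|s_i\|\le\|U^*\|\|\tilde a_i\|$, $\|p_i\|\le\|V^*\|\|\tilde b_i\|$, $\|\tilde a_i\|^2=\|(\tilde a_{j,i})_{j\ne i}\|^2+\tilde a_{i,i}^2$, one obtains $\sum_i(\|s_i\|^2+\|p_i\|^2)\lesssim \max\{\|U^*\|,\|V^*\|\}^2\,\tfrac{1+\|u^{*(1)}\|^2}{u_0^2}\,\lambda(U^*)\lambda(V^*)\,(C_1+C_2)$, together with $\sum_i\|t_i\|^2\le\lambda(U^*)\lambda(V^*)C_2$ and $\sum_i\|q_i\|^2\le\lambda(U^*)\lambda(V^*)C_3$. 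Adding these and using the normalization $\sum_i(\|s_i\|^2+\|t_i\|^2+\|p_i\|^2+\|q_i\|^2)=1$ gives $1\lesssim \lambda(U^*)\lambda(V^*)\,(1+\|u^{*(1)}\|)^2\max\{\|U^*\|,\|V^*\|\}^2\,u_0^{-2}\,(C_1+C_2+C_3)$, and since $C_1+C_2+C_3=\lambda_{\min}(\nabla^2 f_{\mathcal{D}}^{\mathrm{ReLU}}(W^*,V^*))$ this is the claimed bound. The main obstacle is the third step: correctly propagating the fixed-coordinate constraint through the orthonormal-basis splitting and the change of variables of Lemma~\ref{lemma:transform_ortho}, and showing quantitatively that the only dangerous direction $\tilde g_A$ is pinned down by a combination of the complement component $t_i$ and the off-diagonal mass $\|\tilde A_o\|_F$, with the $u_0$ and $\|u^{*(1)}\|$ factors emerging exactly as the theorem predicts; a secondary check is that all ReLU moment constants feeding $C_1,C_2,C_3$ are strictly positive.
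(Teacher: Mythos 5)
Your proposal follows essentially the same route as the paper's own proof: the identical split $a_i = Ps_i + P_\perp t_i$, $b_i = Qp_i + Q_\perp q_i$, the reduction to the orthogonal case via Lemma~\ref{lemma:transform_ortho} (costing the $1/(\lambda(U^*)\lambda(V^*))$ factor), the ReLU specialization of Eq.~\eqref{eq:orthognal_min_eig} together with Lemma~\ref{lemma:ortho_min_eig_part}, and the same use of the fixed-first-row constraint to bound $u_0^2\|g_{\hat A}\|^2$ by $\sum_i\|t_i\|^2 + \|u^{*(1)}\|^2\|\hat A_o\|_F^2$ before converting back through $\|R\|,\|S\|\le\max\{\|U^*\|,\|V^*\|\}$. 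The argument is correct and matches the paper's proof in all essential steps.
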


\begin{proof}

Let $P \in \dR^{d\times k} ,Q \in \dR^{d\times k} $ be the orthonormal basis of $U^*,V^*$ respectively. Let $R \in \R^{k \times k}, S\in \R^{k \times k}$ satisfy that $U^* = P \cdot R$ and $V^* = Q \cdot S$. Let $P_{\perp}\in \dR^{d\times (d-k)} ,Q_{\perp}\in \dR^{d\times (d-k)}$ be the orthogonal complement of $P,Q$ respectively. 
Set $a_i = P \cdot s_i + P_\perp \cdot t_i$ and $b_i = Q \cdot p_i + Q_\perp \cdot q_i$. 
Similar to the proof of Theorem~\ref{thm:non_ortho_sig}, Lemma~\ref{lemma:ortho_min_eig_part} and Lemma~\ref{lemma:relu_ortho}, we have the following.

\begin{align*}
& ~ \E_{x,y} \left[ \left( \sum_{i=1}^k \phi'(u_i^{*\top} x ) \phi(v_i^{*\top} y ) x^\top a_i + \phi'(v_i^{*\top} y )\phi(u_i^{*\top} x ) y^\top b_i \right)^2 \right] \\
\geq & ~ \frac{1}{\lambda(U^*)\lambda(V^*)} \E_{x,y\sim \mathcal{D}_k} \left[ \left( \sum_{i=1}^k \phi'(\sigma_k(U^*) x_i) ) \phi(y_i ) x^\top R^{-1}s_i\sigma_k(U^*) \right. \right.\\
& ~ \left. \left. + \phi'(\sigma_k(V^*) y_i )\phi(\sigma_k(U^*) x_i ) y^\top S^{-1}p_i  \sigma_k(V^*) \right)^2 \right] \\
& ~ + \frac{1}{\lambda(U^*)\lambda(V^*)} \E_{x,y\sim \mathcal{D}_k}  \left[ \left\| \sum_{i=1}^k \phi'(\sigma_k(U^*)x_i ) \phi(\sigma_k(V^*)y_i )  t_i  \right\|^2 \right] \\
& ~ + \frac{1}{\lambda(U^*)\lambda(V^*)} \E_{x,y\sim \mathcal{D}_k}  \left[ \left\| \sum_{i=1}^k \phi'(\sigma_k(U^*)x_i ) \phi(\sigma_k(V^*)y_i )  q_i  \right\|^2 \right] \\
\geq & ~ \frac{1}{16 \lambda(U^*)\lambda(V^*)} ( \|\hat  A_o\|_F^2 + \|\hat  B_o\|_F^2 +  \|g_{\hat A}+g_{\hat B}\|^2  + 3(\|\hat T\|_F^2 + \|\hat Q\|_F^2)),
\end{align*}

where $\hat A = [R^{-1}s_1,R^{-1}s_2,\cdots,R^{-1}s_k]$, $\hat B = [S^{-1}p_1,S^{-1}p_2,\cdots,S^{-1}p_k]$, $\hat T = [t_1,t_2,\cdots, t_k]$, $\hat Q = [q_1,q_2,\cdots,q_k]$. 

Similar to Eq.~\eqref{eq:constrained_min}, we can find the minimal eigenvalue of the Hessian by the following constrained minimization problem. 

\begin{equation*}
\begin{aligned}
\lambda_{\min}(H) & = \min_{ \substack{ \sum_{i=1}^k \|a_i\|^2 + \|b_i\|^2 = 1 \\ a_{i,1} = 0, \forall i \in [k]} }  \E_{x,y} \left[ \left( \sum_{i=1}^k \phi'(u_i^{*\top} x ) \phi(v_i^{*\top} y ) x^\top a_i + \phi'(v_i^{*\top} y )\phi(u_i^{*\top} x ) y^\top b_i \right)^2 \right], \\
\end{aligned}
\end{equation*}

which is lower bounded by the following formula.
\begin{equation}\label{eq:relu_min_eig_opt}
\begin{aligned}
 \underset{\hat A,\hat B, \hat T, \hat P}{\min} & ~ \frac{1}{16 \lambda(U^*)\lambda(V^*)} ( \|\hat  A_o\|_F^2 + \|\hat  B_o\|_F^2 +  \|g_{\hat A}+g_{\hat B}\|^2  + 3(\|\hat T\|_F^2 + \|\hat Q\|_F^2)) \\
 \text{s.t. } & ~ \|R\hat A\|_F^2 + \|S\hat B\|_F^2 + \|\hat T\|_F^2 + \|\hat Q\|_F^2  = 1 \\ & ~  e_1^\top P R\hat A + e_1^\top P_{\perp} \hat T= 0 
\end{aligned}
\end{equation}
If we assume the minimum of the above formula is $c_1$. We show that $c_1>0$ by contradiction. If $c_1=0$, then $\hat T = \hat Q = 0$, $\hat A_o = \hat B_o = 0$, $g_{\hat A} = -g_{\hat B}$. Since $\hat T=0$, we have $e_1^\top PR \hat A =e_1^\top U^*\hat A = 0$. Assuming $(e_1^\top U^*)_i \neq 0, \forall i$, we have $g_{\hat A} = g_{\hat B} = 0$. This violates the condition that $ \|R\hat A\|_F^2 + \|S\hat B\|_F^2 + \|\hat T\|_F^2 + \|\hat Q\|_F^2  = 1$. 

Now we give a lower bound for $c_1$. First we note,
\begin{align*}
 \|R\hat A\|_F^2 + \|S\hat B\|_F^2 + \|\hat T\|_F^2 + \|\hat Q\|_F^2  \leq \|R\|^2 \|\hat A\|^2_F + \|S\|^2 \|\hat B\|_F^2 + \|\hat T\|_F^2 + \|\hat Q\|_F^2 .
 \end{align*}
 Therefore, 
\begin{align*}
\|\hat A\|^2_F + \|\hat B\|_F^2 + \|\hat T\|_F^2 + \|\hat Q\|_F^2  \geq \frac{1}{\max\{\|U^*\|^2,\|V^*\|^2\}}.
 \end{align*}
  
Also, as $e_1^\top U^* \hat A_o + (e_1^\top U^*) \odot g_{\hat A}^\top + e_1^\top P_\perp \hat T = 0$, where $\odot$ is the element-wise product, we have
\begin{align*}
\|g_{\hat A}\|^2 & \leq (\frac{1}{\min\{|u_{1,i}^{*}|\}} (\|u^{*(1)} \|\|\hat A_o \| + \| \hat T\|)^2 \\
&  \leq \left( \frac{1 + \|u^{*(1)} \|}{\min\{|u_{1,i}^{*}|\}} \right)^2 2(\|\hat A_o \|_F^2 + \| \hat T\|_F^2) .
\end{align*}

Note that $\|g_{\hat A}\|^2 + \|g_{\hat A} + g_{\hat B}\|^2 \geq \frac{1}{2} \|g_{\hat B}\|^2$. Now let's return to the main part of objective function Eq.~\eqref{eq:relu_min_eig_opt}. 
\begin{align*}
& ~ \|\hat  A_o\|_F^2 + \|\hat  B_o\|_F^2 +  \|g_{\hat A}+g_{\hat B}\|^2  + 3(\|\hat T\|_F^2 + \|\hat Q\|_F^2) \\
 \geq & ~ \frac{2}{3} (\|\hat  A_o\|_F^2 +\|\hat T\|_F^2)  +  \frac{1}{3}\|\hat  A_o\|_F^2 + \|\hat  B_o\|_F^2 +  \|g_{\hat A}+g_{\hat B}\|^2  + \|\hat T\|_F^2 + \|\hat Q\|_F^2 \\
 \geq & ~ \frac{1}{3} \left( \frac{\min\{|u_{1,i}^{*}|\}}{1 + \|u^{*(1)} \|} \right)^2 \|g_{\hat A}\|^2  +  \frac{1}{3}\|\hat  A_o\|_F^2 + \|\hat  B_o\|_F^2 +  \|g_{\hat A}+g_{\hat B}\|^2  + \|\hat T\|_F^2 + \|\hat Q\|_F^2 \\
  \geq & ~ \frac{1}{12} \left( \frac{\min\{|u_{1,i}^{*}|\}}{1 + \|u^{*(1)} \|} \right)^2 (\|g_{\hat A}\|^2 + \|g_{\hat B}\|^2)  +  \frac{1}{3}\|\hat  A_o\|_F^2 + \|\hat  B_o\|_F^2  + \|\hat T\|_F^2 + \|\hat Q\|_F^2 \\
    \geq & ~ \frac{1}{12} \left( \frac{\min\{|u_{1,i}^{*}|\}}{1 + \|u^{*(1)} \|} \right)^2 \left(\|g_{\hat A}\|^2 + \|g_{\hat B}\|^2 + \|\hat  A_o\|_F^2 + \|\hat  B_o\|_F^2  + \|\hat T\|_F^2 + \|\hat Q\|_F^2 \right) \\
        \geq & ~ \frac{1}{12} \left( \frac{\min\{|u_{1,i}^{*}|\}}{(1 + \|u^{*(1)} \|)\max\{\|U^*\|,\|V^*\|\}} \right)^2 .
\end{align*}
Therefore, 
\begin{align*}
c_1        \geq &\frac{1}{200\lambda(U^*)\lambda(V^*)} \left( \frac{\min\{|u_{1,i}^{*}|\}}{(1 + \|u^{*(1)} \|)\max\{\|U^*\|,\|V^*\|\}} \right)^2 .
\end{align*}

\end{proof}

\section{Positive Definiteness of the Empirical Hessian}\label{sec:emp_pop_hessian}
For any $(U, V)$, the population Hessian can be decomposed into the following $2k\times 2k$ blocks ($i\in[k], j\in[k]$), 

\begin{equation}\label{eq:Hessian_blocks}
\begin{aligned}
\frac{ \partial^2 f_\mathcal{D} (U,V) }{\partial u_i \partial u_j}  = & ~  \E_{x,y} \left[ \phi'(u_i^{\top} x) \phi'(u_j^{\top} x) xx^\top \phi(v_i^{\top} y )\phi(v_j^{\top} y) \right] \\
& + \delta_{ij}\E_{x,y} \left[ \left( \phi(U^\top x)^\top \phi(V^\top y) - \phi(U^{*\top} x)^\top \phi(V^{*\top} y)  \right) \phi''(u_i^\top x) \phi(v_i^\top y)xx^\top   \right] \\
\frac{ \partial^2 f_\mathcal{D} (U,V) }{\partial u_i \partial v_j}  = & ~  \E_{x,y} \left[ \phi'(u_i^{\top} x) \phi'(v_j^{\top} y) xy^\top \phi(v_i^{\top} y )\phi(u_j^{\top} x) \right] \\
& + \delta_{ij}\E_{x,y} \left[ \left( \phi(U^\top x)^\top \phi(V^\top y) - \phi(U^{*\top} x)^\top \phi(V^{*\top} y)  \right) \phi'(u_i^\top x) \phi'(v_i^\top y)xy^\top   \right] ,
\end{aligned}
\end{equation}
where $\delta_{ij}=1$ if $i=j$, otherwise $\delta_{ij}=0$. Similarly we can write the formula for $\frac{ \partial^2 f_\mathcal{D} (U,V) }{\partial v_i \partial v_j}  $ and $\frac{ \partial^2 f_\mathcal{D} (U,V) }{\partial v_i \partial u_j}  $. 

Replacing $\E_{x,y}$ by $\frac{1}{|\Omega|}\sum_{(x,y) \in \Omega}$ in the above formula, we can obtain the formula for the corresponding empirical Hessian, $\nabla^2f_\Omega(U,V)$.

We now bound the difference between $\nabla^2f_\Omega(U,V)$ and $\nabla^2f_{\mathcal{D}}(U^*,V^*)$. 

\begin{theorem}[Restatement of Theorem~\ref{thm:empirical_error_bound}]
For any $\epsilon>0$, if 
\begin{align*}
 n_1 \gtrsim \epsilon^{-2} t d \log^2 d ,  \quad n_2 \gtrsim \epsilon^{-2} t \log d , \quad |\Omega| \gtrsim \epsilon^{-2} t d \log^2 d ,
\end{align*}
then with probability $1-d^{-t}$,
for sigmoid/tanh,
\begin{equation*}
\| \nabla^2f_\Omega(U,V) - \nabla^2f_{\mathcal{D}}(U^*,V^*)\|  \lesssim \epsilon +  \|U - U^*\| + \|V - V^*\|,  
 \end{equation*}
for ReLU,
\begin{equation*}
\| \nabla^2f_\Omega(U,V) - \nabla^2f_{\mathcal{D}}(U^*,V^*)\|  \lesssim  \left(\left(\frac{\|V - V^*\|}{\sigma_k(V^*)}\right)^{1/2}+ \left(\frac{\|U - U^*\|}{\sigma_k(U^*)}\right)^{1/2}  + \epsilon\right) (\|U^*\| + \|V^*\|)^2.
 \end{equation*}

\end{theorem}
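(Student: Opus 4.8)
The plan is to insert the population Hessian at the current point $(U,V)$ as a bridge, splitting
\[
\|\nabla^2 f_\Omega(U,V) - \nabla^2 f_{\mathcal{D}}(U^*,V^*)\| \le \|\nabla^2 f_\Omega(U,V) - \nabla^2 f_{\mathcal{D}}(U,V)\| + \|\nabla^2 f_{\mathcal{D}}(U,V) - \nabla^2 f_{\mathcal{D}}(U^*,V^*)\|,
\]
and bounding the ``perturbation'' term and the ``concentration'' term separately, working on the $2k\times 2k$ block decomposition of Eq.~\eqref{eq:Hessian_blocks} and summing the $O(k^2)$ block bounds at the end.

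For the perturbation term I would use that each block is a Gaussian expectation of a scalar built from $\phi,\phi',\phi''$ evaluated at the linear forms $u_i^\top x,\,v_j^\top y,\dots$ multiplied by $xx^\top$, $xy^\top$, or $yy^\top$. When $\phi$ is sigmoid or tanh, $\phi,\phi',\phi''$ are uniformly bounded and Lipschitz, so the integrand is Lipschitz in $(U,V)$ once one pays a few bounded Gaussian moments such as $\E\|x\|^2$; a direct estimate then yields $\lesssim \|U-U^*\|+\|V-V^*\|$. For ReLU, $\phi'(z)=\mathbf{1}_{\{z>0\}}$ is not Lipschitz, so instead I would bound each block's change by the probability that the relevant activation pattern flips, i.e.\ by quantities of the form $\Pr[\mathrm{sign}(u_i^\top x)\ne \mathrm{sign}(u_i^{*\top}x)]$; Gaussian anti-concentration bounds this by $O(\|u_i-u_i^*\|/\sigma_k(U^*))$, and after Cauchy--Schwarz against the bounded $xx^\top$ (or $xy^\top$) factor one obtains the square-root rate and the $(\|U^*\|+\|V^*\|)^2$ factor appearing in the statement.

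The concentration term $\|\nabla^2 f_\Omega(U,V) - \nabla^2 f_{\mathcal{D}}(U,V)\|$ is the delicate part, because $\Omega$ is produced by a two-stage rule — first $X=\{x_i\}$, $Y=\{y_j\}$ are drawn i.i.d.\ Gaussian, then the observed pairs are drawn uniformly with replacement from $X\times Y$ — so the per-observation Hessian summands share $x_i$'s and $y_j$'s and are not independent. The key algebraic fact I would exploit is that every Hessian block factorizes across $x$ and $y$: the $uu$-block reads $\widehat{\E}_\Omega[g(x)\,h(y)]$ with $g(x)=\phi'(u_i^\top x)\phi'(u_j^\top x)xx^\top$ and $h(y)$ a bounded scalar; the $uv$-block reads $\widehat{\E}_\Omega[g(x)\,h(y)^\top]$ with $g(x),h(y)\in\R^d$ (a rank-one outer product after averaging); and the $h_{x,y}(U,V)$ correction is a finite sum of products of a function of $x$ and a function of $y$, hence also factorizes term by term. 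The argument then has three steps: (1) condition on $X,Y$ and apply the loosely-bounded matrix Bernstein inequality (Lemma~B.7 of \cite{zsjbd17}), which tolerates the unbounded Gaussian factor $xx^\top$, over the $|\Omega|$ i.i.d.\ draws to get $\|\widehat{\E}_\Omega[\cdot] - \tfrac{1}{n_1n_2}\sum_{i,j}[\cdot]\|\lesssim\epsilon$ once $|\Omega|\gtrsim \epsilon^{-2}td\log^2 d$; (2) use the factorization $\tfrac{1}{n_1 n_2}\sum_{i,j} g(x_i)h(y_j) = \big(\tfrac1{n_1}\sum_i g(x_i)\big)\big(\tfrac1{n_2}\sum_j h(y_j)\big)$ and bound $\|\tfrac1{n_1}\sum_i g(x_i)-\E g(x)\|\lesssim\epsilon$ by matrix Bernstein over $n_1$ terms ($n_1\gtrsim \epsilon^{-2}td\log^2 d$), while the $y$-average, whose relevant factor is (essentially) a bounded scalar, concentrates with only $n_2\gtrsim\epsilon^{-2}t\log d$ samples by a Hoeffding/Bernstein bound together with a $d^{-t}$ union bound; (3) glue everything with $\|AB-A'B'\|\le \|A-A'\|\,\|B\|+\|A'\|\,\|B-B'\|$, all factors being $O(1)$ or polynomially bounded. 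Rescaling $\epsilon$ by a $\poly(k)$ factor and summing over blocks gives the claimed spectral-norm bound on the full $2kd\times 2kd$ Hessian.

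\textbf{Main obstacle.} I expect the crux to be the concentration term: decoupling the $x$-randomness from the $y$-randomness despite the shared-feature dependence, without losing the near-linear-in-$d$ sample complexity. This rests on the per-block factorization above together with the loosely-bounded matrix Bernstein inequality needed to handle the unbounded $xx^\top$ factor. For ReLU there is a secondary obstacle in the perturbation term, where the lack of Lipschitz continuity of $\phi'$ forces the Gaussian anti-concentration argument and hence the weaker $\|\cdot\|^{1/2}$ rate and the extra $(\|U^*\|+\|V^*\|)^2$ factor.
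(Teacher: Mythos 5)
Your proposal matches the paper's proof in all essentials: the same bridge through the population Hessian at $(U,V)$ (the paper's intermediate $H(U,V)$ merely omits the $h_{x,y}$ correction blocks and absorbs them into the first difference, a cosmetic variant), the same per-block factorization $\widehat{\E}_\Omega[g(x)h(y)]$ combined with a two-stage loosely-bounded matrix Bernstein argument to decouple the shared-feature dependence, and the same Lipschitz versus Gaussian anti-concentration dichotomy giving the linear rate for sigmoid/tanh and the $\|\cdot\|^{1/2}/\sigma_k$ rate for ReLU. The proposal is correct and takes essentially the same route as the paper.
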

\begin{proof}
Define $H(U,V) \in \dR^{(2kd)\times (2kd)}$ as a symmetric matrix, whose blocks are represented as
\begin{equation}\label{eq:H_def}
\begin{aligned}
H_{u_i, u_j}  = & ~  \E_{x,y} \left[ \phi'(u_i^{\top} x) \phi'(u_j^{\top} x) xx^\top \phi(v_i^{\top} y )\phi(v_j^{\top} y) \right], \\
H_{ u_i, v_j}  = & ~  \E_{x,y} \left[ \phi'(u_i^{\top} x) \phi'(v_j^{\top} y) xy^\top \phi(v_i^{\top} y )\phi(u_j^{\top} x) \right].
\end{aligned}
\end{equation}
where $H_{u_i,u_j} \in \dR^{d\times d},H_{u_i,v_j}\in \dR^{d\times d}$ correspond to $\frac{ \partial^2 f_\mathcal{D} (U,V) }{\partial u_i \partial u_j}, \frac{ \partial^2 f_\mathcal{D} (U,V) }{\partial u_i \partial v_j}$ respectively.

We decompose the difference into
\begin{align*}
\| \nabla^2f_\Omega(U,V) - \nabla^2f_{\mathcal{D}}(U^*,V^*)\| & \leq \| \nabla^2 f_\Omega(U,V) - H(U,V)\| + \|H(U,V) - \nabla^2f_{\mathcal{D}}(U^*,V^*)\|.
\end{align*}

Combining Lemma~\ref{lemma:near_ground_emp_pop}, \ref{lemma:near_ground_pop_pop}, we complete the proof.
\end{proof}

%
%
%
%
\begin{lemma}\label{lemma:near_ground_emp_pop}
For any $\epsilon>0$, if 
\begin{align*}
 n_1 \gtrsim \epsilon^{-2} t d \log^2 d ,  \quad n_2 \gtrsim \epsilon^{-2} t \log d , \quad |\Omega| \gtrsim \epsilon^{-2} t d \log^2 d,
\end{align*}
then with probability $1-d^{-t}$,
for sigmoid/tanh,
$$ \| \nabla^2 f_\Omega(U,V) - H(U,V)\| \lesssim  \epsilon +  \|U - U^*\| + \|V - V^*\|  ,$$
for ReLU,
$$ \| \nabla^2 f_\Omega(U,V) - H(U,V)\| \lesssim  \epsilon \|U^*\|\|V^*\| .$$
\end{lemma}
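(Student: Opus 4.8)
The idea is to separate the two sources of error. Using the block formulas \eqref{eq:Hessian_blocks} and \eqref{eq:H_def}, write
\[
\nabla^2 f_\Omega(U,V) - H(U,V) \;=\; \big(\,\widehat{\E}_\Omega[\mathrm{main}] - H(U,V)\,\big) \;+\; \widehat{\E}_\Omega[\mathrm{residual}],
\]
where ``$\mathrm{main}$'' collects the products $\phi'(u_i^\top x)\phi'(u_j^\top x)xx^\top\phi(v_i^\top y)\phi(v_j^\top y)$ together with their $u_iv_j$-analogues, $H(U,V)$ is exactly the population expectation of the main part at $(U,V)$, and ``$\mathrm{residual}$'' is the $\delta_{ij}$-part carrying the factor $h_{x,y}(U,V)=\phi(U^\top x)^\top\phi(V^\top y)-\phi(U^{*\top}x)^\top\phi(V^{*\top}y)$. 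For ReLU the residual is identically zero because $\phi''\equiv 0$ on the finitely many sampled points, so only the first summand survives; this is exactly why the ReLU bound has no $\|U-U^*\|+\|V-V^*\|$ term. For sigmoid/tanh both summands must be controlled, and since the feasible perturbation is $\lesssim 1$, any cross term of the form $\epsilon\cdot(\|U-U^*\|+\|V-V^*\|)$ will be absorbed into $\epsilon$.

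\textbf{Residual term (sigmoid/tanh).} Here $\phi,\phi',\phi''$ are bounded and $|\phi(a)-\phi(b)|\le\tfrac14|a-b|$, so $|h_{x,y}(U,V)|\lesssim \|(U-U^*)^\top x\|_1+\|(V-V^*)^\top y\|_1$. Each residual block is thus an empirical average of matrices $h_{x,y}(U,V)\,\phi''(u_i^\top x)\phi(v_i^\top y)xx^\top$; testing with $\|w\|=1$ and evaluating $w^\top\E[\cdot]w$ by Cauchy--Schwarz while keeping $\|(U-U^*)^\top x\|$ and $\|(V-V^*)^\top y\|$ intact (whose $L^2$-norms are $\|U-U^*\|_F$ and $\|V-V^*\|_F$) yields $\|\E[\mathrm{residual\ block}]\|\lesssim \|U-U^*\|+\|V-V^*\|$ with $k$ constant. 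The loosely-bounded matrix Bernstein inequality (Lemma~B.7 of \cite{zsjbd17}, which accommodates non-uniformly-bounded summands at the price of extra logarithmic factors) then bounds the deviation of the empirical residual from this expectation by $\epsilon\cdot(\|U-U^*\|+\|V-V^*\|)$ once $|\Omega|\gtrsim\epsilon^{-2}td\log^2 d$. Hence this term is $\lesssim \epsilon+\|U-U^*\|+\|V-V^*\|$.

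\textbf{Main term via two-stage concentration.} The essential structural fact is that each block of the main part decouples across $x$ and $y$: the $u_iu_j$- and $v_iv_j$-blocks equal $\widehat{\E}_\Omega[N_x\,c_y]$ with $N_x=\phi'(u_i^\top x)\phi'(u_j^\top x)xx^\top$ depending only on $x$ and $c_y=\phi(v_i^\top y)\phi(v_j^\top y)$ a \emph{scalar} depending only on $y$, while the $u_iv_j$-blocks equal $\widehat{\E}_\Omega[p_xq_y^\top]$ with $p_x=\phi'(u_i^\top x)\phi(u_j^\top x)x$ and $q_y=\phi'(v_j^\top y)\phi(v_i^\top y)y$. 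Conditioning on the feature sets $X=\{x_i\}$ and $Y=\{y_j\}$, the entries of $\Omega$ are i.i.d.\ uniform on the $n_1\times n_2$ grid, so a first application of (loosely-bounded) matrix Bernstein over $|\Omega|$ samples replaces $\widehat{\E}_\Omega[\cdot]$ by the full-grid average up to $\epsilon$ (resp.\ $\epsilon\|U^*\|\|V^*\|$). By the decoupling, the full-grid average is the product $\big(\tfrac1{n_1}\sum_iN_{x_i}\big)\big(\tfrac1{n_2}\sum_jc_{y_j}\big)$, resp.\ $\bar p\,\bar q^\top$; two more Bernstein applications --- over the $n_1$ i.i.d.\ $x$'s (a matrix/vector concentration, needing $n_1\gtrsim\epsilon^{-2}td\log^2 d$) and over the $n_2$ i.i.d.\ $y$'s (only a scalar concentration for the $uu$-/$vv$-blocks, which is what permits the weaker requirement on $n_2$) --- together with $\|AB-A^*B^*\|\le\|A-A^*\|\,\|B\|+\|A^*\|\,\|B-B^*\|$, bring the full-grid average to $\E_{x,y}[\cdot]$, i.e.\ the corresponding block of $H(U,V)$. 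For sigmoid/tanh $|\phi|\le1$ so no extra factor appears; for ReLU the unbounded $\phi$ inserts $\|\phi(u_j^\top x)\|\lesssim\|U^*\|\|x\|$ and $\|\phi(v_i^\top y)\|\lesssim\|V^*\|\|y\|$ into the summand sizes, producing the $\|U^*\|\|V^*\|$ factor. A union bound over the $O(k^2)$ blocks, and assembling the block matrix (operator norm $\lesssim k\cdot\max$ block norm, $k$ constant), completes the proof.

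\textbf{Main obstacle.} The crux is handling the statistical dependence among the observations in $\Omega$ (which share features $x$ or $y$) simultaneously with the unboundedness of the Gaussian features --- and, for ReLU, of $\phi$ itself. The two-stage argument works only because each block of the main Hessian factors \emph{exactly} into an $x$-factor and a $y$-factor; without this, the summands of the grid average would be correlated across the two sampling stages and could not be concentrated separately. One must also verify that conditioning on $X,Y$ genuinely makes $\Omega$ i.i.d., so that the top-level Bernstein step is legitimate, and carefully track the logarithmic overhead of the loosely-bounded Bernstein inequality in order to land exactly the stated $d\log^2 d$ sample sizes.
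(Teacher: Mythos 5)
Your proposal is correct and follows essentially the same route as the paper: the same split into the ``main'' product blocks (handled by a two-stage matrix-Bernstein argument that conditions on the feature sets and exploits the exact factorization of each block into an $x$-factor and a $y$-factor) plus the $\delta_{ij}$ residual carrying $h_{x,y}(U,V)$ (bounded by Lipschitzness of $\phi$ and vanishing identically for ReLU since $\phi''=0$ almost everywhere). The paper implements exactly these steps in Lemmas~\ref{lemma:diag_emp_pop_sym}--\ref{lemma:offdiag_emp_pop_asym} via Lemmas~\ref{lemma:two_bernstein_E_Omega} and \ref{lemma:two_bernstein}, including your observation that the scalar $y$-factor in the $uu$-blocks is what permits the weaker requirement on $n_2$.
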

\begin{proof}
We can bound $ \| \nabla^2 f_\Omega(U,V) - H(U,V)\| $ if we bound each block.

We can show that if 
\begin{align*} n_1 \gtrsim \epsilon^{-2} t d \log^2 d ,  \quad n_2 \gtrsim \epsilon^{-2} t \log d , \quad |\Omega| \gtrsim \epsilon^{-2} t d \log^2 d,
\end{align*}
then with probability $1-d^{-t}$,
\begin{align*}
& ~ \left\| \left( \E_{x,y} - \frac{1}{|\Omega|} \sum_{(x,y) \in \Omega}\right) \left[ \phi'(u_i^{\top} x) \phi'(u_j^{\top} x) xx^\top \phi(v_i^{\top} y )\phi(v_j^{\top} y) \right] \right\| \\
\lesssim & ~ \epsilon \|U^*\|^p\|V^*\|^p&\text{Lemma~\ref{lemma:diag_emp_pop_sym}}\\
& ~ \left\| \frac{1}{|\Omega|} \sum_{(x,y) \in \Omega}  \left[ \left( \phi(U^\top x)^\top \phi(V^\top y) - \phi(U^{*\top} x)^\top \phi(V^{*\top} y)  \right) \phi''(u_i^\top x) \phi(v_i^\top y)xx^\top   \right] \right\| \\
 \lesssim & ~ \|U - U^*\| + \|V - V^*\|& \text{Lemma~\ref{lemma:offdiag_emp_pop_sym}}\\
& ~ \left\| \left( \E_{x,y} - \frac{1}{|\Omega|} \sum_{(x,y) \in \Omega} \right) \left[ \phi'(u_i^{\top} x) \phi'(v_j^{\top} y) xy^\top \phi(v_i^{\top} y )\phi(u_j^{\top} x) \right] \right\| \\
 \lesssim & ~ \epsilon \|U^*\|^p\|V^*\|^p& \text{Lemma~\ref{lemma:diag_emp_pop_asym}} \\
&\left\| \frac{1}{|\Omega|} \sum_{(x,y) \in \Omega}  \left[ \left( \phi(U^\top x)^\top \phi(V^\top y) - \phi(U^{*\top} x)^\top \phi(V^{*\top} y)  \right) \phi'(u_i^\top x) \phi'(v_i^\top y)xy^\top   \right] \right\| \\
 \lesssim & ~ \|U - U^*\| + \|V - V^*\|, & \text{Lemma~\ref{lemma:offdiag_emp_pop_asym}}
\end{align*}
where $p=1$ if $\phi$ is ReLU, $p=0$ if $\phi$ is sigmoid/tanh. 

Note that for ReLU activation, for any given $U,V$, the second term is $0$ because $\phi''(z) = 0$ almost everywhere. 
\end{proof}

\begin{lemma}\label{lemma:diag_emp_pop_sym}
If 
\begin{align*}
 n_1 \gtrsim \epsilon^{-2} t d \log^2 d ,  \quad n_2 \gtrsim \epsilon^{-2} t \log d , \quad |\Omega| \gtrsim \epsilon^{-2} t d \log^2 d,
 \end{align*}
then with probability at least $1-d^{-t}$,
\begin{align*}
\left\| \left( \E_{x,y} - \frac{1}{|\Omega|} \sum_{(x,y) \in \Omega}\right) \left[ \phi'(u_i^{\top} x) \phi'(u_j^{\top} x) xx^\top \phi(v_i^{\top} y )\phi(v_j^{\top} y) \right] \right\| \leq \epsilon \|v_i\|^{p} \|v_j\|^p
\end{align*}
where $p=1$ if $\phi$ is $\mathrm{ReLU}$, $p=0$ if $\phi$ is sigmoid/tanh. 
\end{lemma}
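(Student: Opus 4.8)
The plan is to exploit the product structure of the summand. Write $g_x := \phi'(u_i^{\top} x)\phi'(u_j^{\top} x)\, xx^\top \in \dR^{d\times d}$, which depends only on $x$, and $h_y := \phi(v_i^{\top} y)\phi(v_j^{\top} y)\in\dR$, which depends only on $y$, so that the quantity to be controlled is $\widehat T - T$ where $T := \E_{x,y}[g_x h_y]$ and $\widehat T := \frac{1}{|\Omega|}\sum_{(x,y)\in\Omega} g_x h_y$. Since $x$ and $y$ are independent under the Gaussian model, $T = T_x\, t_y$ with $T_x := \E_x[g_x]$ and $t_y := \E_y[h_y]$. The two-stage sampling model is what makes the dependency tractable: conditioned on the drawn feature sets $X = \{x_a\}_{a\in[n_1]}$ and $Y = \{y_b\}_{b\in[n_2]}$, the pairs in $\Omega$ are i.i.d.\ uniform on $X\times Y$, so $\E[\widehat T\mid X,Y] = \big(\tfrac1{n_1}\sum_a g_{x_a}\big)\big(\tfrac1{n_2}\sum_b h_{y_b}\big) =: \widehat T_X\,\widehat t_Y$, and we obtain the decomposition
\[
\widehat T - T = \underbrace{(\widehat T - \widehat T_X\widehat t_Y)}_{(\mathrm{I})} \;+\; \underbrace{(\widehat T_X - T_x)\,\widehat t_Y}_{(\mathrm{II})} \;+\; \underbrace{T_x\,(\widehat t_Y - t_y)}_{(\mathrm{III})}.
\]

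First I would handle $(\mathrm{I})$ via a matrix Bernstein bound conditional on $X,Y$: the $|\Omega|$ summands $g_{x_a}h_{y_b}$ are i.i.d.\ with mean $\widehat T_X\widehat t_Y$, each of spectral norm $O(\|x_a\|^2\,|h_{y_b}|)$, which for sigmoid/tanh is $O(\|x\|^2)$ and for ReLU is $O(\|x\|^2\|v_i\|\|v_j\|)$. Because these are Gaussian-weighted and hence not uniformly bounded, I would invoke the loosely-bounded form of matrix Bernstein (Lemma~B.7 of~\cite{zsjbd17}); the sub-exponential tails of $\|x\|^2$ and of $h_y$ cost two extra log factors, giving $\|(\mathrm{I})\|\lesssim \epsilon\,\|v_i\|^p\|v_j\|^p$ once $|\Omega|\gtrsim \epsilon^{-2} t d\log^2 d$. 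Step $(\mathrm{II})$ is the same kind of estimate but now over the $n_1$ feature vectors: $\widehat T_X - T_x$ is an average of $n_1$ i.i.d.\ Gaussian-weighted rank-one matrices with $\|T_x\|\lesssim 1$, so the same inequality gives $\|\widehat T_X - T_x\|\lesssim\epsilon$ when $n_1\gtrsim\epsilon^{-2} t d\log^2 d$; combined with the a priori bound $|\widehat t_Y|\lesssim\|v_i\|^p\|v_j\|^p$ (trivial for sigmoid/tanh, a scalar Bernstein bound for ReLU) this controls $(\mathrm{II})$. For $(\mathrm{III})$ only a \emph{scalar} Bernstein/Hoeffding bound over the $n_2$ feature vectors is needed — $h_y$ is bounded by $1$ for sigmoid/tanh and sub-exponential with scale $\|v_i\|\|v_j\|$ for ReLU — so $|\widehat t_Y - t_y|\lesssim\epsilon\,\|v_i\|^p\|v_j\|^p$ already for $n_2\gtrsim\epsilon^{-2} t\log d$, and $\|T_x\|\lesssim 1$ finishes the term.

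Finally I would take a union bound over the three (conditional) events, sum the three contributions by the triangle inequality to get $\|\widehat T - T\|\lesssim\epsilon\,\|v_i\|^p\|v_j\|^p$ with probability $1-d^{-t}$ (after rescaling the constant multiplying $t$), and absorb the constant number of blocks $(i,j)$ by another union bound. The main obstacle is step $(\mathrm{I})$: one must argue away the dependency across observations, and the only reason this works for \emph{this} block is its separable form $g_x h_y$, which collapses $\E[\widehat T\mid X,Y]$ into the product $\widehat T_X\widehat t_Y$ — for a block in which $x$ and $y$ appear entangled this route would fail. The secondary nuisance is that none of the summands is uniformly bounded, forcing the truncated/loosely-bounded version of matrix Bernstein throughout, which is exactly what produces the $\log^2 d$ in $n_1$ and $|\Omega|$ versus the single $\log d$ in $n_2$; the bookkeeping of the $\|v_i\|,\|v_j\|$ factors (absent for sigmoid/tanh since $|\phi|\le 1$, linear in each for ReLU since $\phi(v_i^{\top}y)$ behaves like a half-Gaussian of scale $\|v_i\|$) is routine once the three estimates are in hand.
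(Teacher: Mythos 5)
Your proposal is correct and follows essentially the same route as the paper: the same two-stage decomposition (population vs.\ full product set $S=X\times Y$, then $S$ vs.\ $\Omega$ conditionally on the features), the same exploitation of the separable form $B(x,y)=M(x)N(y)$ to reduce the first stage to independent concentration over $\{x_a\}$ and $\{y_b\}$, and the same unbounded/truncated matrix Bernstein inequality (Lemma~B.7 of \cite{zsjbd17}) together with a second-moment bound for the conditional step. The only cosmetic difference is that you write the population-vs.-$S$ error as an exact two-term telescoping while the paper packages it into Lemma~\ref{lemma:two_bernstein_E_Omega} with a three-term bound $\epsilon_2\|\ov M\|+\epsilon_1\|\ov N\|+\epsilon_1\epsilon_2$; the substance is identical.
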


\begin{proof}
Let $B(x,y) = \phi'(u_i^{\top} x) \phi'(u_j^{\top} x) xx^\top \phi(v_i^{\top} y )\phi(v_j^{\top} y)$. By applying Lemma~\ref{lemma:two_bernstein_E_Omega} and Property $\mathrm{(\RN{1})} - \mathrm{(\RN{3})},  \mathrm{(\RN{6})}$ in Lemma~\ref{lemma:zzt_prop} and Lemma~\ref{lemma:yy_prop}, we have for any $\epsilon >0$ if 
\begin{align*}
n_1 \gtrsim \epsilon^{-2} t d \log^2 d ,\quad n_2 \gtrsim \epsilon^{-2} t \log d,
\end{align*} 
then with probability at least $1-d^{-2t}$,

\begin{align}\label{eq:E_S_diff}
\left\|\E_{x,y} [ B(x,y) ] - \frac{1}{|S| } \sum_{(x,y) \in S} B(x,y) \right\|  \leq \epsilon \|v_i\|^p \|v_j\|^p.
 \end{align}

By applying Lemma~\ref{lemma:second_order_bound} and Property $\mathrm{(\RN{1})}, \mathrm{(\RN{3})}- \mathrm{(\RN{5})} $ in Lemma~\ref{lemma:zzt_prop} and Lemma~\ref{lemma:yy_prop}, we have for any $\epsilon >0$ if 
\begin{align*}
 n_1 \gtrsim \epsilon^{-1} t d \log^2 d,  \quad n_2 \gtrsim \epsilon^{-2}  t \log d ,
 \end{align*}
then
\begin{align*}
   \left\|\frac{1}{n_1}\sum_{l \in [n_1]} (\phi'(u_i^{\top} x_l) \phi'(u_j^{\top} x_l))^2 \|x_l\|^2 x_lx_l^\top \right\| \lesssim d ,
\end{align*}
and 
\begin{align*}
\left\| \frac{1}{n_2}\sum_{l\in [n_2]} (\phi(v_i^{\top} y_l )\phi(v_j^{\top} y_l))^2 \right\| \lesssim \|v_i\|^{2p}\|v_j\|^{2p}.
\end{align*}
Therefore,
\begin{equation}\label{eq:second_order_bound_eq}
\max \left( \left\|\frac{1}{|S| } \sum_{(x,y) \in S} B(x,y) B(x,y)^\top \right\| , \left\|\frac{1}{|S| } \sum_{(x,y) \in S} B(x,y)^\top B(x,y) \right\|  \right) \lesssim \epsilon d \|v_i\|^{2p}\|v_j\|^{2p}.
\end{equation}
We can apply Lemma~\ref{lemma:two_bernstein} and use Eq.~\eqref{eq:second_order_bound_eq} and Property  $\mathrm{(\RN{1})}$ in Lemma~\ref{lemma:zzt_prop} and Lemma~\ref{lemma:yy_prop} to obtain the following result. 
If 
\begin{align*}
|\Omega| \gtrsim \epsilon^{-2} t d \log^2 d ,
\end{align*}
then with probability at least $1-d^{-2t}$,
\begin{equation}\label{eq:O_S_diff}
\left\| \frac{1}{|S| } \sum_{(x,y) \in S} B(x,y) - \frac{1}{|\Omega| } \sum_{(x,y) \in \Omega} B(x,y) \right\|  \lesssim \epsilon \|v_i\|^{p} \|v_j\|^p.
 \end{equation}
 Combining Eq.~\eqref{eq:E_S_diff} and \eqref{eq:O_S_diff}, we finish the proof.
\end{proof}

\begin{lemma}\label{lemma:zzt_prop}
Define $T(z) = \phi'(u_i^{\top} z) \phi'(u_j^{\top} z) zz^\top $. If $z\sim \mathcal{Z}$, $ \mathcal{Z} = \mathcal{N}(0,I_d)$ and $\phi$ is ReLU or sigmoid/tanh, the following holds for $T(z)$ and any $t>1$,
\begin{align*} 
\mathrm{(\RN{1})} \quad & \quad   \underset{z \sim {\cal Z}}{\Pr}\left[ \left\| T(z) \right\| \leq 5td \log n \right] \geq 1 -n^{-1}d^{-t} ; \\ 
\mathrm{(\RN{2})} \quad & \quad \max_{\| a\|=\| b\|=1} \left( \underset{z \sim {\cal Z}}{\mathbb{E}} \left[ \left( a^\top T(z)  b \right)^2 \right]  \right)^{1/2} \lesssim 1;\\
\mathrm{(\RN{3})} \quad & \quad \max \left( \left\| \underset{z \sim {\cal Z}}{\mathbb{E}} [ T(z) T(z)^\top ] \right\|, \left\| \underset{z \sim {\cal Z}}{\mathbb{E}} [ T(z)^\top T(z) ] \right\| \right) \lesssim d ; \\ 
\mathrm{(\RN{4})} \quad & \quad \max_{\| a\|=1} \left( \underset{z \sim {\cal Z}}{\mathbb{E}} \left[ \left( a^\top T(z) T(z)^\top a \right)^2 \right]  \right)^{1/2} \lesssim d;\\
\mathrm{(\RN{5})} \quad & \quad   \left\| \underset{z \sim {\cal Z}}{\mathbb{E}} [ T(z) T(z)^\top T(z) T(z)^\top ] \right\| \lesssim d^3; \\
\mathrm{(\RN{6})} \quad & \quad \left\| \underset{z \sim {\cal Z}}{\mathbb{E}} [T(z)] \right\|  \lesssim 1.
\end{align*}
\end{lemma}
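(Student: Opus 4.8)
The plan is to exploit that for all three activations the derivative $\phi'$ is uniformly bounded and nonnegative: $0\le\phi'(t)\le 1$ for ReLU and $\tanh$, and $0\le\phi'(t)\le 1/4$ for sigmoid. Consequently I would write $T(z)=c(z)\,zz^\top$ with the scalar $c(z):=\phi'(u_i^\top z)\phi'(u_j^\top z)\in[0,1]$, \emph{no matter what} $u_i,u_j$ are. This immediately reduces each of the six estimates to a tail bound or a moment computation for the rank-one Gaussian matrix $zz^\top$ with $z\sim\mathcal N(0,I_d)$: the coupling between the gates $\phi'(u_i^\top z),\phi'(u_j^\top z)$ and $z$ costs nothing beyond the factor $1$, and in particular none of the bounds use any structure of $U$.

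For (I), I would use $\|T(z)\|=c(z)\|z\|^2\le\|z\|^2$ together with the standard $\chi^2_d$ tail bound (Laurent--Massart): $\Pr[\|z\|^2\ge d+2\sqrt{ds}+2s]\le e^{-s}$, taken at $s=\log n+t\log d$ so that the failure probability is $n^{-1}d^{-t}$; a routine comparison of $d+2\sqrt{ds}+2s$ with $5td\log n$ (valid once $d$ exceeds an absolute constant and $t\ge 1$) finishes it. For (II) and (IV) I would write $a^\top T(z)b=c(z)(a^\top z)(b^\top z)$ and $a^\top T(z)T(z)^\top a=c(z)^2\|z\|^2(a^\top z)^2$, discard $c(z)$, and evaluate the resulting Gaussian moments. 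For (II), $\mathbb E[(a^\top z)^2(b^\top z)^2]=1+2\langle a,b\rangle^2\le 3$ by Wick's formula (using $\|a\|=\|b\|=1$), so the quantity is $\lesssim 1$. For (IV), after an orthogonal change of variables putting $a=e_1$, I would split $\|z\|^2=z_1^2+S$ with $S\sim\chi^2_{d-1}$ independent of $z_1$, expand $\mathbb E[(z_1^2+S)^2z_1^4]$, and read off the bound $O(d^2)$ from $\mathbb E[S]=d-1$, $\mathbb E[S^2]=O(d^2)$ and the constant moments of $z_1$; taking the square root gives $\lesssim d$.

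For (III), (V) and (VI) it is cleanest to argue with the PSD order. Since $T(z)$ is symmetric, $T(z)T(z)^\top=T(z)^\top T(z)=T(z)^2=c(z)^2\|z\|^2\,zz^\top\preceq\|z\|^2zz^\top$, and likewise $T(z)^4=c(z)^4\|z\|^6zz^\top\preceq\|z\|^6zz^\top$; the expectations $\mathbb E[\|z\|^2zz^\top]$ and $\mathbb E[\|z\|^6zz^\top]$ are diagonal by the coordinatewise symmetry of the standard Gaussian, with diagonal entries $\mathbb E[\|z\|^2z_1^2]=d+2$ and $\mathbb E[\|z\|^6z_1^2]=O(d^3)$ (again via the $z_1/S$ split), which yields (III) and (V) after noting that for a PSD matrix dominating another the spectral norms are ordered. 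For (VI), $\|\mathbb E[T(z)]\|=\max_{\|a\|=1}\big|\mathbb E[c(z)(a^\top z)^2]\big|\le\max_{\|a\|=1}\mathbb E[(a^\top z)^2]=1$.

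The only point needing genuine care is matching the \emph{degree} of the $d$-dependence in (I), (III), (IV), (V): each extra factor of $\|z\|^2$ inside the expectation contributes exactly one more power of $d$, so the $z_1/S$ decomposition must be carried far enough (to the eighth moment of $z_1$ and the third moment of $S$) to confirm no higher-order term sneaks in. Apart from this bookkeeping the lemma is routine — the correlation between $\phi'(u_i^\top z),\phi'(u_j^\top z)$ and the directions $u_i,u_j$, which would be the natural obstacle, is neutralized outright by $0\le c(z)\le 1$, and the non-smoothness of ReLU plays no role since its derivative still takes values in $[0,1]$.
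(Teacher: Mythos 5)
Your proposal is correct and follows essentially the same route as the paper: bound the gate $\phi'(u_i^\top z)\phi'(u_j^\top z)$ uniformly by a constant, reduce everything to moments and tails of $zz^\top$ for standard Gaussian $z$, and use a $\chi^2$-type tail bound for (I) (the paper cites Proposition 1 of Hsu--Kakade--Zhang where you use Laurent--Massart) and Gaussian moment estimates for (II)--(VI) (the paper invokes H\"older's inequality where you compute the Wick/PSD-domination bounds explicitly). Your write-up simply fills in the details that the paper's two-line proof delegates to citations.
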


\begin{proof}
Note that $0\leq \phi'(z) \leq 1$, therefore $\mathrm{(\RN{1})}$ can be proved by Proposition 1 of \cite{hsu2012tail}.
$\mathrm{(\RN{2})} - \mathrm{(\RN{6})} $ can be proved by H\"{o}lder's inequality.
\end{proof}

\begin{lemma}\label{lemma:yy_prop}
Define $T(z) =\phi(v_i^{\top} z )\phi(v_j^{\top} z) $.  If $z\sim \mathcal{Z}$, $ \mathcal{Z} = \mathcal{N}(0,I_d)$ and $\phi$ is ReLU or sigmoid/tanh, the following holds for $T(z)$ and any $t>1$,
\begin{align*} \mathrm{(\RN{1})} \quad & \quad \underset{z \sim {\cal Z}}{\Pr}\left[ \left\| T(z) \right\|\leq 5t \|v_i\|^p \|v_j\|^p \log n \right] \geq 1 -n^{-1}d^{-t} ; \\ 
\mathrm{(\RN{2})} \quad & \quad \max_{\| a\|=\| b\|=1} \left( \underset{z \sim {\cal Z}}{\mathbb{E}} \left[ \left( a^\top T(z)  b \right)^2 \right]  \right)^{1/2} \lesssim \|v_i\|^p \|v_j\|^p;\\
\mathrm{(\RN{3})} \quad & \quad \max \left( \left\| \underset{z \sim {\cal Z}}{\mathbb{E}} [ T(z) T(z)^\top ] \right\|, \left\| \underset{z \sim {\cal Z}}{\mathbb{E}} [ T(z)^\top T(z) ] \right\| \right)  \lesssim \|v_i\|^{2p} \|v_j\|^{2p} ; \\ 
\mathrm{(\RN{4})} \quad & \quad \max_{\| a\|=1} \left( \underset{z \sim {\cal Z}}{\mathbb{E}} \left[ \left( a^\top T(z) T(z)^\top a \right)^2 \right]  \right)^{1/2}  \lesssim \|v_i\|^{2p} \|v_j\|^{2p};\\
\mathrm{(\RN{5})} \quad & \quad   \left\| \underset{z \sim {\cal Z}}{\mathbb{E}} [ T(z) T(z)^\top T(z) T(z)^\top ] \right\|  \lesssim \|v_i\|^{4p} \|v_j\|^{4p}; \\
\mathrm{(\RN{6})} \quad & \quad \left\| \underset{z \sim {\cal Z}}{\mathbb{E}} [T(z)] \right\|  \lesssim \|v_j\|^{p}\|v_i\|^{p}.
\end{align*}
where $p=1$ if $\phi$ is ReLU, $p=0$ if $\phi$ is sigmoid/tanh.
\end{lemma}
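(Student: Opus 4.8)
\textbf{Proof proposal for Lemma~\ref{lemma:yy_prop}.}
The plan is to split on the activation function and to exploit that, unlike in Lemma~\ref{lemma:zzt_prop}, here $T(z)=\phi(v_i^{\top}z)\phi(v_j^{\top}z)$ is \emph{scalar}-valued, so properties $\mathrm{(\RN{2})}$--$\mathrm{(\RN{6})}$ collapse to bounds on the moments $\E[T(z)^2]$, $\E[T(z)^4]$ and $\E[T(z)]$ (using $T(z)T(z)^{\top}=T(z)^2$ and $a^{\top}T(z)b=\pm T(z)$ for unit scalars $a,b$). The sigmoid/tanh case ($p=0$) is then immediate: since $|\phi(a)|\le 1$ for all $a$, we have $0\le|T(z)|\le 1$ pointwise, so $\mathrm{(\RN{1})}$ holds with probability $1$ once $5t\log n\ge 1$, and $\mathrm{(\RN{2})}$--$\mathrm{(\RN{6})}$ hold because every moment of a random variable supported in $[-1,1]$ is at most $1=\|v_i\|^0\|v_j\|^0$.

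The substantive case is $\phi=\mathrm{ReLU}$ ($p=1$), where I would use $0\le\phi(a)\le|a|$, so that $|T(z)|\le|v_i^{\top}z|\,|v_j^{\top}z|$. Writing $v_i^{\top}z=\|v_i\|g_i$ and $v_j^{\top}z=\|v_j\|g_j$ with $g_i,g_j\sim\mathcal N(0,1)$ (jointly Gaussian, possibly correlated), this becomes $|T(z)|\le\|v_i\|\|v_j\|\,|g_ig_j|$. For $\mathrm{(\RN{1})}$ I would bound $\Pr[|g_ig_j|>5t\log n]$, either by writing $g_ig_j=\tfrac14((g_i+g_j)^2-(g_i-g_j)^2)$ as a quadratic form in a Gaussian vector and invoking Proposition~1 of \cite{hsu2012tail}, or directly via AM--GM ($|g_ig_j|\le\tfrac12(g_i^2+g_j^2)$), the Gaussian tail $\Pr[g^2>s]\le 2e^{-s/2}$, and a union bound; either route gives failure probability $n^{-\Theta(t)}$, absorbed into $n^{-1}d^{-t}$ for $n$ large enough relative to $d$ (which holds under the sample-complexity hypotheses). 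For $\mathrm{(\RN{2})}$--$\mathrm{(\RN{6})}$, Cauchy--Schwarz gives $\E[|g_ig_j|^{m}]\le(\E[g_i^{2m}]\,\E[g_j^{2m}])^{1/2}=\E[g^{2m}]=(2m-1)!!$ for every integer $m\ge 1$, hence $\E[|T(z)|^{m}]\lesssim(\|v_i\|\|v_j\|)^{m}$; taking $m=1$ yields $\mathrm{(\RN{6})}$, $m=2$ yields $\mathrm{(\RN{2})}$ and $\mathrm{(\RN{3})}$, and $m=4$ yields $\mathrm{(\RN{4})}$ and $\mathrm{(\RN{5})}$, with the powers of $\|v_i\|\|v_j\|$ matching the claimed $\|v_i\|^{p}\|v_j\|^{p}$, $\|v_i\|^{2p}\|v_j\|^{2p}$, and $\|v_i\|^{4p}\|v_j\|^{4p}$.

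I do not expect a genuine obstacle: the argument is routine and runs entirely parallel to the proof of Lemma~\ref{lemma:zzt_prop}, with the $zz^{\top}$ factor there replaced by the scalar $1$ here. The only points needing a little care are (a) the correlation between $v_i^{\top}z$ and $v_j^{\top}z$, which is dispatched by Cauchy--Schwarz so that only one-dimensional Gaussian moments and tails are ever used, and (b) pinning down the absolute constant in $\mathrm{(\RN{1})}$ so that the sub-exponential decay dominates the target failure probability $n^{-1}d^{-t}$ uniformly in the relevant regime.
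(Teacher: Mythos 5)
Your proposal is correct and follows essentially the same route as the paper, whose proof is a two-line citation: property $\mathrm{(\RN{1})}$ via Proposition~1 of \cite{hsu2012tail} applied to the Gaussian quadratic form dominating $|T(z)|\le\|v_i\|^p\|v_j\|^p|g_ig_j|$, and $\mathrm{(\RN{2})}$--$\mathrm{(\RN{6})}$ via H\"{o}lder's (here Cauchy--Schwarz) inequality on the Gaussian moments. Your observation that $T(z)$ is scalar, so everything reduces to bounds on $\E[|T(z)|^m]$ for $m\in\{1,2,4\}$, is exactly the right simplification, and your handling of the correlation between $v_i^\top z$ and $v_j^\top z$ and of the constant in $\mathrm{(\RN{1})}$ matches what the paper implicitly relies on.
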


\begin{proof}
Note that $|\phi(z)| \leq |z|^p$, therefore $\mathrm{(\RN{1})}$ can be proved by Proposition 1 of \cite{hsu2012tail}.
$\mathrm{(\RN{2})} - \mathrm{(\RN{6})} $ can be proved by H\"{o}lder's inequality
\end{proof}

\begin{lemma}\label{lemma:offdiag_emp_pop_sym}
If 
\begin{align*}
 n_1 \gtrsim \epsilon^{-2} t d \log^2 d,  \quad n_2 \gtrsim \epsilon^{-2} t \log d , \quad |\Omega| \gtrsim \epsilon^{-2} t d \log^2 d,
\end{align*}
then with probability at least $1-d^{-t}$,
\begin{align*}
& ~ \left\| \frac{1}{|\Omega|} \sum_{(x,y) \in \Omega}  \left[ \left( \phi(U^\top x)^\top \phi(V^\top y) - \phi(U^{*\top} x)^\top \phi(V^{*\top} y)  \right) \phi''(u_i^\top x) \phi(v_i^\top y)xx^\top   \right] \right\| \\
\lesssim & ~ (\|U - U^*\| + \|V - V^*\|).
\end{align*}
\end{lemma}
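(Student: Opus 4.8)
The plan is to follow the same two--stage scheme used in Lemma~\ref{lemma:diag_emp_pop_sym}, the only new feature being the extra error factor that will produce the right--hand side. If $\phi$ is $\mathrm{ReLU}$ (or leaky $\mathrm{ReLU}$) the claim is vacuous: $\phi''=0$ almost everywhere and $\Omega$ is finite, so the displayed average is the zero matrix almost surely. So assume $\phi$ is sigmoid or tanh, for which $\phi,\phi',\phi''$ are uniformly bounded and $\phi$ is $L$--Lipschitz. Write $h_{x,y}:=\phi(U^\top x)^\top\phi(V^\top y)-\phi(U^{*\top}x)^\top\phi(V^{*\top}y)$ and $B(x,y):=h_{x,y}\,\phi''(u_i^\top x)\,\phi(v_i^\top y)\,xx^\top$, so the quantity to bound is $\big\|\frac{1}{|\Omega|}\sum_{(x,y)\in\Omega}B(x,y)\big\|$. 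The plan is: (a) bound $\|\E_{x,y}B(x,y)\|$; (b) bound the two sampling deviations $\big\|\frac{1}{|S|}\sum_{(x,y)\in S}B(x,y)-\E_{x,y}B(x,y)\big\|$ and $\big\|\frac{1}{|\Omega|}\sum_{(x,y)\in\Omega}B(x,y)-\frac{1}{|S|}\sum_{(x,y)\in S}B(x,y)\big\|$, where $S=X\times Y$; then combine and fix $\epsilon=\Theta(1)$ (the conclusion carries no $\epsilon$).

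First I would linearize $h_{x,y}$. Writing $h_{x,y}=(\phi(U^\top x)-\phi(U^{*\top}x))^\top\phi(V^\top y)+\phi(U^{*\top}x)^\top(\phi(V^\top y)-\phi(V^{*\top}y))$ and using $\|\phi(U^\top x)-\phi(U^{*\top}x)\|\le L\|(U-U^*)^\top x\|$, $\|\phi(U^{*\top}x)\|\le\sqrt k$ and the symmetric bounds in $y$, one gets $|h_{x,y}|\lesssim\|(U-U^*)^\top x\|+\|(V-V^*)^\top y\|$, the $L,\sqrt k$ absorbed since $k$ is constant. For (a) I would \emph{not} bound $\|xx^\top\|$ (that would cost a factor $d$) but instead evaluate the quadratic form: for a unit vector $a$, $a^\top\E[B]a=\E[h_{x,y}\phi''(u_i^\top x)\phi(v_i^\top y)(a^\top x)^2]$, and bounding $|\phi''|,|\phi|$ by constants, applying Cauchy--Schwarz in $x$ with $\E(a^\top x)^4=3$, $\E\|(U-U^*)^\top x\|^2=\|U-U^*\|_F^2$, and using independence of $x$ and $y$, gives $\|\E_{x,y}B(x,y)\|\lesssim\|U-U^*\|_F+\|V-V^*\|_F\lesssim\|U-U^*\|+\|V-V^*\|$ — dimension--free. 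Expanding the inner product in $h_{x,y}$ also yields
\[
B(x,y)=\sum_{l=1}^k G_l(x)\,s_l(y)+\sum_{l=1}^k\widetilde G_l(x)\,\widetilde s_l(y),
\]
with $G_l(x)=(\phi(u_l^\top x)-\phi(u_l^{*\top}x))\phi''(u_i^\top x)\,xx^\top$, $s_l(y)=\phi(v_l^\top y)\phi(v_i^\top y)$, $\widetilde G_l(x)=\phi(u_l^{*\top}x)\phi''(u_i^\top x)\,xx^\top$, $\widetilde s_l(y)=(\phi(v_l^\top y)-\phi(v_l^{*\top}y))\phi(v_i^\top y)$ — each summand a matrix depending on $x$ alone times a scalar depending on $y$ alone. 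This product form is what decouples the two sampling stages.

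For (b), the $\E$--vs--$S$ term I would handle by the product--form matrix Bernstein inequality (Lemma~\ref{lemma:two_bernstein_E_Omega}) applied termwise: the bounds of Lemma~\ref{lemma:zzt_prop} control the $d\times d$ matrices $G_l(x),\widetilde G_l(x)$ (the extra scalar $\phi(u_l^\top x)-\phi(u_l^{*\top}x)$ contributing a multiplicative $\|U-U^*\|$, and $\|V-V^*\|$ for $\widetilde s_l$, after taking the relevant moments of $x$, which costs only extra $\log$ factors), while Lemma~\ref{lemma:yy_prop} controls the bounded scalars $s_l(y),\widetilde s_l(y)$; the $x$--part being a $d\times d$ matrix needs $n_1\gtrsim\epsilon^{-2}td\log^2 d$, the $y$--part being scalar needs only $n_2\gtrsim\epsilon^{-2}t\log d$, and the result is $\lesssim\epsilon(\|U-U^*\|+\|V-V^*\|)$. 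The $S$--vs--$\Omega$ term I would handle by matrix Bernstein for sampling with replacement (Lemma~\ref{lemma:two_bernstein}): on the event $\max_{(x,y)\in S}\{\|x\|,\|y\|\}\lesssim\sqrt{td\log(n_1n_2)}$ (which holds with probability $\ge1-d^{-t}$ by Gaussian tails), $\|B(x,y)\|\lesssim(\|U-U^*\|+\|V-V^*\|)\,\poly(td\log d)$ together with a matching second--moment bound via Lemma~\ref{lemma:second_order_bound}, so $|\Omega|\gtrsim\epsilon^{-2}td\log^2 d$ samples give $\lesssim\epsilon(\|U-U^*\|+\|V-V^*\|)$. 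Adding (a) and the two deviation bounds and taking $\epsilon$ a sufficiently small absolute constant finishes the proof.

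The hard part is (b): the observations in $\Omega$ are not independent (two of them can share the same user $x$ or item $y$), and $xx^\top$ has unbounded entries. The dependence is precisely what the product decomposition above is for — it reduces the block to $O(k)$ pieces of the form (function of $x$)$\times$(function of $y$), which can be fed into two \emph{separate} matrix--Bernstein arguments, one over the $n_1$ i.i.d.\ samples $x$ and $n_2$ i.i.d.\ samples $y$, the other over the uniform--with--replacement draw of $\Omega$ from $S$, whose error bounds then multiply. The unboundedness is absorbed by the loosely--bounded matrix Bernstein inequalities already set up for Lemma~\ref{lemma:diag_emp_pop_sym} (Lemma B.7 of \cite{zsjbd17}), after truncating $\|x\|,\|y\|$ by Gaussian tail bounds. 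The bookkeeping--heavy subtlety is threading the factor $h_{x,y}$ through every moment estimate so that the final bound stays linear in $\|U-U^*\|+\|V-V^*\|$ and carries no power of $d$; this is why one must always work with $(a^\top x)^2$ rather than $\|xx^\top\|$.
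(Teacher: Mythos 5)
Your proposal is correct and follows essentially the same route as the paper: telescope $h_{x,y}$ into rank-one product terms of the form (function of $x$)$\times$(function of $y$), use the Lipschitz property of $\phi$ to pull out a factor $|(u_j-u_j^*)^\top x|$ (resp.\ the analogous factor in $y$), bound the population term by H\"older/Cauchy--Schwarz, and run the same two-stage matrix-Bernstein argument as in Lemma~\ref{lemma:diag_emp_pop_sym}. The paper's proof is just a terser version of yours (it writes out one representative summand and says ``similar to Lemma~\ref{lemma:diag_emp_pop_sym}''), and your observation that the ReLU case is vacuous matches the remark the paper makes in Lemma~\ref{lemma:near_ground_emp_pop}.
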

\begin{proof}
We consider the following formula first,
\begin{align*}
& \left\| \frac{1}{|\Omega|} \sum_{(x,y) \in \Omega}  \left[ \left( (\phi(u_j^\top x) - \phi(u_j^{*\top} x)) \phi(v_j^{*\top} y)  \right) \phi''(u_i^\top x) \phi(v_i^\top y)xx^\top   \right] \right\|\\
 \leq &\left\| \frac{1}{|\Omega|} \sum_{(x,y) \in \Omega}  \left[ \left| (u_j - u_j^*)^\top x \right| xx^\top  \phi(v_j^{*\top} y)\phi(v_i^\top y) \right] \right\|.
\end{align*}
Similar to Lemma~\ref{lemma:diag_emp_pop_sym}, we are able to show 
\begin{align*}
& ~ \left\| \frac{1}{|\Omega|} \sum_{(x,y) \in \Omega}  \left[ \left| (u_j - u_j^*)^\top x \right| xx^\top  \phi(v_j^{*\top} y)\phi(v_i^\top y) \right] -\E_{(x,y)}  \left[ \left| (u_j - u_j^*)^\top x \right| xx^\top  \phi(v_j^{*\top} y)\phi(v_i^\top y) \right]  \right\| \\
 \lesssim & ~ \|U - U^*\|.
\end{align*}
Note that by H\"{o}lder's inequality, we have,
\begin{align*}
\left\| \E_{(x,y)}  \left[ \left| (u_j - u_j^*)^\top x \right| xx^\top  \phi(v_j^{*\top} y)\phi(v_i^\top y) \right]  \right\| \lesssim \|U - U^*\| .
\end{align*}
So we complete the proof.
\end{proof}

\begin{lemma}\label{lemma:diag_emp_pop_asym}
If 
\begin{align*}
 n_1 \gtrsim \epsilon^{-2} t d \log^2 d ,  \quad n_2 \gtrsim \epsilon^{-2}  t \log d, \quad |\Omega| \gtrsim \epsilon^{-2} t d \log^2 d,
\end{align*}
then with probability at least $1-d^{-t}$,
\begin{align*}
 \left\| \left( \E_{x,y} - \frac{1}{|\Omega|} \sum_{(x,y) \in \Omega} \right) \left[ \phi'(u_i^{\top} x) \phi'(v_j^{\top} y) xy^\top \phi(v_i^{\top} y )\phi(u_j^{\top} x) \right] \right\|  \lesssim  \epsilon \|v_i\|^{p} \|u_j\|^p.
\end{align*}
\end{lemma}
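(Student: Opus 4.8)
The plan is to mimic, essentially verbatim, the two-stage argument that proves Lemma~\ref{lemma:diag_emp_pop_sym}, with the symmetric matrix $xx^\top$ there replaced by the rank-one product $xy^\top$. The observation that makes this work is a factorisation: writing $B(x,y) := \phi'(u_i^\top x)\phi'(v_j^\top y)\phi(v_i^\top y)\phi(u_j^\top x)\, xy^\top$, one has $B(x,y) = g(x)\,h(y)^\top$ with $g(x) := \phi'(u_i^\top x)\phi(u_j^\top x)\,x$ and $h(y) := \phi'(v_j^\top y)\phi(v_i^\top y)\,y$. Since $|\phi(z)|\le|z|^p$ and $0\le\phi'\le 1$, the tail bound and the moment estimates of Lemma~\ref{lemma:zzt_prop} and Lemma~\ref{lemma:yy_prop} apply to $g$ and $h$ (in place of the quantities appearing there), so that $\|g(x)\|$ and $\|h(y)\|$ are controlled on the sampled grid with probability $1-n^{-1}d^{-t}$, with the corresponding norm and second-moment factors carrying the expected powers of $\|u_j\|^p$, $\|v_i\|^p$ and $d$. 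As in Lemma~\ref{lemma:diag_emp_pop_sym}, I would interpose the two-stage sampling: first draw $X=\{x_l\}_{l\in[n_1]}$ and $Y=\{y_l\}_{l\in[n_2]}$ i.i.d., forming the grid $S=X\times Y$, then draw $\Omega$ i.i.d. uniformly from $S$. Because $S$ is a product set and $x,y$ are independent, both the population average and the grid average factor into outer products, $\E_{x,y}[B]=\E_x[g]\,\E_y[h]^\top$ and $\frac1{|S|}\sum_{S}B=\bar g\,\bar h^\top$, where $\bar g=\frac1{n_1}\sum_l g(x_l)$ and $\bar h=\frac1{n_2}\sum_l h(y_l)$; this separability is what drives everything.

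For the population-to-grid passage I would telescope, $\E[g]\E[h]^\top-\bar g\bar h^\top=(\E[g]-\bar g)\E[h]^\top+\bar g(\E[h]-\bar h)^\top$, so that $\big\|\E_{x,y}[B]-\frac1{|S|}\sum_S B\big\|\le\|\E[g]-\bar g\|\,\|\E[h]\|+\|\bar g\|\,\|\E[h]-\bar h\|$. Bounding $\|\E[h]\|\lesssim\|v_i\|^p$ and $\|\bar g\|\lesssim\|u_j\|^p$ by H\"older, it then suffices to apply the loosely-bounded vector Bernstein inequality (Lemma~B.7 of \cite{zsjbd17}), in the form of Lemma~\ref{lemma:two_bernstein_E_Omega}, to the centred sums $\bar g-\E[g]$ and $\bar h-\E[h]$, whose summands obey the tail and variance bounds above; this gives $\|\E[g]-\bar g\|\lesssim\epsilon\|u_j\|^p$ and $\|\E[h]-\bar h\|\lesssim\epsilon\|v_i\|^p$ for $n_1,n_2$ as in the hypothesis, hence $\big\|\E_{x,y}[B]-\frac1{|S|}\sum_S B\big\|\lesssim\epsilon\|v_i\|^p\|u_j\|^p$ with probability $1-d^{-2t}$. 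This is the routine half of the argument, and it mirrors the ``$E$--$S$'' estimate in Lemma~\ref{lemma:diag_emp_pop_sym}.

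For the grid-to-$\Omega$ passage I would condition on $(X,Y)$: then $\frac1{|\Omega|}\sum_\Omega B-\frac1{|S|}\sum_S B$ is a centred sum of $|\Omega|$ i.i.d. matrices over the fresh randomness of $\Omega$, so matrix Bernstein (Lemma~\ref{lemma:two_bernstein}) applies. The single-term bound is $\max_{(x,y)\in S}\|B(x,y)\|=\max_l\|g(x_l)\|\cdot\max_l\|h(y_l)\|$, controlled by the grid tail bounds above, and the two variance proxies also factor across the stages: $\frac1{|S|}\sum_S BB^\top=\big(\frac1{n_1}\sum_l g(x_l)g(x_l)^\top\big)\big(\frac1{n_2}\sum_l\|h(y_l)\|^2\big)$, and likewise for $B^\top B$; each factor is bounded with high probability by Lemma~\ref{lemma:second_order_bound} together with the moment bounds of Lemma~\ref{lemma:zzt_prop} and Lemma~\ref{lemma:yy_prop}, giving $\big\|\frac1{|S|}\sum_S BB^\top\big\|,\big\|\frac1{|S|}\sum_S B^\top B\big\|\lesssim d\,\|v_i\|^{2p}\|u_j\|^{2p}$. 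Matrix Bernstein then yields $\big\|\frac1{|\Omega|}\sum_\Omega B-\frac1{|S|}\sum_S B\big\|\lesssim\epsilon\|v_i\|^p\|u_j\|^p$ provided $|\Omega|\gtrsim\epsilon^{-2}td\log^2 d$. Adding the two passages by the triangle inequality and taking a union bound over the $O(1)$ bad events completes the proof.

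I expect the main obstacle to be the grid-to-$\Omega$ passage, and in particular the bookkeeping around its two levels of randomness: one must first pin down the uniform and variance bounds on the grid $S$ using the concentration of $X$ and $Y$, and only then apply Bernstein over $\Omega$ conditionally. The rank-one factorisation $B=gh^\top$ is precisely what makes the mean, the uniform bound, and the variance proxy all split into an $x$-part times a $y$-part, so the two stages can be controlled by separate, standard inequalities rather than a coupled estimate; getting this separation stated cleanly --- rather than the calculations themselves --- is the real content. A minor recurring nuisance, shared with Lemma~\ref{lemma:diag_emp_pop_sym}, is that $g$ and $h$ have only sub-exponential, not uniform, tails, which forces the use of the loosely-bounded Bernstein inequality of \cite{zsjbd17} in place of the textbook form.
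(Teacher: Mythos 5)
Your proposal is correct and follows essentially the same route as the paper: the same rank-one factorisation $B(x,y)=M(x)N(y)$ with $M(x)=\phi'(u_i^\top x)\phi(u_j^\top x)x$ and $N(y)=\phi'(v_j^\top y)\phi(v_i^\top y)y^\top$, the same two-stage argument (population-to-grid via the loosely-bounded Bernstein inequality applied separately to the $x$- and $y$-averages, then grid-to-$\Omega$ via matrix Bernstein conditioned on $S$ with the factored variance proxies). The only cosmetic difference is that the paper packages the moment and tail estimates for $M$ and $N$ in the dedicated Lemma~\ref{lemma:xy_prop} rather than adapting Lemmas~\ref{lemma:zzt_prop} and~\ref{lemma:yy_prop}, and encapsulates your telescoping identity inside Lemma~\ref{lemma:two_bernstein_E_Omega}.
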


\begin{proof}
Let $B(x,y) = M(x)N(y)$, where $M(x) = \phi'(u_i^{\top} x)  \phi(u_j^{\top} x) x$ and $N(y)  =\phi'(v_j^{\top} y)  \phi(v_i^{\top} y ) y^\top$. By applying Lemma~\ref{lemma:two_bernstein_E_Omega} and Property $\mathrm{(\RN{1})} - \mathrm{(\RN{3})},  \mathrm{(\RN{6})}$ in Lemma~\ref{lemma:xy_prop} , we have for any $\epsilon >0$ if 
\begin{align*}
n_1 \gtrsim \epsilon^{-2} t d \log^2 d ,\quad n_2 \gtrsim \epsilon^{-2} t d \log^2 d,
\end{align*} 
then with probability  at least $1-d^{-2t}$,

\begin{equation}\label{eq:E_S_diff_xy}
\left\| \E_{x,y}B(x,y) - \frac{1}{|S| } \sum_{(x,y) \in S} B(x,y) \right\|  \lesssim \epsilon \|u_j\|^p \|v_i\|^p.
 \end{equation}

By applying Lemma~\ref{lemma:second_order_bound} and Property $\mathrm{(\RN{1})}, \mathrm{(\RN{4})}- \mathrm{(\RN{6})} $ in Lemma~\ref{lemma:xy_prop}, we have for any $\epsilon >0$ if 
\begin{align*}
 n_1 \gtrsim \epsilon^{-2} t d \log^2 d , n_2 \gtrsim \epsilon^{-2} t d \log^2 d,
\end{align*}
then
\begin{align*}
\left\| \frac{1}{n_1}\sum_{l \in [n_1]} M(x_l)M(x_l)^\top \right\| \lesssim \|u_j\|^{2p},\quad
\left\| \frac{1}{n_2}\sum_{l\in [n_2]} N(y_l)^\top N(y_l) \right\| \lesssim  \|v_i\|^{2p}.
\end{align*}

By applying Lemma~\ref{lemma:second_order_bound} and Property $\mathrm{(\RN{1})}, \mathrm{(\RN{4})}, \mathrm{(\RN{7})}, \mathrm{(\RN{8})} $ in Lemma~\ref{lemma:xy_prop}, we have for any $\epsilon >0$ if 
\begin{align*}
n_1 \gtrsim \epsilon^{-2} t d \log^2 d, n_2 \gtrsim \epsilon^{-2} t d \log^2 d,
\end{align*}
then
\begin{align*}
\left\| \frac{1}{n_1}\sum_{l \in [n_1]} M(x_l)^\top M(x_l) \right\| \lesssim d \|u_j\|^{2p},\quad
\left\| \frac{1}{n_2}\sum_{l\in [n_2]} N(y_l) N(y_l)^\top \right\| \lesssim d \|v_i\|^{2p}. 
\end{align*}

Therefore,
\begin{equation}\label{eq:second_order_bound_eq_xy}
\max \left( \left\| \frac{1}{|S| } \sum_{(x,y) \in S} B(x,y) B(x,y)^\top \right\| ,\left\|\frac{1}{|S| } \sum_{(x,y) \in S} B(x,y)^\top B(x,y) \right\| \right) \lesssim \epsilon d \|v_i\|^{2p}\|u_j\|^{2p}
\end{equation}
We can apply Lemma~\ref{lemma:two_bernstein} and Eq.~\eqref{eq:second_order_bound_eq_xy} and Property  $\mathrm{(\RN{1})}$ in Lemma~\ref{lemma:xy_prop} to obtain the following result. 
If 
\begin{align*}
|\Omega| \gtrsim \epsilon^{-2} t d \log^2 d,
\end{align*}
then with probability at least $1-d^{-2t}$,
\begin{equation}\label{eq:O_S_diff_xy}
\left\| \frac{1}{|S| } \sum_{(x,y) \in S} B(x,y)- \frac{1}{|\Omega| } \sum_{(x,y) \in \Omega} B(x,y) \right\|  \leq \epsilon \|v_i\|^{p} \|u_j\|^p.
 \end{equation}
 Combining Eq.~\eqref{eq:E_S_diff_xy} and \eqref{eq:O_S_diff_xy}, we finish the proof.
\end{proof}

\begin{lemma}\label{lemma:xy_prop}
Define $T(z) = \phi'(u_i^{\top} z) \phi(u_j^{\top} z) z $. If $z\sim \mathcal{Z}$, $ \mathcal{Z} = \mathcal{N}(0,I_d)$ and $\phi$ is ReLU or sigmoid/tanh, the following holds for $T(z)$ and any $t>1$,
\begin{align*} 
\mathrm{(\RN{1})} \quad & \quad   \underset{z \sim {\cal Z}}{\Pr}\left[ \left\| T(z) \right\| \leq 5t d^{1/2} \|u_j\|^p \log n \right] \geq 1 -n^{-1}d^{-t} ; \\ 
\mathrm{(\RN{2})} \quad & \quad \left\| \underset{z \sim {\cal Z}}{\mathbb{E}} [T(z)] \right\|  \lesssim \|u_j\|^{p};\\
\mathrm{(\RN{3})} \quad & \quad \max_{\| a\|=\| b\|=1} \left( \underset{z \sim {\cal Z}}{\mathbb{E}} \left[ \left( a^\top T(z)  b \right)^2 \right]  \right)^{1/2} \lesssim  \|u_j\|^{p};\\
\mathrm{(\RN{4})} \quad & \quad \max \left\{ \left\| \underset{z \sim {\cal Z}}{\mathbb{E}} [ T(z) T(z)^\top ] \right\|, \left\| \underset{z \sim {\cal Z}}{\mathbb{E}} [ T(z)^\top T(z) ] \right\| \right\} \lesssim d\|u_j\|^{2p} ; \\ 
\mathrm{(\RN{5})} \quad & \quad \max_{\| a\|=1} \left( \underset{z \sim {\cal Z}}{\mathbb{E}} \left[ \left( a^\top T(z) T(z)^\top a \right)^2 \right]  \right)^{1/2} \lesssim \|u_j\|^{2p}  ;\\
\mathrm{(\RN{6})} \quad & \quad   \left\| \underset{z \sim {\cal Z}}{\mathbb{E}} [ T(z) T(z)^\top T(z) T(z)^\top ] \right\| \lesssim d \|u_j\|^{4p}; \\
\mathrm{(\RN{7})} \quad & \quad \max_{\| a\|=1} \left( \underset{z \sim {\cal Z}}{\mathbb{E}} \left[ \left( a^\top T(z)^\top T(z) a \right)^2 \right]  \right)^{1/2} \lesssim d \|u_j\|^{2p}  ;\\
\mathrm{(\RN{8})} \quad & \quad   \left\| \underset{z \sim {\cal Z}}{\mathbb{E}} [ T(z)^\top T(z) T(z)^\top T(z) ] \right\| \lesssim d^2 \|u_j\|^{4p}.
\end{align*}
\end{lemma}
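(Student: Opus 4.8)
The plan is to prove each of the eight items by reducing it to a tail bound or a bounded-order moment bound for a fixed low-degree polynomial in a standard Gaussian vector, exactly as in the proofs of Lemma~\ref{lemma:zzt_prop} and Lemma~\ref{lemma:yy_prop}. The only facts about the activation that enter are the two pointwise bounds $0\le \phi'(z)\le 1$ and $|\phi(z)|\le |z|^p$; these hold with $p=1$ for ReLU (where $\phi(z)=\max\{z,0\}$ and $\phi'(z)\in\{0,1\}$) and with $p=0$ for sigmoid and $\tanh$ (where $|\phi|\le 1$ and $0\le\phi'\le 1$). Writing $u_j^\top z=\|u_j\|\,w$ with $w\sim\mathcal N(0,1)$, this gives the pointwise inequalities $|\phi'(u_i^\top z)\phi(u_j^\top z)|\le \|u_j\|^p|w|^p$ and $\|T(z)\|\le \|u_j\|^p|w|^p\|z\|$, which is the only place the structure of $T$ is used.

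For item (I), I would intersect the events $\{|w|\le C\sqrt{t\log n}\}$ and $\{\|z\|\le C\sqrt{td\log n}\}$. The first fails with probability at most $\tfrac12 n^{-1}d^{-t}$ by the sub-Gaussian tail of $w$, and the second fails with probability at most $\tfrac12 n^{-1}d^{-t}$ by Proposition~1 of \cite{hsu2012tail} applied to $\|z\|^2\sim\chi^2_d$ with deviation $s\asymp\log n+t\log d$. On the intersection, $\|T(z)\|\le\|u_j\|^p|w|^p\|z\|$ is at most $C^2\|u_j\|^p\,t\,d^{1/2}\log n$ when $p=1$ (using $\sqrt{t\log n}\cdot\sqrt{td\log n}=t\,d^{1/2}\log n$) and at most $C\,t\,d^{1/2}\log n$ when $p=0$; adjusting the absolute constant gives the stated bound, and a later union bound over the at most $n$ sampled vectors converts $n^{-1}d^{-t}$ to $d^{-t}$ wherever (I) is applied.

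For items (II)--(VIII), each quantity is --- up to a maximum over unit test vectors --- the expectation of a fixed low-degree monomial in $\phi'(u_i^\top z)$, $\phi(u_j^\top z)$ and the entries of $z$. I would bound $\phi'$ by $1$, replace $|\phi(u_j^\top z)|$ by $\|u_j\|^p|w|^p$, and split the resulting Gaussian expectation into a product of one-dimensional moments and moments of $\|z\|^2$ by Cauchy--Schwarz or H\"older; since $\E[|w|^k]=O(1)$ and $\E[\|z\|^{2k}]\asymp d^k$ for fixed $k$, this produces exactly the claimed powers of $\|u_j\|$ and $d$. For (II) and (III), testing against a unit vector $a$ gives $\E[(a^\top T(z))^2]\le\E[\,|u_j^\top z|^{2p}(a^\top z)^2\,]\lesssim\|u_j\|^{2p}$, which is (III); and since $\|\E[T(z)]\|=\max_{\|a\|=1}a^\top\E[T(z)]\le\max_{\|a\|=1}(\E[(a^\top T(z))^2])^{1/2}$, this also gives (II) --- the spurious $\sqrt d$ that the naive bound $\|\E[T(z)]\|\le\E\|T(z)\|$ would introduce is avoided precisely by testing coordinatewise. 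For (IV) and (V) one uses $a^\top TT^\top a=(a^\top T)^2$, the extra factor $d$ in $\|\E[T^\top T]\|=\E\|T\|^2$ coming from the $\|z\|^2$ inside $T^\top T$. For (VI)--(VIII) one uses $T^\top T=\|T\|^2$, so each quantity reduces to $\|u_j\|$-factors times a bounded-order moment of $\|z\|$ and of a one-dimensional marginal of $z$, and the H\"older split together with $\E\|z\|^4\asymp d^2$ and $\E\|z\|^8\asymp d^4$ yields the powers $d$, $d$ and $d^2$.

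I do not expect a genuine obstacle: the computation is routine and entirely parallel to Lemmas~\ref{lemma:zzt_prop} and~\ref{lemma:yy_prop}. The two points needing a little care are (a) tracking the absolute constant and failure probability in (I) so that it reads $5t d^{1/2}\|u_j\|^p\log n$ and $1-n^{-1}d^{-t}$, and (b) not losing a factor $\sqrt d$ in (II) by estimating the directional values $a^\top\E[T(z)]$ rather than $\E\|T(z)\|$. The ReLU case differs from sigmoid/$\tanh$ only through the heavier-tailed factor $|w|^p$, which is harmless since all moments of $w$ of bounded order are $O(1)$.
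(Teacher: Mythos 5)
Your proposal is correct and follows exactly the route the paper takes: the paper's own proof consists of the observation that $0\le\phi'\le 1$ and $|\phi(z)|\le|z|^p$, invoking Proposition~1 of \cite{hsu2012tail} for (I) and H\"older's inequality for (II)--(VIII). You simply spell out the Gaussian tail intersection and the moment computations that the paper leaves implicit, including the correct handling of the directional bound in (II) and the fact that $T^\top T=\|T\|^2$ for (VI)--(VIII).
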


\begin{proof}
Note that $0\leq \phi'(z) \leq 1$,$|\phi(z)| \leq |z|^p$, therefore $\mathrm{(\RN{1})}$ can be proved by Proposition 1 of \cite{hsu2012tail}.
$\mathrm{(\RN{2})} - \mathrm{(\RN{8})} $ can be proved by H\"{o}lder's inequality.
\end{proof}

\begin{lemma}\label{lemma:offdiag_emp_pop_asym}
If 
\begin{align*} n_1 \gtrsim t d \log^2 d,  \quad n_2 \gtrsim  t \log d, \quad |\Omega| \gtrsim t d \log^2 d,
\end{align*}
then with probability at least $1-d^{-t}$,
\begin{align*}
& ~ \left\| \frac{1}{|\Omega|} \sum_{(x,y) \in \Omega}  \left[ \left( \phi(U^\top x)^\top \phi(V^\top y) - \phi(U^{*\top} x)^\top \phi(V^{*\top} y)  \right) \phi'(u_i^\top x) \phi'(v_i^\top y)xy^\top   \right] \right\| \\
  \lesssim & ~ \|U - U^*\| + \|V - V^*\| .
\end{align*}
\end{lemma}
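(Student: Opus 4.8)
The plan is to mirror, block by block, the argument used for its $xx^\top$ counterpart Lemma~\ref{lemma:offdiag_emp_pop_sym}, substituting the rank‑one outer product $xy^\top$ for $xx^\top$ and invoking the moment estimates of Lemma~\ref{lemma:xy_prop} in place of those of Lemma~\ref{lemma:zzt_prop}/\ref{lemma:yy_prop}. First I would expand the residual into a telescoping sum over the latent coordinates,
\begin{align*}
h_{x,y}(U,V) &:= \phi(U^\top x)^\top\phi(V^\top y) - \phi(U^{*\top}x)^\top\phi(V^{*\top}y) \\
&= \sum_{j=1}^k\Big[\big(\phi(u_j^\top x)-\phi(u_j^{*\top}x)\big)\phi(v_j^\top y) + \phi(u_j^{*\top}x)\big(\phi(v_j^\top y)-\phi(v_j^{*\top}y)\big)\Big],
\end{align*}
so that the quantity to be bounded becomes a sum of $2k$ pieces of the form $\frac{1}{|\Omega|}\sum_{(x,y)\in\Omega} B_j(x,y)$. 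For sigmoid, tanh and ReLU the activation is monotone and $1$‑Lipschitz, so $\phi(u_j^\top x)-\phi(u_j^{*\top}x)=\psi_j(x)\,(u_j-u_j^*)^\top x$ for some $\psi_j(x)\in[0,1]$; absorbing $\psi_j(x)\phi'(u_i^\top x)$ (and, in the second family, $\phi'(v_i^\top y)$) as bounded multiplicative factors, each $B_j$ splits as an outer product $B_j(x,y)=M_j(x)\,N_j(y)^\top$, where $M_j(x)$ carries a single extra linear factor $\ell_j(x)=(u_j-u_j^*)^\top x$ (resp. $\ell_j(y)=(v_j-v_j^*)^\top y$) and is otherwise a product of uniformly bounded functions times $x$ (resp. $y$).

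Next I would establish concentration of each $\frac{1}{|\Omega|}\sum_{\Omega} B_j$ around its population mean by the same two‑stage scheme used in Lemma~\ref{lemma:diag_emp_pop_asym}: use the separable structure $B_j=M_j(x)N_j(y)^\top$ to bound the first and second moments of $M_j$ and $N_j$ by Hölder's inequality — here the extra factor $\ell_j$ simply multiplies the moment bounds of Lemma~\ref{lemma:xy_prop} by $\|u_j-u_j^*\|$ (resp. $\|v_j-v_j^*\|$) — then apply Lemma~\ref{lemma:two_bernstein_E_Omega} to pass from $\E_{x,y}$ to the average over $S=X\times Y$ (exploiting independence of the columns of $X$ and of $Y$), and Lemma~\ref{lemma:second_order_bound} together with Lemma~\ref{lemma:two_bernstein} to pass from $S$ to $\Omega$ (uniform sampling with replacement). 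Taking the accuracy parameter in those lemmas to be an absolute constant, the stated sample sizes $n_1\gtrsim td\log^2 d$, $n_2\gtrsim t\log d$, $|\Omega|\gtrsim td\log^2 d$ give a deviation of order $\|u_j-u_j^*\|$ (resp. $\|v_j-v_j^*\|$) per piece with probability $1-d^{-2t}$. For the population term itself, independence of $x$ and $y$ makes $\E_{x,y}[B_j]$ factor as $\big(\E_x[M_j(x)]\big)\big(\E_y[N_j(y)]\big)^\top$, and Hölder bounds each factor using $|\ell_j(x)|\le\|u_j-u_j^*\|\,|\bar u^\top x|$ and the boundedness of $\phi'$ (and of $\phi$ for sigmoid/tanh, or $|\phi|\le|\cdot|$ for ReLU with $\|v_j^*\|$‑type factors treated as $O(1)$, exactly as in Lemma~\ref{lemma:offdiag_emp_pop_sym}); this yields $\|\E_{x,y}[B_j]\|\lesssim\|u_j-u_j^*\|$ (resp. $\lesssim\|v_j-v_j^*\|$). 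Adding the empirical‑vs‑population and population bounds, summing the $2k$ pieces with $k$ constant, and taking a union bound over the $O(k^2)$ Hessian blocks gives the claimed $\|U-U^*\|+\|V-V^*\|$ bound with probability $1-d^{-t}$.

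The main obstacle, as flagged in the proof sketch, is the dependency among the pairs in $\Omega$: they are produced by the two‑stage rule (sample the feature matrices $X,Y$ first, then sample $\Omega$ uniformly with replacement from $X\times Y$), so the individual summands are not independent. The structural fact that rescues the argument is precisely the observation above — every piece of every Hessian block splits as a product $M(x)N(y)^\top$ of a function of $x$ and a function of $y$ — which lets the average over the i.i.d.\ columns of $X$ and the average over the i.i.d.\ columns of $Y$ be concentrated separately before the replacement‑sampling Bernstein bound is applied to $\Omega$. A secondary technical point is that the Gaussian factors are unbounded, so one must use the loosely‑bounded matrix Bernstein inequality (Lemma~B.7 of \cite{zsjbd17}, invoked through Lemmas~\ref{lemma:two_bernstein_E_Omega}/\ref{lemma:two_bernstein}) rather than the textbook version; verifying the fourth‑moment hypotheses of those lemmas for the new integrands $M_j,N_j$ is where the bulk of the (routine) computation lies.
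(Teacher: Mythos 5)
Your proposal matches the paper's proof essentially step for step: the same telescoping decomposition of $\phi(U^\top x)^\top\phi(V^\top y)-\phi(U^{*\top}x)^\top\phi(V^{*\top}y)$ into per-coordinate differences, the same use of the Lipschitz property of $\phi$ to extract a linear factor $(u_j-u_j^*)^\top x$, the same separable split $B(x,y)=M(x)N(y)$ fed into the two-stage concentration machinery (Lemmas~\ref{lemma:two_bernstein_E_Omega}, \ref{lemma:second_order_bound}, \ref{lemma:two_bernstein}), and the same H\"older bound on the population term. No gaps; this is the paper's argument.
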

\begin{proof}
We consider the following formula first,
\begin{align*}
& ~ \left\| \frac{1}{|\Omega|} \sum_{(x,y) \in \Omega}  \left[ \left( (\phi(u_j^\top x) - \phi(u_j^{*\top} x)) \phi(v_j^{*\top} y)  \right) \phi'(u_i^\top x) \phi'(v_i^\top y)xy^\top   \right] \right\|\\
\end{align*}

Set $M(x) = (\phi(u_j^\top x) - \phi(u_j^{*\top} x)) \phi'(u_i^\top x) x  $ and $N(y)  = \phi(v_j^{*\top} y) \phi'(v_i^\top y)y^\top $ and follow the proof for Lemma~\ref{lemma:diag_emp_pop_asym}. Also note that $\phi$ is Lipschitz, i.e., $|\phi(u_j^\top x) - \phi(u_j^{*\top} x) | \leq | u_j^\top x - u_j^{*\top} x |$. We can show the following. If 
\begin{align*}
n_1 \gtrsim t d \log^2 d,  \quad n_2 \gtrsim  t \log d, \quad |\Omega| \gtrsim t d \log^2 d,
\end{align*}
then with probability at least $1-d^{-t}$,
\begin{align*}
\left\| \left( \frac{1}{|\Omega|} \sum_{(x,y) \in \Omega} -\E_{x,y}\right)\left[M(x)N(y) \right] \right\| \lesssim \|u_j - u_j^*\| .
\end{align*}
Note that by H\"{o}lder's inequality, we have,
\begin{align*}
\| \E_{x,y} \left[M(x)N(y) \right] \| \lesssim \|u_j - u_j^*\| .
\end{align*}
So we complete the proof.
 
\end{proof}

We provide a variation of Lemma B.7 in \cite{zsjbd17}. Note that the Lemma B.7 \cite{zsjbd17} requires four properties, we simplify it into three properties.
\begin{lemma}[Matrix Bernstein for unbounded case (A modified version of bounded case, Theorem 6.1 in \cite{t11}, A variation of Lemma B.7 in \cite{zsjbd17})]
\label{lem:modified_bernstein_non_zero}
Let ${\cal B}$ denote a distribution over $\mathbb{R}^{d_1 \times d_2}$. Let $d = d_1 +d_2$. Let $B_1, B_2, \cdots B_n$ be i.i.d. random matrices sampled from ${\cal B}$. Let $\overline{B} = \mathbb{E}_{B\sim {\cal B}} [B]$ and $\wh{B}  = \frac{1}{n} \sum_{i=1}^n B_i$. For parameters $m\geq 0, \gamma \in (0,1),\nu >0 ,L>0$, if the distribution ${\cal B}$ satisfies the following four properties,
 \begin{align*}
\mathrm{(\RN{1})} \quad & \quad \underset{B \sim {\cal B}}{\Pr}\left[ \left\| B \right\| \leq  m \right] \geq 1 - \gamma; \\ 
\mathrm{(\RN{2})} \quad & \quad \max \left( \left\| \underset{B \sim {\cal B}}{\mathbb{E}} [ B B^\top ] \right\|, \left\| \underset{B \sim {\cal B}}{\mathbb{E}} [ B^\top B ] \right\| \right) \leq \nu ;\\ 
\mathrm{(\RN{3})} \quad & \quad \max_{\| a\|=\| b\|=1} \left( \underset{B \sim {\cal B}}{\mathbb{E}} \left[ \left( a^\top B  b \right)^2 \right]  \right)^{1/2} \leq L.
\end{align*}
Then we have for any $\epsilon > 0$ and $t\geq 1$, if
\begin{equation*}
n \geq ( 18 t \log d)  \cdot ( (\epsilon + \|\ov{B}\|)^2+ m \epsilon +\nu) / \epsilon^2  \quad \text{~and~} \quad \gamma \leq (\epsilon/ (2L))^2
\end{equation*} 
with probability  at least $1-d^{-2t}-n\gamma$,
\begin{equation*}
\left\| \frac{1}{n} \sum_{i=1}^n B_i - \underset{B \sim {\cal B} }{\mathbb{E}}[B] \right\| \leq \epsilon.
\end{equation*}
\end{lemma}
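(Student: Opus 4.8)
The plan is to reduce to the \emph{bounded} matrix Bernstein inequality (Theorem~6.1 of \cite{t11}) by a truncation argument: property $\mathrm{(\RN{1})}$ will control the truncation event, property $\mathrm{(\RN{3})}$ the bias introduced by truncating, and property $\mathrm{(\RN{2})}$ the matrix variance of the truncated summands. (Only these three properties are actually used; the ``fourth'' mentioned in the hypothesis is a leftover from the version in \cite{zsjbd17}.) First I would set, for each $i$, the truncated matrix $\wt B_i := B_i \cdot \bone_{\|B_i\| \le m}$ and the truncated mean $\ov{B}_m := \E_{B\sim{\cal B}}[B\cdot\bone_{\|B\|\le m}]$. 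Property $\mathrm{(\RN{1})}$ gives $\Pr[\wt B_i \ne B_i] \le \gamma$, so a union bound over $i \in [n]$ shows that the event ${\cal E} := \{\wt B_i = B_i \text{ for all } i\}$ holds with probability at least $1-n\gamma$; on ${\cal E}$ one has $\frac1n\sum_i B_i = \frac1n\sum_i \wt B_i$, so it suffices to bound $\big\| \frac1n\sum_i \wt B_i - \ov B \big\| \le \big\| \frac1n\sum_i \wt B_i - \ov B_m \big\| + \| \ov B_m - \ov B\|$ and show each term is at most $\epsilon/2$.

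For the second (deterministic) term I would use property $\mathrm{(\RN{3})}$ and Cauchy--Schwarz: for all unit vectors $a,b$,
\begin{equation*}
\big| a^\top(\ov B_m - \ov B) b \big| = \big| \E[a^\top B b \cdot \bone_{\|B\| > m}] \big| \le \big( \E[(a^\top B b)^2] \big)^{1/2} \big( \Pr[\|B\| > m] \big)^{1/2} \le L\sqrt{\gamma} \le \epsilon/2,
\end{equation*}
the last step using the hypothesis $\gamma \le (\epsilon/(2L))^2$; hence $\|\ov B_m - \ov B\| \le \epsilon/2$. For the first term I would apply Theorem~6.1 of \cite{t11} to the i.i.d.\ mean-zero matrices $Z_i := \wt B_i - \ov B_m$. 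Here $\|Z_i\| \le m + \|\ov B_m\| \le m + \|\ov B\| + \epsilon =: R$ almost surely, and since $\wt B\wt B^\top \preceq BB^\top$ and $\wt B^\top\wt B \preceq B^\top B$ in the PSD order, property $\mathrm{(\RN{2})}$ gives $\|\E[Z Z^\top]\| \le \nu$ and $\|\E[Z^\top Z]\| \le \nu$, so the matrix-variance statistic of $\sum_i Z_i$ is at most $n\nu$. Matrix Bernstein then yields
\begin{equation*}
\Pr\!\left[ \Big\| \tfrac1n \textstyle\sum_{i=1}^n Z_i \Big\| \ge \tfrac\epsilon2 \right] \le 2 d \exp\!\left( - \frac{ n \epsilon^2 / 8 }{ \nu + R\epsilon/6 } \right),
\end{equation*}
and a routine computation (using $R\epsilon \le m\epsilon + (\epsilon + \|\ov B\|)^2$, $t \ge 1$, and $\log 2 \le \log d$ to absorb constants and the factor $\log(2d^{2t})$) shows that the hypothesis on $n$ makes the right-hand side at most $d^{-2t}$. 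On ${\cal E}$ intersected with this Bernstein event, the triangle inequality together with the previous bound gives $\big\| \frac1n\sum_i B_i - \ov B \big\| \le \epsilon/2 + \epsilon/2 = \epsilon$, and the total failure probability is at most $n\gamma + d^{-2t}$.

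The hard part will be the bookkeeping in the last step: one has to check that the slightly unusual shape of the sample-complexity hypothesis --- with the three summands $(\epsilon + \|\ov B\|)^2$, $m\epsilon$, and $\nu$ --- is precisely what is needed to simultaneously dominate the Bernstein ``mixed'' term $R\epsilon/3$ (with $R$ involving $m$, $\|\ov B\|$, $\epsilon$), the variance $\nu$, and the logarithmic factor $\log(2d^{2t})$, with the absolute constant $18$ chosen large enough after these reductions. A minor subtlety is that Theorem~6.1 of \cite{t11} is stated for Hermitian matrices, so one first dilates each $Z_i$ to $\begin{bmatrix} 0 & Z_i \\ Z_i^\top & 0 \end{bmatrix}$, which is the reason the dimension appears as $d = d_1 + d_2$ in the statement.
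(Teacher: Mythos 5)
Your proposal is correct and follows essentially the same route as the paper's proof: truncate at level $m$, use a union bound over the $n$ samples to replace the empirical average by its truncated version, control the truncation bias $\|\E[B\cdot\bone_{\|B\|>m}]\|$ via Cauchy--Schwarz and property $\mathrm{(\RN{3})}$ to get $L\sqrt{\gamma}\le\epsilon/2$, and apply the bounded matrix Bernstein inequality to the centered truncated matrices. The only (immaterial) differences are bookkeeping ones --- you bound the variance of the centered truncated matrices by $\nu$ directly via the PSD domination $\wt B\wt B^\top\preceq BB^\top$, where the paper uses the cruder bound $\nu+\|\ov M\|^2$, and you correctly observe that the ``four properties'' in the statement is a leftover from the three actually listed.
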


\begin{proof}
Define the event \[\xi_i = \{ \|B_i\| \leq m \}, \forall i\in [n]. \]
Define $M_i = \bone_{\|B_i\| \leq m}B_i$. Let $\ov{M} = \mathbb{E}_{B\sim {\cal B}} [ \bone_{\|B \| \leq m}B ] $ and $\wh{M} = \frac{1}{n}\sum_{i=1}^n M_i$. By triangle inequality, we have 
\begin{align}\label{eq:whB_minus_ovB}
 \| \wh{B} - \ov{B} \| \leq \|\wh{B} - \wh{M}\| +\|\wh{M} - \ov{M}\| + \| \ov{M} - \ov{B} \|.
\end{align}
  In the next a few paragraphs, we will upper bound the above three terms. 

{\bf The first term in Eq.~\eqref{eq:whB_minus_ovB}}. For each $i$, let $\ov{\xi}_i$ denote the complementary set of $\xi_i$, i.e. $\ov{\xi}_i = [n] \backslash \xi_i$. Thus $\Pr[ \ov{\xi}_i] \leq \gamma$.
By a union bound over $i\in [n]$, with probability $1- n \gamma$, $\|B_i\| \leq m$ for all $i\in [n]$. Thus $\wh{M} = \wh{B}$. 

{\bf The second term in Eq.~\eqref{eq:whB_minus_ovB}}. For a matrix $B$ sampled from ${\cal B}$, we use $\xi$ to denote the event that $\xi = \{ \| B \| \leq m\}$. Then, we can upper bound $\|\ov{M}-\ov{B}\|$ in the following way,
\begin{align}
& ~ \| \ov{M}- \ov{B} \| \notag \\
= & ~ \left\| \underset{B \sim {\cal B}}{\mathbb{E}} [\bone_{\| B \| \leq m} \cdot B] - \underset{B\sim {\cal B}}{\mathbb{E}} [B] \right\| \notag \\
=& ~ \left\| \underset{B \sim {\cal B}}{\mathbb{E}} \left[ B \cdot \bone_{ \ov{\xi} } \right] \right\| \notag \\
=& ~ \max_{\| a\|=\| b\| = 1}  \underset{B \sim {\cal B}}{\mathbb{E}} \left[ a^\top B b \bone_{\ov{\xi}} \right] \notag \\
\leq & ~  \max_{\| a\|=\| b\|=1}  \underset{B \sim {\cal B}}{\mathbb{E}} [ ( a^\top B b)^2 ]^{1/2} \cdot \underset{B \sim {\cal B}}{\mathbb{E}} \left[ \bone_{ \ov{\xi} } \right]^{1/2} &\text{~by~H\"{o}lder's inequality} \notag \\
\leq & ~ L \underset{B \sim {\cal B}}{\mathbb{E}} \left[\bone_{ \ov{\xi} } \right]^{1/2}  & \text{~by~Property~(\RN{4})} \notag \\
\leq & ~  L \gamma^{1/2}, &\text{~by~}\Pr [ \ov{\xi} ] \leq \gamma \notag \\
\leq & ~ \frac{1}{2} \epsilon, & \text{~by~} \gamma \leq (\epsilon / (2L))^2, \notag 
\end{align}
which is
\begin{align*}
\| \ov{M}- \ov{B} \|\leq \frac{\epsilon}{2}.
\end{align*}
Therefore, $\|\ov{M}\|\leq \| \ov{B} \| + \frac{\epsilon}{2}$.

{\bf The third term in Eq.~\eqref{eq:whB_minus_ovB}}. We can bound $\|\wh{M} - \ov{M}\|$ by Matrix Bernstein's inequality \cite{t11}. 

We define $Z_i = M_i - \ov{M}$. Thus we have $ \underset{B_i\sim {\cal B}}{\mathbb{E}} [Z_i] = 0$, $\|Z_i\| \leq 2m$,  and 
\begin{align*}
 \left\| \underset{B_i\sim {\cal B}}{\mathbb{E}} [ Z_iZ_i^\top ] \right\| = \left\| \underset{B_i\sim {\cal B}}{\mathbb{E}} [ M_iM_i^\top] - \ov{M}\cdot \ov{M}^\top \right\| \leq \nu + \| \ov{M} \|^2 \leq  \nu + \| \ov{B} \|^2 + \epsilon^2 + \epsilon \|\ov{B}\|.
\end{align*}
  Similarly, we have $\left\|\underset{B_i\sim {\cal B}}{\mathbb{E}} [Z_i^\top Z_i] \right\| \leq   \nu+ \| \ov{B} \|^2 + \epsilon^2 + \epsilon \|\ov{B}\| $. 
Using matrix Bernstein's inequality, for any $\epsilon>0$, 
\begin{align*}
& \underset{B_1, \cdots, B_n \sim {\cal B} }{\Pr} \left[ \frac{1}{n} \left\| \sum_{i=1}^{n}Z_i \right\|\geq \epsilon  \right] 
\leq  d \exp\left(-\frac{\epsilon^2 n /2}{ \nu+ \| \ov{B} \|^2 + \epsilon^2 + \epsilon \|\ov{B}\| + 2m \epsilon /3} \right) .
\end{align*}
By choosing
\begin{align*}
n \geq ( 3t \log d ) \cdot  \frac{\nu+\| \ov{B} \|^2 + \epsilon^2 + \epsilon \|\ov{B}\| + 2m \epsilon /3}{ \epsilon^2 /2 } ,
\end{align*}
for $t\geq 1$, we have with probability at least $1-d^{-2t}$,
\begin{align*}
\left\| \frac{1}{n}\sum_{i=1}^n M_i - \ov{M} \right\| \leq \frac{\epsilon}{2}.
\end{align*}

Putting it all together, we have for $\epsilon >0 $, if
\begin{equation*}
n \geq ( 18 t \log d)  \cdot ( (\epsilon + \|\ov{B}\|)^2+ m \epsilon +\nu) / (\epsilon^2)  \quad \text{~and~} \quad \gamma \leq (\epsilon/ (2L))^2
\end{equation*} 
with probability  at least $1-d^{-2t}-n\gamma$,
\begin{equation*}
\left\| \frac{1}{n} \sum_{i=1}^n B_i - \underset{B \sim {\cal B} }{\mathbb{E}}[B] \right\| \leq \epsilon.
\end{equation*}

\end{proof}

\begin{lemma}[Tail Bound for fully-observed rating matrix]\label{lemma:two_bernstein_E_Omega}
Let $\{x_i\}_{i \in [n_1]}$ be independent samples from distribution $\mathcal{X}$ and $\{y_j\}_{j \in [ n_2] }$ be independent samples from distribution $\mathcal{Y}$. Denote $S := \{(x_i, y_j)\}_{i\in[n_1], j\in[n_2]}$ as the collection of all the $(x_i,y_j)$ pairs. Let $B(x,y)$ be a random matrix of $x,y$, which can be represented as the product of two matrices $M(x), N(y)$, i.e., $B(x,y) = M(x) N(y)$.  Let $\ov M =  \E_{x}M(x) $ and $\ov N = \E_y N(y)$. Let $d_x$ be the sum of the two dimensions of $M(x)$ and $d_y$ be the sum of the two dimensions of $N(y)$. Suppose both $M(x)$ and $N(y)$ satisfy the following properties ($z$ is a representative for $x,y$, and $T(z)$ is a representative for $M(x),N(y)$),
\begin{align*}
\mathrm{(\RN{1})} \quad & \quad \underset{z \sim {\cal Z}}{\Pr}\left[ \left\| T(z) \right\| \leq  m_z \right] \geq 1 - \gamma_z; \\ 
\mathrm{(\RN{2})} \quad & \quad \max_{\| a\|=\| b\|=1} \left( \underset{z \sim {\cal Z}}{\mathbb{E}} \left[ \left( a^\top T(z)  b \right)^2 \right]  \right)^{1/2} \leq L_z;\\
\mathrm{(\RN{3})} \quad & \quad \max \left( \left\| \underset{z \sim {\cal Z}}{\mathbb{E}} [ T(z) T(z)^\top ] \right\|, \left\| \underset{z \sim {\cal Z}}{\mathbb{E}} [ T(z)^\top T(z) ] \right\| \right) \leq \nu_z. \\ 
\end{align*}
Then for any $\epsilon_1>0, \epsilon_2>0$ if
\begin{align*}
& n_1 \geq  ( 18 t \log d_x ) \cdot ( \nu_x + (\| \ov{M} \| + \epsilon_1)^2 + m_x \epsilon_1 )  / \epsilon_1^2   \quad  \text{~and~} \quad \gamma_x \leq (\epsilon_1  /(2L_x) )^2 \\
& n_2 \geq  ( 18 t \log d_y  ) \cdot ( \nu_y + (\epsilon_2 +\| \ov{N} \|)^2+ m_y \epsilon_2 )  / \epsilon_2^2  \quad  \text{~and~} \quad \gamma_y \leq (\epsilon_2  /(2L_y) )^2 \\
\end{align*} 
with probability  at least $1-d_x^{-2t} - d_y^{-2t} - n_1\gamma_x - n_2\gamma_y$,

\begin{equation}
\left\| \E_{x,y}B(x,y) - \frac{1}{|S| } \sum_{(x,y) \in S} B(x,y) \right\|  \leq \epsilon_2 \|\ov M\| + \epsilon_1 \|\ov N\| + \epsilon_1\epsilon_2.
 \end{equation}
\end{lemma}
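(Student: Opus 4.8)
The plan is to exploit the product structure $B(x,y)=M(x)N(y)$ together with the mutual independence of $\{x_i\}_{i\in[n_1]}$ and $\{y_j\}_{j\in[n_2]}$, which forces both the population average and the full-grid empirical average to factor into a product of two one-sided averages. First I would record the two identities on which everything rests: since $x$ and $y$ are independent, $\E_{x,y}[B(x,y)] = \bar M\,\bar N$; and since $S$ is the complete grid $\{(x_i,y_j)\}_{i\in[n_1],j\in[n_2]}$,
\begin{align*}
\frac{1}{|S|}\sum_{(x,y)\in S} B(x,y) \;=\; \Big(\tfrac{1}{n_1}\sum_{i\in[n_1]}M(x_i)\Big)\Big(\tfrac{1}{n_2}\sum_{j\in[n_2]}N(y_j)\Big)\;=:\;\widehat M\,\widehat N .
\end{align*}
So the quantity to be controlled is simply $\|\bar M\,\bar N - \widehat M\,\widehat N\|$, and the dependence inside the collection $\{B(x_i,y_j)\}$ never needs to be confronted directly.

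Next I would bound $\|\widehat M - \bar M\|$ and $\|\widehat N - \bar N\|$ separately by the unbounded matrix Bernstein inequality, Lemma~\ref{lem:modified_bernstein_non_zero}. The three hypotheses $\mathrm{(\RN{1})}$–$\mathrm{(\RN{3})}$ imposed here on $T(z)=M(x)$ (resp.\ $T(z)=N(y)$) are exactly the three properties required by that lemma, with parameters $(m,\nu,L,\gamma)=(m_x,\nu_x,L_x,\gamma_x)$ (resp.\ $(m_y,\nu_y,L_y,\gamma_y)$); moreover the stated lower bounds on $n_1,n_2$ and the constraints $\gamma_x\le(\epsilon_1/(2L_x))^2$, $\gamma_y\le(\epsilon_2/(2L_y))^2$ are precisely the sample-size and tail conditions that lemma demands. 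Applying Lemma~\ref{lem:modified_bernstein_non_zero} once to $M$ over the randomness of $x_1,\dots,x_{n_1}$ and once to $N$ over the randomness of $y_1,\dots,y_{n_2}$, and taking a union bound, I get that with probability at least $1-d_x^{-2t}-d_y^{-2t}-n_1\gamma_x-n_2\gamma_y$ both $\|\widehat M-\bar M\|\le\epsilon_1$ and $\|\widehat N-\bar N\|\le\epsilon_2$ hold simultaneously.

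Finally I would combine these two one-sided estimates with a single telescoping step. Writing $\bar M\,\bar N-\widehat M\,\widehat N=\bar M(\bar N-\widehat N)+(\bar M-\widehat M)\widehat N$ and using submultiplicativity of the spectral norm together with $\|\widehat N\|\le\|\bar N\|+\epsilon_2$,
\begin{align*}
\|\bar M\,\bar N-\widehat M\,\widehat N\|\;\le\;\|\bar M\|\,\|\bar N-\widehat N\|+\|\bar M-\widehat M\|\,\|\widehat N\|\;\le\;\epsilon_2\|\bar M\|+\epsilon_1\big(\|\bar N\|+\epsilon_2\big),
\end{align*}
which is exactly the claimed bound $\epsilon_2\|\bar M\|+\epsilon_1\|\bar N\|+\epsilon_1\epsilon_2$ on the same high-probability event.

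The argument is essentially bookkeeping once the factorization is noticed: the only genuine probabilistic content is the two invocations of the unbounded matrix Bernstein lemma, which is already isolated as Lemma~\ref{lem:modified_bernstein_non_zero}. The main thing to get right — and the reason this lemma is the crucial device for handling the two-stage sampling later — is the observation that the empirical average over the grid $S$ really does split as $\widehat M\widehat N$, so that the dependent family $\{B(x_i,y_j)\}$ is reduced to two independent, and separately concentrated, one-sided averages; after that, lining up the probability budget and the per-coordinate sample-size conditions with the two Bernstein applications is routine.
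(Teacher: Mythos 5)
Your proposal is correct and follows exactly the paper's own argument: factor both the population and grid averages as products of one-sided averages, apply the unbounded matrix Bernstein inequality (Lemma~\ref{lem:modified_bernstein_non_zero}) separately to $M$ and $N$, and combine via the telescoping identity $\bar M\bar N-\widehat M\widehat N=\bar M(\bar N-\widehat N)+(\bar M-\widehat M)\widehat N$. The paper leaves the final telescoping and union-bound bookkeeping implicit, which you spell out; otherwise the two proofs coincide.
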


\begin{proof}
First we note that,
\begin{equation*}
 \frac{1}{|S| } \sum_{(x,y) \in S} B(x,y) =  \frac{1}{n_1n_2 } \left( \sum_{i \in [n_1]} M(x_i) \right) \cdot \left( \sum_{j \in [n_2]} N(y_j) \right) ,
 \end{equation*}
and 
\begin{equation*}
\E_{x,y} [ B(x,y) ] = \left( \E_x [M(x)] \right) \left( \E_y[ N(y) ] \right).
 \end{equation*}
Therefore, if we can bound $\| \E_x [ M(x) ]  - \frac{1}{n_1}\sum_{i \in [n_1]} M(x_i) \|$ and the corresponding term for $y$, we are able to prove this lemma. 

 By the conditions of $M(x)$, the three conditions in Lemma~\ref{lem:modified_bernstein_non_zero} are satisfied, which completes the proof.

\end{proof}

\begin{lemma}[Upper bound for the second-order moment]\label{lemma:second_order_bound}
Let $\{z_i\}_{i \in [n]}$ be independent samples from distribution $\mathcal{Z}$. Let $T(z)$ be a matrix of $z$.
 Let $d$ be the sum of the two dimensions of $T(z)$ and $\ov T :=  \underset{z \sim {\cal Z}}{\mathbb{E}}[T(z)T(z)^\top] $. Suppose $T(z)$ satisfies the following properties.
\begin{align*}
\mathrm{(\RN{1})} \quad & \quad \underset{z \sim {\cal Z}}{\Pr}\left[ \left\| T(z) \right\| \leq  m_z \right] \geq 1 - \gamma_z; \\ 
\mathrm{(\RN{2})} \quad & \quad \max_{\| a\|=1} \left( \underset{z \sim {\cal Z}}{\mathbb{E}} \left[ \left( a^\top T(z) T(z)^\top a \right)^2 \right]  \right)^{1/2} \leq L_z;\\
\mathrm{(\RN{3})} \quad & \quad  \left\| \underset{z \sim {\cal Z}}{\mathbb{E}} [ T(z) T(z)^\top T(z) T(z)^\top ] \right\| \leq \nu_z,
\end{align*}
Then for any $t>1$, if
\begin{align*}
& n \geq  ( 18 t \log d  ) \cdot ( \nu_z + (\| \ov{T} \|+\epsilon)^2 + m^2_z  )  / \epsilon^2  \quad  \text{~and~} \quad \gamma_z \leq ( \epsilon /(2L_z) )^2, \\
\end{align*} 
we have with probability at least $1-d^{-2t} - n\gamma_z$,
\begin{align*}
\left\| \frac{1}{n}\sum_{i \in [n]} T(z_i)T(z_i)^\top \right\| \leq  \left\| \underset{z \sim {\cal Z}}{\mathbb{E}}[T(z)T(z)^\top]  \right\|  + \epsilon. 
 \end{align*}
\end{lemma}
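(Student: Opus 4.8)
The plan is to deduce this statement as a direct application of the unbounded matrix Bernstein inequality (Lemma~\ref{lem:modified_bernstein_non_zero}) to the i.i.d.\ random matrices $B_i := T(z_i)T(z_i)^\top$. Writing $\overline B := \E_{z\sim\mathcal{Z}}[T(z)T(z)^\top] = \overline T$, it is enough to show that $\big\|\tfrac1n\sum_{i\in[n]} B_i - \overline T\big\| \le \epsilon$ holds with the claimed probability, since then the triangle inequality gives $\big\|\tfrac1n\sum_{i\in[n]} T(z_i)T(z_i)^\top\big\| \le \|\overline T\| + \epsilon$. So the whole argument reduces to checking that the law of $B := T(z)T(z)^\top$ meets the three structural requirements of Lemma~\ref{lem:modified_bernstein_non_zero} and then reading off the sample-complexity condition it produces.

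First I would translate the three assumptions on $T(z)$ into the parameters $m,\nu,L,\gamma$ that Lemma~\ref{lem:modified_bernstein_non_zero} needs for $B$. For the boundedness requirement, $\|B\| = \|T(z)T(z)^\top\| = \|T(z)\|^2$, so assumption (I) of the present lemma yields $\Pr[\|B\|\le m_z^2]\ge 1-\gamma_z$; hence $m = m_z^2$ and $\gamma=\gamma_z$. For the variance requirement, $B$ is symmetric, so $BB^\top=B^\top B = T(z)T(z)^\top T(z)T(z)^\top$, and assumption (III) of the present lemma gives $\max(\|\E[BB^\top]\|,\|\E[B^\top B]\|)\le\nu_z$; hence $\nu=\nu_z$. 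The remaining requirement --- the bound on the bilinear second moment $\max_{\|a\|=\|b\|=1}(\E[(a^\top B b)^2])^{1/2}$ --- is the one point that needs a small computation: for unit vectors $a,b$, two applications of Cauchy--Schwarz give
\begin{align*}
\E_{z\sim\mathcal{Z}}\big[(a^\top B b)^2\big]
&\le \E_{z\sim\mathcal{Z}}\big[(a^\top T(z)T(z)^\top a)\,(b^\top T(z)T(z)^\top b)\big] \\
&\le \big(\E[(a^\top T(z)T(z)^\top a)^2]\big)^{1/2}\,\big(\E[(b^\top T(z)T(z)^\top b)^2]\big)^{1/2}
\;\le\; L_z^2,
\end{align*}
using assumption (II) of the present lemma; hence $L = L_z$ works.

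Finally I would substitute $m=m_z^2$, $\nu=\nu_z$, $L=L_z$, $\gamma=\gamma_z$ and $\overline B=\overline T$ into Lemma~\ref{lem:modified_bernstein_non_zero}. The matrix $B$ is square of dimension at most $d$ (the number of rows of $T$ is at most the sum of its two dimensions), so the dimension factor there contributes only an $O(\log d)$; and in the natural regime $\epsilon\le 1$ one has $m_z^2\epsilon\le m_z^2$. Consequently the hypotheses $n \gtrsim t\log d\,(\nu_z + (\|\overline T\|+\epsilon)^2 + m_z^2)/\epsilon^2$ and $\gamma_z\le(\epsilon/(2L_z))^2$ assumed here are enough to invoke Lemma~\ref{lem:modified_bernstein_non_zero}, which then yields $\big\|\tfrac1n\sum_i B_i-\overline T\big\|\le\epsilon$ with probability at least $1-d^{-2t}-n\gamma_z$, completing the proof. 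The main (essentially only) obstacle is the bilinear-moment bound above: unlike in the classical bounded matrix Bernstein inequality one cannot control $\|B\|$ pointwise, so one genuinely needs the fourth-moment-type control of $a^\top T(z)T(z)^\top a$ furnished by assumption (II), fed through Cauchy--Schwarz; everything else is bookkeeping of how the parameters attached to $T(z)$ propagate to $T(z)T(z)^\top$.
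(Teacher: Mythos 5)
Your proposal is correct and matches the paper's approach: the paper's proof is literally the one line ``directly follows by applying Lemma~\ref{lem:modified_bernstein_non_zero}'' to $B_i = T(z_i)T(z_i)^\top$, and you have simply spelled out the parameter translation (including the Cauchy--Schwarz step for the bilinear moment and the $m_z^2\epsilon \le m_z^2$ remark) that this application requires.
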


\begin{proof}
The proof directly follows by applying Lemma~\ref{lem:modified_bernstein_non_zero}.
\end{proof}

  \begin{lemma}[Tail Bound for partially-observed rating matrix]\label{lemma:two_bernstein}
Given $\{x_i\}_{i \in [n_1]}$ and $\{y_j\}_{j \in [n_2]}$, let's denote $S := \{(x_i, y_j)\}_{i\in[n_1], j\in[n_2]}$ as the collection of all the $(x_i,y_j)$ pairs. Let $\Omega$ also be a collection of $(x_i,y_j)$ pairs, where each pair is sampled from $S$ independently and uniformly. Let $B(x,y)$ be a matrix of $x,y$. Let $d_B$ be the sum of the two dimensions of $B(x,y)$. Define $\ov{B}_S = \frac{1}{|S| } \sum_{(x,y) \in S} B(x,y)$. Assume the following,
\begin{align*}
\mathrm{(\RN{1})} \quad & \quad \| B(x,y) \| \leq m_B,  \forall (x,y) \in S,\\
\mathrm{(\RN{2})} \quad & \quad \max \left( \left\|\frac{1}{|S|}\sum_{(x,y)\in S} B(x,y)B(x,y)^\top \right\|,\; \left\|\frac{1}{|S|}\sum_{(x,y)\in S} B(x,y)^\top B(x,y) \right\| \right) \leq \nu_B.
\end{align*}

Then we have for any $\epsilon >0$ and $t\geq 1$, if
\begin{align*}
|\Omega| \geq  ( 18 t \log d_B  ) \cdot ( \nu_B  + \|\ov{B}_S\|^2+ m_B  \epsilon )  / \epsilon^2 ,
\end{align*} 
with probability  at least $1-d_B^{-2t}$,
\begin{align*}
\left\| \ov{B}_S - \frac{1}{|\Omega| } \sum_{(x,y) \in \Omega} B(x,y) \right\|  \leq \epsilon.
 \end{align*}
\end{lemma}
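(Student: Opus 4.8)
The plan is to recognize the sum over $\Omega$ as an average of i.i.d.\ bounded random matrices and invoke the (rectangular) matrix Bernstein inequality of Tropp~\cite{t11} directly, exactly as in the proof of Lemma~\ref{lem:modified_bernstein_non_zero} but now in the genuinely bounded, genuinely i.i.d.\ setting. Write $\Omega = \{(x^{(1)},y^{(1)}),\dots,(x^{(N)},y^{(N)})\}$ with $N = |\Omega|$, where each pair is drawn independently and uniformly from $S$. Because the sampling is with replacement (conditioned on $S$), the matrices $B_k := B(x^{(k)},y^{(k)})$ are i.i.d.\ with common expectation $\E[B_1] = \frac{1}{|S|}\sum_{(x,y)\in S}B(x,y) = \ov{B}_S$. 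First I would center: set $Y_k := B_k - \ov{B}_S$, so that $\E[Y_k] = 0$ and $\frac1N\sum_{k=1}^N B_k - \ov{B}_S = \frac1N\sum_{k=1}^N Y_k$.

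Next I would check the two hypotheses of matrix Bernstein. For the a.s.\ bound, assumption $\mathrm{(\RN{1})}$ gives $\|B_k\| \le m_B$, and by convexity of the norm $\|\ov{B}_S\| \le \frac{1}{|S|}\sum_{(x,y)\in S}\|B(x,y)\| \le m_B$, so $\|Y_k\| \le 2m_B$ almost surely. For the variance, $\E[Y_1 Y_1^\top] = \E[B_1 B_1^\top] - \ov{B}_S\,\ov{B}_S^\top = \frac{1}{|S|}\sum_{(x,y)\in S}B(x,y)B(x,y)^\top - \ov{B}_S\,\ov{B}_S^\top$, so by the triangle inequality and assumption $\mathrm{(\RN{2})}$, $\|\E[Y_1 Y_1^\top]\| \le \nu_B + \|\ov{B}_S\|^2$, and symmetrically $\|\E[Y_1^\top Y_1]\| \le \nu_B + \|\ov{B}_S\|^2$. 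Hence the matrix variance statistic of $\sum_{k=1}^N Y_k$ is at most $N(\nu_B + \|\ov{B}_S\|^2)$, and each summand has norm at most $2m_B$.

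Then I would apply the Bernstein tail bound, passing through the standard self-adjoint dilation of the $Y_k$ so that the ambient dimension entering the bound is $d_B = d_1+d_2$ (not $\min(d_1,d_2)$): for any $s>0$,
\[
\Pr\!\left[\Big\|\textstyle\sum_{k=1}^N Y_k\Big\| \ge s\right] \le d_B\exp\!\left(-\frac{s^2/2}{N(\nu_B + \|\ov{B}_S\|^2) + 2m_B s/3}\right).
\]
Taking $s = N\epsilon$ converts the left side into $\Pr[\|\ov{B}_S - \frac1N\sum_k B_k\| \ge \epsilon]$ and the exponent into $\frac{N\epsilon^2/2}{\nu_B + \|\ov{B}_S\|^2 + 2m_B\epsilon/3}$. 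Substituting the hypothesis $N \ge 18t\log d_B\,(\nu_B + \|\ov{B}_S\|^2 + m_B\epsilon)/\epsilon^2$ and using $m_B\epsilon \ge \tfrac23 m_B\epsilon$ shows this exponent is at least $9t\log d_B$, so the failure probability is at most $d_B^{\,1-9t} \le d_B^{-2t}$ for all $t \ge 1$, which is the claimed bound.

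There is no real obstacle here — this is the simplest of the Bernstein-type estimates in the paper. The only points needing a little care are the dimension bookkeeping in the dilation and being slightly generous with constants so that the clean hypothesis $N \gtrsim t\log d_B\,(\nu_B+\|\ov{B}_S\|^2+m_B\epsilon)/\epsilon^2$ suffices (rather than the $(\epsilon + \|\ov{B}_S\|)^2$ that a literal appeal to Lemma~\ref{lem:modified_bernstein_non_zero} with $\gamma=0$ would give). In particular, unlike Lemma~\ref{lemma:two_bernstein_E_Omega}, there is \emph{no} two-stage dependency to untangle: conditioned on the feature sets, the draws forming $\Omega$ are exactly i.i.d., so a single application of matrix Bernstein finishes the proof.
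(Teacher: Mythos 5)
Your proposal is correct and follows essentially the same route as the paper, which simply notes that $\E_{\Omega}\bigl[\frac{1}{|\Omega|}\sum_{(x,y)\in\Omega}B(x,y)\bigr]=\ov{B}_S$ and cites the matrix Bernstein inequality (Theorem 6.1 of \cite{t11}); you have merely filled in the centering, the $2m_B$ bound, the $\nu_B+\|\ov{B}_S\|^2$ variance bound, and the arithmetic that the paper leaves implicit.
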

\begin{proof}

Since each entry in $\Omega$ is sampled from $S$ uniformly and independently, we have
\begin{align*}
\E_{\Omega} \left[ \frac{1}{|\Omega| } \sum_{(x,y) \in \Omega} B(x,y) \right] = \frac{1}{|S| } \sum_{(x,y) \in S} B(x,y) .
 \end{align*}
Applying the matrix Bernstein inequality Theorem 6.1 in \cite{t11}, we prove this lemma.   
\end{proof}

\begin{lemma}\label{lemma:near_ground_pop_pop}
For sigmoid/tanh activation function,
\begin{align*}
 \|H(U,V)  - \nabla^2f_{\mathcal{D}}(U^*,V^*)\|  \lesssim (\|V - V^*\| + \|U - U^*\|),
\end{align*}
where $H(U,V) $ is defined as in Eq.~\eqref{eq:H_def}. 

For ReLU activation function,
\begin{align*} 
\|H(U,V)  - \nabla^2f_{\mathcal{D}}(U^*,V^*)\| 
  \lesssim  \left(\left(\frac{\|V - V^*\|}{\sigma_k(V^*)}\right)^{1/2} \|U^*\|+ \left(\frac{\|U - U^*\|}{\sigma_k(U^*)}\right)^{1/2} \|V^*\|\right) (\|U^*\| + \|V^*\|).
\end{align*}
\end{lemma}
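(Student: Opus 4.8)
The plan is to first reduce the statement to a perturbation estimate for $H$ alone. Looking at the block formulas in Eq.~\eqref{eq:Hessian_blocks}, the only terms of $\nabla^2 f_{\mathcal D}(U,V)$ that are not already present in $H(U,V)$ of Eq.~\eqref{eq:H_def} are the $\delta_{ij}$ terms, and each of these carries the scalar factor $h_{x,y}(U,V)=\phi(U^\top x)^\top\phi(V^\top y)-\phi(U^{*\top}x)^\top\phi(V^{*\top}y)$, which vanishes identically at $(U,V)=(U^*,V^*)$. Hence $\nabla^2 f_{\mathcal D}(U^*,V^*)=H(U^*,V^*)$, and it suffices to bound $\|H(U,V)-H(U^*,V^*)\|$. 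Since $k$ is a constant, the operator norm of the $(2kd)\times(2kd)$ difference is, up to a constant, the sum of the operator norms of the $\mathcal O(k^2)$ individual $d\times d$ blocks, so I would bound one representative block of each of the two types ($u_iu_j$ and $u_iv_j$) and sum.

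For a block I would write it as $\E_{x,y}[P_{ij}(x,y)M(x,y)]$, where $M$ is $xx^\top$ or $xy^\top$ and $P_{ij}$ is a product of four scalars of the form $\phi^{(\cdot)}(u^\top x)$ or $\phi^{(\cdot)}(v^\top y)$. For unit vectors $a$ (resp. $a,b$), $a^\top(\text{block diff})a=\E_{x,y}[(P_{ij}(U,V)-P_{ij}(U^*,V^*))(a^\top x)^2]$, and I would expand the difference by the telescoping identity $\alpha_1\alpha_2\alpha_3\alpha_4-\alpha_1^*\alpha_2^*\alpha_3^*\alpha_4^*=\sum_{r}\alpha_1^*\cdots\alpha_{r-1}^*(\alpha_r-\alpha_r^*)\alpha_{r+1}\cdots\alpha_4$, so each summand differs from the ground truth in exactly one factor. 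Crucially, in every resulting expectation the $x$-dependent factors and the $y$-dependent factors separate because $x\perp y$, which prevents cross-moments from appearing.

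For sigmoid/tanh, $\phi$ and $\phi'$ are bounded by an absolute constant and Lipschitz (since $\phi''$ is bounded), so $|\phi(u_i^\top x)-\phi(u_i^{*\top}x)|\lesssim|(u_i-u_i^*)^\top x|$ and similarly for $\phi'$ and the $v,y$ factors. Each telescoping summand is then bounded by $\E_{x,y}[\,|(u_i-u_i^*)^\top x|\,(a^\top x)^2\,]$ (or the analogous $v$-version), which by Cauchy--Schwarz and the fourth moment of a one-dimensional Gaussian projection is $\lesssim\|u_i-u_i^*\|$; the $y$-only summands factor as $\E[|(v_i-v_i^*)^\top y|]\cdot\E[(a^\top x)^2]\lesssim\|v_i-v_i^*\|$. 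Summing over the four summands and the $\mathcal O(k^2)$ blocks, and using $\|u_i-u_i^*\|\le\|U-U^*\|$ etc., gives $\|H(U,V)-H(U^*,V^*)\|\lesssim\|U-U^*\|+\|V-V^*\|$. For ReLU the bounded-and-Lipschitz structure fails only for $\phi'=\mathbf 1[\,\cdot\ge 0\,]$: here $\phi'(u_i^\top x)-\phi'(u_i^{*\top}x)=\pm\mathbf 1[x\text{ lies in the wedge between the hyperplanes }u_i^\perp\text{ and }u_i^{*\perp}]$, and for $x\sim\mathcal N(0,I_d)$ this event has probability $\theta(u_i,u_i^*)/\pi\lesssim\|u_i-u_i^*\|/\|u_i^*\|\le\|u_i-u_i^*\|/\sigma_k(U^*)$ since $\sigma_k(U^*)\le\|u_i^*\|$. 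Applying Cauchy--Schwarz to such a summand peels off this probability as a square root $(\|u_i-u_i^*\|/\sigma_k(U^*))^{1/2}$, while the remaining factors, no longer bounded but satisfying $|\phi(v_i^\top y)|\le|v_i^\top y|$ etc., contribute only controlled low-order moments of size $\|v_i\|,\|v_j\|\lesssim\|V^*\|$ (using $\|V-V^*\|$ small, so $\|v_i\|\le\|v_i^*\|+\|v_i-v_i^*\|\lesssim\|V^*\|$); the $\phi$-difference summands behave as in the smooth case and are even smaller, absorbed via $\sigma_k(U^*)=\sigma_k(V^*)=1$. Collecting the $u$-flip and $v$-flip contributions from both block types yields the stated bound.

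The main obstacle is the ReLU case. One must (i) correctly identify each $\phi'$-difference with a Gaussian wedge event and bound its probability by the angle, which in turn is bounded via $\sigma_k$; and (ii) carefully bookkeep, in each telescoping summand, which factors are genuinely small, which are bounded, and which are merely moment-controlled but potentially as large as $\|U^*\|$ or $\|V^*\|$, so that the "largeness" lands on the right powers of $\|U^*\|,\|V^*\|$ while the "smallness" appears only under a square root. The $x$--$y$ separation in each expectation has to be invoked throughout to keep the Gaussian moment bounds from deteriorating.
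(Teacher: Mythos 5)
Your proposal is correct and follows essentially the same route as the paper: it reduces the statement to bounding $\|H(U,V)-H(U^*,V^*)\|$ block by block via a one-factor-at-a-time telescoping, uses Lipschitzness of $\phi,\phi'$ plus H\"older/Cauchy--Schwarz on Gaussian moments for sigmoid/tanh, and for ReLU bounds the $\phi'$-difference by the probability of the wedge event between the hyperplanes $u_i^\perp$ and $u_i^{*\perp}$ (which is exactly the paper's ``separated by an exceptional point'' argument in Lemma~\ref{lemma:pwl_expect_bound}), extracting the $(\|U-U^*\|/\sigma_k(U^*))^{1/2}$ factor by Cauchy--Schwarz. The bookkeeping of the $\|U^*\|,\|V^*\|$ factors matches the paper's Lemmas~\ref{lemma:phiphi_expect}--\ref{lemma:phiphiprime_expect_x_sigmoid}.
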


\begin{proof}
We can bound each block, i.e.,
\begin{align}\label{eq:xx_expect} 
 & \E_{x,y} \left[ \phi'(u_i^{\top} x) \phi'(u_j^{\top} x) xx^\top \phi(v_i^{\top} y )\phi(v_j^{\top} y) -  \phi'(u_i^{*\top} x) \phi'(u_j^{*\top} x) xx^\top \phi(v_i^{*\top} y )\phi(v_j^{*\top} y) \right]  .
  \end{align}
\begin{align}\label{eq:xy_expect}
\E_{x,y} \left[ \phi'(u_i^{\top} x) \phi'(v_j^{\top} y) xy^\top \phi(v_i^{\top} y )\phi(u_j^{\top} x) -  \phi'(u_i^{*\top} x) \phi'(v_j^{*\top} y) xy^\top \phi(v_i^{*\top} y )\phi(u_j^{*\top} x) \right]  .
\end{align}

For smooth activations, the bound for Eq.~\eqref{eq:xx_expect} follows by combining Lemma~\ref{lemma:phiphi_expect} and  Lemma~\ref{lemma:smooth_expect_bound} and the bound for Eq.~\eqref{eq:xy_expect} follows
Lemma~\ref{lemma:phiphiprime_expect} and Lemma~\ref{lemma:phiphiprime_expect_x_sigmoid}.
For ReLU activation, the bound for Eq.~\eqref{eq:xx_expect} follows by combining Lemma~\ref{lemma:phiphi_expect}, Lemma~\ref{lemma:pwl_expect_bound} and the bound for Eq.~\eqref{eq:xy_expect} follows
Lemma~\ref{lemma:phiphiprime_expect} and Lemma~\ref{lemma:phiphiprime_expect_x_relu}. 
\end{proof}

\begin{lemma}\label{lemma:phiphi_expect}
\begin{align*}
\left\| \E_{y \sim \D_d} \left[(\phi(v_i^\top y) - \phi(v_i^{*\top}y))\phi(v_j^\top y) \right] \right\| 
 \lesssim \|V^*\|^p \|V - V^*\|.
\end{align*}
\end{lemma}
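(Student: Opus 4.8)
The plan is to reduce the statement to elementary properties of the activation together with a Gaussian moment computation, handling the smooth ($p=0$) and ReLU ($p=1$) cases in parallel. First I would note that the quantity inside the expectation is a scalar, so $\|\cdot\|$ here is just absolute value, and then record the two facts that both activation families share: $\phi$ is $1$-Lipschitz, and $|\phi(z)|\le |z|^p$ (for sigmoid/tanh because $|\phi|\le 1=|z|^0$; for ReLU because $\phi(z)=\max\{z,0\}$ so $|\phi(z)|\le |z|$).

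Next I would establish a pointwise bound: by Lipschitzness $|\phi(v_i^\top y)-\phi(v_i^{*\top}y)|\le |(v_i-v_i^*)^\top y|$, and by the growth bound $|\phi(v_j^\top y)|\le |v_j^\top y|^p$, so the integrand is at most $|(v_i-v_i^*)^\top y|\cdot |v_j^\top y|^p$. Then I would take expectations over $y\sim\D_d$ and apply H\"older's inequality (Cauchy--Schwarz in the ReLU case, nothing at all in the smooth case) to get the upper bound $\big(\E_{y}[((v_i-v_i^*)^\top y)^2]\big)^{1/2}\big(\E_{y}[(v_j^\top y)^{2p}]\big)^{1/2}$. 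Since $y\sim\N(0,I_d)$, these Gaussian moments are $\|v_i-v_i^*\|^2$ and $\Theta(\|v_j\|^{2p})$ respectively (the latter being $1$ when $p=0$), which yields a bound of $\|v_i-v_i^*\|\cdot\|v_j\|^p\le \|V-V^*\|\cdot\|v_j\|^p$.

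Finally, to match the stated form I would convert $\|v_j\|$ into $\|V^*\|$: for $p=0$ there is nothing to do, and for $p=1$ I would write $\|v_j\|\le\|V\|\le\|V^*\|+\|V-V^*\|\lesssim\|V^*\|$, using that this lemma is only invoked in the near-ground-truth regime where $\|V-V^*\|\lesssim 1\le\sigma_k(V^*)\le\|V^*\|$ (cf.\ Lemma~\ref{lemma:near_ground_pop_pop}); hence $\|V-V^*\|\cdot\|v_j\|\lesssim\|V^*\|\,\|V-V^*\|$, completing the proof.

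I do not expect a serious obstacle; the only delicate points are (i) choosing the H\"older exponents correctly so the $|v_j^\top y|^p$ factor does not inflate the power of $\|V-V^*\|$, and (ii) the passage from $\|v_j\|$ (a column of the perturbed $V$) to $\|V^*\|$ in the ReLU case, which tacitly uses that the perturbation is small compared with $\|V^*\|$.
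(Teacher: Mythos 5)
Your proposal is correct and matches the paper's (one-line) proof, which likewise just invokes the growth bound $|\phi(z)|\le|z|^p$, Lipschitzness, and H\"older/Cauchy--Schwarz. You are in fact more careful than the paper on one point: the computation naturally yields $\|v_j\|^p\|V-V^*\|$ rather than $\|V^*\|^p\|V-V^*\|$, and your explicit appeal to the near-ground-truth regime ($\|V-V^*\|\lesssim 1\le\|V^*\|$) to pass from $\|V\|$ to $\|V^*\|$ in the ReLU case is the right way to close that gap, which the paper leaves implicit.
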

\begin{proof}
The proof follows the property of the activation function ($\phi(z)\leq |z|^p$) and H\"{o}lder's inequality. 
\end{proof}

\begin{lemma}\label{lemma:smooth_expect_bound}
When the activation function is smooth, we have 
\begin{align*}
\left\| \E_{x \sim \D_d} \left[ (\phi'(u_i^{\top} x) - \phi'(u_i^{*\top} x) ) \phi'(u_l^{\top} x) xx^\top \right] \right\| 
 \lesssim\|U - U^*\|.
\end{align*}
\end{lemma}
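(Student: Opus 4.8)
The plan is to bound the operator norm of the expectation \emph{through the quadratic form}, rather than by pulling the norm inside the expectation: the latter would replace $(a^\top x)^2$ by $\|x\|^2$ and hence cost an extra factor of $\E\|x\|^2 = d$, which is too lossy for the claimed dimension-free bound. Here ``smooth'' means, as for sigmoid and $\tanh$, that $\phi'$ is bounded, $|\phi'|\le C_1$, and Lipschitz, $|\phi'(s)-\phi'(t)|\le C_2|s-t|$ (equivalently $\phi''$ is bounded).

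First, observe that $M(x) := (\phi'(u_i^{\top} x) - \phi'(u_i^{*\top} x))\,\phi'(u_l^{\top} x)\, xx^\top$ is symmetric for every $x$, so $\big\| \E_{x\sim\D_d}[M(x)] \big\| = \max_{\|a\|=1} \big| a^\top \E_{x\sim\D_d}[M(x)]\, a \big|$. Fix a unit vector $a$ and set $w := u_i - u_i^*$; we may assume $w\neq 0$ since otherwise the left-hand side vanishes. Using boundedness and the Lipschitz property of $\phi'$,
\begin{align*}
\big| a^\top \E_{x\sim\D_d}[M(x)]\, a \big| \le C_1 C_2\, \E_{x\sim\D_d}\!\big[\, |w^\top x|\,(a^\top x)^2 \,\big].
\end{align*}

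To evaluate this Gaussian expectation dimension-freely I would decompose $a = c\, w + a_\perp$ with $c = \langle a,w\rangle/\|w\|^2$ (so $|c|\le 1/\|w\|$ by Cauchy--Schwarz) and $a_\perp \perp w$, $\|a_\perp\|\le 1$. Since $w^\top x$ and $a_\perp^\top x$ are then independent Gaussians, expanding $(a^\top x)^2 = c^2 (w^\top x)^2 + 2c\,(w^\top x)(a_\perp^\top x) + (a_\perp^\top x)^2$ and taking expectations kills the cross term (it is odd in the independent variable $a_\perp^\top x$), leaving
\begin{align*}
\E_{x\sim\D_d}\!\big[\, |w^\top x|\,(a^\top x)^2 \,\big] = c^2\, \E|w^\top x|^3 + \E|w^\top x|\cdot \E(a_\perp^\top x)^2 \lesssim \frac{1}{\|w\|^2}\cdot \|w\|^3 + \|w\|\cdot 1 \lesssim \|w\|,
\end{align*}
where I use $\E|g|^3 = 2\sqrt{2/\pi}\,\|w\|^3$ and $\E|g| = \sqrt{2/\pi}\,\|w\|$ for $g\sim \mathcal{N}(0,\|w\|^2)$, and $\E(a_\perp^\top x)^2 = \|a_\perp\|^2 \le 1$. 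Taking the maximum over unit $a$ and noting $\|w\| = \|u_i - u_i^*\| \le \|U - U^*\|$ completes the argument.

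The main obstacle is precisely obtaining this $d$-free estimate: any cruder route (Hölder applied separately to the three scalar factors, or $\|\E[M]\|\le \E\|M\|$) introduces $\E\|x\|^2 = d$. The remedy is to keep the test vector $a$ present throughout so that the quadratic factor is $(a^\top x)^2$ with $O(1)$ expectation, and to exploit the joint Gaussianity of $(w^\top x,\, a^\top x)$ so that the correlation between $|w^\top x|$ and $(a^\top x)^2$ contributes only an $O(\|w\|)$ factor independent of the dimension.
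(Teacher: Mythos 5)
Your proof is correct. The reduction to the quadratic form $\max_{\|a\|=1}|a^\top\E[M(x)]a|$ is legitimate because $M(x)$ is a scalar multiple of $xx^\top$ and hence symmetric; the Lipschitz/boundedness step costs only absolute constants for sigmoid/tanh; and the decomposition $a=cw+a_\perp$ with $w^\top x\perp a_\perp^\top x$ (independent, being uncorrelated jointly Gaussian variables) correctly kills the cross term and yields $\E[|w^\top x|(a^\top x)^2]\lesssim\|w\|$ with no dimension factor. The final step $\|u_i-u_i^*\|\le\|U-U^*\|$ is the standard column bound for the spectral norm.

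The paper itself does not carry out this computation: its proof of this lemma is a one-line citation to Eq.~(12) of Lemma~D.10 in \cite{zsjbd17}, which establishes exactly this type of estimate for one-hidden-layer networks. Your argument is therefore a self-contained replacement for that external lemma, and it isolates the right difficulty: naively pulling the norm inside the expectation, or applying H\"older separately to the three scalar factors, would introduce $\E\|x\|^2=d$ and ruin the near-linear sample complexity downstream. Keeping the test vector $a$ inside the expectation and exploiting the two-dimensional Gaussian structure of $(w^\top x, a^\top x)$ is precisely what makes the bound dimension-free, and it is the same mechanism the cited lemma relies on. Nothing is missing.
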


\begin{proof}
The proof directly follows Eq.~{(12)} in Lemma D.10 in \cite{zsjbd17}.
\end{proof}

\begin{lemma}\label{lemma:pwl_expect_bound}
When the activation function is piece-wise linear with $e$ turning points, we have 
\begin{align*}
\left\| \E_{x \sim \D_d} \left[ (\phi'(u_i^{\top} x) - \phi'(u_i^{*\top} x) ) \phi'(u_l^{\top} x) xx^\top \right] \right\| 
 \lesssim  (e \|U-U^*\|/\sigma_k(U^*))^{1/2 }.
\end{align*}
\end{lemma}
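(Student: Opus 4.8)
The strategy is to turn this operator-norm estimate into a one-dimensional Gaussian anti-concentration bound. Write $M := \E_{x\sim\D_d}[(\phi'(u_i^\top x) - \phi'(u_i^{*\top} x))\,\phi'(u_l^\top x)\,xx^\top]$. Since $M$ is a scalar weight times $xx^\top$, it is symmetric, so $\|M\| = \max_{\|a\|=1}|a^\top M a| = \max_{\|a\|=1}\big|\E_{x\sim\D_d}[(\phi'(u_i^\top x) - \phi'(u_i^{*\top}x))\phi'(u_l^\top x)(a^\top x)^2]\big|$. Because $\phi$ is piece-wise linear with bounded slopes (for ReLU, $\phi'\in\{0,1\}$), we have $|\phi'(u_l^\top x)|\lesssim 1$; moreover, letting $p_1,\dots,p_e$ be the turning points of $\phi$, the derivative $\phi'$ is constant on each interval between consecutive $p_j$'s, so $|\phi'(u_i^\top x) - \phi'(u_i^{*\top}x)| \lesssim \mathbf{1}_E$, where $E$ is the event that some $p_j$ lies strictly between $u_i^\top x$ and $u_i^{*\top}x$. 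Hence $\|M\| \lesssim \max_{\|a\|=1}\E_{x\sim\D_d}[\mathbf{1}_E (a^\top x)^2]$.

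By Cauchy--Schwarz together with $\E_{x\sim\D_d}[(a^\top x)^4] = 3\|a\|^4$, we get $\E[\mathbf{1}_E(a^\top x)^2] \le \Pr[E]^{1/2}\cdot\sqrt 3\,\|a\|^2$, so it remains to prove $\Pr[E]\lesssim e\,\|U-U^*\|/\sigma_k(U^*)$. Writing $E = \bigcup_{j=1}^e E_j$ with $E_j := \{(u_i^\top x - p_j)(u_i^{*\top}x - p_j) < 0\}$, a union bound reduces this to showing $\Pr[E_j]\lesssim \|u_i-u_i^*\|/\|u_i\|$ for every $j$, uniformly in the threshold $p_j$; one then finishes using $\|u_i\| = \|U e_i\|\ge\sigma_k(U)\ge\sigma_k(U^*)-\|U-U^*\|\gtrsim\sigma_k(U^*)$ (valid because $\|U-U^*\|$ is assumed small) and $\|u_i-u_i^*\|\le\|U-U^*\|$. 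This mirrors the smooth estimate in Lemma~\ref{lemma:smooth_expect_bound}, with the Lipschitz/derivative bound there replaced by the indicator of the ``crossing'' event here.

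The heart of the argument is the per-threshold bound on $\Pr[E_j]$, and this is the step I expect to be the main obstacle. Conditioning on $u_i^\top x = s$ (which is $\mathcal N(0,\|u_i\|^2)$), the variable $u_i^{*\top}x$ is Gaussian with mean $\rho s$ and variance $\|w\|^2$, where $u_i^* = \rho u_i + w$ with $w\perp u_i$, and one has $|\rho-1|\le\|u_i-u_i^*\|/\|u_i\|$ and $\|w\|\le\|u_i-u_i^*\|$; in particular $u_i^{*\top}x - u_i^\top x = (\rho-1)u_i^\top x + w^\top x$ is sub-Gaussian with proxy $\lesssim\|u_i-u_i^*\|$. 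On $E_j$ one has $|u_i^{*\top}x - u_i^\top x| \ge |u_i^\top x - p_j|$, so for any $\tau>0$, $\Pr[E_j]\le\Pr[|u_i^\top x - p_j|\le\tau] + \Pr[|u_i^{*\top}x - u_i^\top x| > \tau]$; the first term is $\lesssim\tau/\|u_i\|$ by boundedness of the Gaussian density, and the second is $\lesssim\exp(-c\tau^2/\|u_i-u_i^*\|^2)$. Choosing $\tau\asymp\|u_i-u_i^*\|\sqrt{\log(\|u_i\|/\|u_i-u_i^*\|)}$ gives $\Pr[E_j]\lesssim\|u_i-u_i^*\|/\|u_i\|$ up to a logarithmic factor, which is harmless once we take the square root (and for ReLU, where $e=1$ and $p_1=0$, this reduces to the classical fact that the sign-disagreement probability equals the angle between $u_i$ and $u_i^*$ divided by $\pi$, which is $\lesssim\|u_i-u_i^*\|/\|u_i\|$ with no log). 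The delicate point is exactly this uniformity over all $p_j\in\mathbb R$: for a nonzero threshold one must balance the anti-concentration term near $p_j$ against the sub-Gaussian tail term away from it, whereas the $p_j=0$ case is pure sign disagreement. Assembling the pieces yields $\|M\|\lesssim\sqrt{e\,\|U-U^*\|/\sigma_k(U^*)}$, as claimed.
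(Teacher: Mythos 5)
Your proposal is correct in outline and follows the same skeleton as the paper's proof up through the reduction to $\max_{\|a\|=1}\E[\bone_{E}(a^\top x)^2]$, the Cauchy--Schwarz step against the fourth moment, and the union bound over the $e$ turning points. Where you genuinely diverge is in the per-threshold anti-concentration bound, and this is also where your argument falls slightly short of the stated inequality. You bound $\Pr[E_j]$ by splitting into $\Pr[|u_i^\top x - p_j|\le \tau]$ plus a sub-Gaussian tail for $(u_i-u_i^*)^\top x$ and optimizing $\tau$; as you concede, this yields $\Pr[E_j]\lesssim \frac{\|u_i-u_i^*\|}{\|u_i\|}\sqrt{\log(\|u_i\|/\|u_i-u_i^*\|)}$, and the logarithm survives the final square root, so for a general piece-wise linear activation with nonzero turning points you prove the lemma only up to a $\log^{1/4}$ factor (for ReLU, where $e=1$ and $p_1=0$, your sign-disagreement argument does give the clean bound, and that is the only case the paper actually uses downstream). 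The paper avoids the log entirely by a different device: on the event $E_j$ it uses $|u_i^{*\top}x - p_j|\le \|u_i-u_i^*\|\,\|x\|$ and then normalizes by $\|x\|$, so that the quantity being anti-concentrated is the \emph{direction statistic} $u_i^{*\top}x/\|x\|$, which is independent of $\|x\|$ (hence of the random shift $p_j/\|x\|$) and has uniformly bounded density after first projecting $x$ onto the three-dimensional span of $\{u_i,u_i^*,u_l\}$ — in $\R^3$ the normalized inner product is exactly uniform on an interval. This scale-invariant trick handles arbitrary thresholds $p_j$ with no $\tau$-tradeoff and no log; the projection step (which your proof skips) is what makes the density bound dimension-free. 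Your route buys a more elementary, dimension-agnostic argument at the price of a harmless-in-context but technically real logarithmic loss; if you want to match the lemma verbatim for general $p_j\neq 0$, adopt the paper's $\|x\|$-normalization after restricting to the low-dimensional span.
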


\begin{proof}
\begin{align*}
 \left\| \E_{x,y} \left[ (\phi'(u_i^{\top} x) - \phi'(u_i^{*\top} x) ) \phi'(u_l^{\top} x) xx^\top \right] \right\| \leq ~  \max_{\| a\|=1} \left( \E_{x\sim \D_d} \left[  |\phi'(u_i^{\top} x) - \phi'(u_i^{*\top} x) | \phi'(u_l^{\top} x) (x^\top a)^2 \right] \right) .
\end{align*}

Let $P$ be the orthogonal basis of $\text{span}( u_i, u_i^*, u_l)$. Without loss of generality, we assume $ u_i, u_i^*, u_l$ are independent, so $P = \text{span}( u_i, u_i^*, u_l)$ is $d$-by-3. Let $[ q_i\;  q_i^*\; q_l] = P^\top [ u_i \;  u_i^* \;  u_l] \in \mathbb{R}^{3 \times 3}$. Let $ a = P b+P_\perp  c$, where $P_\perp \in \R^{d\times (d-3)}$ is the complementary matrix of $P$. 
\begin{align}\label{eq:reduce_non_smooth}
& ~\E_{x\sim \D_d} \left[ |\phi'( u_i^\top x)-\phi'( u_i^{*\top} x)| |\phi'( u_l^\top x)|  (x^\top a)^2  \right] \notag \\
= & ~\E_{x\sim \D_d} \left[ |\phi'( u_i^\top x)-\phi'( u_i^{*\top} x)| |\phi'( u_l^\top x)|  (x^\top  (Pb + P_{\bot} c) )^2 \right] \notag \\
\lesssim & ~ \E_{x\sim \D_d} \left[   |\phi'( u_i^\top x)-\phi'( u_i^{*\top} x)| |\phi'( u_l^\top x)| \left(  (x^\top P  b)^2 +(x^\top P_\perp  c )^2 \right) \right] \notag \\
= & ~ \E_{x\sim \D_d} \left[   |\phi'( u_i^\top x)-\phi'( u_i^{*\top} x)| |\phi'( u_l^\top x)|    (x^\top P  b)^2  \right] \notag \\
+ & ~ \E_{x\sim \D_d} \left[   |\phi'( u_i^\top x)-\phi'( u_i^{*\top} x)| |\phi'( u_l^\top x)|   (x^\top P_\perp  c )^2  \right] \notag \\
= & ~ \E_{z\sim \D_3} \left[   |\phi'( q_i^\top z)-\phi'( q_i^{*\top} z)| |\phi'( q_l^\top z)|   (z^\top  b)^2  \right] \notag \\
+ & ~ \E_{z\sim \D_3, y\sim \D_{d-3}} \left[    |\phi'( q_i^\top z)-\phi'( q_i^{*\top} z)| |\phi'( q_l^\top z)|    (y^\top  c )^2  \right],
\end{align}
where the first step follows by $a = Pb + P_{\bot} c$, the last step follows by $(a+b)^2 \leq 2a^2 + 2b^2$.


We have $e$ exceptional points which have $\phi''(z) \neq 0$. Let these $e$ points be $p_1,p_2,\cdots,p_e$. Note that if $ q_i^\top  z$ and $ q_i^{*\top}  z$ are not separated by any of these exceptional points, i.e., there exists no $j\in[e]$ such that $ q_i^\top  z \leq p_j \leq  q_i^{*\top}  z$ or $ q_i^{*\top}  z \leq p_j \leq  q_i^\top  z $, then we have $\phi'( q_i^\top  z) = \phi'( q_i^{*\top}  z)$ since $\phi''(s)$ are zeros except for $\{p_j\}_{j=1,2,\cdots,e}$. So we consider the probability that $ q_i^\top  z, q_i^{*\top}  z$ are separated by any exception point. We use $\xi_j$ to denote the event that $ q_i^\top  z, q_i^{*\top}  z$ are separated by an exceptional point $p_j$. By union bound, $1- \sum_{j=1}^e\Pr [ \xi_j]$ is the probability that $ q_i^\top  z, q_i^{*\top}  z$ are not separated by any exceptional point. 
The first term of Equation~\eqref{eq:reduce_non_smooth} can be bounded as,
\begin{align*}
& ~ \E_{z\sim \D_3} \left[   |\phi'( q_i^\top  z)-\phi'( q_i^{*\top}  z)| |\phi'( q_l^\top  z)|  ( z^\top  b)^2 \right] \\
= & ~ \E_{z\sim \D_3} \left[\bone_{\cup_{j=1}^e \xi_j}|\phi'( q_i^\top  z) + \phi'( q_i^{*\top}  z)| |\phi'( q_l^\top  z)|  ( z^\top  b)^2 \right] \\
\leq & ~ \left( \E_{z\sim \D_3} \left[\bone_{\cup_{j=1}^e \xi_j} \right] \right)^{1/2 } \left(\E_{z\sim \D_3} \left[ (\phi'( q_i^\top  z) + \phi'( q_i^{*\top}  z))^2 \phi'( q_l^\top  z)^2  ( z^\top  b)^4 \right] \right)^{1/2} \\
\leq & ~ \left(\sum_{j=1}^e \Pr_{z\sim \D_3} [ \xi_j ] \right)^{1/2 } \left(\E_{z\sim \D_3} \left[(\phi'( q_i^\top  z)  + \phi'( q_i^{*\top}  z))^2 \phi'( q_l^\top  z)^2  ( z^\top  b)^4 \right] \right)^{1/2} \\
\lesssim & ~ \left(\sum_{j=1}^e \Pr_{z \sim \D_3}[\xi_j] \right)^{1/2 } \| b\|^2,
\end{align*}
where the first step follows by if $ q_i^\top  z, q_i^{*\top}  z$ are not separated by any exceptional point then $\phi'( q_i^\top  z) = \phi'( q_i^{*\top} z)$ and the last step follows by H\"{o}lder's inequality.

It remains to upper bound $\Pr_{z\sim \D_3}[\xi_j]$. First note that if $ q_i^\top  z, q_i^{*\top}  z$ are separated by an exceptional point, $p_j$, then $ | q_i^{*\top}  z - p_j| \leq | q_i^\top  z- q_i^{*\top}  z|  \leq  \| q_i- q_i^{*}\| \|  z\| $. Therefore, 
\begin{align*}
\Pr_{z\sim \D_3}[\xi_j] \leq \Pr_{z\sim \D_3} \left[ \frac{| q_i^\top  z-p_j|}{\| z\|} \leq \| q_i- q_i^*\| \right].
\end{align*}

 Note that $(\frac{ q_i^{*\top}  z}{\| z\| \| q_i^*\|}+1)/2$ follows Beta(1,1) distribution which is uniform distribution on $[0,1]$. 
\begin{align*}
& ~\Pr_{z\sim \D_3} \left[\frac{| q_i^{*\top}  z - p_j|}{\| z\|\| q_i^*\| }\leq \frac{\| q_i- q_i^*\|}{\| q_i^*\|} \right] 
\leq \Pr_{z\sim \D_3} \left[ \frac{| q_i^{*\top}  z|}{\| z\|\| q_i^*\| }\leq \frac{\| q_i- q_i^*\|}{\| q_i^*\|} \right]
\lesssim \frac{\| q_i- q_i^*\|}{\| q_i^*\|} 
\lesssim  \frac{ \|U-U^*\| }{\sigma_k(U^*)},
\end{align*}
where the first step is because we can view $\frac{ q_i^{*\top}  z}{\| z\|}$ and $\frac{p_j}{\| z\|}$ as two independent random variables: the former is about the direction of $ z$ and the later is related to the magnitude of $ z$. 
Thus, we have 
\begin{align}
\E_{z\in \D_3} [   |\phi'( q_i^\top  z)-\phi'( q_i^{*\top}  z)| |\phi'( q_l^\top  z)|  ( z^\top  b)^2] 
\lesssim  (e \|U-U^*\|/\sigma_k(U^*))^{1/2 } \| b\|^2 . \label{eq:decomp_off_first_part}
\end{align}

Similarly we have 
\begin{align} \label{eq:decomp_off_second_part}
\E_{z\sim \D_3, y\sim \D_{d-3}} \left[    |\phi'( q_i^\top z)-\phi'( q_i^{*\top} z)| |\phi'( q_l^\top z)|    (y^\top  c )^2  \right]
\lesssim  (e \|U-U^*\|/\sigma_k(U^*))^{1/2 }\| c\|^2. 
\end{align}
Finally combining Eq.~\eqref{eq:decomp_off_first_part} and Eq.~\eqref{eq:decomp_off_second_part} completes the proof.

\end{proof}

\begin{lemma}\label{lemma:phiphiprime_expect}
\begin{align*}
\left\| \E_{x \sim \D_d} \left[(\phi(u_j^\top x) - \phi(u_j^{*\top}x))\phi'(u_i^\top x) x \right] \right\| \lesssim  \|U - U^*\|.
\end{align*}
\end{lemma}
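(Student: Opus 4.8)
The plan is to bound the norm of the vector
\[
w \;:=\; \E_{x\sim\D_d}\!\left[(\phi(u_j^\top x)-\phi(u_j^{*\top}x))\,\phi'(u_i^\top x)\,x\right]
\]
by dualizing, i.e.\ writing $\|w\| = \max_{\|a\|=1}\langle a,w\rangle$, and controlling
\[
\langle a,w\rangle \;=\; \E_{x\sim\D_d}\!\left[(\phi(u_j^\top x)-\phi(u_j^{*\top}x))\,\phi'(u_i^\top x)\,(a^\top x)\right]
\]
uniformly over unit vectors $a$. The two facts I would invoke are that each activation considered here (sigmoid, tanh, ReLU) is $L_\phi$-Lipschitz with $L_\phi=O(1)$, whence $|\phi(u_j^\top x)-\phi(u_j^{*\top}x)|\le L_\phi\,|(u_j-u_j^*)^\top x|$, and that $|\phi'|\le M_\phi$ with $M_\phi=O(1)$. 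Plugging these in gives $|\langle a,w\rangle|\le L_\phi M_\phi\,\E_x\big[\,|(u_j-u_j^*)^\top x|\cdot|a^\top x|\,\big]$.

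The next step is Cauchy--Schwarz (a special case of H\"older) applied to the two Gaussian linear forms:
\[
\E_x\big[\,|(u_j-u_j^*)^\top x|\cdot|a^\top x|\,\big] \;\le\; \big(\E_x[((u_j-u_j^*)^\top x)^2]\big)^{1/2}\big(\E_x[(a^\top x)^2]\big)^{1/2}.
\]
Because $x\sim\N(0,I_d)$ we have $\E_x[(v^\top x)^2]=\|v\|^2$ for any fixed $v$, so the right-hand side equals $\|u_j-u_j^*\|\cdot\|a\| = \|u_j-u_j^*\|$. Finally $\|u_j-u_j^*\| = \|(U-U^*)e_j\|\le\|U-U^*\|$, and taking the maximum over $a$ yields $\|w\|\le L_\phi M_\phi\|U-U^*\|\lesssim\|U-U^*\|$, as claimed. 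This is essentially the argument behind Lemma~\ref{lemma:phiphi_expect}, with the additional factor $\phi'(u_i^\top x)$ contributing only the harmless multiplicative bound $M_\phi$.

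The only delicate points, and the closest thing to an obstacle, are the ReLU case and the bookkeeping of activation-dependent constants. For ReLU, $\phi$ is non-differentiable only at the single point $0$, which has Gaussian measure zero, so $\phi'(u_i^\top x)$ is defined almost surely and bounded by $1$, while $\phi$ is genuinely $1$-Lipschitz; hence the same chain of inequalities applies verbatim. Concretely $L_\phi=M_\phi=1/4$ for sigmoid and $L_\phi=M_\phi=1$ for tanh and ReLU, so the prefactor $L_\phi M_\phi$ is always an absolute constant and is absorbed into the $\lesssim$. No property of $\phi$ beyond Lipschitzness and a bounded derivative is needed.
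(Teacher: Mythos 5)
Your proof is correct and follows essentially the same route as the paper's: dualize over unit vectors $a$, use the Lipschitz bound $|\phi(u_j^\top x)-\phi(u_j^{*\top}x)|\le |(u_j-u_j^*)^\top x|$ together with boundedness of $\phi'$, and finish with H\"older/Cauchy--Schwarz on the two Gaussian linear forms. The only difference is that you spell out the final Cauchy--Schwarz step and the ReLU measure-zero caveat explicitly, which the paper leaves implicit.
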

\begin{proof}
First, we can use the Lipschitz continuity of the activation function,
\begin{align*}
\left\| \E_{x \sim \D_d} \left[\phi(u_j^\top x) - \phi(u_j^{*\top}x)\phi'(u_i^\top x) x \right] \right\| 
& \leq  \max_{\|a\| = 1} \left\| \E_{x \sim \D_d} \left[|\phi(u_j^\top x) - \phi(u_j^{*\top}x)|\phi'(u_i^\top x) |x^\top a| \right] \right\|\\
 \leq  \max_{\|a\| = 1} L_{\phi} \left\| \E_{x \sim \D_d} \left[|u_j^\top x - u_j^{*\top}x|\phi'(u_i^\top x) |x^\top a| \right] \right\|,
\end{align*}
where $L_{\phi} \leq 1$ is the Lipschitz constant of $\phi$. 
Then the proof follows H\"{o}lder's inequality.
\end{proof}

\begin{lemma}\label{lemma:phiphiprime_expect_x_relu}
When the activation function is ReLU,
\begin{align*}
\left\| \E_{x \sim \D_d} \left[ \phi(u_j^{*\top}x)(\phi'(u_i^\top x)-\phi'(u_i^{*\top} x)) x \right] \right\| 
 \lesssim (\|U-U^*\|/\sigma_k(U^*))^{1/2 } \|u_j\|.
\end{align*}
\end{lemma}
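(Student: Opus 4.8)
The plan is to follow the strategy used for the non-smooth counterpart of the smooth bound, Lemma~\ref{lemma:pwl_expect_bound}, exploiting that $\mathrm{ReLU}$ is piecewise linear with a single turning point at $0$, so that $\phi'(z)=\bone_{z>0}$ and $|\phi(z)|\le|z|$. First I would turn the operator norm into a scalar quantity: writing
\[
\left\|\E_{x\sim\D_d}\!\left[\phi(u_j^{*\top}x)\,(\phi'(u_i^\top x)-\phi'(u_i^{*\top}x))\,x\right]\right\| =\max_{\|a\|=1}\;\E_{x\sim\D_d}\!\left[\phi(u_j^{*\top}x)\,(\phi'(u_i^\top x)-\phi'(u_i^{*\top}x))\,x^\top a\right],
\]
it suffices to bound $\E_{x\sim\D_d}\big[\,|\phi(u_j^{*\top}x)|\,|\phi'(u_i^\top x)-\phi'(u_i^{*\top}x)|\,|x^\top a|\,\big]$ uniformly over $\|a\|=1$. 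Since $\phi'=\bone_{(\cdot)>0}$, the middle factor is the indicator $\bone_\xi$ of the event $\xi=\{\sign(u_i^\top x)\ne\sign(u_i^{*\top}x)\}$, i.e. the event that $0$ lies strictly between $u_i^\top x$ and $u_i^{*\top}x$; crucially $\bone_\xi$ depends on $x$ only through its projection onto $\mathrm{span}(u_i,u_i^*)$.

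Next I would peel off this thin event by Cauchy--Schwarz,
\[
\E_{x}\!\left[\bone_\xi\,|\phi(u_j^{*\top}x)|\,|x^\top a|\right]\le\Pr[\xi]^{1/2}\cdot\Big(\E_{x}\big[\phi(u_j^{*\top}x)^2(x^\top a)^2\big]\Big)^{1/2},
\]
and bound the second factor using $\phi(z)^2\le z^2$ and the elementary Gaussian fourth-moment identity $\E_x[(u_j^{*\top}x)^2(x^\top a)^2]=\|u_j^*\|^2\|a\|^2+2(u_j^{*\top}a)^2\le 3\|u_j^*\|^2$, so it is $\lesssim\|u_j^*\|$. (Alternatively, one may first decompose $a=Pb+P_\perp c$ with $P$ an orthonormal basis of $\mathrm{span}(u_i,u_i^*,u_j^*)$, exactly as in Lemma~\ref{lemma:pwl_expect_bound}: the $P_\perp c$ part is independent of $\bone_\xi$ and of $\phi(u_j^{*\top}x)$, and the $Pb$ part reduces to a three-dimensional Gaussian integral, which keeps every moment constant dimension-free.)

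The crux is the anti-concentration estimate for $\Pr[\xi]$. Projecting onto $\mathrm{span}(u_i,u_i^*)$, the direction of the projected Gaussian is uniform on the circle and $\xi$ is the event that it lands in one of two opposite wedges of angular width $\angle(u_i,u_i^*)$, so $\Pr[\xi]=\angle(u_i,u_i^*)/\pi$; equivalently, as in Lemma~\ref{lemma:pwl_expect_bound}, on $\xi$ one has $|u_i^{*\top}x|\le|(u_i-u_i^*)^\top x|\le\|u_i-u_i^*\|\,\|x\|$, hence
\[
\Pr[\xi]\le\Pr_x\!\left[\frac{|u_i^{*\top}x|}{\|x\|\,\|u_i^*\|}\le\frac{\|u_i-u_i^*\|}{\|u_i^*\|}\right]\lesssim\frac{\|u_i-u_i^*\|}{\|u_i^*\|}\le\frac{\|U-U^*\|}{\sigma_k(U^*)},
\]
using that $u_i^{*\top}x/(\|x\|\|u_i^*\|)$ has bounded density near $0$ — cleanest after passing to the two- or three-dimensional subspace spanned by the relevant vectors, where this cosine is (up to an affine map) the uniform, i.e. $\mathrm{Beta}(1,1)$, law as in Lemma~\ref{lemma:pwl_expect_bound}; the large-angle case is trivial since then $\|u_i-u_i^*\|\gtrsim\|u_i^*\|$ and $\Pr[\xi]\le1$. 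Combining the three displays gives $\E_x[\bone_\xi|\phi(u_j^{*\top}x)||x^\top a|]\lesssim(\|U-U^*\|/\sigma_k(U^*))^{1/2}\|u_j^*\|$ uniformly in $a$, and since $\|u_j^*\|$ and $\|u_j\|$ are comparable throughout the neighborhood of the ground truth under consideration, taking the maximum over $a$ yields the claim.

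I expect the bound on $\Pr[\xi]$ to be the main obstacle: unlike the smooth case (Lemma~\ref{lemma:smooth_expect_bound}), where one invokes Lipschitzness of $\phi'$, here $\phi'$ jumps, so one must control the Gaussian measure of the thin ``sign-disagreement'' wedge and relate it linearly to $\|u_i-u_i^*\|$; restricting to the low-dimensional subspace spanned by $u_i,u_i^*$ (and $u_j^*$) is exactly what makes that estimate dimension-free, and the $\tfrac12$-power in the final bound is the unavoidable price of separating this thin event with Cauchy--Schwarz.
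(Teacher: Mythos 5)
Your proposal is correct and follows essentially the same route as the paper, which simply defers to the argument of Lemma~\ref{lemma:pwl_expect_bound}: reduce to $\max_{\|a\|=1}$ of a scalar expectation, isolate the sign-disagreement event with Cauchy--Schwarz/H\"older, and bound its probability by $\|u_i-u_i^*\|/\|u_i^*\|\le\|U-U^*\|/\sigma_k(U^*)$ via the uniform-direction (Beta$(1,1)$) anti-concentration estimate on the low-dimensional span of $u_i,u_i^*$. Your remark reconciling $\|u_j^*\|$ with the stated $\|u_j\|$ is the only point the paper glosses over, and your treatment of it is adequate.
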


\begin{proof}
\begin{align*}
\left\| \E_{x \sim \D_d} \left[ \phi(u_j^{*\top}x)(\phi'(u_i^\top x)-\phi'(u_i^{*\top} x)) x \right] \right\| 
 \leq  \max_{\|a\| = 1}  \E_{x \sim \D_d} \left[| \phi(u_j^{*\top}x)(\phi'(u_i^\top x)-\phi'(u_i^{*\top} x)) x^\top a| \right] .
\end{align*}
Similar to Lemma~\ref{lemma:pwl_expect_bound}, we can show that
\begin{align*}
 \max_{\|a\| = 1} \E_{x \sim \D_d} \left[| \phi(u_j^{*\top}x)(\phi'(u_i^\top x)-\phi'(u_i^{*\top} x)) x^\top a| \right] 
 \lesssim  (\|U-U^*\|/\sigma_k(U^*))^{1/2 } \|u_j\|.
\end{align*}
\end{proof}

\begin{lemma}\label{lemma:phiphiprime_expect_x_sigmoid}
When the activation function is sigmoid/tanh,
\begin{align*}
\left\| \E_{x \sim \D_d} \left[ \phi(u_j^{*\top}x)(\phi'(u_i^\top x)-\phi'(u_i^{*\top} x)) x \right] \right\| 
 \lesssim \|U-U^*\|.
\end{align*}
\end{lemma}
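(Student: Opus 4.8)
The plan is to follow the proof of Lemma~\ref{lemma:phiphiprime_expect_x_relu}, but to replace its turning-point separation argument by two regularity properties that sigmoid and tanh enjoy over ReLU: $\phi$ is uniformly bounded, $|\phi(z)|\le 1$, so no $\|u_j\|$ factor survives; and $\phi'$ is Lipschitz (the second derivative $\phi''$ is bounded in both cases), so the estimate becomes linear in $\|U-U^*\|$ rather than a square root. First I would write the operator norm as a supremum over unit vectors, $\bigl\| \E_{x\sim\mathcal{D}_d}[ \phi(u_j^{*\top}x)(\phi'(u_i^\top x)-\phi'(u_i^{*\top}x))x ]\bigr\| = \max_{\|a\|=1} \E_{x\sim\mathcal{D}_d}[ \phi(u_j^{*\top}x)(\phi'(u_i^\top x)-\phi'(u_i^{*\top}x))\,x^\top a ]$, and fix an arbitrary unit vector $a$.

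Next I would carry out the same low-dimensional reduction as in Lemma~\ref{lemma:pwl_expect_bound}. Let $P\in\R^{d\times 3}$ be an orthonormal basis of $\mathrm{span}(u_i,u_i^*,u_j^*)$ (a lower-dimensional subspace if these vectors are dependent, which only helps), and let $P_\perp$ be its orthogonal complement. Writing $x = Ps + P_\perp w$ with $s\sim\mathcal{D}_3$ and $w\sim\mathcal{D}_{d-3}$ independent, and $a = P\alpha + P_\perp\gamma$ with $\|\alpha\|^2+\|\gamma\|^2=1$, and setting $q_i = P^\top u_i$, $q_i^* = P^\top u_i^*$, $q_j^* = P^\top u_j^*$, the factor $\phi(u_j^{*\top}x)(\phi'(u_i^\top x)-\phi'(u_i^{*\top}x))$ depends on $x$ only through $s$, while $x^\top a = s^\top\alpha + w^\top\gamma$. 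Since $w$ has mean zero and is independent of $s$, the $w^\top\gamma$ part integrates to zero, leaving the purely three-dimensional quantity $\E_{s\sim\mathcal{D}_3}[\phi(q_j^{*\top}s)(\phi'(q_i^\top s)-\phi'(q_i^{*\top}s))\,s^\top\alpha]$.

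Finally I would bound this three-dimensional integrand pointwise: $|\phi(q_j^{*\top}s)|\le 1$, and by Lipschitzness of $\phi'$, $|\phi'(q_i^\top s)-\phi'(q_i^{*\top}s)|\lesssim |(q_i-q_i^*)^\top s|\le\|q_i-q_i^*\|\,\|s\|$, so the quantity above is at most $\lesssim \|q_i-q_i^*\|\,\E_{s\sim\mathcal{D}_3}[\|s\|\,|s^\top\alpha|]\le\|q_i-q_i^*\|\,(\E\|s\|^2)^{1/2}(\E(s^\top\alpha)^2)^{1/2}=\sqrt3\,\|q_i-q_i^*\|\,\|\alpha\|\le\sqrt3\,\|q_i-q_i^*\|$ by Cauchy--Schwarz. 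Since $u_i-u_i^*\in\mathrm{span}(P)$ we have $\|q_i-q_i^*\|=\|P^\top(u_i-u_i^*)\|=\|u_i-u_i^*\|\le\|U-U^*\|$, and the bound is uniform in $a$, which gives the claim. There is no genuinely hard step here: the only point requiring care is checking that the dimension reduction is lossless — the integrand depends on $x$ only through its projection onto $\mathrm{span}(u_i,u_i^*,u_j^*)$, and since $x$ is Gaussian the complementary coordinates are independent with mean zero — exactly as in Lemma~\ref{lemma:pwl_expect_bound}; everything else is boundedness of $\phi$, Lipschitzness of $\phi'$, and a two-line Cauchy--Schwarz in three dimensions.
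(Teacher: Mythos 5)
Your proof is correct and rests on exactly the same three ingredients as the paper's: uniform boundedness of $\phi$, Lipschitzness of $\phi'$, and H\"older/Cauchy--Schwarz, so it is essentially the same argument. The three-dimensional reduction you import from Lemma~\ref{lemma:pwl_expect_bound} is harmless but unnecessary here --- it is only needed because you first relax $|(u_i-u_i^*)^\top x|$ to $\|u_i-u_i^*\|\,\|x\|$; the paper instead applies Cauchy--Schwarz directly to $\E\left[|(u_i-u_i^*)^\top x|\,|x^\top a|\right]\leq \|u_i-u_i^*\|$ in the full $d$-dimensional space, which avoids the detour entirely.
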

\begin{proof}
\begin{align*}
& ~ \left\| \E_{x \sim \D_d} \left[ \phi(u_j^{*\top}x)(\phi'(u_i^\top x)-\phi'(u_i^{*\top} x)) x \right] \right\|  \\
 \leq & ~ \max_{\|a\| = 1} \E_{x \sim \D_d} \left[| \phi(u_j^{*\top}x)(\phi'(u_i^\top x)-\phi'(u_i^{*\top} x)) x^\top a| \right]  \\
 \lesssim & ~ \max_{\|a\| = 1} \E_{x \sim \D_d} \left[| (u_i^\top x - u_i^{*\top} x) x^\top a| \right] \\
 \lesssim & ~ \|U-U^*\|.
\end{align*}
\end{proof}

\subsection{Local Linear Convergence}\label{app:linear_gd}

Given Theorem~\ref{thm:sigmoid_main}, we are now able to show local linear convergence of gradient descent for sigmoid and tanh activation function. 

\begin{theorem}[Restatement of Theorem~\ref{thm:grad_converge}]
Let $[U^c,V^c]$ be the parameters in the $c$-th iteration. Assuming $ \|U^c - U^*\| + \|V^c - V^*\| \lesssim  1 / ( \lambda^2\kappa )$,
then given a fresh sample set, $\Omega$, that is independent of $[U^c,V^c]$ and satisfies the conditions in Theorem~\ref{thm:sigmoid_main}, the next iterate using one step of gradient descent, i.e.,
$ [U^{c+1},V^{c+1}] = [U^c,V^c] - \eta \nabla f_{\Omega} (U^c,V^c),$
satisfies
\begin{align*}
\|U^{c+1} - U^*\|_F^2 + \|V^{c+1}-V^*\|_F^2 \leq (1-M_l/M_u) (\|U^c - U^*\|_F^2 + \|V^c-V^*\|_F^2)
\end{align*}
with probability $1-d^{-t}$, where $\eta = \Theta(1/M_u)$ is the step size and $M_l \gtrsim 1 / ( \lambda^2\kappa ) $ is the lower bound on the eigenvalues of the Hessian and $M_u \lesssim 1$ is the upper bound on the eigenvalues of the Hessian. 
\end{theorem}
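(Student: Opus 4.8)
The plan is to run the standard argument that one gradient step contracts under local strong convexity and smoothness, with the two ingredients supplied by Theorem~\ref{thm:sigmoid_main} (lower bound on the Hessian) and Theorem~\ref{thm:empirical_error_bound} (upper bound). Write $\theta := \mathrm{vec}(U,V) \in \dR^{2kd}$ and $\theta^* := \mathrm{vec}(U^*,V^*)$, so that $\|U-U^*\|_F^2 + \|V-V^*\|_F^2 = \|\theta - \theta^*\|_2^2$ and $\nabla^2 f_\Omega$ acts on this $\ell_2$ geometry. First I would note that under the realizable model \eqref{eq:base_model} we have $f_\Omega(\theta^*) = 0$ while $f_\Omega \ge 0$, hence $\nabla f_\Omega(\theta^*) = 0$; therefore by the fundamental theorem of calculus
\begin{equation*}
\theta^{c+1} - \theta^* = (\theta^c - \theta^*) - \eta\big(\nabla f_\Omega(\theta^c) - \nabla f_\Omega(\theta^*)\big) = \big(I - \eta H\big)(\theta^c - \theta^*),
\end{equation*}
where $H := \int_0^1 \nabla^2 f_\Omega\big(\theta^* + s(\theta^c - \theta^*)\big)\,\mathrm{d}s$ is symmetric because each $\nabla^2 f_\Omega$ is.

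The core is to show $M_l I \preceq H \preceq M_u I$ with $M_l \gtrsim 1/(\lambda^2\kappa)$ and $M_u \lesssim 1$ on the relevant event. Every point $\theta_s := \theta^* + s(\theta^c-\theta^*)$, $s\in[0,1]$, on the segment satisfies $\|U_s - U^*\| + \|V_s - V^*\| = s(\|U^c-U^*\| + \|V^c - V^*\|) \lesssim 1/(\lambda^2\kappa)$, so each $\theta_s$ lies in the neighborhood required by Theorem~\ref{thm:sigmoid_main}. Because $\Omega$ is fresh and independent of $[U^c,V^c]$, after conditioning on $[U^c,V^c]$ the points $\theta_s$ are deterministic, and Theorem~\ref{thm:sigmoid_main} gives $\lambda_{\min}(\nabla^2 f_\Omega(\theta_s)) \gtrsim 1/(\lambda^2\kappa)$ for each fixed $s$ with probability $1-d^{-t}$. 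To upgrade this to a uniform statement over the segment I would place a $\delta$-net on $[0,1]$, union-bound Theorem~\ref{thm:sigmoid_main} over the $\mathrm{poly}(d/\delta)$ net points, and interpolate using the fact that for sigmoid/tanh the empirical Hessian is $\mathrm{poly}(d)$-Lipschitz in $(U,V)$ on the high-probability event $\{\|x_l\|,\|y_l\| \lesssim \sqrt{d\log d}\ \forall l\}$ (here $\phi,\phi',\phi''$ are all bounded and Lipschitz); taking $\delta = 1/\mathrm{poly}(d)$ makes the interpolation error negligible and the $\log$-overhead $\lesssim kd\log d$ of the union bound is absorbed into the $\log^2 d$ and $t$ factors already in the sample complexity. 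For the upper bound, Theorem~\ref{thm:empirical_error_bound} (uniformized by the same net) gives $\|\nabla^2 f_\Omega(\theta_s) - \nabla^2 f_\mathcal{D}(U^*,V^*)\| \lesssim \epsilon + \|U_s-U^*\| + \|V_s-V^*\| \lesssim 1$, and $\|\nabla^2 f_\mathcal{D}(U^*,V^*)\| \lesssim 1$ because at the ground truth each of the $O(k^2)$ blocks is $\E_{x,y}[\phi'(u_i^{*\top}x)\phi'(u_j^{*\top}x)\,xx^\top\phi(v_i^{*\top}y)\phi(v_j^{*\top}y)]$ with $|\phi|,|\phi'|=O(1)$ and the $\delta_{ij}$ correction vanishes since $h_{x,y}(U^*,V^*)=0$; hence $\|\nabla^2 f_\Omega(\theta_s)\| \le M_u \lesssim 1$. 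Averaging over $s$ preserves both bounds, so $M_l I \preceq H \preceq M_u I$.

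Finally, with step size $\eta = 1/M_u = \Theta(1)$ the symmetric matrix $I - \eta H$ has all eigenvalues in $[\,1 - M_u/M_u,\ 1 - M_l/M_u\,] = [0,\ 1-M_l/M_u]$, so $\|I-\eta H\|_{\mathrm{op}} \le 1 - M_l/M_u$, and therefore
\begin{align*}
\|U^{c+1}-U^*\|_F^2 + \|V^{c+1}-V^*\|_F^2
&= \|\theta^{c+1}-\theta^*\|_2^2 \le (1-M_l/M_u)^2\|\theta^c-\theta^*\|_2^2 \\
&\le (1-M_l/M_u)\big(\|U^c-U^*\|_F^2 + \|V^c-V^*\|_F^2\big),
\end{align*}
using $1-M_l/M_u \in [0,1)$; since $M_l/M_u \gtrsim 1/(\lambda^2\kappa)$ this is a genuine contraction and the new iterate stays in the neighborhood, so the estimate iterates (with fresh samples each round). \textbf{The main obstacle} is the uniformization over the segment: Theorem~\ref{thm:sigmoid_main} is only a pointwise high-probability statement, so one must carefully run the net-plus-Lipschitz argument, check that the union-bound overhead is swallowed by the existing $\log^2 d$ and $t$ factors, and genuinely use the independence of $\Omega$ from $[U^c,V^c]$ so that the net points may be treated as non-random.
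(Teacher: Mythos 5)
Your proposal is correct and follows essentially the same route as the paper: both write the gradient via the integrated Hessian $H=\int_0^1\nabla^2 f_\Omega(\theta^*+s(\theta^c-\theta^*))\,\mathrm{d}s$ (using $\nabla f_\Omega(\theta^*)=0$ from realizability), establish $M_l I\preceq H\preceq M_u I$ uniformly along the segment by combining the pointwise Hessian bound with a net over the segment, a union bound, and a Lipschitz-type perturbation bound on the empirical Hessian (the paper's Lemma~\ref{lemma:local_uniform_bound} with $d^{1/2}$ net points at spacing $d^{-1/2}\lambda^{-2}\kappa^{-1}$), and then conclude the contraction. The only cosmetic difference is the last step, where you bound $\|I-\eta H\|$ directly while the paper expands the squared distance and uses $\|Hv\|^2\le M_u\langle v,Hv\rangle$; both yield the stated $(1-M_l/M_u)$ rate.
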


\begin{proof}
In order to show the linear convergence of gradient descent, we first show that the Hessian along the line between $[U^c,V^c]$ and $[U^*,V^*]$ are positive definite w.h.p..  

The idea is essentially building a $d^{-1/2} \lambda^{-2} \kappa^{-1}$-net for the line between the current iterate and the optimal. In particular, we set $d^{1/2}$ points $\{[U^a,V^a]\}_{a=1,2,\cdots,d^{1/2}}$ that are equally distributed between $[U^c,V^c]$ and  $[U^*,V^*]$. Therefore, $\|U^{a+1} - U^a\|  + \|V^{a+1} - V^a\| \lesssim d^{-1/2} \lambda^{-2} \kappa^{-1}$

Using Lemma~\ref{lemma:local_uniform_bound}, we can show that for any $[U,V]$,  if there exists a value of $a$ such that  $\|U - U^a\|  + \|V - V^a\| \lesssim d^{-1/2} \lambda^{-2} \kappa^{-1}$,  then
$$ \|\nabla^2 f_{\Omega}(U,V) - \nabla^2 f_{\Omega} (U^a,V^a) \| \lesssim  \lambda^{-2} \kappa^{-1} .$$
Therefore, for every point $[U,V]$ in the line between  $[U^c,V^c]$ and  $[U^*,V^*]$, we can find a fixed point in $\{[U^a,V^a]\}_{a=1,2,\cdots,d^{1/2}}$, such that $\|U - U^a\|  + \|V - V^a\| \lesssim d^{-1/2} \lambda^{-2} \kappa^{-1}$. Now applying union bound for all $a$, we have that w.p.
$1-d^{-t}$, for every point $[U,V]$ in the line between  $[U^c,V^c]$ and  $[U^*,V^*]$, 
$$ M_l I \preceq \nabla^2 f_{\Omega}(U,V) \preceq  M_u,$$
where  $M_l = \Omega(\lambda^{-2} \kappa^{-1})$ and $M_u = O(1)$. Note that the upper bound of the Hessian is due to the fact that $\phi$ and $\phi'$ are bounded.

Given the positive definiteness of the Hessian along the line between the current iterate and the optimal, we are ready to show the linear convergence. First we set the stepsize for the gradient descent update as $\eta = 1/M_u$ and use notation $W := [U,V]$ to simplify the writing.
\begin{align*}
& ~\| W^{c+1} - W^*\|_F^2 \\
= &~ \|W^c  - \eta \nabla f_\Omega(W^c) - W^* \|_F^2 \\
= &~ \| W^c - W^*\|_F^2 - 2\eta \langle \nabla {f}_\Omega(W^c), (W^c-W^*) \rangle + \eta^2 \|\nabla {f}_{\Omega}(W^c)\|_F^2 
\end{align*}
Note that
\begin{align*}
\nabla f_\Omega(W^c) =  \left( \int_{0}^1 \nabla^2 f_\Omega( W^* + \xi (W^c-W^*) ) d\xi \right) \text{vec}(W^c - W^*).
\end{align*}
Define $H \in \dR^{(2kd)\times (2kd)}$,
\begin{align*}
H = \left( \int_{0}^1 \nabla^2 f_\Omega( W^* + \xi (W^c-W^*) ) d\xi \right).
\end{align*}
By the result provided above, we have
\begin{equation}\label{eq:smooth_sc_line}
M_l I \preceq H \preceq M_u I.
\end{equation} 
Now we upper bound the norm of the gradient,
\begin{align*}
\|\nabla f_\Omega(W^c)\|_F^2 =  \langle H \text{vec}(W^c-W^*), H  \text{vec}(W^c-W^*)\rangle 
\leq  M_u \langle  \text{vec}(W^c-W^*), H \text{vec}(W^c-W^*) \rangle .
\end{align*}

Therefore, 
\begin{align*}
& ~\|{W}^{c+1} - W^*\|_F^2  \\
\leq & ~ \| W^c - W^*\|_F^2 - (-\eta^2 M_u + 2\eta)\langle  \text{vec}(W^c-W^*), H  \text{vec}(W^c-W^*) \rangle\\
 \leq  & ~ \| W^c - W^*\|_F^2 - (-\eta^2 M_u + 2\eta)M_l \| W^c-W^*\|_F^2 \\
  =  & ~ \| W^c - W^*\|_F^2 - \frac{M_l}{M_u} \| W^c-W^*\|_F^2 \\
    \leq  & ~ (1- \frac{M_u}{M_l} ) \| W^c-W^*\|_F^2 
\end{align*}
\end{proof}

\begin{lemma}\label{lemma:local_uniform_bound}
Let the activation function be tan/sigmoid. For given $U^a,V^a$ and $r>0 $,  if 
\begin{align*} n_1 \gtrsim \epsilon^{-2} t d \log^2 d ,  \quad n_2 \gtrsim \epsilon^{-2} t \log d , \quad |\Omega| \gtrsim \epsilon^{-2} t d \log^2 d,
\end{align*}
then with probability $1-d^{-t}$,
\begin{align*}
\sup_{ \|U - U^a\| + \|V-V^a\|  \leq r }\|\nabla^2 f_{\Omega}(U,V) - \nabla^2 f_{\Omega} (U^a,V^a) \| \lesssim d^{1/2}\cdot r
\end{align*}
\end{lemma}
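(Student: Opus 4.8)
The plan is to reduce the statement to operator-norm bounds on a \emph{fixed} (i.e., parameter-independent) set of random matrices. First, $\nabla^2 f_\Omega(U,V)-\nabla^2 f_\Omega(U^a,V^a)$ is a $(2k)\times(2k)$ array of $d\times d$ blocks of the two types in \eqref{eq:Hessian_blocks}, and the operator norm of a block matrix is at most the sum of the operator norms of its blocks; since $k=O(1)$ it suffices to bound each block-difference by $O(d^{1/2}r)$. Second, for sigmoid/tanh the integrands in \eqref{eq:Hessian_blocks} are $C^1$ in $(U,V)$ (this is exactly where smoothness is needed, and why ReLU is excluded), so for a fixed block $P$ and any $(U,V)$ in the ball $B:=\{\|U-U^a\|+\|V-V^a\|\le r\}$, writing $\Delta=(U-U^a,V-V^a)$ with $\|\Delta\|\le r$, the fundamental theorem of calculus gives
\[
P(U,V)-P(U^a,V^a)=\int_0^1 DP\big((1-s)(U^a,V^a)+s(U,V)\big)[\Delta]\,ds,
\]
and the interpolation point stays in the convex set $B$. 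Hence it is enough to show $\sup_{(U',V')\in B,\ \|\delta\|=1}\|DP(U',V')[\delta]\|\lesssim d^{1/2}$ on an event of probability $1-d^{-t}$.

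\textbf{Uniform reduction to fixed matrices.} Apply the chain rule to $DP(U',V')[\delta]$. Each of the $O(1)$ resulting terms is a matrix of the form $\tfrac{1}{|\Omega|}\sum_{(x,y)\in\Omega}\xi(x,y)\,M$ with $M\in\{xx^\top,xy^\top,yx^\top,yy^\top\}$, where $\xi$ is a product of factors $\phi,\phi',\phi'',\phi'''$ evaluated at $(u'_l)^\top x$ or $(v'_l)^\top y$ — each bounded by an absolute constant for sigmoid/tanh — times exactly one ``small factor'': either (i) $\delta u_l^\top x$ or $\delta v_l^\top y$, when the derivative hits an activation factor (in which case an undifferentiated $h_{x,y}(U',V')$ may remain, but $|h_{x,y}(U',V')|\le 2k$ since $|\phi|<1$); or (ii) $\langle\nabla_{U,V}h_{x,y}(U',V'),\delta\rangle=\sum_l[\phi'((u'_l)^\top x)(\delta u_l^\top x)\phi((v'_l)^\top y)+\phi((u'_l)^\top x)\phi'((v'_l)^\top y)(\delta v_l^\top y)]$, when the derivative hits $h_{x,y}$. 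Using $|\delta u_l^\top x|\le\|\delta\|\,\|x\|$, $|\delta v_l^\top y|\le\|\delta\|\,\|y\|$ and Cauchy–Schwarz, in every case $|\xi(x,y)|\le C_k\|\delta\|(\|x\|+\|y\|)$, \emph{uniformly over $(U',V')\in B$ and over all $\delta$}. Therefore $\|DP(U',V')[\delta]\|\le C_k\|\delta\|\sum_{M}\big\|\tfrac{1}{|\Omega|}\sum_{(x,y)\in\Omega}(\|x\|+\|y\|)M\big\|$, and the matrices on the right no longer depend on $(U',V')$ or $\delta$, so only $O(1)$ fixed random matrices remain to be controlled.

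\textbf{Matrix Bernstein for the fixed matrices and conclusion.} Each fixed matrix $\tfrac{1}{|\Omega|}\sum_{(x,y)\in\Omega}(\|x\|\text{ or }\|y\|)M$ is an average over the two-stage sample of a product of a matrix depending only on $x$ and a scalar/matrix depending only on $y$, so its concentration is handled exactly as in Lemmas~\ref{lemma:two_bernstein_E_Omega} and \ref{lemma:two_bernstein}, with the loosely-bounded matrix Bernstein Lemma~\ref{lem:modified_bernstein_non_zero} and Gaussian moment/tail bounds on $\|x\|^p,\|y\|^p$ (of the flavour of Lemmas~\ref{lemma:zzt_prop}, \ref{lemma:yy_prop}, \ref{lemma:xy_prop}) supplying the parameters $m,\nu,L,\gamma$. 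The relevant population values are $\E[\|x\|xx^\top]=\tfrac{\E\|x\|^3}{d}\,I=\Theta(d^{1/2})\,I$, $\E[\|y\|xx^\top]=\E[\|y\|]\,I=\Theta(d^{1/2})\,I$, and $\E[(\|x\|+\|y\|)xy^\top]=0$ by the $x\mapsto-x$ symmetry of the Gaussian. Under the stated sample sizes (and taking the free $\epsilon\lesssim d^{1/2}r$, as in the application to Theorem~\ref{thm:grad_converge}), matrix Bernstein gives that each of these $O(1)$ matrices has operator norm $\lesssim d^{1/2}$ with probability $\ge 1-d^{-2t}$ — the extra $\log d$ in the hypothesis absorbing the union bound over the $O(1)$ blocks and matrices and over the high-probability event controlling $\{\|x_i\|,\|y_j\|\}$. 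Combining with the reductions above yields $\sup_{(U,V)\in B}\|\nabla^2 f_\Omega(U,V)-\nabla^2 f_\Omega(U^a,V^a)\|\le C_k r\cdot\sup\|DP(\cdot)[\,\cdot\,]\|\lesssim d^{1/2}r$.

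\textbf{Main obstacle.} The substantive step is the ``uniform reduction'' above: one must verify that every term produced by differentiating the empirical Hessian carries \emph{exactly one} small factor, so that the undifferentiated $h_{x,y}$ — which is merely bounded, not small — never occurs alone, and that the crude bound $|\xi|\le C_k\|\delta\|(\|x\|+\|y\|)$ then makes the estimate uniform over the whole $\Theta(kd)$-dimensional ball and all unit directions. This uniformity is precisely what lets us avoid an $\varepsilon$-net over $B$ (which the stated sample complexity could not afford), at the price of the crude factor $\|x\|+\|y\|$ — which is exactly what produces the (not necessarily tight) factor $d^{1/2}$ in the conclusion. The residual work is the two-stage concentration of the fixed matrices, which is routine given the paper's Bernstein lemmas but requires care because the summands over $\Omega$ share features.
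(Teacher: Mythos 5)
Your proposal is correct and follows essentially the same route as the paper: both reduce to the $O(1)$ Hessian blocks, isolate exactly one ``small'' increment per term (you via the fundamental theorem of calculus and the chain rule, the paper via telescoping the product and Lipschitz continuity of $\phi,\phi',\phi''$ --- an equivalent device), and thereby bound everything uniformly over the ball by parameter-independent random matrices such as $\frac{1}{|\Omega|}\sum_{(x,y)\in\Omega}\|x\|\,xx^\top$, which are then controlled at scale $d^{1/2}$ by the two-stage matrix Bernstein lemmas. Your explicit remarks on why the undifferentiated $h_{x,y}$ never appears without a small factor and on why no $\varepsilon$-net over the ball is needed make precise what the paper leaves implicit.
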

\begin{proof}
We consider each block of the Hessian as defined in Eq~\eqref{eq:Hessian_blocks}. 
In particular, we show that if 
\begin{align*} n_1 \gtrsim \epsilon^{-2} t d \log^2 d ,  \quad n_2 \gtrsim \epsilon^{-2} t \log d , \quad |\Omega| \gtrsim \epsilon^{-2} t d \log^2 d,
\end{align*}
then with probability $1-d^{-t}$,
\begin{align*}
& ~ \bigg\|  \frac{1}{|\Omega|} \sum_{(x,y) \in \Omega} \left[( \phi'(u_i^{\top} x) \phi'(u_j^{\top} x)  \phi(v_i^{\top} y )\phi(v_j^{\top} y) -  \phi'(u_i^{a\top} x) \phi'(u_j^{a\top} x)  \phi(v_i^{a\top} y )\phi(v_j^{a\top} y) )xx^\top\right] \bigg\| \\
\lesssim & ~(\|u_i - u_i^a\| + \|u_j - u_j^a\|+\|v_i - v_i^a\|+\|v_j - v_j^a\| ) d^{1/2}  ; \\
& ~ \text{by~Lemma~\ref{lemma:diag_emp_pop_sym_dep}}\\
& ~ \bigg\| \frac{1}{|\Omega|} \sum_{(x,y) \in \Omega}  \left[ \left( \phi(U^\top x)^\top \phi(V^\top y) - \phi(U^{*\top} x)^\top \phi(V^{*\top} y)  \right) \phi''(u_i^\top x) \phi(v_i^\top y)xx^\top \right. \\
& \left.  - \left( \phi(U^{a\top} x)^\top \phi(V^{a\top} y) - \phi(U^{*\top} x)^\top \phi(V^{*\top} y)  \right) \phi''(u_i^{a\top} x) \phi(v_i^{a\top} y)xx^\top
    \right] \bigg\| \\
 \lesssim & ~(\| U - U^{a} \|+\| V - V^{a} \|) d^{1/2} \\
 & ~ \text{by~Lemma~\ref{lemma:offdiag_emp_pop_sym_dep}}\\
& ~ \bigg\|  \frac{1}{|\Omega|} \sum_{(x,y) \in \Omega}  \left[ \left( \phi'(u_i^{\top} x) \phi'(v_j^{\top} y)  \phi(v_i^{\top} y )\phi(u_j^{\top} x)  -  \phi'(u_i^{a\top} x) \phi'(v_j^{a\top} y)  \phi(v_i^{a\top} y )\phi(u_j^{a\top} x)\right) xy^\top \right] \bigg\| \\
 \lesssim & ~ (\|u_i - u_i^a\| + \|u_j - u_j^a\|+\|v_i - v_i^a\|+\|v_j - v_j^a\| ) d^{1/2} \\
 & ~ \text{by~Lemma~\ref{lemma:diag_emp_pop_asym_dep}} \\
&\bigg\| \frac{1}{|\Omega|} \sum_{(x,y) \in \Omega}  \left[ \left( \phi(U^\top x)^\top \phi(V^\top y) - \phi(U^{*\top} x)^\top \phi(V^{*\top} y)  \right) \phi'(u_i^\top x) \phi'(v_i^\top y)xy^\top  \right.\\
&  -\left( \phi(U^{a\top} x) ^\top \phi(V^{a\top} y) - \phi(U^{*\top} x)^\top \phi(V^{*\top} y)  \right) \phi'(u_i^{a\top} x) \phi'(v_i^{a \top }y)xy^\top
 ] \bigg\| \\
 \lesssim & ~(\| U - U^{a} \|+\| V - V^{a} \|) d^{1/2} \\
 & ~ \text{by~Lemma~\ref{lemma:offdiag_emp_pop_asym_dep}}
\end{align*}
\end{proof}

\begin{lemma}\label{lemma:diag_emp_pop_sym_dep}
If 
\begin{align*}
 n_1 \gtrsim \epsilon^{-2} t d \log^2 d ,  \quad n_2 \gtrsim \epsilon^{-2} t \log d , \quad |\Omega| \gtrsim \epsilon^{-2} t d \log^2 d,
 \end{align*}
then with probability at least $1-d^{-t}$,
\begin{align*}
& ~ \left\|  \frac{1}{|\Omega|} \sum_{(x,y) \in \Omega} \left[( \phi'(u_i^{\top} x) \phi'(u_j^{\top} x)  \phi(v_i^{\top} y )\phi(v_j^{\top} y) -  \phi'(u_i^{a\top} x) \phi'(u_j^{a\top} x)  \phi(v_i^{a\top} y )\phi(v_j^{a\top} y) )xx^\top\right] \right\| \\
\lesssim & ~(\|u_i - u_i^a\| + \|u_j - u_j^a\|+\|v_i - v_i^a\|+\|v_j - v_j^a\| ) d^{1/2}
\end{align*}
\end{lemma}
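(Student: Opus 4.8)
This statement is the Lipschitz‑perturbation analogue of Lemma~\ref{lemma:diag_emp_pop_sym}: instead of comparing an empirical average to its population counterpart, we compare two empirical averages over the same $\Omega$ whose summands differ only in that the four scalar factors $\phi'(u_i^\top x)$, $\phi'(u_j^\top x)$, $\phi(v_i^\top y)$, $\phi(v_j^\top y)$ are replaced by their $(\cdot)^a$ versions. The first step is to telescope this difference. Writing the four factors as $a_1,a_2,a_3,a_4$ and their $(\cdot)^a$ counterparts as $b_1,b_2,b_3,b_4$, we have $a_1a_2a_3a_4 - b_1b_2b_3b_4 = \sum_{m=1}^4\big(\prod_{l<m}a_l\big)(a_m-b_m)\big(\prod_{l>m}b_l\big)$. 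For sigmoid/tanh, $\phi$ and $\phi'$ are uniformly bounded, so each prefix/suffix product is $O(1)$; and $\phi',\phi$ are Lipschitz (their derivatives $\phi'',\phi'$ are bounded), so $|a_m-b_m|\lesssim |w_m^\top z_m|$ where $w_m\in\{u_i-u_i^a,\,u_j-u_j^a,\,v_i-v_i^a,\,v_j-v_j^a\}$ and $z_m\in\{x,y\}$ accordingly. Hence the quantity in the lemma is bounded by a sum of four terms, each of the form $\big\|\frac{1}{|\Omega|}\sum_{(x,y)\in\Omega} g_m(x,y)\,|w_m^\top z_m|\,xx^\top\big\|$ with $|g_m|\lesssim 1$.

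For each such term I would cast the summand into the product form $B(x,y)=M(x)N(y)$ required by Lemma~\ref{lemma:two_bernstein_E_Omega} and Lemma~\ref{lemma:two_bernstein}: if $z_m=x$, take $M(x)=g^{(x)}(x)\,|w_m^\top x|\,xx^\top$ and $N(y)=g^{(y)}(y)$ a bounded scalar; if $z_m=y$, take $M(x)=g^{(x)}(x)\,xx^\top$ and $N(y)=g^{(y)}(y)\,|w_m^\top y|$. In either case the extra factor satisfies $|w_m^\top z|\le\|w_m\|\,\|z\|\lesssim\|w_m\|\,d^{1/2}\log n$ with high probability — this is the sole origin of the $d^{1/2}$ in the conclusion. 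With this in hand I would verify the analogues of Lemma~\ref{lemma:zzt_prop} and Lemma~\ref{lemma:yy_prop} for these $M,N$: the spectral‑norm tail bound acquires a factor $\|w_m\|\,d^{1/2}$, while the variance quantities $\E[MM^\top],\E[M^\top M]$, the fourth‑moment quantities, and $\|\E M\|$ acquire factors $\|w_m\|^2$ or $\|w_m\|$ respectively, all by Hölder and Gaussian moment estimates exactly as in those lemmas.

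The remaining step is the same three‑stage concentration chain as in the proof of Lemma~\ref{lemma:diag_emp_pop_sym}: apply Lemma~\ref{lemma:two_bernstein_E_Omega} to control $\|\E_{x,y}B-\frac{1}{|S|}\sum_S B\|$; apply Lemma~\ref{lemma:second_order_bound} to control the empirical second‑order moments $\frac{1}{|S|}\sum_S BB^\top$ and $\frac{1}{|S|}\sum_S B^\top B$ on the fully observed grid $S$; then apply Lemma~\ref{lemma:two_bernstein} to pass from $S$ to the subsampled set $\Omega$. Tracking the $\|w_m\|$ and $d^{1/2}$ factors through these three applications, each step goes through provided $n_1,n_2,|\Omega|\gtrsim\epsilon^{-2} t d\log^2 d$ for an absolute constant $\epsilon$, and each contributes failure probability at most $d^{-2t}$. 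A union bound over the four telescoped terms (and, at the level of Lemma~\ref{lemma:local_uniform_bound}, over the $O(k^2)$ Hessian blocks) yields the stated bound $\lesssim (\|u_i-u_i^a\|+\|u_j-u_j^a\|+\|v_i-v_i^a\|+\|v_j-v_j^a\|)\,d^{1/2}$.

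The main obstacle is bookkeeping rather than a new idea. The genuinely delicate point — that the matrices $B(x,y)$ for a fixed block are \emph{not} independent across $\Omega$, since they share the sampled $x$'s and $y$'s — is already packaged inside the two‑stage tail bounds Lemma~\ref{lemma:two_bernstein_E_Omega}/Lemma~\ref{lemma:two_bernstein}, so what is left is to recompute the boundedness, variance, and fourth‑moment constants for the perturbed matrices $g(x)|w^\top x|xx^\top$ and confirm that the single extra factor $|w^\top z|$ costs exactly one power of $d^{1/2}$ in the uniform bound (hence in the final error) and one power of $\|w\|$ in each moment, so that matrix Bernstein still closes at the sample complexity $td\log^2 d$. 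One should also note where the argument uses the Lipschitzness of $\phi'$ (equivalently, boundedness of $\phi''$): this is precisely why the lemma is stated for sigmoid/tanh and not for ReLU.
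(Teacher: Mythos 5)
Your proposal is correct and takes essentially the same route as the paper: the paper likewise telescopes the product one factor at a time (its Eq.~\eqref{eq:split_eq}), uses boundedness plus Lipschitzness of $\phi,\phi'$ to reduce each of the four terms to a bound on $\big\|\frac{1}{|\Omega|}\sum \|w\|\,\|x\|\,xx^\top\big\|$, and closes with the matrix Bernstein machinery to obtain the $\|w\|\,d^{1/2}$ factor. The only cosmetic difference is that you route each term through the full two-stage tail bounds (Lemmas~\ref{lemma:two_bernstein_E_Omega}, \ref{lemma:second_order_bound}, \ref{lemma:two_bernstein}) whereas the paper invokes the unbounded Bernstein lemma directly; the substance is identical.
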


\begin{proof}
Note that 
\begin{align}
 & \phi'(u_i^{\top} x) \phi'(u_j^{\top} x)  \phi(v_i^{\top} y )\phi(v_j^{\top} y) -  \phi'(u_i^{a\top} x) \phi'(u_j^{a\top} x)  \phi(v_i^{a\top} y )\phi(v_j^{a\top} y) \notag \\
= &  \phi'(u_i^{\top} x) \phi'(u_j^{\top} x)  \phi(v_i^{\top} y )\phi(v_j^{\top} y) -  \phi'(u_i^{a\top} x) \phi'(u_j^{\top} x)  \phi(v_i^{\top} y )\phi(v_j^{\top} y) \notag \\
 & + \phi'(u_i^{a\top} x) \phi'(u_j^{\top} x)  \phi(v_i^{\top} y )\phi(v_j^{\top} y)  - \phi'(u_i^{a\top} x) \phi'(u_j^{a\top} x)  \phi(v_i^{\top} y )\phi(v_j^{\top} y) \notag \\
 & + \phi'(u_i^{a\top} x) \phi'(u_j^{a\top} x)  \phi(v_i^{\top} y )\phi(v_j^{\top} y) -  \phi'(u_i^{a\top} x) \phi'(u_j^{a\top} x)  \phi(v_i^{a\top} y )\phi(v_j^{\top} y) \notag \\
 & + \phi'(u_i^{a\top} x) \phi'(u_j^{a\top} x)  \phi(v_i^{a\top} y )\phi(v_j^{\top} y) - \phi'(u_i^{a\top} x) \phi'(u_j^{a\top} x)  \phi(v_i^{a\top} y )\phi(v_j^{a\top} y) 
 \label{eq:split_eq}
\end{align}
Let's consider the first term in the above formula. The other terms are similar.
\begin{align*}
& ~ \left\|  \frac{1}{|\Omega|} \sum_{(x,y) \in \Omega} \left[( \phi'(u_i^{\top} x)-  \phi'(u_i^{a\top} x)) \phi'(u_j^{\top} x)  \phi(v_i^{\top} y )\phi(v_j^{\top} y) xx^\top\right] \right\| \\
 \leq & ~ \left\|  \frac{1}{|\Omega|} \sum_{(x,y) \in \Omega} \left[\| u_i - u_i^{a} \| \|x\|  xx^\top\right] \right\| \\
\end{align*}
which is because both $\phi'(\cdot)$ and $\phi(\cdot)$ are bounded and Lipschitz continuous. 
Applying the unbounded matrix Bernstein Inequality Lemma~\ref{lem:modified_bernstein_non_zero}, we can bound 
\begin{align*}
 \left\|  \frac{1}{|\Omega|} \sum_{(x,y) \in \Omega} \left[\| u_i - u_i^{a} \| \|x\|  xx^\top\right] \right\|  \lesssim \| u_i - u_i^{a} \| d^{1/2}
\end{align*}
Since both $\phi'(\cdot)$ and $\phi(\cdot)$ are bounded and Lipschitz continuous, we can easily extend the above inequality to other cases and finish the proof. 
\end{proof}

\begin{lemma}\label{lemma:offdiag_emp_pop_sym_dep}
If 
\begin{align*}
 n_1 \gtrsim \epsilon^{-2} t d \log^2 d ,  \quad n_2 \gtrsim \epsilon^{-2} t \log d , \quad |\Omega| \gtrsim \epsilon^{-2} t d \log^2 d,
 \end{align*}
then with probability at least $1-d^{-t}$,
\begin{align*}
& \left\| \frac{1}{|\Omega|} \sum_{(x,y) \in \Omega}  \left[ \left( \phi(U^\top x)^\top \phi(V^\top y) - \phi(U^{*\top} x)^\top \phi(V^{*\top} y)  \right) \phi''(u_i^\top x) \phi(v_i^\top y)xx^\top \right. \right. \\
& \left.\left. - \left( \phi(U^{a\top} x)^\top \phi(V^{a\top} y) - \phi(U^{*\top} x)^\top \phi(V^{*\top} y)  \right) \phi''(u_i^{a\top} x) \phi(v_i^{a\top} y)xx^\top
    \right] \right\| \\
 \lesssim  & ~ (\| U - U^{a} \|+\| V - V^{a} \|) d^{1/2}
\end{align*}
\end{lemma}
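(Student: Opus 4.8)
My plan is to follow the template already used for Lemma~\ref{lemma:diag_emp_pop_sym} and Lemma~\ref{lemma:offdiag_emp_pop_sym}: a telescoping decomposition that isolates a parameter-difference factor, followed by the two-stage concentration argument (population $\to$ full grid $S=\{x_i\}\times\{y_j\}$ $\to$ observed set $\Omega$). Write $h_{x,y}(U,V)=\phi(U^\top x)^\top\phi(V^\top y)-\phi(U^{*\top}x)^\top\phi(V^{*\top}y)$, and use that for sigmoid/tanh the functions $\phi,\phi',\phi''$ are all bounded and Lipschitz (because $\phi,\phi',\phi'',\phi'''$ are bounded); in particular $|h_{x,y}(U^a,V^a)|\lesssim 1$ uniformly in $(x,y)$, regardless of how far $(U^a,V^a)$ is from the ground truth. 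I would first split the summand as
\begin{align*}
& h_{x,y}(U,V)\phi''(u_i^\top x)\phi(v_i^\top y)xx^\top - h_{x,y}(U^a,V^a)\phi''(u_i^{a\top}x)\phi(v_i^{a\top}y)xx^\top \\
& = \bigl(h_{x,y}(U,V)-h_{x,y}(U^a,V^a)\bigr)\phi''(u_i^\top x)\phi(v_i^\top y)xx^\top \\
& \quad + h_{x,y}(U^a,V^a)\bigl(\phi''(u_i^\top x)-\phi''(u_i^{a\top}x)\bigr)\phi(v_i^\top y)xx^\top \\
& \quad + h_{x,y}(U^a,V^a)\phi''(u_i^{a\top}x)\bigl(\phi(v_i^\top y)-\phi(v_i^{a\top}y)\bigr)xx^\top ,
\end{align*}
and then expand $h_{x,y}(U,V)-h_{x,y}(U^a,V^a)$ further into $O(k)$ terms, each a product of a bounded-and-Lipschitz scalar function of $x$ times one of $y$, with a factor $\|u_\ell-u_\ell^a\|$ or $\|v_\ell-v_\ell^a\|$ pulled out via $|\phi(u_\ell^\top x)-\phi(u_\ell^{a\top}x)|\le\|u_\ell-u_\ell^a\|\,\|x\|$.

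Each of the $O(k)$ summands produced this way has the form $B(x,y)=M(x)N(y)$ with $N(y)$ a bounded scalar and $M(x)=g(x)xx^\top$, where $g$ is bounded and Lipschitz with $|g(x)|\lesssim(\text{parameter difference})\cdot\|x\|$, so that $\|M(x)\|\lesssim(\text{parameter difference})\cdot\|x\|^3$. For each such $B$ I would: (i) verify the operator-norm tail bound for $M(x)$ from Proposition~1 of \cite{hsu2012tail} (giving $\|M(x)\|=O((\text{parameter difference})\,d^{3/2}\log n)$ with the needed probability) and the second-moment quantities $\|\mathbb{E}[MM^\top]\|$, $\max_{\|a\|=\|b\|=1}(\mathbb{E}[(a^\top Mb)^2])^{1/2}$ by H\"older, together with the analogous bounds for $N(y)$; (ii) apply Lemma~\ref{lemma:two_bernstein_E_Omega} — which internally invokes the unbounded matrix Bernstein inequality, Lemma~\ref{lem:modified_bernstein_non_zero} — to control $\|\frac1{|S|}\sum_{(x,y)\in S}B(x,y)-\mathbb{E}[B(x,y)]\|$; and (iii) apply Lemma~\ref{lemma:two_bernstein} to control $\|\frac1{|\Omega|}\sum_{(x,y)\in\Omega}B(x,y)-\frac1{|S|}\sum_{(x,y)\in S}B(x,y)\|$, both under the stated sample sizes. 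Relative to Lemma~\ref{lemma:diag_emp_pop_sym} the only new feature is the extra factor $\|x\|$ inside $g$: it raises $\|\mathbb{E}[g(x)xx^\top]\|$ from $O(1)$ to $O(d^{1/2})$ (times the parameter difference), and the concentration error is of the same order — this is exactly where the $d^{1/2}$ in the conclusion comes from.

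Summing the $O(k)$ telescoped pieces (with $k=O(1)$), taking a union bound over them, and using $\|u_\ell-u_\ell^a\|,\|v_\ell-v_\ell^a\|\le\|U-U^a\|+\|V-V^a\|$ then yields the claimed bound $\lesssim(\|U-U^a\|+\|V-V^a\|)\,d^{1/2}$. I expect the main obstacle to be the cross-dependence among the summands indexed by $\Omega$ (each observed pair shares its $x$-coordinate with other pairs and its $y$-coordinate with still others), which forces the factorization $B(x,y)=M(x)N(y)$ and the two-stage argument of Lemma~\ref{lemma:two_bernstein_E_Omega} and Lemma~\ref{lemma:two_bernstein} in place of a single matrix Bernstein bound; a secondary nuisance is tracking the moments of the $\|x\|\,xx^\top$-type matrices carefully enough that the extra Lipschitz factor $\|x\|$ costs precisely one power of $d^{1/2}$ and the sample complexity $n_1\gtrsim\epsilon^{-2}td\log^2 d$ (with $\epsilon$ an absolute constant here) remains sufficient.
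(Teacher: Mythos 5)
Your proposal is correct and follows essentially the same route as the paper: the paper's own proof is a one-line deferral to the argument for Lemma~\ref{lemma:diag_emp_pop_sym_dep}, namely a telescoping decomposition that pulls out one parameter difference per term via the boundedness and Lipschitz continuity of $\phi,\phi',\phi''$, reducing everything to concentration of $\|x\|\,xx^\top$-type matrices (whence the $d^{1/2}$) through the same two-stage matrix Bernstein machinery you invoke. Your write-up is in fact somewhat more explicit than the paper's about the $M(x)N(y)$ factorization and the population-to-$S$-to-$\Omega$ steps, but the substance is identical.
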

\begin{proof}
Since for sigmoid/tanh, $\phi, \phi', \phi''$ are all Lipschitz continuous and bounded, the proof of this lemma resembles the proof for Lemma~\ref{lemma:diag_emp_pop_sym_dep}.
\end{proof}

\begin{lemma}\label{lemma:diag_emp_pop_asym_dep}
If 
\begin{align*}
 n_1 \gtrsim \epsilon^{-2} t d \log^2 d ,  \quad n_2 \gtrsim \epsilon^{-2} t \log d , \quad |\Omega| \gtrsim \epsilon^{-2} t d \log^2 d,
 \end{align*}
then with probability at least $1-d^{-t}$,
\begin{align*}
& ~ \left\|  \frac{1}{|\Omega|} \sum_{(x,y) \in \Omega}  \left[ \left( \phi'(u_i^{\top} x) \phi'(v_j^{\top} y)  \phi(v_i^{\top} y )\phi(u_j^{\top} x)  -  \phi'(u_i^{a\top} x) \phi'(v_j^{a\top} y)  \phi(v_i^{a\top} y )\phi(u_j^{a\top} x)\right) xy^\top \right] \right\|  \\
 \lesssim  & ~ (\|u_i - u_i^a\| + \|u_j - u_j^a\|+\|v_i - v_i^a\|+\|v_j - v_j^a\| )d^{1/2}
\end{align*}
\end{lemma}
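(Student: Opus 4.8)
The plan is to follow the template of the proof of Lemma~\ref{lemma:diag_emp_pop_sym_dep}, replacing the symmetric outer product $xx^\top$ by $xy^\top$ and importing the feature‑separation device from the proof of Lemma~\ref{lemma:diag_emp_pop_asym}. First I would telescope the scalar prefactor, writing the difference of the four‑fold products $\phi'(u_i^{\top}x)\phi'(v_j^{\top}y)\phi(v_i^{\top}y)\phi(u_j^{\top}x)$ and $\phi'(u_i^{a\top}x)\phi'(v_j^{a\top}y)\phi(v_i^{a\top}y)\phi(u_j^{a\top}x)$ as a sum of four terms, each of which replaces exactly one of the four activation factors by its $a$‑superscripted counterpart, in the same manner as Eq.~\eqref{eq:split_eq}. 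For sigmoid/tanh, $\phi$ and $\phi'$ are bounded by an absolute constant and $1$‑Lipschitz, so in the term changing $u_i$ the difference factor satisfies $|\phi'(u_i^{\top}x)-\phi'(u_i^{a\top}x)|\le\|u_i-u_i^a\|\,\|x\|$ while the remaining three factors are $O(1)$; similarly for the $u_j$, $v_i$, $v_j$ terms (the two $v$‑terms producing a $\|y\|$ rather than a $\|x\|$). Hence, after pulling out the relevant perturbation norm, each of the four pieces reduces to bounding a quantity of the form $\big\|\tfrac1{|\Omega|}\sum_{(x,y)\in\Omega}M(x)N(y)\big\|$, where, for instance, $M(x)=\|x\|\,\phi'(u_i^{a\top}x)\phi(u_j^{\top}x)\,x$ and $N(y)=\phi'(v_j^{\top}y)\phi(v_i^{\top}y)\,y^\top$ — a factorization that cleanly separates the $x$‑ and $y$‑dependence.

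For each such average I would verify the moment hypotheses required by the two‑stage Bernstein lemmas; these are the analogues of the bounds in Lemma~\ref{lemma:xy_prop}, now carrying one extra power of $\|x\|$ inside $M(x)$ and hence each gaining powers of $d$. Concretely: $\|M(x)\|=O(\|x\|^2)$ is $\widetilde O(d)$ with high probability by the $\chi^2$ tail of \cite{hsu2012tail}; by H\"{o}lder's inequality $\max(\|\E[MM^\top]\|,\|\E[M^\top M]\|)\lesssim d^2$ and $\|\E[MM^\top MM^\top]\|\lesssim d^3$; and, crucially, $\|\E_x[M(x)]\|\lesssim(\E\|x\|^2)^{1/2}\cdot O(1)=O(\sqrt d)$ again by H\"{o}lder, while for $N(y)$ all corresponding quantities are $O(1)$ since $p=0$ for sigmoid/tanh. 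Plugging these into Lemma~\ref{lemma:two_bernstein_E_Omega} with $n_1\gtrsim\epsilon^{-2}td\log^2d$ and $n_2\gtrsim\epsilon^{-2}t\log d$ bounds $\|\tfrac1{|S|}\sum_{S}M(x)N(y)-\E[M(x)N(y)]\|$ by $\epsilon\cdot O(\sqrt d)$, and then Lemma~\ref{lemma:two_bernstein} with $|\Omega|\gtrsim\epsilon^{-2}td\log^2d$ bounds $\|\tfrac1{|\Omega|}\sum_{\Omega}-\tfrac1{|S|}\sum_{S}\|$ by a further $\epsilon\cdot O(\sqrt d)$; since $\|\E[M(x)N(y)]\|\le\|\E M\|\,\|\E N\|\lesssim\sqrt d$, choosing $\epsilon$ a small absolute constant gives $\|\tfrac1{|\Omega|}\sum_{\Omega}M(x)N(y)\|\lesssim\sqrt d$. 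Multiplying each of the four pieces back by its perturbation norm and summing yields $\lesssim(\|u_i-u_i^a\|+\|u_j-u_j^a\|+\|v_i-v_i^a\|+\|v_j-v_j^a\|)\,d^{1/2}$ on the intersection of the relevant high‑probability events, which after a union bound over the four terms holds with probability at least $1-d^{-t}$.

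The main obstacle — and the reason this cannot be handled by a one‑line matrix Bernstein — is the dependency structure of $\Omega$ highlighted in Section~\ref{sec:emp_hessian}: distinct observed pairs can share a feature $x$ or $y$, so the summands in $\tfrac1{|\Omega|}\sum_{\Omega}M(x)N(y)$ are not independent. This is precisely why the argument must route through the factorization $B(x,y)=M(x)N(y)$ and the two‑stage machinery, Lemma~\ref{lemma:two_bernstein_E_Omega} for the i.i.d. feature‑sampling stage and Lemma~\ref{lemma:two_bernstein} for the subsequent uniform subsampling defining $\Omega$. A secondary technical point is that the Gaussian factor $\|x\|$ makes $M(x)$ unbounded, so the unbounded‑case inequality Lemma~\ref{lem:modified_bernstein_non_zero} must be used in place of the standard Bernstein bound; the only care needed there is checking that the truncation level $m_x=\widetilde O(d)$ and the variance proxies $\nu_x\lesssim d^2$ computed above meet its hypotheses at the stated sample sizes. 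Everything else is bounded‑activation bookkeeping identical to the proofs of Lemma~\ref{lemma:diag_emp_pop_sym_dep} and Lemma~\ref{lemma:diag_emp_pop_asym}.
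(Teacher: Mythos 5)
Your proposal matches the paper's proof: the paper likewise telescopes the four-factor difference as in Eq.~\eqref{eq:split_eq}, sets $M(x)=\left(\phi'(u_i^{\top}x)-\phi'(u_i^{a\top}x)\right)\phi(u_j^{\top}x)\,x$ and $N(y)=\phi'(v_j^{\top}y)\phi(v_i^{\top}y)\,y^\top$, uses $|\phi'(u_i^{\top}x)-\phi'(u_i^{a\top}x)|\leq\|u_i-u_i^a\|\,\|x\|$, and then invokes the two-stage Bernstein machinery of Lemma~\ref{lemma:diag_emp_pop_asym} to absorb the extra $\|x\|$ into the $d^{1/2}$ factor. Your version merely spells out the moment bookkeeping that the paper leaves implicit, so it is correct and essentially identical in approach.
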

\begin{proof}
Do the similar splits as Eq.~\eqref{eq:split_eq} and let's consider the following case, 
\begin{align*}
 \left\|  \frac{1}{|\Omega|} \sum_{(x,y) \in \Omega}  \left[ \left( \phi'(u_i^{\top} x) - \phi'(u_i^{a\top} x) \right) \phi'(v_j^{\top} y)  \phi(v_i^{\top} y )\phi(u_j^{\top} x)   xy^\top \right] \right\|.
 \end{align*}
Setting $M(x) = \left( \phi'(u_i^{\top} x) - \phi'(u_i^{a\top} x) \right)\phi(u_j^{\top} x)   x $, $N(y) =  \phi'(v_j^{\top} y)  \phi(v_i^{\top} y ) y^\top$ and using the fact that $\| \phi'(u_i^{\top} x) - \phi'(u_i^{a\top} x) \| \leq \| u_i - u_i^a\| \|x\|$, we can follow the proof of Lemma~\ref{lemma:diag_emp_pop_asym} to show if 
\begin{align*}
 n_1 \gtrsim \epsilon^{-2} t d \log^2 d ,  \quad n_2 \gtrsim \epsilon^{-2} t \log d , \quad |\Omega| \gtrsim \epsilon^{-2} t d \log^2 d,
 \end{align*}
then with probability at least $1-d^{-t}$,
\begin{align*}
 ~ \left\|  \frac{1}{|\Omega|} \sum_{(x,y) \in \Omega}  \left[ \left( \phi'(u_i^{\top} x) - \phi'(u_i^{a\top} x) \right) \phi'(v_j^{\top} y)  \phi(v_i^{\top} y )\phi(u_j^{\top} x)   xy^\top \right] \right\|  \leq \|u_i - u_i^a\| d^{1/2}
\end{align*}
\end{proof}

\begin{lemma}\label{lemma:offdiag_emp_pop_asym_dep}
If 
\begin{align*}
 n_1 \gtrsim \epsilon^{-2} t d \log^2 d ,  \quad n_2 \gtrsim \epsilon^{-2} t \log d , \quad |\Omega| \gtrsim \epsilon^{-2} t d \log^2 d,
 \end{align*}
then with probability at least $1-d^{-t}$,
\begin{align*}
&\left\| \frac{1}{|\Omega|} \sum_{(x,y) \in \Omega}  \left[ \left( \phi(U^\top x)^\top \phi(V^\top y) - \phi(U^{*\top} x)^\top \phi(V^{*\top} y)  \right) \phi'(u_i^\top x) \phi'(v_i^\top y)xy^\top  \right.\right.\\
& \left. \left. -\left( \phi(U^{a\top} x) ^\top \phi(V^{a\top} y) - \phi(U^{*\top} x)^\top \phi(V^{*\top} y)  \right) \phi'(u_i^{a\top} x) \phi'(v_i^{a \top }y)xy^\top
 \right] \right\| \\
 \lesssim  & ~ (\| U - U^{a} \|+\| V - V^{a} \|) d^{1/2}
\end{align*}
\end{lemma}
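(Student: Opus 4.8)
The plan is to follow the same template used in the proofs of Lemma~\ref{lemma:diag_emp_pop_asym_dep} and Lemma~\ref{lemma:offdiag_emp_pop_sym_dep}: expand the difference of the two summands into a bounded (since $k=O(1)$) telescoping sum of terms, each of which is a scalar bounded by $\|U-U^a\|+\|V-V^a\|$ times a product $M(x)N(y)$ that separates the $x$- and $y$-dependence, and then invoke the two-stage tail bounds (Lemma~\ref{lemma:two_bernstein_E_Omega}, Lemma~\ref{lemma:second_order_bound}, Lemma~\ref{lemma:two_bernstein}, all resting on the unbounded matrix Bernstein inequality Lemma~\ref{lem:modified_bernstein_non_zero}) to control each term under the stated sample sizes.

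Write $h_{x,y}(U,V):=\phi(U^\top x)^\top\phi(V^\top y)-\phi(U^{*\top}x)^\top\phi(V^{*\top}y)$, so the quantity in question is $\frac1{|\Omega|}\sum_{(x,y)\in\Omega}\big(h_{x,y}(U,V)\,\phi'(u_i^\top x)\phi'(v_i^\top y)-h_{x,y}(U^a,V^a)\,\phi'(u_i^{a\top}x)\phi'(v_i^{a\top}y)\big)xy^\top$. First I would telescope across the three factors $h_{x,y}$, $\phi'(u_i^\top x)$, $\phi'(v_i^\top y)$ by successively adding and subtracting the $a$-versions. In each resulting term the factors that are \emph{not} differenced are bounded by $1$ ($|\phi'|\le 1$) or by $O(k)=O(1)$ ($|h_{x,y}|\le 2k$ for sigmoid/tanh, since $|\phi|\le 1$), while a differenced factor of the form $\phi'(u_i^\top x)-\phi'(u_i^{a\top}x)$ is bounded pointwise by $\|u_i-u_i^a\|\,\|x\|$ using the boundedness of $\phi''$, and similarly for the $v_i$ term. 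For the remaining term, in which $h_{x,y}$ itself is differenced, I would further expand $h_{x,y}(U,V)-h_{x,y}(U^a,V^a)=\sum_{l\in[k]}\big(\phi(u_l^\top x)\phi(v_l^\top y)-\phi(u_l^{a\top}x)\phi(v_l^{a\top}y)\big)$ and telescope each summand once more in $x$ and once in $y$, using $|\phi(u_l^\top x)-\phi(u_l^{a\top}x)|\le\|u_l-u_l^a\|\,\|x\|$ and $|\phi|\le 1$. This is the step that matters: $h_{x,y}$ couples $x$ and $y$, but expanding it over the $k$ latent coordinates turns it into a sum of products that each separate into an $x$-part and a $y$-part, at only an $O(k^2)=O(1)$ multiplicative cost.

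After these reductions every surviving term has the form (scalar $\lesssim\|U-U^a\|+\|V-V^a\|$) $\times\ \frac1{|\Omega|}\sum_{(x,y)\in\Omega}M(x)N(y)$, where $M(x)$ is a product of bounded activation/derivative factors and at most one extra $\|x\|$ factor, times $x$, and $N(y)$ is of the same shape in $y$ (one of $M,N$ carrying the extra norm factor coming from the Lipschitz step). I would then verify, exactly as in Lemma~\ref{lemma:diag_emp_pop_asym} using the bounds of Lemma~\ref{lemma:xy_prop}, that $M$ and $N$ satisfy the hypotheses of Lemma~\ref{lemma:two_bernstein_E_Omega}: the operator-norm tail bound follows from Gaussian concentration (Proposition~1 of \cite{hsu2012tail}) together with $|\phi|,|\phi'|\le 1$, and the second-moment bounds follow from H\"older's inequality since all activation factors are bounded. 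Applying Lemma~\ref{lemma:two_bernstein_E_Omega} to pass from the population to the fully-observed collection $S$, Lemma~\ref{lemma:second_order_bound} to bound the second-order moments on $S$, and Lemma~\ref{lemma:two_bernstein} to pass from $\Omega$ to $S$ --- all with $n_1\gtrsim\epsilon^{-2}td\log^2 d$, $n_2\gtrsim\epsilon^{-2}t\log d$, $|\Omega|\gtrsim\epsilon^{-2}td\log^2 d$ and a union bound over the $O(1)$ terms --- yields the claimed bound, the $d^{1/2}$ in the conclusion coming from the extra $\|x\|$ (resp.\ $\|y\|$) factor introduced by the Lipschitz step, which pushes the per-sample operator norm of $M$ from $O(\sqrt d)$ up to $O(d)$.

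The main obstacle is exactly the $x$--$y$ entanglement of the $h_{x,y}$ factor noted above; once it is resolved by the coordinatewise expansion, the rest is a mechanical repetition of the argument already carried out for the symmetric $xx^\top$ block in Lemma~\ref{lemma:offdiag_emp_pop_sym_dep} and for the asymmetric block without the $h_{x,y}$ weight in Lemma~\ref{lemma:diag_emp_pop_asym_dep}. A minor point to double-check is that the heavier tails produced by the extra norm factor remain admissible for the unbounded matrix Bernstein inequality Lemma~\ref{lem:modified_bernstein_non_zero} --- they are, since the fourth-moment-type quantities it requires stay polynomial in $d$ for Gaussian $x,y$.
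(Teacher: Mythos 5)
Your proposal is correct and follows essentially the same route as the paper, whose own proof is a one-line reduction to Lemma~\ref{lemma:diag_emp_pop_asym_dep} via the boundedness and Lipschitz continuity of $\phi,\phi',\phi''$; your telescoping over the three factors, the coordinatewise expansion of $h_{x,y}$ into $\sum_{l}\phi(u_l^\top x)\phi(v_l^\top y)$ to separate the $x$- and $y$-dependence, and the application of the two-stage Bernstein machinery are exactly the steps the paper leaves implicit. If anything, your write-up is more complete than the paper's, correctly identifying both where the $d^{1/2}$ factor originates and why the $h_{x,y}$ coupling is the only genuine obstacle.
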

\begin{proof}
Since for sigmoid/tanh, $\phi, \phi', \phi''$ are all Lipschitz continuous and bounded, the proof of this lemma resembles the proof for Lemma~\ref{lemma:diag_emp_pop_asym_dep}.
\end{proof}









\end{document}